\theoremstyle{plain}
\newtheorem{theorem}{Theorem}[section]
\newtheorem{lemma}[theorem]{Lemma}
\newtheorem{corollary}[theorem]{Corollary}
\theoremstyle{definition}
\newtheorem{definition}[theorem]{Definition}
\newtheorem{assumption}[theorem]{Assumption}
\newtheorem{remark}[theorem]{Remark}
\pgfplotsset{compat=1.8}
\tikzset{elegant/.style={smooth,thick,samples=500,magenta}}
\setlist[enumerate,1]{leftmargin=0.6cm}
\theoremstyle{definition}
\crefname{assumption}{Assumption}{Assumptions}
\def\1{\bm{1}}
\def\vzero{{\bm{0}}}
\def\vmu{{\bm{\mu}}}
\def\vepsilon{{\bm{\epsilon}}}
\def\vtheta{{\bm{\theta}}}
\def\vphi{{\bm{\phi}}}
\def\vpsi{{\bm{\psi}}}
\def\vxi{{\bm{\xi}}}
\def\vzeta{{\bm{\zeta}}}
\newcommand{\ve}{\@ifnextchar\bgroup{\velong}{{\bm{e}}}}
\newcommand{\velong}[1]{{\bm{#1}}}
\def\vg{{\bm{g}}}
\def\vu{{\bm{u}}}
\def\vv{{\bm{v}}}
\def\vw{{\bm{w}}}
\def\vx{{\bm{x}}}
\def\vz{{\bm{z}}}
\def\mH{{\bm{H}}}
\def\mI{{\bm{I}}}
\def\mM{{\bm{M}}}
\def\mP{{\bm{P}}}
\def\mU{{\bm{U}}}
\def\mV{{\bm{V}}}
\def\mW{{\bm{W}}}
\def\mSigma{{\bm{\Sigma}}}
\DeclareMathAlphabet{\mathsfit}{\encodingdefault}{\sfdefault}{m}{sl}
\SetMathAlphabet{\mathsfit}{bold}{\encodingdefault}{\sfdefault}{bx}{n}
\def\gM{{\mathcal{M}}}
\def\gN{{\mathcal{N}}}
\def\gP{{\mathcal{P}}}
\def\gZ{{\mathcal{Z}}}
\newcommand{\E}{\mathbb{E}}
\newcommand{\R}{\mathbb{R}}
\newcommand{\KL}{D_{\mathrm{KL}}}
\newcommand{\normltwo}{L^2}
\DeclareMathOperator*{\argmin}{arg\,min}
\DeclareMathOperator{\Tr}{Tr}
\newcommand{\poly}{\mathrm{poly}}
\newcommand{\polylog}{\mathrm{polylog}}
\newcommand{\diag}{\mathrm{diag}}
\newcommand{\rank}{\mathrm{rank}}
\newcommand{\BN}{\mathrm{BN}}
\newcommand{\GDWD}{GD+WD\xspace}
\newcommand{\SGDWD}{SGD+WD\xspace}
\newcommand{\lrS}{\mathcal{H}}
\newcommand{\rmsS}{\mathcal{H}_{\text{RMS}}}
\newcommand{\gwsiS}{\mathcal{H}_{\text{GWSI}}}
\newcommand{\qrmsS}{\mathcal{H}_{\text{QRMS}}}
\newcommand{\dotp}[2]{\left<#1, #2\right>}
\newcommand{\dotpsm}[2]{\langle #1, #2\rangle}
\newcommand{\barg}{\bar{g}}
\newcommand{\hath}{\hat{h}}
\newcommand{\hatu}{\hat{u}}
\newcommand{\hatg}{\hat{g}}
\newcommand{\hatvx}{\hat{\vx}}
\newcommand{\tilvx}{\tilde{\vx}}
\newcommand{\tilv}{\tilde{v}}
\newcommand{\tilvw}{\tilde{\vw}}
\newcommand{\tilb}{\tilde{b}}
\newcommand{\vwgt}{\vw_{\mathrm{GT}}}
\newcommand{\bgt}{b_{\mathrm{GT}}}
\newcommand{\Normal}{\mathcal{N}}
\newcommand{\Loss}{\mathcal{L}}
\newcommand{\DatS}{\mathcal{S}} 
\newcommand{\DatZ}{\mathcal{Z}} 
\newcommand{\lamH}{\lambda^{\mathrm{H}}}
\newcommand{\vvH}{\vv^{\mathrm{H}}}
\newcommand{\mPHnzt}{\mP^{\mathrm{H}}_{\mathtt{NZT}}} 
\newcommand{\mPHz}{\mP^{\mathrm{H}}_{0}} 
\newcommand{\gamin}{\gamma_{\min}}
\newcommand{\aht}{\abssm{h_t}}
\newcommand{\ahtc}{\abssm{h_t}^3}
\newcommand{\ahte}{\abssm{h_t}\eta}
\newcommand{\lammax}{\lambda_{\max}}
\newcommand{\abs}[1]{\left\lvert #1 \right\rvert}
\newcommand{\abssm}[1]{\lvert #1 \rvert}
\newcommand{\norm}[1]{\left\| #1 \right\|}
\newcommand{\normsm}[1]{\| #1 \|}
\newcommand{\normtwo}[1]{\norm{#1}_2}
\newcommand{\normtwosm}[1]{\normsm{#1}_2}
\newcommand{\normFsm}[1]{\| #1 \|_{\mathrm{F}}}
\newcommand{\Obig}[1]{O\!\left(#1\right)}
\newcommand{\Osm}[1]{O(#1)}
\newcommand{\contC}{\mathcal{C}}
\newcommand{\sphS}{\mathbb{S}}
\newcommand{\Sta}{\mathcal{S}}
\newcommand{\manGa}{\mathit{\Gamma}}
\newcommand{\DGa}{D_\Gamma}
\newcommand{\gradGa}{\nabla_{\Gamma}}
\newcommand{\TGa}[2][\manGa]{\mathsf{T}_{#2}(#1)}
\newcommand{\NGa}[2][\manGa]{\mathsf{N}_{#2}(#1)}
\newcommand{\onec}[1]{\mathbbm{1}_{[#1]}}
\newcommand{\dd}{\textup{\textrm{d}}}
\newcommand{\alphamax}{\alpha_{\max}}
\newcommand{\vmux}{\vmu_{\mathrm{x}}}
\newcommand{\mSigmax}{\mSigma_{\mathrm{x}}}
\newcommand{\muy}{\mu_{\mathrm{y}}}
\newcommand{\sigmay}{\sigma_{\mathrm{y}}}
\newcommand{\vwopt}{\vw^*}
\newcommand{\vthetaopt}{\vtheta^*}
\newcommand{\hatvtheta}{\hat{\vtheta}}
\newcommand{\eeta}{\tilde{\eta}}
\newcommand{\heta}{\hat{\eta}}
\newcommand{\ieta}{\eta_{\mathrm{in}}}
\newcommand{\hlam}{\hat{\lambda}}
\newcommand{\muPL}{\mu_{\mathtt{PL}}}
\newcommand{\Cb}{C_{\mathrm{b}}}
\newcommand{\tildeLoss}{\tilde{\Loss}}
\newcommand{\Ball}[2]{B^{#1}(#2)}
\newcommand{\Zeis}{\gZ^{\epsilon_0} \cap \sphS^{D-1}}
\newcommand{\Zeig}{\gZ^{\epsilon_0} \cap \manGa}
\newcommand{\Zei}{\gZ^{\epsilon_0}}
\newcommand{\Zeos}{\gZ^{\epsilon_1} \cap \sphS^{D-1}}
\newcommand{\Zeog}{\gZ^{\epsilon_1} \cap \manGa}
\newcommand{\Zeo}{\gZ^{\epsilon_1}}
\newcommand{\cl}{\mathrm{cl}}
\newcommand{\PolyakLojasiewicz}{Polyak-\L{}ojasiewicz\xspace}
\newcommand{\thirdL}[1]{\rho_3(#1)}
\title{
 Understanding the Generalization Benefit of Normalization Layers: 
Sharpness Reduction
}
\begin{document}

\author{%
	Kaifeng Lyu \qquad Zhiyuan Li \qquad Sanjeev Arora\\
	Department of Computer Science\\
	Princeton University\\
	\texttt{\{klyu,zhiyuanli,arora\}@cs.princeton.edu}
}

\maketitle

\begin{abstract}
Normalization layers (e.g., Batch Normalization, Layer Normalization) were introduced
to help with optimization difficulties in very deep nets,
but they clearly also help generalization,
even in not-so-deep nets.
Motivated by the long-held belief that 
flatter minima lead to better generalization,
this paper gives mathematical analysis
and supporting experiments suggesting that normalization
(together with accompanying weight-decay)
encourages GD to reduce the sharpness of loss surface.
Here ``sharpness'' is carefully defined given that the loss is scale-invariant, a known consequence of normalization.
Specifically, for a fairly broad class of neural nets with normalization,
our theory explains how GD with a finite learning rate enters the so-called Edge of Stability (EoS) regime,
and characterizes the trajectory of GD in this regime via a continuous sharpness-reduction flow.
\end{abstract}

\section{Introduction}
Training modern deep neural nets crucially relies on normalization layers to
make the training process less sensitive to hyperparameters and initialization.
The two of the most popular normalization layers are Batch Normalization
(BN)~\citep{ioffe2015batch} for vision tasks and Layer Normalization
(LN)~\citep{ba2016layer} for language tasks. Recent works also proposed other
normalization layers aiming for better performance, most notably including Group
Normalization (GN)~\citep{wu2018group}, Weight Normalization
(WN)~\citep{salimans2016weight}, Scaled Weight
Standardization (SWS)~\citep{qiao2019micro,huang2017centered,brock2021characterizing},~etc.
Most normalization layers amount to a reparametrization of the neural net so
that the loss becomes invariant to the scale of most parameters, and with a
minor change, to {\em all} parameters: $\Loss(c\vw) = \Loss(\vw)$ for all
scalings $c>0$~\citep{ioffe2015batch,arora2018theoretical,li2020exp}. The
current paper assumes this scale-invariance for all parameters and analyzes the
trajectory of gradient descent with \textit{weight decay}~(WD):
\begin{equation} \label{eq:upd-gdwd}
  \vw_{t+1} \gets (1 - \heta \hlam)\vw_t - \heta \nabla \Loss(\vw_t).
\end{equation}
The use of WD is a common practice that has been adopted in training
state-of-the-art neural nets, such as
ResNets~\citep{he2016deepres,he2016identityres} and
Transformers~\citep{devlin2019bert,brown2020gpt3}. Previous ablation studies
showed that adding WD to normalized nets indeed leads to better
generalization~\citep{zhang2018three,lewkowycz2020training,zhang2017rethinking}.
More notably, \citet{liu2020bad} conducted experiments of training ResNets
initialized from global minima with poor test accuracy, and showed that SGD with WD escapes from those bad global minima and attains good test
accuracy. In contrast, training with vanilla SGD yields significant generalization
degradation.

In the traditional view, WD
regularizes the model by penalizing the parameter norm, but this may appear
nonsensical for scale-invariant loss because one can scale down the norm
arbitrarily without changing the loss value. However, the scale of the parameter
\textit{does} matter in backward propagation, and thus WD can affect the
training dynamics. In particular, simple calculus shows $\nabla \Loss(\vw) =
\frac{1}{\normtwosm{\vw}} \nabla \Loss(\frac{\vw}{\normtwosm{\vw}}) \propto
\frac{1}{\normtwosm{\vw}}$ and $\nabla^2 \Loss(\vw) =
\frac{1}{\normtwosm{\vw}^2} \nabla^2 \Loss(\frac{\vw}{\normtwosm{\vw}}) \propto
\frac{1}{\normtwosm{\vw}^2}$, so WD is in effect trying to enlarge the gradient
and Hessian in training. This makes the training dynamics very different from
unnormalized nets and requires revisiting classical convergence
analyses~\citep{li2020exp,li2020reconciling,lobacheva2021on,li2022robust}.


The current paper aims to improve mathematical understanding of how
normalization improves generalization. While this may arise from many places, we focus on
studying the dynamics of (full-batch) GD~\eqref{eq:upd-gdwd},
which is a
necessary first step towards understanding SGD. We show that the interplay
between normalization and WD provably induces an implicit bias to persistently
reduce the \emph{sharpness} of the local loss landscape during the training
process, which we call the \textit{sharpness-reduction bias}.

\begin{figure}[t] 
  \includegraphics[width=\textwidth]{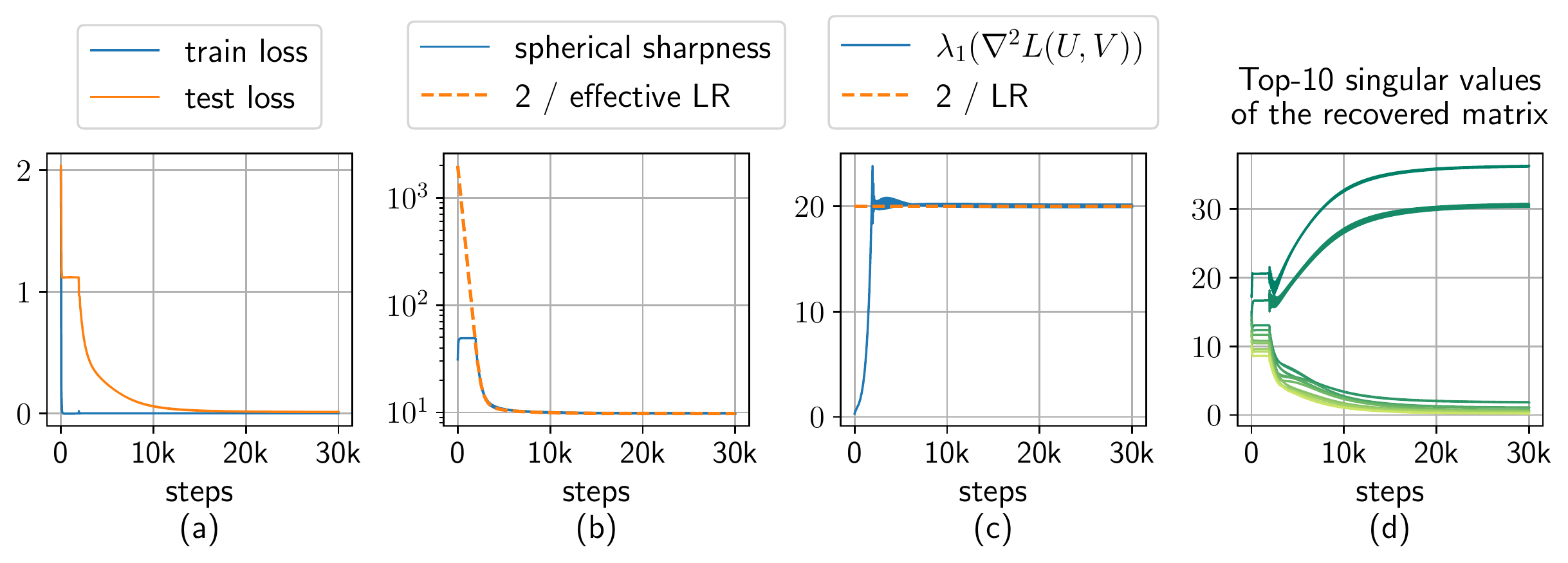}
  \vspace{-0.25in}
  \caption{
    \small
    Experiment on overparameterized matrix completion with Batch Normalization.
    Given $800$ ($32\%$) entries $\Omega$ of a rank-$2$ matrix $\mM \in \R^{50 \times 50}$,
    use \GDWD to optimize the loss $\Loss(\mU, \mV) := \frac{1}{\abssm{\Omega}} \sum_{(i, j) \in \Omega} (\BN([\mU \mV^\top]_{i, j}) - M_{i,j})^2$,
    where $\mU, \mV \in \R^{50 \times 50}$ (thus no explicit constraint on rank).
    Starting from step $\sim$ 2k, spherical sharpness drops significantly (\textbf{b}), which
    encourages low-rank~(\textbf{d}) and
    causes the test loss (MSE of all entries)
    to decrease from $1.12$ to $0.013$ (\textbf{a}). See also \Cref{sec:exp-matlin}. 
  }
  \vspace{-0.15in}
  \label{fig:matcom-intro}
\end{figure}

\begin{figure}[t] 
  \includegraphics[width=\textwidth]{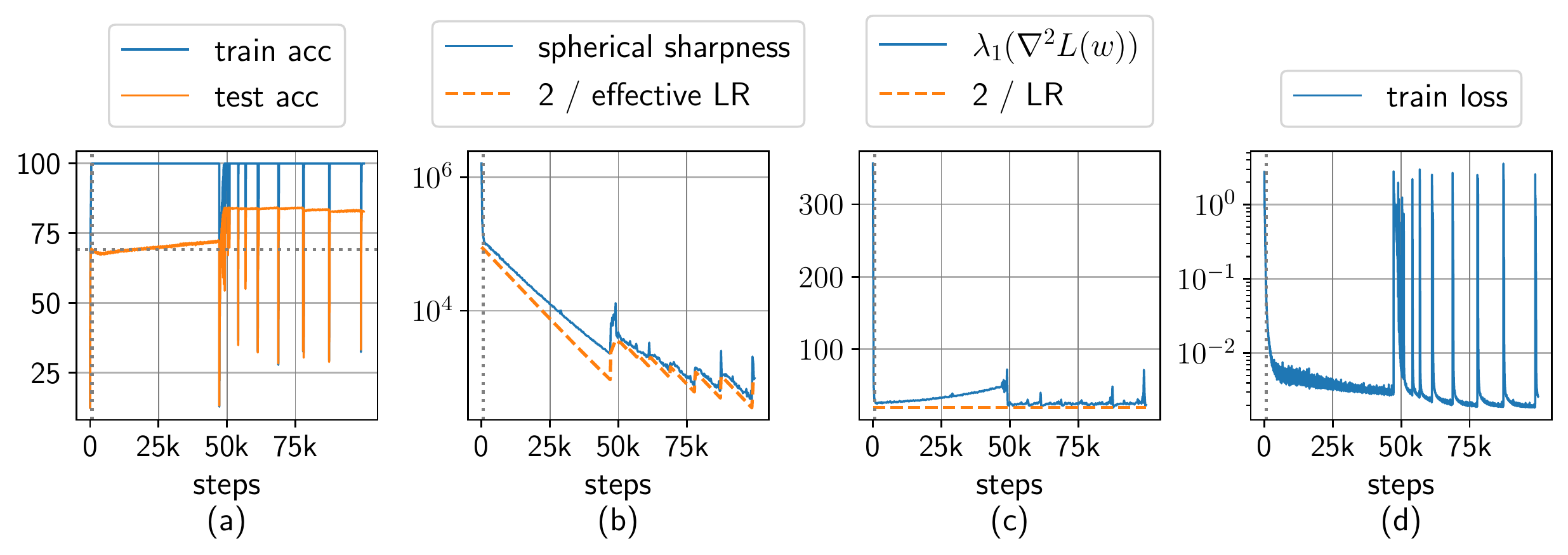}
  \vspace{-0.25in}
  \caption{
    \small
    In training a smooth and scale-invariant VGG-11 on CIFAR-10 with (full-batch) \GDWD,
    the spherical sharpness keeps decreasing and the test accuracy keeps increasing.
    BN is added after every linear layer to ensure scale-invariance.
    $100\%$ training accuracy is achieved after $\sim 680$ steps (dotted line),
    but as the training continues for $\sim$ 47k steps,
    the spherical sharpness keeps decreasing~(\textbf{b})
    and the test accuracy increases from $69.1\%$ to $72.0\%$~(\textbf{a}).
    Then the training exhibits destabilization but the test accuracy is further boosted to $84.3\%$.
    Removing either of BN or WD eliminates this phenomenon;
    see \Cref{sec:exp-ab-norm,sec:exp-ab-wd}.
  }
  \label{fig:full-cifar}
  \vspace{-0.2in}
\end{figure}

It is long believed that flatter minima generalize better~\citep{hochreiter1997flat,keskar2017large,neyshabur2017exploring},
but the notion of sharpness/flatness makes sense only if it is carefully defined
in consideration of various symmetries in neural nets.
One of the most straightforward measures of sharpness is
the maximum eigenvalue of Hessian, namely $\lambda_1(\nabla^2 \Loss(\vw_t))$.
But for normalized nets,
this sharpness measure
is vulnerable to weight rescaling, because one can scale the weight norm
to make a minimizer arbitrarily flat~\citep{dinh2017sharp}.
Also, this sharpness measure may not decrease with the number of training steps:
an empirical study by \citet{cohen2021gradient}
shows that for various neural nets (including normalized nets),
GD has an overwhelming tendency to persistently increase $\lambda_1(\nabla^2 \Loss(\vw_t))$
until it reaches the \textit{Edge of Stability (EoS) regime},
a regime where $\lambda_1(\nabla^2 \Loss(\vw_t))$ stays around $2 / \heta$ ($\heta$ is the learning rate).
See also \Cref{sec:discussion} and Figure \hyperref[fig:full-cifar]{\ref*{fig:full-cifar}c}.

\vspace{-0.08in}
\subsection{Our Contributions}

\vspace{-0.06in}
The sharpness measure we use in this paper takes care of the scale-invariance in
normalized nets. We are motivated by our experiments on matrix completion (with
BN) and CIFAR-10, where our sharpness measure decreases as the training
proceeds, and the generalization improves accordingly; see
\Cref{fig:matcom-intro,fig:full-cifar}. We note that techniques from previous
works~\citep{mcallester2003simplified,neyshabur2017exploring,foret2021sam} can
be easily adopted here to establish a PAC-Bayes bound on the test error, where
our sharpness measure appears as an additive term (see \Cref{sec:pac-bayes}).

\begin{definition}[Spherical Sharpness] \label{def:sph-sha}
  For a scale-invariant loss $\Loss(\vw)$ (i.e., $\Loss(c\vw) = \Loss(\vw)$ for all $c > 0$),
  the spherical sharpness at $\vw \in \R^D$ is defined by $\lambda_1(\nabla^2 \Loss(\frac{\vw}{\normtwosm{\vw}}))$,
  the maximum eigenvalue of the Hessian matrix after projecting $\vw$ onto the unit sphere.
\end{definition}
Based on \Cref{def:sph-sha}, we study the aforementioned sharpness-reduction
bias in training normalized nets with \GDWD (defined in~\eqref{eq:upd-gdwd}).
For constant learning rate $\heta$ and weight decay $\hlam$, we can rewrite this
rule equivalently as Projected Gradient Descent (PGD) on the unit sphere with
\emph{adaptive} learning rates, $\vtheta_{t+1} \gets \Pi(\vtheta_t - \eeta_t
\nabla \Loss(\vtheta_t))$, where $\vtheta_t := \frac{\vw_t}{\normtwosm{\vw_t}}$
is the direction of $\vw_t$, and $\eeta_t$ is the ``effective'' learning rate at
step $t$ (see \Cref{lm:scale-invariant-on-sphere}). We call $\eeta_t$ adaptive
because it can be shown to resemble 
the behaviors
of adaptive gradient methods (e.g., RMSprop~\citep{hinton2012rmsprop}):
$\eeta_t$ increases when gradient is small and
decreases when gradient is large (\Cref{fig:ex}). Our main
contributions are as follows:
\begin{enumerate}
  \item After $\vtheta_t$ reaches a point near the manifold of minimizers of $\Loss$, 
  we theoretically show that the
  effective learning rate $\eeta_t$ increases until
  GD enters a regime where 
  $2 / \eeta_t$ roughly equals to the spherical sharpness (or equivalently $2/\heta \approx \lambda_1(\nabla^2 \Loss(\vw_t))$),
  namely the EoS regime (\Cref{sec:main-result-1}).
  \item In the EoS regime, we show that for GD with a small (but finite) learning rate,
  $\vtheta_t$ oscillates around the manifold and moves approximately along a sharpness-reduction flow,
  which is a gradient flow for minimizing spherical sharpness on the manifold (with gradient-dependent learning rate) (\Cref{sec:main-result-2}).
  \item As an application of our theory,
  we show that for linear regression with BN, \GDWD finds the minimizer
  that corresponds to the linear model with minimum weight norm,
  which
  looks surprisingly the same as the conventional effect of WD
  but is achieved through the completely different sharpness-reduction mechanism
  (\Cref{sec:lin}).
  \item We experimentally verified the sharpness-reduction phenomenon predicted
  by our theorem and its benefits to generalization on CIFAR-10 with VGG-11 and
  ResNet-20, as well as matrix completion with BN (\Cref{sec:add-exp}).
  \item We generalize our theoretical results of sharpness-reduction bias to a broader class of adaptive gradient methods,
  most notably a variant of RMSprop with scalar learning rate (\Cref{sec:qrms}).
\end{enumerate}

\myparagraph{Technical Contribution.}
Our proof technique is novel and may have independent interest to the ML
community. The main challenge is that we need to analyze the implicit bias of GD
in the EoS regime which crucially relies on step size being finite --- this is
in sharp contrast to many previous works on implicit bias of
GD~\citep{soudry2018iclrImplicit,soudry2018implicit,lyu2020gradient,ji2020directional,gunasekar2018implicitconv,
gunasekar2018implicit,li2018algorithmic,razin2020implicit,
arora2019implicit,chizat2020implicit,li2021towards,
lyu2021gradient,razin2022implicit,stoger2021small,ge2021understanding} where the
same bias exists at infinitesimal LR. Our analysis is inspired by a previous
line of works~\citep{blanc2020ou,damian2021label,li2022what} showing that label
noise can drive SGD to move on the minimizer manifold along the direction of
minimizing the trace of Hessian. We borrow a few lemmas from those analyses, but
the overall proof strategy is very different because our setting does not even
have any stochastic gradient noise. Instead, we connect the dynamics in the EoS
regime to power methods and show that GD oscillates around the minimizer
manifold. This oscillation then becomes a driving power that pushes the
parameter to move on the manifold. Finally, we analyze the speed of this
movement by modeling two key parameters of the dynamics as a 1-dimensional
Hamiltonian system (\Cref{fig:qrms-vis}). To the best of our knowledge, we are
the first to provide theoretical proof for a sharpness measure to decrease during
the standard GD training, without any additional regularization (e.g., label noise~\citep{blanc2020ou,damian2021label,li2022what})
and without involving uncommon variants of GD (e.g., normalized GD or non-smooth
wrappings on the loss function~\citep{arora2022understanding}).

\vspace{-0.08in}
\section{Related Works} \label{sec:related}

\vspace{-0.04in}
\myparagraph{Sharpness and Generalization.}  It has been long believed that
flat minima generalize better~\citep{hochreiter1997flat}.
Several empirical studies~\citep{keskar2017large,li2018visualizing,wu2017towards,jastrzkebski2017three} verified
the positive correlation between flatness and generalization.
\citet{neyshabur2017exploring} justified this via PAC-Bayes
theory~\citep{mcallester2003simplified}. 
Several other theoretical papers explored the
generalization properties of flat minima specifically for two-layer
nets~\citep{blanc2020ou,mulayoff2021minimastability,haochen2021shape,li2022what,ding2022flat}
and deep linear nets~\citep{mulayoff2020unique}.
\citet{jiang2020fantastic}
conducted extensive experiments for all existing generalization measures to
evaluate their correlation and causal relationships with generalization error,
concluding that sharpness-based measures perform the best overall. In light of
this, \citet{foret2021sam} proposed SAM algorithm to improve the generalization
by minimizing the sharpness. Despite so many positive results on sharpness-based measures, 
a common issue of many works is that the measures may suffer from sensitivity to rescaling
of parameters in deep nets~\citep{dinh2017sharp}. Another issue is that the
minima could lie in asymmetric valleys that are flat on one side and sharp on
the other~\citep{he2019asymmetric}.

\myparagraph{Understanding Normalization Layers.} 
The benefits of normalization layers can be shown in various
aspects. A series of works studied the forward propagation of deep nets at
random initialization, showing that normalization layers stabilize the growth of
intermediate layer outputs with
depth~\citep{brock2021characterizing,balduzzi2017shattered,de2020identity},
provably avoid rank collapse~\citep{daneshmand2020rankcollapse} and
orthogonalize representations~\citep{daneshmand2021batch}. Although these works
mainly focused on BN~\citep{ioffe2015batch},
\citet{lubana2021beyond,labatie2021proxy} provided thorough discussions on the
applicability of these arguments to other normalization layers. It is also
believed that BN has a unique regularization effect through the noise in batch
statistics~\citep{luo2019towards,teye2018bayesian,shekhovtsov2019stochastic}.
Several other works argued that normalization layers lead to a smoothening or
preconditioning effect of the loss
landscape~\citep{santurkar2018does,bjorck2018understanding,chaudhuri2019investigation,karakida2019normalization,lin2021spectral,lange2022preconditioning},
which may help optimization. By analyzing the training dynamics,
\citet{arora2018theoretical} rigorously 
proved that normalization yields an
auto-tuning effect of the effective learning rate $\eeta_t$, which makes the
asymptotic speed of optimization much less sensitive to the learning rate and initialization.
In linear regression settings, 
\citet{cai2019quantitative,kohler2019exp} 
showed that training with BN leads to a faster convergence rate; \citet{wu2020implicit} studied the implicit regularization effect of
WN~\citep{salimans2016weight}. For two-layer nets with normalization, \citet{ma2021riemannian}
derived a mean-field formulation of the training dynamics; \citet{dukler2020optimization} proved a convergence rate via NTK-based analysis.
The current paper focuses on the interplay
between normalization and WD during training,
whereas all the above works either do not analyze the dynamics or assume no WD.

\myparagraph{Interplay Between Normalization and WD.} 
A common feature of normalization layers (including but not limited to BN, WN,
LN, GN, SWS) is that they make the loss invariant to the scale of layer weights.
In presence of both scale-invariance and WD, training dynamics can go out of the
scope of the classical optimization theory, e.g., one can train the net to small
loss even with learning rates exponentially increasing~\citep{li2020exp}. A series of works
investigated into the interplay between normalization and WD and argued that the
training dynamic with SGD eventually reaches an ``equilibrium'' state, where the
parameter norm~\citep{li2020reconciling,van2017l2,chiley2019online} and the
size of angular update~\citep{wan2021spherical} become stable.
\citet{li2020reconciling,wang2022three}
provided empirical and theoretical evidence that the function represented by the
net also equilibrates to a stationary distribution that is independent of
initialization. This could be related to \citet{liu2020bad}'s experiments on the
ability of SGD with WD to escape from bad initialization, but it remains 
unclear why the generalization should be good at the equilibrium state. In
this paper, we focus on (full-batch) GD, which is the most basic
and important special case of SGD.

\section{Preliminaries} \label{sec:prelims}

Let $\sphS^{D-1} := \{ \vtheta \in \R^D : \normtwosm{\vtheta} = 1\}$ be the unit
sphere equipped with subspace topology. We say a loss function $\Loss(\vw)$
defined on $\R^{D} \setminus \{\vzero\}$ is \textit{scale-invariant} if
$\Loss(c\vw) = \Loss(\vw)$ for all $c > 0$. In other words, the loss value does
not change with the parameter norm. For a differentiable scale-invariant
function $\Loss(\vw)$, the gradient is $(-1)$-homogeneous and it is always
perpendicular to $\vw$, i.e., $\nabla \Loss(c\vw) = c^{-1} \nabla \Loss(\vw)$
for all $c > 0$ and $\dotp{\nabla \Loss(\vw)}{\vw} = 0$ (see \Cref{lm:scale-invariant-grad-hess}).

The focus of this paper is the dynamics of \GDWD on scale-invariant loss.
\eqref{eq:upd-gdwd} gives the update rule for learning rate (LR) $\heta$ and
weight decay (WD) $\hlam$. We use $\vtheta_t := \frac{\vw_t}{\normtwosm{\vw_t}}$
to denote the projection of $\vw_t$ onto $\sphS^{D-1}$ at step $t$. We write
\GDWD on scale-invariant loss as a specific kind of Projected Gradient Descent
(PGD) and define the \emph{effective learning rate} to be the
LR $\eeta_t := \frac{\heta}{(1-\heta\hlam)\normtwosm{\vw_t}^2}$ that appears in
the update rule of PGD. This notion is slightly different from the effective
learning rate $\frac{\heta}{\normtwosm{\vw_t}^2}$ defined in previous
works~\citep{van2017l2,hoffer2018norm,arora2018theoretical},
but ours is more convenient for our analysis.
\begin{lemma} \label{lm:scale-invariant-on-sphere}
    When the parameters $\vw_t$ are updated as \eqref{eq:upd-gdwd}, $\vtheta_{t}$ satisfies the following equation:
    \begin{equation} \label{eq:theta-update-simple}
        \vtheta_{t+1} = \Pi(\vtheta_t - \eeta_t \nabla \Loss(\vtheta_t)),
    \end{equation}
    where $\eeta_t := \frac{\heta}{(1-\heta\hlam)\normtwosm{\vw_t}^2}$ is called the \textbf{effective learning rate} at step $t$,
    and $\Pi: \vw \mapsto \frac{\vw}{\normtwosm{\vw}}$ is
    the projection operator that projects any vector onto the unit sphere.
\end{lemma}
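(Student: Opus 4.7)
The plan is to substitute the defining properties of a scale-invariant loss into the update rule \eqref{eq:upd-gdwd} and factor out the scalar $\normtwosm{\vw_t}$ to reveal the projected-gradient-descent form. The two facts I will rely on are the $(-1)$-homogeneity of the gradient, $\nabla \Loss(c\vw) = c^{-1}\nabla \Loss(\vw)$ for all $c>0$, and the orthogonality $\dotp{\nabla \Loss(\vw)}{\vw} = 0$; both follow from differentiating $\Loss(c\vw) = \Loss(\vw)$ with respect to $c$ and with respect to $\vw$, and are stated in the excerpt as \Cref{lm:scale-invariant-grad-hess}.

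First I would rewrite $\vw_t = \normtwosm{\vw_t}\vtheta_t$ and use $(-1)$-homogeneity to get $\nabla \Loss(\vw_t) = \normtwosm{\vw_t}^{-1}\nabla \Loss(\vtheta_t)$. Substituting into \eqref{eq:upd-gdwd} gives
\begin{equation*}
\vw_{t+1} = (1-\heta\hlam)\normtwosm{\vw_t}\,\vtheta_t \;-\; \frac{\heta}{\normtwosm{\vw_t}}\nabla \Loss(\vtheta_t).
\end{equation*}
Then I would pull out the positive scalar $(1-\heta\hlam)\normtwosm{\vw_t}$ to obtain
\begin{equation*}
\vw_{t+1} = (1-\heta\hlam)\normtwosm{\vw_t}\left(\vtheta_t - \eeta_t\nabla \Loss(\vtheta_t)\right),
\end{equation*}
with $\eeta_t = \heta/\bigl((1-\heta\hlam)\normtwosm{\vw_t}^2\bigr)$ exactly as defined.

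Finally, since $\vtheta_{t+1}$ is by definition $\vw_{t+1}/\normtwosm{\vw_{t+1}}$, and $\Pi$ is invariant under scaling by any positive constant, dividing by the positive scalar prefactor gives $\vtheta_{t+1} = \Pi\bigl(\vtheta_t - \eeta_t \nabla \Loss(\vtheta_t)\bigr)$, which is \eqref{eq:theta-update-simple}. The only subtlety worth flagging is the implicit assumption $1-\heta\hlam > 0$, needed so that the scalar prefactor is positive and the projection $\Pi$ commutes through it; this is standard since $\heta\hlam$ is small in any reasonable training regime. There is no real obstacle here — the lemma is essentially a one-line algebraic rearrangement once the two scale-invariance identities are invoked — so the main content is making sure the grouping of terms matches the stated form of $\eeta_t$.
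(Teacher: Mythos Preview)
Your proof is correct and follows essentially the same approach as the paper's own proof: both use the $(-1)$-homogeneity of $\nabla\Loss$ and the scale-invariance of $\Pi$ to factor out the positive scalar $(1-\heta\hlam)\normtwosm{\vw_t}$. The only cosmetic difference is ordering---the paper applies $\Pi$ first and then divides by the scalar using $\Pi(c\vx)=\Pi(\vx)$, whereas you factor first and then project---and your explicit remark about needing $1-\heta\hlam>0$ is a nice clarification that the paper leaves implicit.
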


\section{\GDWD on Scale-Invariant Loss Functions} \label{sec:main-result}

This section  analyzes \GDWD \eqref{eq:upd-gdwd} on a scale-invariant loss
$\Loss(\vw)$, in particular what happens after approaching a manifold of local
minimizers.
\Cref{sec:main-result-1} analyzes the dynamics in
the stable regime, where loss is guaranteed to decrease monotonically, and
\Cref{thm:li-gdwd} suggests $\vw_t$ can get close to a local minimizer at some
time $t_0$. We show that the effective LR keeps increasing after $t_0$, causing
\GDWD to eventually leave this stable regime and enter a new regime which we
call the Edge of Stability (EoS). In \Cref{sec:main-result-2}, we establish our
main theorem, which connects the dynamics of $\vw_t$ in the EoS regime to a
sharpness-reduction flow.

\subsection{\GDWD Eventually Leaves the Stable Regime} \label{sec:main-result-1}

A standard step of analyzing optimization methods is to do Taylor expansion locally for the loss function,
and show that how the optimization method decreases the loss using a \textit{descent lemma}.
In our case of scale-invariant loss functions,
we use $\mH(\vw) := \nabla^2 \Loss(\vw) \in \R^{D \times D}$ to denote the Hessian matrix of $\Loss$ at $\vw \in \R^D$,
and $\lamH_1(\vw) := \lambda_1(\mH(\vw))$ to denote the top eigenvalue of $\mH(\vw)$.
\begin{lemma}[Descent Lemma] \label{lm:descent}
  For scale-invariant loss $\Loss(\vw)$, 
  at step $t$ of \GDWD we have
  \begin{align*}
    \Loss(\vtheta_{t+1}) \le \Loss(\vtheta_t) - \eeta_t (1 - \eeta_t \lambda_{\max}^{(t)} / 2) \normtwosm{\nabla \Loss(\vtheta_t)}^2.
  \end{align*}
  where $\lambda_{\max}^{(t)} := \sup_{\alpha \in [0, \eeta_t]} \left\{ \lamH_1(\vtheta_t - \alpha \nabla \Loss(\vtheta_t)) \right\}$ is an upper bound of spherical sharpness locally.
\end{lemma}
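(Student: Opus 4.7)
The plan is to observe that, thanks to scale-invariance of $\Loss$, the projection operator $\Pi$ in the update rule for $\vtheta_t$ from \Cref{lm:scale-invariant-on-sphere} is invisible to the loss, so one has $\Loss(\vtheta_{t+1}) = \Loss(\vtheta_t - \eeta_t \nabla\Loss(\vtheta_t))$. This reduces the statement to a standard quadratic upper bound along the line segment from $\vtheta_t$ to $\vtheta_t - \eeta_t \nabla\Loss(\vtheta_t)$, for which the supremum of the top Hessian eigenvalue is exactly $\lambda_{\max}^{(t)}$ by definition.

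Concretely, first I would write $\vg_t := \nabla\Loss(\vtheta_t)$ and note that $\Loss(c \vw) = \Loss(\vw)$ for all $c > 0$ implies $\Loss \circ \Pi = \Loss$ on $\R^D \setminus \{\vzero\}$, hence
\begin{equation*}
\Loss(\vtheta_{t+1}) = \Loss\bigl(\Pi(\vtheta_t - \eeta_t \vg_t)\bigr) = \Loss(\vtheta_t - \eeta_t \vg_t).
\end{equation*}
Next I would apply the second-order Taylor formula with integral remainder along the segment, giving
\begin{equation*}
\Loss(\vtheta_t - \eeta_t \vg_t) = \Loss(\vtheta_t) - \eeta_t \dotp{\nabla\Loss(\vtheta_t)}{\vg_t} + \eeta_t^2 \int_0^1 (1-s)\, \vg_t^\top \mH(\vtheta_t - s \eeta_t \vg_t)\, \vg_t \, ds.
\end{equation*}
The linear term equals $-\eeta_t \normtwosm{\vg_t}^2$ by definition of $\vg_t$. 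For the quadratic term, each $\vtheta_t - s\eeta_t \vg_t$ with $s \in [0,1]$ lies on the segment corresponding to $\alpha = s\eeta_t \in [0, \eeta_t]$, so $\vg_t^\top \mH(\vtheta_t - s\eeta_t \vg_t) \vg_t \le \lamH_1(\vtheta_t - s\eeta_t \vg_t) \normtwosm{\vg_t}^2 \le \lambda_{\max}^{(t)} \normtwosm{\vg_t}^2$. Using $\int_0^1 (1-s)\,ds = \tfrac{1}{2}$ then yields the claimed inequality.

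There is no substantive obstacle: the whole argument is a textbook descent-lemma derivation, and the only nontrivial ingredient is the observation that scale-invariance lets us discard the projection step before doing Taylor expansion. One mild point to check is regularity, namely that $\Loss$ is $C^2$ on a neighborhood of the segment so that the integral-remainder form is valid; this is implicit in the paper's assumption that $\mH(\vw) = \nabla^2 \Loss(\vw)$ exists pointwise along the relevant iterates, and the supremum in $\lambda_{\max}^{(t)}$ is taken precisely over that segment so the bound on the Hessian quadratic form is automatic.
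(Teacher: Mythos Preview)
Your proposal is correct and follows essentially the same approach as the paper: use scale-invariance to drop the projection, then apply a second-order Taylor bound along the segment $\vtheta_t - \alpha\nabla\Loss(\vtheta_t)$ for $\alpha\in[0,\eeta_t]$. The paper's version is just a condensed one-liner that writes $\Loss(\vtheta_{t+1}) = \Loss(\vtheta_t - \eeta_t\nabla\Loss(\vtheta_t))$ without comment and invokes the standard Taylor upper bound directly, whereas you spell out the integral-remainder form and the reason the projection disappears; the content is the same.
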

This descent lemma shows that the training loss $\Loss(\vtheta_t)$ keeps
decreasing as long as the effective LR $\eeta_t$ is smaller than
$2/\lambda_{\max}^{(t)}$, We call the regime of $\eeta_t < 2 /
\lambda_{\max}^{(t)}$ as the \textit{stable regime} of \GDWD. If $\eeta_t
\approx 2 / \lambda_{\max}^{(t)}$ with a small difference, then we call it as
the \textit{Edge of Stability (EoS) regime}.
We remark that this condition for EoS regime is essentially the
\emph{same} as $\heta \approx 2 / \lamH_1(\vw)$ in \citet{cohen2021gradient}'s definition
because $\eeta_t \cdot \lambda_{\max}^{(t)} \approx \heta \cdot \lamH_1(\vw)$; see
\Cref{sec:eos-cohen}.

Fix an initial point $\vw_0 \in \R^D \setminus \{\vzero\}$.
Now we aim to characterize the dynamics of \GDWD when LR $\heta$ and WD $\hlam$ are small enough.
The convergence rate of \GDWD has been analyzed by \citet{li2022robust}.
Here we present a variant of their theorem 
that bounds both the gradient and effective LR.
\begin{theorem}[Variant of Theorem D.2, \citet{li2022robust}] \label{thm:li-gdwd}
    Let $\Loss(\vw)$ be a scale-invariant loss function and 
    $\rho_2 := \sup\{ \normtwosm{\nabla^2 \Loss(\vw)} : \vw \in \sphS^{D-1} \}$ be the smoothness constant of $\Loss$ restricted on the unit sphere.
    For \GDWD \eqref{eq:upd-gdwd} with $\heta \hlam \le 1/2$ and $\eeta_0 \le \frac{1}{\pi^2 \rho_2(1-\heta\hlam)}$, let
    $T_0 := \left\lceil \frac{1}{2 \heta \hlam}  \ln \frac{\normtwosm{\vw_0}^2}{\rho_2 \pi^2 \heta}  \right\rceil$
    steps,
    there must exist $0 \le t \le T_0$ such that
    $\normtwosm{\nabla \Loss(\vtheta_t)}^2 \le 8\pi^4\rho_2^2\hlam\heta$ and $\eeta_t \le \frac{2}{\pi^2 \rho_2 (1-\heta\hlam)}$.
\end{theorem}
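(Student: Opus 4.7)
The plan is to prove the two conclusions essentially separately: first show that $\eeta_t \le \frac{2}{\pi^2\rho_2(1-\heta\hlam)}$ holds uniformly for every $t \in [0, T_0]$ by a direct exponential-growth argument on $\eeta_t$, and then use \Cref{lm:descent} combined with the norm dynamics to isolate a single step at which the gradient norm matches the claimed bound.

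For the effective-LR bound, I would start by writing \eqref{eq:upd-gdwd} in terms of $a_t := \normtwosm{\vw_t}^2/\heta$ (so $\eeta_t = 1/((1-\heta\hlam)a_t)$) and $g_t := \normtwosm{\nabla\Loss(\vtheta_t)}$. Using $\dotp{\nabla\Loss(\vw_t)}{\vw_t} = 0$ from scale-invariance (see \Cref{sec:prelims}), a short calculation gives $a_{t+1} = (1-\heta\hlam)^2 a_t + g_t^2/a_t$ and
\[
\eeta_{t+1} = \frac{\eeta_t}{(1-\heta\hlam)^2\bigl(1+\eeta_t^2 g_t^2\bigr)} \le \frac{\eeta_t}{(1-\heta\hlam)^2}.
\]
Iterating yields $\eeta_t \le \eeta_0(1-\heta\hlam)^{-2t}$. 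The choice of $T_0$ is precisely calibrated so that $(1-\heta\hlam)^{-2T_0} \le 2a_0/(\pi^2\rho_2)$ (a routine estimate via $-\ln(1-x)\le 2x$ for $x \le 1/2$), and combined with the hypothesis $\eeta_0 \le 1/((1-\heta\hlam)a_0)$ this gives $\eeta_t \le \frac{2}{\pi^2\rho_2(1-\heta\hlam)}$ for all $t \le T_0$, which is the second conclusion and equivalently $a_t \ge \pi^2\rho_2/2$ throughout.

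For the gradient bound, note that $\eeta_t\lambda_{\max}^{(t)}/2 \le \frac{1}{\pi^2(1-\heta\hlam)} < 1$ follows from the previous step together with $\lambda_{\max}^{(t)} \le \rho_2$, so \Cref{lm:descent} yields $\Loss(\vtheta_{t+1}) \le \Loss(\vtheta_t) - c_0\,\eeta_t g_t^2$ for an absolute constant $c_0 > 0$. Telescoping and using $\Loss\ge 0$ gives $\sum g_t^2/a_t \le (1-\heta\hlam)\Loss(\vtheta_0)/c_0$, and the exact identity $g_t^2 = a_t\bigl(a_{t+1} - (1-\heta\hlam)^2 a_t\bigr)$ rewrites this as $\sum (a_{t+1} - (1-\heta\hlam)^2 a_t) \le (1-\heta\hlam)\Loss(\vtheta_0)/c_0$. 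I would then combine this with the telescope $a_{T_0}-a_0 = \sum(a_{t+1}-a_t)$ to bound $\heta\hlam\sum a_t$, and apply pigeonhole to locate a step $t^* \le T_0$ where simultaneously $a_{t^*} \le 2\pi^2\rho_2$ and $a_{t^*+1} - (1-\heta\hlam)^2 a_{t^*} \le \heta\hlam(2-\heta\hlam)\, a_{t^*}$. At such a step the identity collapses to $g_{t^*}^2 \le \heta\hlam(2-\heta\hlam)\,a_{t^*}^2 \le 8\pi^4\rho_2^2\hlam\heta$, matching the claimed constant.

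The hard part will be this final extraction. A naive averaging over $\sum g_t^2/a_t$ only gives a bound weakened by a factor of $T_0^{-1}\log(a_0/(\pi^2\rho_2))$ or degrades the scaling from $\Osm{\heta\hlam}$ to $\Osm{\sqrt{\heta\hlam}}$, which would miss the sharp constant $8\pi^4\rho_2^2$. That constant reflects the ``quasi-equilibrium'' $g^2 \sim \heta\hlam\cdot a^2$ at the boundary $a = 2\pi^2\rho_2$ of the stable regime, so capturing it requires jointly controlling both sides of the range of $a_t$ across the $T_0$ steps---using the uniform lower bound $\pi^2\rho_2/2$ from the first step together with an upper bound distilled from the sum bound above and the specific choice of $T_0$---rather than relying on a single averaging argument.
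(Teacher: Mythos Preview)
The paper does not give its own proof of this statement; it simply cites Lemma~D.2 of \citet{li2022robust} and says the second claim follows ``by scrutinizing their proof.'' So there is no internal argument to compare against, only the question of whether your outline stands on its own.

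Your first step has a concrete error. You claim that the choice of $T_0$ is ``precisely calibrated'' so that $(1-\heta\hlam)^{-2T_0} \le 2a_0/(\pi^2\rho_2)$, but this inequality is false in general. Since $-\ln(1-x) > x$ strictly for $x>0$, we have $(1-\heta\hlam)^{-2} > e^{2\heta\hlam}$, and hence
\[
(1-\heta\hlam)^{-2T_0} \approx \left(\frac{a_0}{\pi^2\rho_2}\right)^{c}, \qquad c = \frac{-\ln(1-\heta\hlam)}{\heta\hlam} > 1,
\]
which for any fixed $\heta\hlam>0$ eventually dominates the linear target $2a_0/(\pi^2\rho_2)$ once $a_0/(\pi^2\rho_2)$ is large. (Concretely: with $\heta\hlam=0.1$ and $a_0/(\pi^2\rho_2)=e^{100}$, the left side is roughly $e^{105}$ while the right side is $2e^{100}$.) The estimate $-\ln(1-x)\le 2x$ you invoke goes the wrong way here: it yields an \emph{upper} bound on the exponent that is twice too large, producing a square rather than a factor of~$2$. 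So the crude iteration $\eeta_{t+1}\le \eeta_t/(1-\heta\hlam)^2$, obtained by dropping the $g_t^2$ term, cannot by itself maintain $\eeta_t \le \frac{2}{\pi^2\rho_2(1-\heta\hlam)}$ uniformly over $[0,T_0]$.

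This breaks the rest of your plan, since your application of \Cref{lm:descent} over the whole window relies on that uniform bound. The argument in \citet{li2022robust} does not separate the two conclusions; it bootstraps: as long as $\eeta_s$ stays below the threshold, the descent lemma applies and controls the gradients, and the gradient contribution $g_t^2/a_t$ in the recursion $a_{t+1} = (1-\heta\hlam)^2 a_t + g_t^2/a_t$ partially offsets the weight-decay shrinkage, keeping $a_t$ (equivalently $\eeta_t^{-1}$) from collapsing too fast. That feedback is what makes the effective-LR bound hold for long enough to extract the small-gradient step, and it is exactly what your model-free growth bound throws away.
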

\Cref{thm:li-gdwd} shows that
for some $t_0 \le T_0$,
 $\normtwosm{\nabla \Loss(\vtheta_{t_0})}^2 \le O(\hlam \heta)$ and $\eeta_{t_0} \le \frac{1}{\pi^2 \rho_2} < \frac{2}{\rho_2}$,
which means $\vtheta_{t_0}$ is  an approximate first-order stationary point of $\Loss$ on the unit sphere.
This does not guarantee that $\vtheta_{t_0}$ is close to any global minimizer,
but in practice the training loss rarely gets stuck at a non-optimal value
when the model is overparameterized~\citep{lee2016minimizers,panageas2017only,lee2017first,zhang2017rethinking}.
We are thus motivated to study the case where 
$\vtheta_{t_0}$ not only has small gradient 
$\normtwosm{\nabla \Loss(\vtheta_{t_0})}^2 \le O(\hlam \heta)$ 
but also is close to a local minimizer $\vthetaopt \in \sphS^{D-1}$ of $\Loss$
in the sense that $\normtwosm{\vtheta_{t_0} - \vthetaopt} \le O((\hlam \heta)^{1/2})$
(assuming smoothness, the latter implies the former).

\begin{wrapfigure}{r}{0.4\textwidth}
  \vspace{-0.2in}
  \centering
    \begin{subfigure}[b]{0.19\textwidth}
      \centering
      \includegraphics[width=\textwidth]{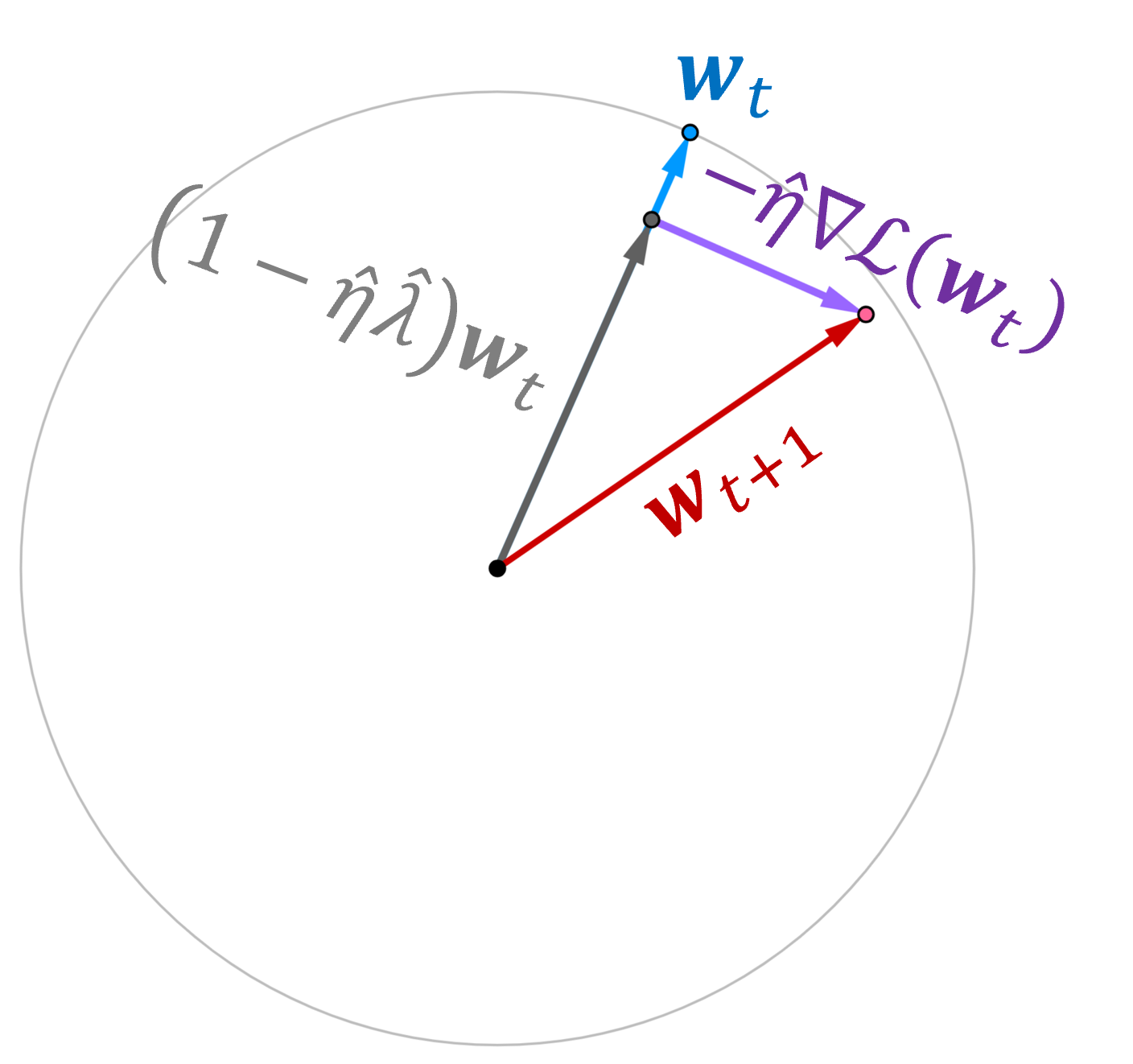}
      \caption{}
      \label{fig:norm-smaller}
    \end{subfigure}%
    \begin{subfigure}[b]{0.19\textwidth}
      \centering
      \includegraphics[width=\textwidth]{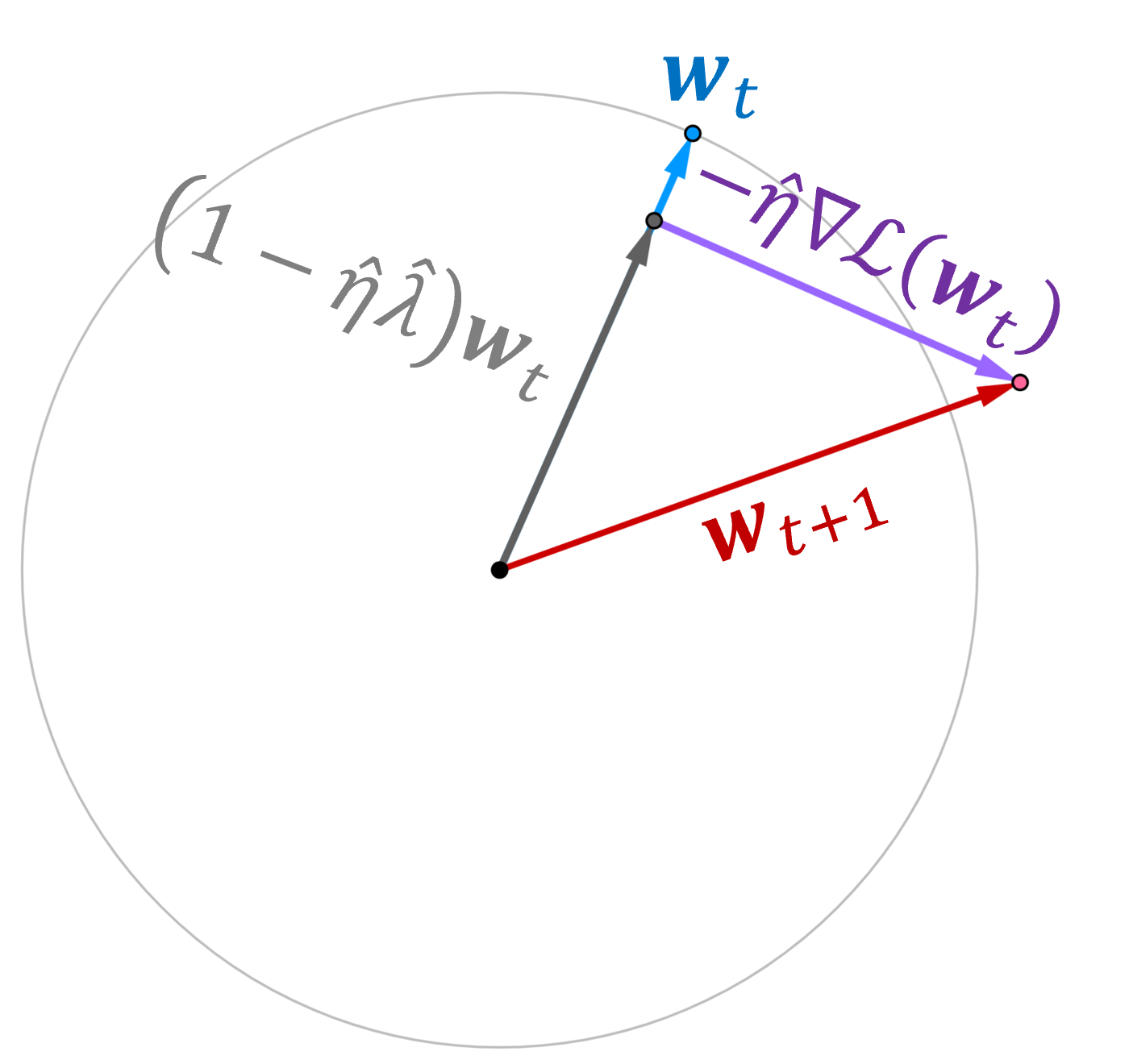}
      \caption{}
      \label{fig:norm-bigger}
    \end{subfigure}
  \caption{\small The norm of $\vw_t$ decreases when gradient is small and increases when gradient is large.}
  \label{fig:ex}
  \vspace{-0.3in}
\end{wrapfigure}

As the gradient is small 
near the local minimizer $\vthetaopt$,
starting from step $t_0$,
the norm of $\vw_t$ decreases due to the effect of WD. See \Cref{fig:norm-smaller}.
Since the effective LR
is inversely proportional
to $\normtwosm{\vw_t}^2$,
this leads to the effective LR to increase.
Then \Cref{thm:eos-near-opt} will
show that the \GDWD dynamic eventually leaves the stable regime at some time $t_1 > t_0$,
and enters the EoS regime where $\eeta_t \approx 2/\lambda_{\max}^{(t)}$.

To establish \Cref{thm:eos-near-opt}, we need to assume that 
$\Loss$ satisfies \PolyakLojasiewicz (PL) condition locally,
which is a standard regularity condition in the optimization literature to ease theoretical analysis
around a minimizer.
Intuitively, PL condition guarantees that the gradient grows faster than a quadratic function as we move a parameter $\vtheta$ away from $\vthetaopt$.
Note that PL condition is strictly weaker than convexity as the function can still be non-convex under PL condition~(see, e.g., \citep{karimi2016pl}).
\begin{definition}[\PolyakLojasiewicz Condition] \label{def:pl}
  For a scale-invariant loss $\Loss(\vw)$ and $\mu > 0$, we say that $\Loss$ satisfies $\mu$-\PolyakLojasiewicz condition
  (or $\mu$-PL)
  locally around a local minimizer $\vthetaopt$ on $\sphS^{D-1}$ if
  for some neighborhood $U \subseteq \sphS^{D-1}$ of $\vthetaopt$,
  $\forall \vtheta \in U: \frac{1}{2}\normtwosm{\nabla \Loss(\vtheta)}^2 \ge \mu \cdot (\Loss(\vtheta) - \Loss(\vthetaopt))$.
\end{definition}
\begin{theorem} \label{thm:eos-near-opt}
  Let $\Loss(\vw)$ be a $\contC^2$-smooth scale-invariant loss that satisfies $\mu$-PL around a local minimizer $\vthetaopt$ on the unit sphere,
  and $\rho_2 := \sup\{ \normtwosm{\nabla^2 \Loss(\vw)} : \vw \in \sphS^{D-1}\}$.
  For \GDWD on $\Loss(\vw)$ with learning rate $\heta$ and weight decay $\hlam$,
  if at some step $t_0$, $\normtwosm{\vtheta_{t_0} - \vthetaopt} \le O((\hlam \heta)^{1/2})$
  and $\eeta_{t_0} \le \frac{2}{\rho_2} < \frac{2}{\lamH_1(\vthetaopt)}$,
  and if $\hlam \heta$ is small enough,
  then there exists a time $t_1 > t_0$ such that $\normtwosm{\vtheta_{t_1} - \vthetaopt} = O((\hlam \heta)^{1/2})$
  and $\eeta_{t_1} = \frac{2}{\lamH_1(\vthetaopt)} + O((\hlam \heta)^{1/2})$.
\end{theorem}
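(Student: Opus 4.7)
The plan is to show that throughout the interval $[t_0, t_1]$ the iterates $\vtheta_t$ stay within $O((\heta\hlam)^{1/2})$ of $\vthetaopt$, while the effective learning rate $\eeta_t$, driven by weight decay, grows essentially geometrically until it first crosses the threshold $2/\lamH_1(\vthetaopt)$. The proof is inductive, maintaining the invariant $\Loss(\vtheta_t) - \Loss(\vthetaopt) = O(\heta\hlam)$ for every $t \in [t_0, t_1]$; by $\contC^2$-smoothness this forces $\normtwosm{\nabla\Loss(\vtheta_t)}^2 \le 2\rho_2(\Loss(\vtheta_t) - \Loss(\vthetaopt)) = O(\heta\hlam)$, and by the PL-based trajectory-length trick (below) it forces $\normtwosm{\vtheta_t - \vthetaopt} = O((\heta\hlam)^{1/2})$. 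The invariant is valid at $t_0$ because $\normtwosm{\vtheta_{t_0}-\vthetaopt} = O((\heta\hlam)^{1/2})$ combined with smoothness gives $\Loss(\vtheta_{t_0}) - \Loss(\vthetaopt) = O(\heta\hlam)$.

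To propagate the invariant, I would apply \Cref{lm:descent}: as long as $\eeta_t \le 2/\lambda_{\max}^{(t)}$ (stable regime), loss is non-increasing and the invariant survives trivially. For the position bound, I would estimate $\normtwosm{\vtheta_{t+1}-\vtheta_t} \le \eeta_t\normtwosm{\nabla\Loss(\vtheta_t)}$ (using $\nabla\Loss(\vtheta_t) \perp \vtheta_t$ and non-expansiveness of $\Pi$ up to higher-order terms), and then telescope $\sum_t \eeta_t\normtwosm{\nabla\Loss(\vtheta_t)}$ by dividing the descent inequality by $\normtwosm{\nabla\Loss(\vtheta_t)} \ge \sqrt{2\mu(\Loss(\vtheta_t)-\Loss(\vthetaopt))}$; this gives a trajectory length of $O(\sqrt{(\Loss(\vtheta_{t_0}) - \Loss(\vthetaopt))/\mu}) = O((\heta\hlam)^{1/2})$. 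Lipschitzness of $\nabla^2\Loss$ on the sphere then yields $\lambda_{\max}^{(t)} = \lamH_1(\vthetaopt) + O((\heta\hlam)^{1/2})$, which is the source of the $O((\heta\hlam)^{1/2})$ error in the conclusion.

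Next I would derive a clean one-step recursion for $1/\eeta_t$. From $\vw_{t+1} = (1-\heta\hlam)\vw_t - \heta\nabla\Loss(\vw_t)$, $\nabla\Loss(\vw_t) \perp \vw_t$, and $(-1)$-homogeneity of $\nabla\Loss$, direct algebra yields
\begin{equation*}
  \frac{1}{\eeta_{t+1}} = (1-\heta\hlam)^2\!\left(\frac{1}{\eeta_t} + \eeta_t\normtwosm{\nabla\Loss(\vtheta_t)}^2\right).
\end{equation*}
Since $\eeta_t = O(1)$ throughout this phase and $\normtwosm{\nabla\Loss(\vtheta_t)}^2 = O(\heta\hlam)$ by the invariant, the gradient correction is $O(\heta\hlam)$ while the prefactor $(1-\heta\hlam)^2 = 1 - 2\heta\hlam + O((\heta\hlam)^2)$ contracts $1/\eeta_t$ geometrically at rate $1 - 2\heta\hlam$. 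Thus $\eeta_t$ grows by a factor $1 + 2\heta\hlam + O((\heta\hlam)^2)$ per step, reaching $2/\lamH_1(\vthetaopt)$ after $O\!\left((\heta\hlam)^{-1}\log(\normtwosm{\vw_{t_0}}^2\lamH_1(\vthetaopt)/\heta)\right)$ steps. I would take $t_1$ to be the first such step; the one-step change in $1/\eeta_t$ is $O(\heta\hlam)$, so $\eeta_{t_1} = 2/\lamH_1(\vthetaopt) + O(\heta\hlam) = 2/\lamH_1(\vthetaopt) + O((\heta\hlam)^{1/2})$.

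The main obstacle is preserving the loss invariant across the thin sliver where $\eeta_t$ has left the stable regime ($\eeta_t > 2/\lambda_{\max}^{(t)}$) but has not yet reached $2/\lamH_1(\vthetaopt)$: here the descent lemma no longer guarantees monotone decrease. The rescue is a quantitative one: the width of this sliver is $O((\heta\hlam)^{1/2})$ because $\lambda_{\max}^{(t)} - \lamH_1(\vthetaopt) = O((\heta\hlam)^{1/2})$, so it contains only $O((\heta\hlam)^{-1/2})$ steps, and within it \Cref{lm:descent} still gives $\Loss(\vtheta_{t+1}) - \Loss(\vtheta_t) \le \eeta_t(\eeta_t\lambda_{\max}^{(t)}/2-1)\normtwosm{\nabla\Loss(\vtheta_t)}^2 = O((\heta\hlam)^{3/2})$ per step, for a cumulative increase of $O(\heta\hlam)$ that is absorbable into the constant of the invariant. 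Combined with the earlier trajectory-length control, this preserves both invariants all the way to $t_1$, from which the theorem follows.
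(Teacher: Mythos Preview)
Your skeleton matches the paper's—the $\sqrt{\Loss-\Loss^*}$ potential together with PL to control path length, and the norm recursion to drive $\eeta_t$ upward—but there is a real gap in the path-length step.

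The telescoping you describe yields
\[
\sum_t \Bigl(1-\tfrac{\eeta_t\lambda_{\max}^{(t)}}{2}\Bigr)\,\eeta_t\normtwosm{\nabla\Loss(\vtheta_t)} \;\le\; \frac{2}{\sqrt{2\mu}}\sqrt{\Loss(\vtheta_{t_0})-\Loss(\vthetaopt)} \;=\; O\!\bigl((\heta\hlam)^{1/2}\bigr),
\]
not $\sum_t \eeta_t\normtwosm{\nabla\Loss(\vtheta_t)}$ itself. The descent factor $(1-\eeta_t\lambda_{\max}^{(t)}/2)$ shrinks to zero as $\eeta_t\to 2/\lambda_{\max}^{(t)}$, so you cannot simply drop it. You address the post-stable sliver, but the \emph{stable} approach to EoS is already problematic: when the descent factor is $\Theta((\heta\hlam)^{1/2})$, your invariant gives only $\normtwosm{\nabla\Loss(\vtheta_t)}=O((\heta\hlam)^{1/2})$, the per-step displacement is $O((\heta\hlam)^{1/2})$, and there are $\Theta((\heta\hlam)^{-1/2})$ such steps, so the accumulated movement is $O(1)$, not $O((\heta\hlam)^{1/2})$. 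A twin issue hits the growth of $\eeta_t$: with only $\normtwosm{\nabla\Loss(\vtheta_t)}^2=O(\heta\hlam)$, the gradient correction $\eeta_t\normtwosm{\nabla\Loss(\vtheta_t)}^2$ in your recursion is the \emph{same} order as the weight-decay contraction $\approx 2\heta\hlam/\eeta_t$, so the recursion has a fixed point at $\eeta_t=\Theta(1)$ with an uncontrolled constant that need not exceed $2/\lamH_1(\vthetaopt)$.

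The paper fixes both problems with one extra ingredient: it splits the stable regime at a threshold $(2-\delta/4)/\lambda_{\max}$ with $\delta$ a \emph{constant}. In the first phase the descent factor is $\Omega(1)$, so once $\eeta_t$ enters a bounded range $[c_{\min},(2-\delta/4)/\lambda_{\max}]$, PL forces $\Loss-\Loss^*$ to decay \emph{geometrically} to $O((\heta\hlam)^{10})$ in only $O(\log(1/(\heta\hlam)))$ steps, and it stays that small for the remainder. Entering the second phase with $\sqrt{\Loss-\Loss^*}=O((\heta\hlam)^{5})$ makes the path-length telescope yield $O((\heta\hlam)^{5}/(\heta\hlam)^{1/2})=O((\heta\hlam)^{4.5})$ even though the descent factor is only $\Omega((\heta\hlam)^{1/2})$, and makes the gradient correction in the $\eeta_t$-recursion negligible so that $\eeta_t$ climbs unobstructed to $2/\lamH_1(\vthetaopt)$. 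The missing piece in your proposal is precisely this geometric loss decay during the constant-descent phase; the invariant $\Loss-\Loss^*=O(\heta\hlam)$ is too coarse to close the argument near EoS.
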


\subsection{Dynamics at the Edge of Stability} \label{sec:main-result-2}

\begin{wrapfigure}{r}{0.4\textwidth}
  \vspace{-0.1in}
  \includegraphics[width=0.38\textwidth]{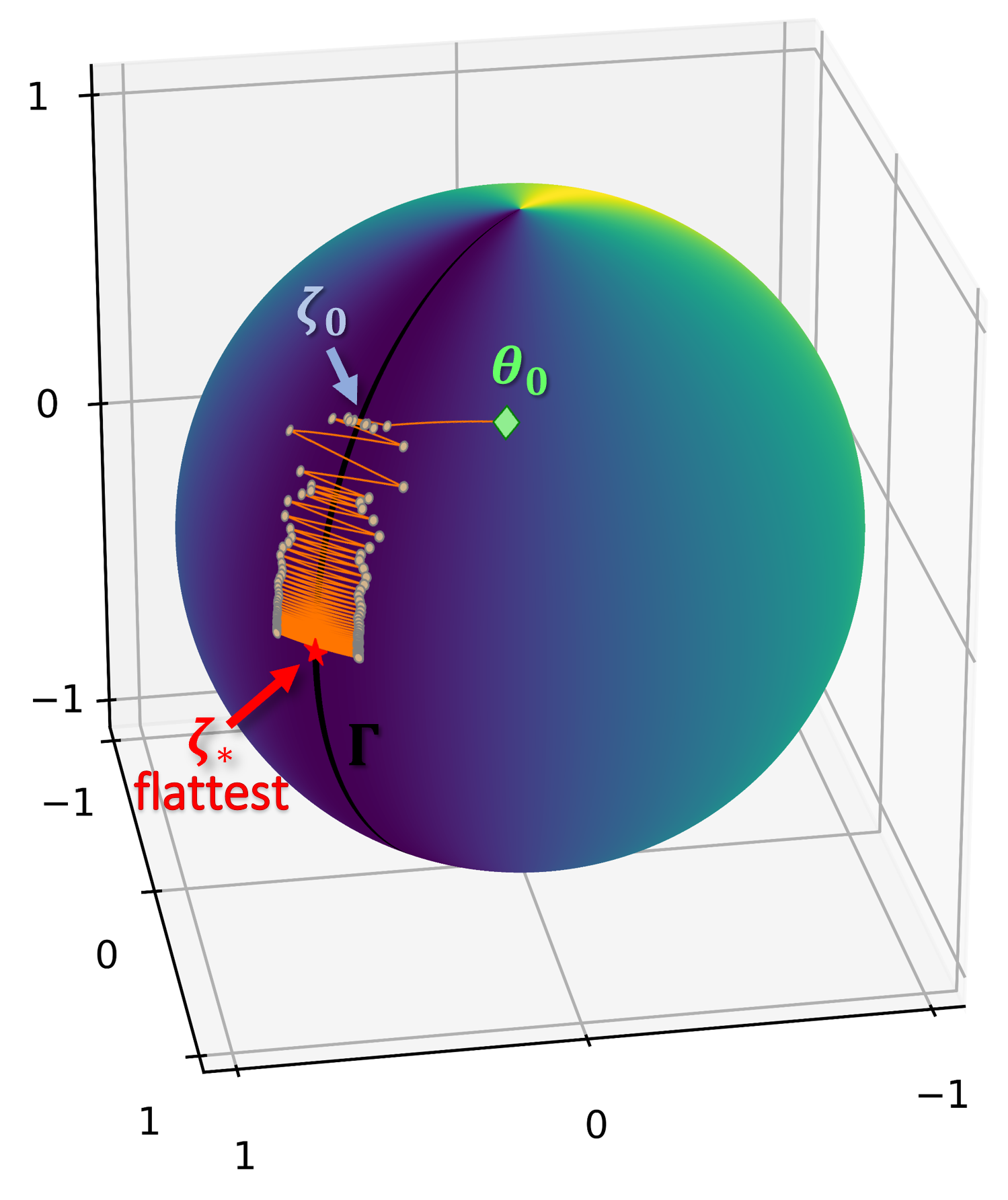}
  \caption{\small
    The trajectory of $\vtheta_t$ on a 3D scale-invariant loss function.  
  Darker color means lower loss on the unit sphere,
  and points in the black line are minimizers
  (see \Cref{sec:3d}).
  In the end, $\vtheta_t$ approaches the flattest one (red star).}
  \label{fig:ex3d}
  \vspace{-0.2in}
\end{wrapfigure}
From the analysis in the previous subsection, we know that $\vtheta_t$ can get close to a local minimizer $\vthetaopt$
and enter the EoS regime at some step $t_1$.
But what happens after $t_1$?

\Cref{fig:ex3d} gives a warm-up example on a 3D scale-invariant loss $\Loss: \R^3 \setminus \{\vzero\}\to \R$,
where the black line is a manifold $\manGa$ consisting of all the minimizers. 
In training with \GDWD,
$\vtheta_t$ first goes close to a local minimizer $\vzeta_0$,
then \Cref{thm:eos-near-opt} suggests
that WD causes the effective LR to steadily increase until the dynamic enters the EoS regime.
Now something interesting happens --- 
$\vtheta_t$ moves a bit away from $\vzeta_0$ and starts to oscillate
around the manifold $\manGa$.
This oscillation
is not completely perpendicular to $\manGa$ but actually forms a small angle that pushes
$\vtheta_t$ to move downward persistently until $\vtheta_t$ approaches the minimizer $\vzeta_*$ denoted in the plot.

For a general scale-invariant loss $\Loss: \R^D\setminus \{\vzero\}\to \R$,
which minimizer does $\vtheta_t$ move towards?
In this work, we consider the setting where there is a manifold $\manGa$ consisting only of local minimizers (but not necessarily all of them).
We show that
$\vtheta_t$ always oscillates around the manifold once it approaches the manifold and enters the EoS regime,
and meanwhile $\vtheta_t$ keeps moving in a direction of reducing spherical sharpness.

\subsubsection{Assumptions} \label{sec:main-result-2-assumption}

Now we formally introduce our main assumption on the local minimizer manifold $\manGa$.
\begin{assumption} \label{ass:man}
  The loss function $\Loss: \R^D \setminus \{\vzero\} \to \R$ is $\contC^4$-smooth and scale-invariant.
  $\manGa$ is a $\contC^2$-smooth, $(\DGa - 1)$-dimensional submanifold of $\sphS^{D-1}$
  for some $0 \le \DGa < D$, where every $\vtheta \in \manGa$ is a local minimizer of $\Loss$ on $\sphS^{D-1}$
  and $\rank(\mH(\vtheta)) = D - \DGa$.
\end{assumption}
Scale-invariance has become a standard assumption
in studying neural nets with normalization layers~\citep{li2020exp,li2020reconciling,lobacheva2021on}.
For VGG and ResNet, the scale-invariance can be ensured
after making minor changes to the architectures (see \Cref{sec:exp-details-cifar10}).
The training loss $\Loss$ may not be smooth if the activation is ReLU,
but lately it has become clear that differentiable activations
such as Swish~\citep{ramachandran2017swish}, GeLU~\citep{hendrycks2016gelu}
can perform equally well.
Swish is indeed used in our VGG-11 experiments (\Cref{fig:full-cifar}),
but ResNet with ReLU activation also exhibits a sharpness-reduction bias empirically (see \Cref{sec:exp-cifar10}).

For any local minimizer $\vtheta \in \manGa$,
the eigenvalues $\lamH_k(\vtheta)$ must be non-negative.
And $\lamH_k(\vtheta) = 0$ for all $D - \DGa < k \le D$,
since $\manGa$ is of dimension $\DGa - 1$.
The condition $\rank(\mH(\vtheta)) = D - \DGa$ ensures that
the Hessian is maximally non-degenerate on $\manGa$,
which also appears as a key assumption 
in previous works \citep{li2022what,arora2022understanding,fehrman2020convergence}.
This condition simplifies the calculus on $\manGa$ in our analysis as it ensures that the null space of the matrix $\mH(\vtheta)$ equals to the tangent space of $\manGa$ at $\vtheta \in \manGa$.
It is also closely related to PL condition (\Cref{def:pl}) as
\Cref{ass:man}
implies that $\Loss(\vtheta)$ satisfies $\mu$-PL (for some $\mu > 0$) locally around every $\vtheta \in \manGa$ on the unit sphere (\citet{arora2022understanding}, Lemma B.3).

To ease our analysis, we also need the following regularity condition to ensure that the largest eigenvalue is unique.
In our experiments, sharpness reduction happens even when the multiplicity of the top eigenvalue is more than $1$, but we leave the analysis of that case to future work.
\begin{assumption} \label{ass:topeigen}
  For all $\vtheta \in \manGa$, $\lamH_1(\vtheta) > \lamH_2(\vtheta)$. That is, the top eigenvalue of $\mH(\vtheta)$ is unique.
\end{assumption}

\subsubsection{Main Theorem} \label{sec:main-thm-stat}
First, we define $\ieta := \heta \hlam$ as the intrinsic learning rate (name from \citet{li2020reconciling}) for convenience.
As suggested in \Cref{thm:li-gdwd,thm:eos-near-opt},
$\vtheta_t$ can get close to a local minimizer and be in the EoS regime at some step $t_1$:
if $\vzeta_0$ is the local minimizer, then $\normtwosm{\vtheta_{t_1} - \vzeta_0} = O(\ieta^{1/2})$
and $\eeta_{t_1} = \frac{2}{\lamH_1(\vzeta_0)} + O(\ieta^{1/2})$.
In our main theorem, we start our analysis from step $t_1$ while setting $t_1 = 0$ WLOG (otherwise we can shift the step numbers).
We connect \GDWD in the EoS regime
to the following gradient flow~\eqref{eq:main-zeta} on the manifold $\manGa$ minimizing spherical sharpness (with gradient-dependent learning rate),
and show that one step of \GDWD tracks a time interval of length $\ieta$ in the gradient flow.
\begin{equation}
    \vzeta(0) = \vzeta_0 \in \manGa, \qquad \frac{\dd}{\dd \tau} \vzeta(\tau) = -\frac{2\gradGa \log \lamH_1(\vzeta(\tau))}{4 + \normtwosm{\gradGa \log \lamH_1(\vzeta(\tau))}^2}.
    \label{eq:main-zeta}
\end{equation}
Here we use the notation $\gradGa R(\vtheta)$
for any $R: \R^D \to \R$ 
to denote the projection of $\nabla R(\vtheta)$ onto the tangent space $\TGa{\vtheta}$ at $\vtheta \in \manGa$.
$\vzeta(\tau)$ reduces sharpness as it moves in direction of 
the negative gradient of $\log \lamH_1(\vzeta(\tau))$ on $\manGa$.
A simple chain rule shows how fast the spherical sharpness decreases:
\begin{equation*}
  \frac{\dd}{\dd t} \log \lamH_1(\vzeta(\tau)) = -\frac{2 \normtwosm{\gradGa \log \lamH_1(\vzeta(\tau))}^2}{4 + \normtwosm{\gradGa \log \lamH_1(\vzeta(\tau))}^2}
  \approx \begin{cases}
    -\frac{1}{2} \normtwosm{\gradGa \log \lamH_1(\vzeta(\tau))}^2 &~~~\text{for small gradient}; \\
    -2 &~~~\text{for large gradient}.
  \end{cases}
\end{equation*}
Note that it is not enough to just assume that $\vtheta_0$ is close to $\vzeta_0$.
If $\vtheta_0 = \vzeta_0$ holds exactly, 
then the subsequent dynamic of $\vw_t$ is described by $\vw_t = (1 - \heta \hlam)^t \vw_0$
with direction unchanged.
There are also some other bad initial directions of $\vw_0$ that may not lead to the sharpness-reduction bias.
This motivates us to do a smoothed analysis for the initial direction:
the initial direction is $\vzeta$ with tiny random perturbation,
where the perturbation scale is allowed to vary from $\exp(-\ieta^{-o(1)})$ to $\ieta^{1/2-o(1)}$,
and we show that a good initial direction is met
with high probability as $\ieta \to 0$.%
\footnote{Here $\ieta^{-o(1)}$ can be constant, $O(\log(1/\ieta))$, or $O(\polylog(1/\ieta))$, but not $\ieta^{-\epsilon}$ if $\epsilon > 0$ is a constant. As mentioned later, this need for random initialization is very similar to the one needed in power method for computing eigenvalues.}
Alternatively, one can regard it as a modeling 
of the tiny random noise in \GDWD due to the precision errors in floating-point operations.
See Figure \hyperref[fig:lin-idea]{\ref{fig:lin-idea}b};
the training loss can never be exactly zero in practice.

\myparagraph{Initialization Scheme.}
Given a local minimizer $\vzeta_0 \in \manGa$,
we initialize $\vw_0 \in \R^D \setminus \{\vzero\}$ as follows:
draw 
$\vxi \sim \Normal(\vzero, \sigma_0^2 \mI / D)$
from Gaussian and set the direction of $\vw_0$ to $\frac{\vzeta_0 + \vxi}{\normtwosm{\vzeta_0 + \vxi}}$,
where $\sigma_0$ can take any value in $[\exp(-\ieta^{-o(1)}), \ieta^{1/2-o(1)}]$;
then set the parameter norm $\normtwosm{\vw_0}$
to be any value that satisfies $\abs{\eeta_0 - \frac{2}{\lamH_1(\vzeta_0)}} \le \ieta^{1/2-o(1)}$,
where $\eeta_0 := \frac{\heta}{(1-\heta\hlam) \normtwosm{\vw_0}^2}$ is the effective LR for the first step.

\begin{theorem} \label{thm:gdwd-main}
  Under \Cref{ass:man,ass:topeigen},
  for \GDWD \eqref{eq:upd-gdwd}
  with sufficiently small 
  intrinsic learning rate $\ieta := \heta \hlam$,
  if we follow the above initialization scheme for some $\vzeta_0 \in \manGa$,
  then with probability $1 - O(\ieta^{1/2-o(1)})$,
  the trajectory of $\vtheta_t := \frac{\vw_t}{\normtwosm{\vw_t}}$ approximately tracks a sharpness-reduction flow
  $\vzeta: [0, T] \to \manGa$ that starts from $\vzeta_0$ and evolves as the ODE \eqref{eq:main-zeta} up to time $T$ (if solution exists),
  in the sense that $\normtwosm{\vtheta_t - \vzeta(t \ieta)} = O(\ieta^{1/4 - o(1)})$ for all $0 \le t \le T / \ieta$.
\end{theorem}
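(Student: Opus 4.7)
The plan is to analyze the dynamics in a local frame around the manifold $\manGa$ and show that, in the EoS regime, a fast oscillation in the top Hessian eigendirection drives a slow drift along $\manGa$ whose velocity matches \eqref{eq:main-zeta}. Concretely, I would introduce local coordinates $\vtheta_t = \vzeta_t + a_t \vvH_1(\vzeta_t) + \vr_t$, where $\vzeta_t \in \manGa$ is the nearest manifold point, $\vvH_1(\vzeta_t)$ is the unit top eigenvector of $\mH(\vzeta_t)$, $a_t \in \R$ is the signed amplitude in that direction, and $\vr_t$ lies in the normal space to $\manGa$ at $\vzeta_t$ but is orthogonal to $\vvH_1(\vzeta_t)$. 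Because $\rank(\mH(\vzeta_t)) = D - \DGa$ with null space equal to $\TGa{\vzeta_t}$ (\Cref{ass:man}) and because the top eigenvalue is simple (\Cref{ass:topeigen}), this decomposition is smooth and well-defined in a neighborhood of $\manGa$. Linearizing one GD step gives $a_{t+1} \approx (1 - \eeta_t \lamH_1(\vzeta_t)) a_t$ and an analogous contraction for $\vr_t$ with factor $1 - \eeta_t \lamH_k(\vzeta_t) \in (-1, 1)$ for $k \ge 2$, so the non-top normal components decay geometrically to $O(\ieta^{1/2-o(1)})$ within $O(\polylog(1/\ieta))$ steps, leaving the $a_t$-mode as the only persistent oscillation.

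Next, I would model the joint evolution of $(a_t, \eeta_t)$ (or equivalently $(a_t, s_t)$ where $s_t := \eeta_t \lamH_1(\vzeta_t) - 2$) as a two-dimensional near-Hamiltonian system. The key observation is that at the EoS, $1 - \eeta_t \lamH_1 \approx -1$, so $a_t$ essentially flips sign every step; meanwhile, $\normtwosm{\vw_t}^2$ evolves via $\normtwosm{\vw_{t+1}}^2 = (1-\ieta)^2 \normtwosm{\vw_t}^2 + \heta^2 \normtwosm{\nabla\Loss(\vtheta_t)}^2/\normtwosm{\vw_t}^2$ (using $\vw \perp \nabla \Loss(\vw)$), which feeds back into $\eeta_t$. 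Expanding the loss to second order gives $\normtwosm{\nabla \Loss(\vtheta_t)}^2 \approx \lamH_1(\vzeta_t)^2 a_t^2$, so WD pushes $\eeta_t$ up when $a_t$ is small and the gradient pulls $\eeta_t$ down when $a_t$ is large. This feedback is exactly the Hamiltonian-like conservation the authors allude to, and a standard fixed-point analysis of the two-step map for $(a_t, s_t)$ shows that $(a_t^2, s_t)$ quickly converges to a quasi-equilibrium with $a_t^2 \sim \ieta / \lamH_1^2$ and $s_t = O(\ieta^{1/2-o(1)})$, provided the initial $|a_0|$ is not exponentially small — this is where the smoothed initialization enters, via the usual power-method argument ensuring $|\langle \vxi, \vvH_1(\vzeta_0)\rangle| \ge \sigma_0 \ieta^{o(1)}/\sqrt{D}$ with probability $1 - O(\ieta^{1/2-o(1)})$.

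Once the oscillation $(a_t)$ is quasi-stationary, I would extract the drift on $\manGa$ by pushing the Taylor expansion to third and fourth order. The cubic term of $\Loss$ contributes a force on $\vzeta_t$ proportional to $a_t^2 \gradGa \lamH_1(\vzeta_t)$ (coming from $\partial_{\vzeta}(\tfrac{1}{2}\lamH_1 a^2) = \tfrac{1}{2}a^2 \gradGa \lamH_1$), but since $a_t$ flips sign each step this force is coherent rather than oscillating. After accounting for the projection back onto $\manGa$ and for the $\eeta_t$ that multiplies each step, one step produces a drift $\vzeta_{t+1} - \vzeta_t \approx -\ieta \cdot \frac{2 \gradGa \log \lamH_1(\vzeta_t)}{4 + \normtwosm{\gradGa \log \lamH_1(\vzeta_t)}^2}$, which is exactly $\ieta$ times the RHS of \eqref{eq:main-zeta}; the denominator arises precisely from how a larger $\gradGa \log \lamH_1$ forces the quasi-equilibrium $a_t^2$ to be smaller (because $\eeta_t$ must track $2/\lamH_1(\vzeta_t)$ while $\normtwosm{\vw_t}^2$ can only change by a factor $(1-\ieta)^2$ per step).

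The main obstacle, and the part requiring the most delicate work, will be controlling the accumulation of discretization and averaging errors over the full horizon of $\lfloor T/\ieta\rfloor$ steps. One needs a discrete \Gronwall argument that simultaneously tracks (i) the deviation of $\vzeta_t$ from $\vzeta(t\ieta)$, (ii) the deviation of $(a_t^2, s_t)$ from its slowly-varying quasi-equilibrium, and (iii) the decay of the non-top modes $\vr_t$, with the right separation of timescales (oscillation at $O(1)$, quasi-equilibration at $O(\polylog)$, manifold drift at $O(1/\ieta)$). The bound $O(\ieta^{1/4 - o(1)})$ naturally reflects that the oscillation amplitude is $\Theta(\ieta^{1/2})$ while averaging errors per unit flow-time are $O(\ieta^{1/2})$, whose integrated effect over a fixed flow-time interval scales as $\ieta^{1/4}$ up to logarithmic losses. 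The smoothed-analysis tail bound on the initial $|a_0|$ and ruling out bad events over the whole trajectory (e.g., $\vzeta_t$ leaving a tubular neighborhood on which the local coordinates are well-defined, or the gap $\lamH_1 - \lamH_2$ collapsing along the flow) give the $1 - O(\ieta^{1/2 - o(1)})$ probability.
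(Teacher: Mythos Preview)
Your overall strategy---local coordinates around $\manGa$, power-method alignment to the top eigenvector, and extracting the slow drift from third-order Taylor terms---matches the paper's. But there is a genuine gap in how you treat the oscillation $(a_t, s_t)$: you assert that a ``standard fixed-point analysis of the two-step map'' shows $(a_t^2, s_t)$ converges to a quasi-equilibrium. It does not. To leading order the two-step map for $(\log|h_t|, -u_t)$ (where $h_t \propto a_t$ and $u_t \propto s_t$) is a discretized \emph{Hamiltonian} system with potential $U(x) = K^2 e^{2x} - \Cb x$, $K^2 = 2\Cb + \normtwosm{\gradGa \log \lamH_1}^2$; the orbits are periodic, $|h_t|$ rises and falls on a timescale of $O(1/\sqrt{\ieta})$ steps (cf.\ \Cref{fig:lin-idea}d and \Cref{fig:qrms-vis}), and linearizing around the nominal center $(h_\ast^2, u_\ast) = (\Cb/(2K^2), 0)$ yields purely imaginary eigenvalues. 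No contraction argument applies here.

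The paper's remedy is twofold. First, it proves approximate conservation of the energy $E = \tfrac12 u^2 + K^2 h^2 + \Cb \log(1/|h|)$ over the entire $O(1/\ieta)$-step horizon, which keeps the state bounded without any equilibration. Second, it obtains the effective drift speed by computing the \emph{time average} of $h_t^2$ directly from the $u$-update: summing $u_{t+2} - u_t \approx 4\eta K^2 h_t^2 - 2\eta \Cb$ over a window and using boundedness of $u$ forces $\tfrac{1}{M}\sum_m h_{2m}^2 \to \Cb/(2K^2)$. This time average---not a pointwise limit---is what produces the denominator $4 + \normtwosm{\gradGa \log \lamH_1}^2$ in \eqref{eq:main-zeta}; the $\normtwosm{\gradGa \log \lamH_1}^2$ contribution enters specifically because the drift of $\vphi_t$ shifts $2/\lamH_1(\vphi_t)$ and hence feeds back into $u_t$, a mechanism your heuristic (``$\eeta_t$ must track $2/\lamH_1$ while $\normtwosm{\vw_t}^2$ can only change by $(1-\ieta)^2$'') does not quantitatively capture.
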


\begin{remark}[Magnitude of Oscillation]
  As suggested by \Cref{fig:ex3d}, $\vtheta_t$ actually oscillates
  around the manifold.
  But according to our analysis,
  the magnitude of oscillation is as small as $O(\ieta^{1/2-o(1)})$,
  so it is absorbed into our final bound $O(\ieta^{1/4-o(1)})$
  for the distance between $\vtheta_t$ and $\vzeta(t \ieta)$.
\end{remark}

\subsubsection{Proof Idea} \label{sec:proof-idea}
Throughout our proof, we view \GDWD for $\vw_t$ as a PGD for $\vtheta_t$
with effective LR $\eeta_t$ (\Cref{lm:scale-invariant-on-sphere}).
To track $\vtheta_t$ with $\vzeta(t \ieta)$,
for each step $t$, we construct a local minimizer $\vphi_t \in \manGa$
that serves as the ``projection'' of $\vtheta_t$ onto the manifold $\manGa$,
in the sense that the displacement $\vx_t := \vtheta_t - \vphi_t$
is approximately perpendicular to the tangent space of $\manGa$ at $\vphi_t$.
Our entire proof works through induction.
According to the initial conditions,
the dynamic is initially in the EoS regime: $\normtwosm{\vx_t} \le \ieta^{1/2-o(1)}$
and $\abssm{\eeta_t - 2/\lamH_1(\vphi_t)} \le \ieta^{1/2-o(1)}$ at $t = 0$.
In our induction, we maintain 
the induction hypothesis that these two EoS conditions continue to hold for all $t \ge 0$. 

\myparagraph{Period-Two Oscillation.}
A key insight in our proof is that
after a few initial steps,
$\vtheta_t$ is oscillating around $\vphi_t$
along the $\pm \vvH_1(\vtheta)$ directions,
where 
$\vvH_1(\vtheta)$
is a unit top eigenvector of $\mH(\vtheta)$
and is chosen in a way 
that $\vvH_1(\vtheta)$ is continuous on $\manGa$.
More specifically,
$\vx_t = 
h_t\vvH_1(\vphi_t) + O(\normtwosm{\vx_t}^2)$
for $h_t := \dotpsm{\vx_t}{\vvH_1(\vphi_t)}$.
The oscillation is of period 2: $h_t > 0$ when $t$ is even and $h_t < 0$ when $t$ is odd.
See Figure \hyperref[fig:lin-idea]{\ref*{fig:lin-idea}d} for an example.

This oscillation can be connected to a power method for the matrix $\mI - \eeta_t \mH(\vphi_t)$.
In the EoS regime,
we can approximate $\vtheta_{t+1}$ (when $\vx_t$ is small)
as
$\vtheta_{t+1}
  = \Pi(\vtheta_t - \eeta_t\nabla \Loss(\vtheta_t))
  \approx \Pi(\vtheta_t - \eeta_t \mH(\vphi_t) \vx_t)
  \approx \vtheta_t - \eeta_t \mH(\vphi_t) \vx_t$
by Taylor expansions
of $\nabla \Loss$
and $\Pi: \R^D \setminus \{\bm{0}\} \to \sphS^{D-1}$.
We can further show that $\vphi_{t+1} \approx \vphi_t$
due to our choice of projections.
Then the connection to power method is shown below:
\[
  \vx_{t+1} \approx \vtheta_{t+1} - \vphi_t \approx (\mI - \eeta_t \mH(\vphi_t)) \vx_t.
\]
By simple linear algebra,
$\vvH_1(\vphi_t)$
is an eigenvector of $\mI - \eeta_t \mH(\vphi_t)$,
associated with eigenvalue $1 - \eeta_t \lamH_1(\vphi_t) \approx -1$.
The remaining eigenvalues are $\{ 1 - \eeta_t \lamH_i(\vphi_t) \}_{i=2}^{D}$,
where $\lamH_i(\vphi_t)$ is the $i$-th largest eigenvalue of $\mH(\vtheta_t)$,
and they lie in the range $(-1, 1]$ since $\lamH_i(\vphi_t) \in [0, \lamH_1(\vphi_t))$.
Using a similar analysis to power method, we show that $\vx_t$ quickly aligns to the direction of $\pm \vvH_1(\vphi_t)$ after a few initial steps,
as the corresponding eigenvalue
has approximately the largest absolute value.\footnote{
  Our construction of $\vphi_t$
  ensures that $\vx_t$ only
  has a small overlap with the $1$-eigenspace of $\mI - \eeta_t \mH(\vphi_t)$,
  so $\vx_t$ can only align to $\pm \vvH_1(\vphi_t)$.
}

To formally establish the above result, we need
a tiny initial alignment
between $\vx_0$ and $\vvH_1(\vphi_0)$,
just as the initial condition in power method.
This is where we need the initial random perturbation.

\begin{figure}[t]
  \centering
  \includegraphics[width=\textwidth]{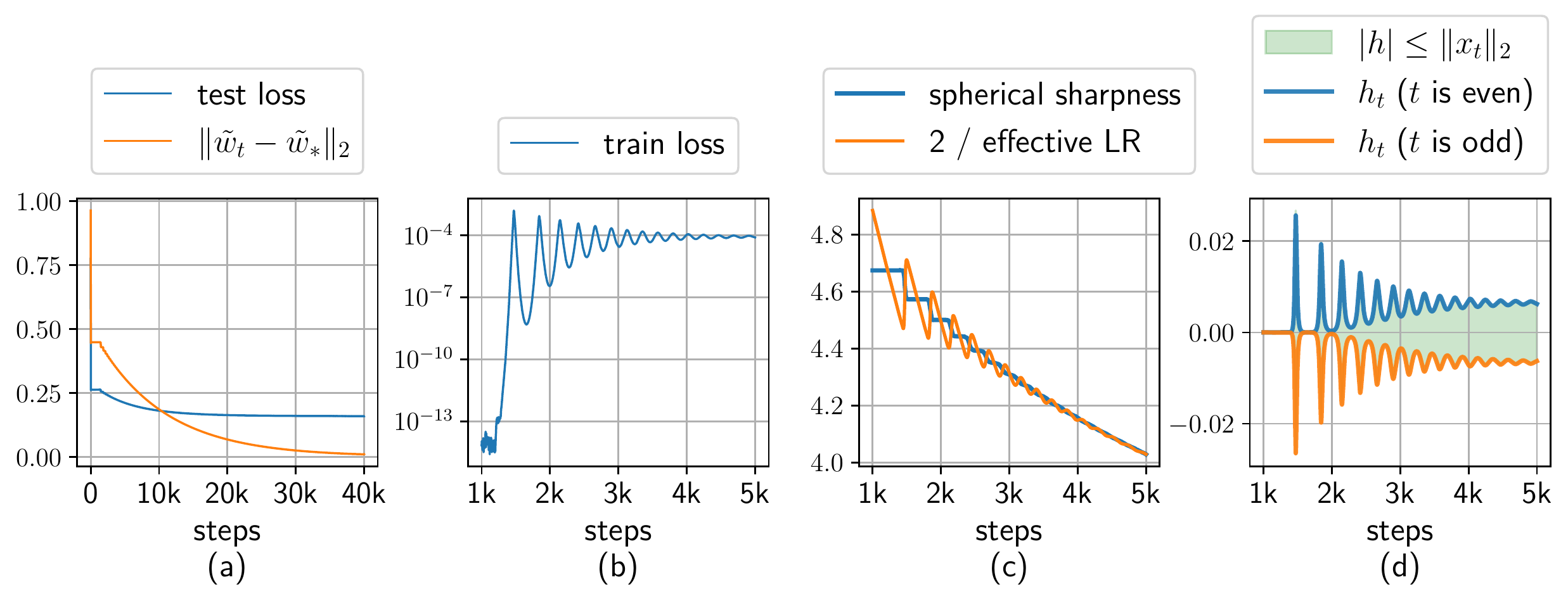}
  \caption{
    \small
    Illustration of the oscillation and periodic behaviors of \GDWD on linear regression with BN
    (see \Cref{sec:proof-idea,sec:lin}).
    The training loss decreases to $\approx 10^{-14}$ in the first 1k steps
    and achieves test loss $0.26$.
    Starting from step $\sim$ 1k, the dynamic enters the EoS regime.
    \textbf{(a).}~The test loss decreases to $0.16$ as a distance measure to the flattest solution \eqref{eq:M} decreases towards~$0$;
    \textbf{(b).}~The training loss oscillates around $\sim 10^{-4}$ in the EoS regime;
    \textbf{(c).}~$2/\eeta_t$
    switches back and forth between being smaller and larger than $\lamH_1(\vphi_t)$;
    \textbf{(d).}~The parameter oscillates around the minimizer manifold along the top eigenvector direction,
    and the magnitude of oscillation $\abssm{h_t}$ rises and falls periodically.
  }
  \label{fig:lin-idea}
  \vspace{-0.15in}
\end{figure}

\myparagraph{Oscillation Drives $\vphi_t$ to Move.}
This period-two oscillation is the driving power to push $\vphi_t$
to move on the manifold.
The main idea here is to 
realize that the oscillation direction deviates slightly from the direction of $\pm \vvH_1(\vphi_t)$
by using a higher-order approximation.
We specifically use the Taylor approximation to show that
this deviation leads 
$\vphi_t$ to move slightly on $\manGa$:
after each cycle of oscillation, $\vphi_{t+2} \approx \vphi_t - 4 h_t^2 \gradGa \log \lamH_1(\vphi_t) + O(\ieta^{1.5-o(1)})$,
which resembles two steps of gradient descent on $\manGa$
to minimize the logarithm of spherical sharpness with learning rate $2h_t^2$,

\myparagraph{Periodic Behavior of $h_t$ and $\eeta_t$.}
It remains to analyze the dynamics of $h_t$
so that we can know how fast the sharpness reduction is.
Our analysis is inspired by an empirical study from~\citet{lobacheva2021on},
which reveals a periodic behavior of gradients and effective learning rates in training normalized nets
with weight decay.
In our theoretical setting, we capture
this periodic behavior by showing that $h_t$ and $\eeta_t$ \textit{do} evolve periodically.
See Figures \hyperref[fig:lin-idea]{\ref*{fig:lin-idea}c} and \hyperref[fig:lin-idea]{\ref*{fig:lin-idea}d} for an example.

The key is that $\eeta_t$ changes as an adaptive gradient method:
$\eeta_t$ increases when gradient is small and decreases when gradient is large 
(due to the effect of WD; see \Cref{fig:norm-smaller,fig:norm-bigger}),
and in our case the gradient norm scales as $\abssm{h_t}$ since $\nabla \Loss(\vtheta_t) \approx h_t \lamH_1(\vphi_t) \vvH_1(\vphi_t)$.
By the power method approximation, $h_{t+2} \approx (1 - \eeta_t \lamH_1(\vphi_t))^2 h_t$,
so $\abssm{h_t}$ decreases when $\eeta_t < 2 / \lamH_1(\vphi_t)$.
But $\abssm{h_t}$ cannot decrease forever, since $\eeta_t$ increases when $\abssm{h_t}$ is sufficiently small.
When $\eeta_t$ rises to over $2 / \lamH_1(\vphi_t)$,
$\abssm{h_t}$ changes from decreasing to increasing according to our approximation.
But $h_t$ cannot increase indefinitely either, since
$\eeta_t$ decreases when $\abssm{h_t}$ is sufficiently large.
A period finishes when $\eeta_t < 2 / \lamH_1(\vphi_t)$ holds again.

In our theoretical analysis, we connect this periodic behavior with a 1-dimensional Hamiltonian system (see \Cref{sec:outline-rmsdrift-def}),
and show that 
$2 h_t^2$ in each step can be approximated by its average value in the period
without incurring a large error.
Further calculations show that this average value
is approximately 
$\frac{2 \ieta}{4 + \normtwosm{\gradGa \log \lamH_1(\vzeta(t\ieta))}}$,
the learning rate
in the flow~\eqref{eq:main-zeta} multiplied with $\ieta$.
We can therefore conclude that each step of $\vphi_t$ (or $\vtheta_t$)
tracks a time interval of $\ieta$ in the flow.

\myparagraph{Extensions.} We note that this periodic behavior is not limited to \GDWD on scale-invariant loss,
since the above intuitive argument holds as long as 
the effective LR changes adaptively with respect to gradient change.
Based on this intuition,
an important notion called \textit{Quasi-RMSprop scheduler}
is proposed.
For a PGD method, a learning rate scheduler is a rule for 
changing the effective LR in each step,
and Quasi-RMSprop is a specific class of schedulers we define,
including the way that the effective LR changes
in \GDWD on scale-invariant loss (if viewed as PGD).
Our proof is done in a unified way that 
works as long as the effective LR changes in each step according to a Quasi-RMSprop scheduler.
As a by-product, a similar theorem can be proved for GD (without projection) on non-scale-invariant loss
if the LR changes as a Quasi-RMSprop in each step.
For example, we can extend our analysis to RMSprop with a scalar learning rate.
See \Cref{sec:qrms}.

\section{Case Study: Linear Regression with Batch Normalization} \label{sec:lin}
In this section, we analyze the \GDWD dynamics on linear regression with Batch Normalization (BN),
as a simple application of our theory.
Let $\{(\vx_i, y_i)\}_{i=1}^{n}$ be a dataset, where $\vx_i \in \R^d$ and $y_i \in \R$ are inputs and regression targets.
We study the over-parameterized case where $d \gg n$,
and we assume that the regression targets are generated by an unknown linear model.

A classic linear model is parameterized by $(\vw, b) \in \R^d \times \R$ and outputs $\vw^\top \vx + b$ given input $\vx$,
but now we add a BN to the output.
More specifically, we consider a batch-normalized linear model $\Phi(\vx; \vw, \gamma, \beta) := \gamma \cdot \frac{\vw^\top \vx - \mu_1}{\sigma_1} + \beta$,
where $\mu_1, \sigma_1$ are the mean and standard deviation of $\{\vw^\top \vx_i\}_{i=1}^{n}$ over the whole dataset\footnote{Note that the batch size is $n$ here as we are running full-batch GD},
and the bias term $b$ is cancelled out due to BN.
Note that $\Phi(\vx; \vw, \gamma, \beta)$ is still a linear function with respect to $\vx$.
Let $\vmux \in \R^d$ and $\mSigmax \in \R^{d \times d}$
be the mean and covariance of the input data $\{\vx_i\}_{i=1}^{n}$.
Then $\Phi(\vx; \vw, \gamma, \beta)$ can be rewritten as:
\begin{equation} \label{eq:lin-bn-Phi}
    \Phi(\vx; \vw, \gamma, \beta) = \tilvw^\top \vx + \tilb,
    \qquad \text{where} \quad \tilvw := \sfrac{\gamma \vw}{\normsm{\vw}_{\mSigmax}}, \quad \tilb := \beta - \tilvw^\top \vmux.
\end{equation}
No matter how $\vw$ is set, the output mean and variance of $\Phi$ are always $\beta$ and $\gamma^2$.
To simplify our analysis, we fix $\beta, \gamma$
to be non-trainable constants so that the mean and variance of $\Phi$'s output match with those of $\{y_i\}_{i=1}^{n}$,
that is, we set $\beta = \muy$ and $\gamma = \sigmay$ to be the mean and standard deviation
of $y_i$ over the whole dataset.
Then the training loss is $\Loss(\vw) := \frac{1}{n} \sum_{i \in [n]} (\Phi(\vx_i; \vw, \gamma, \beta) - y_i)^2$.

\begin{theorem} \label{thm:bn-lin-main}
    In our setting of linear regression with BN,
    the sharpness-reduction flow $\vzeta$ defined in \eqref{eq:main-zeta}
    converges to the solution $\vwopt \in \sphS^{d-1}$
    that minimizes sharpness $\lamH_1(\vwopt)$ on $\manGa$, regardless of the initialization.
    Moreover, the coefficients $(\tilvw, \tilb)$ associated with $\vwopt$ (defined in \eqref{eq:lin-bn-Phi})
    are the optimal solution of the following constrained optimization problem \eqref{eq:M}: 
    \begin{equation}
        \min \quad \normtwosm{\vw}^2 \quad \text{s.t.} \quad \vw^\top \vx_i + b = y_i, \quad \forall i \in [n]. \tag{M} \label{eq:M}
    \end{equation}
\end{theorem}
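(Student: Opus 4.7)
The plan is to reduce the sharpness-minimization problem on $\manGa$ to the minimum-norm problem \eqref{eq:M} through a direct Hessian computation at minimizers. First, writing $\tilvx_i := \vx_i - \vmux$ and $\tilde{y}_i := y_i - \muy$ and using $\beta = \muy$, the loss takes the centered form $\Loss(\vw) = \frac{1}{n}\sum_i (\tilvw^\top \tilvx_i - \tilde{y}_i)^2$, which is scale-invariant in $\vw$ through $\tilvw = \gamma\vw/\normsm{\vw}_{\mSigmax}$. A point $\vw \in \sphS^{d-1}$ lies in $\manGa$ iff $\tilvw$ interpolates the centered data ($\tilvw^\top\tilvx_i = \tilde{y}_i$ for all $i$); the map $\vw \mapsto \tilvw$ identifies $\manGa$ bijectively with the affine set of interpolating coefficients, which together with $\tilb = \muy - \tilvw^\top\vmux$ is precisely the feasible set of \eqref{eq:M}.

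Next I would compute the ambient Hessian $\mH(\vw)$ at a point $\vw \in \manGa$. Since residuals vanish at every minimizer, the Hessian reduces to the Gauss--Newton matrix $\frac{2}{n}\sum_i \nabla_{\!\vw}(\tilvw^\top\tilvx_i)\nabla_{\!\vw}(\tilvw^\top\tilvx_i)^\top$. A direct quotient-rule differentiation combined with the identities $\tilde{y}_i = \tilvw^\top\tilvx_i$, $\frac{1}{n}\sum_i \tilvx_i\tilvx_i^\top = \mSigmax$, and $\frac{1}{n}\sum_i \tilde{y}_i^2 = \sigmay^2 = \gamma^2$ gives
\begin{equation*}
\mH(\vw) \;=\; \frac{2}{\normsm{\vw}_{\mSigmax}^2}\,\mM, \qquad \mM \;:=\; \gamma^2\mSigmax - \mSigmax\tilvw\tilvw^\top\mSigmax.
\end{equation*}
The crucial observation is that $\mM$ is the \emph{same} matrix at every point of $\manGa$: any two interpolating solutions differ by a vector in $\ker(\mSigmax)$ (their difference is orthogonal to every $\tilvx_i$, hence to $\mathrm{range}(\mSigmax)$), so $\mSigmax\tilvw$ is constant on $\manGa$ and therefore so is $\mM$. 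Using $\normsm{\vw}_{\mSigmax}^2 = \gamma^2/\normtwosm{\tilvw}^2$ on $\sphS^{d-1}$, the spherical sharpness becomes
\begin{equation*}
\lamH_1(\vw) \;=\; \frac{2\lambda_1(\mM)}{\gamma^2}\,\normtwosm{\tilvw}^2,
\end{equation*}
where $\lambda_1(\mM)$ is a positive manifold-wide constant (positivity follows from \Cref{ass:topeigen}, which gives $\lamH_1 > 0$). Consequently, minimizing spherical sharpness on $\manGa$ is equivalent to minimizing $\normtwosm{\tilvw}^2$ over interpolating solutions, i.e., \eqref{eq:M}. The unique minimizer is the minimum-norm interpolator $\tilvw^{\ast} \in \mathrm{range}(\mSigmax)$; together with $\tilb^{\ast} := \muy - (\tilvw^{\ast})^\top\vmux$ this yields the optimal pair for \eqref{eq:M}, and the associated direction $\vwopt := \tilvw^{\ast}/\normtwosm{\tilvw^{\ast}}$ is the unique minimizer of $\lamH_1$ on $\manGa$.

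To argue that the flow \eqref{eq:main-zeta} converges to $\vwopt$ regardless of initialization, I would parameterize $\manGa$ by $\vz \in \ker(\mSigmax)$ via $\vw(\vz) := (\tilvw^{\ast} + \vz)/\normtwosm{\tilvw^{\ast} + \vz}$. Since $\tilvw^{\ast} \perp \ker(\mSigmax)$ and $\tilvw^{\ast} \ne 0$, this map is injective with full-rank Jacobian, giving a diffeomorphism onto $\manGa$. Under this parameterization, $\lamH_1$ pulls back to a positive multiple of $\normtwosm{\tilvw^{\ast}}^2 + \normtwosm{\vz}^2$, which is strictly convex with unique critical point $\vz = 0$ and no saddles. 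The flow \eqref{eq:main-zeta} is a positive time-reparameterization of the Riemannian gradient flow of $\log\lamH_1$ on $\manGa$, so $\lamH_1(\vzeta(\tau))$ is non-increasing; since $\lamH_1 \to \infty$ as $\normtwosm{\vz} \to \infty$, the trajectory stays in a compact sublevel set, and a standard LaSalle-type argument on a compact set with a unique critical point yields $\vzeta(\tau) \to \vwopt$.

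The main obstacle is the Hessian derivation: although each step is a standard calculation, the BN normalization produces several cross-terms in $\nabla_{\!\vw}(\tilvw^\top\tilvx_i)$, and collapsing them to the clean form above hinges on the algebraic identity $\mC\vp^\top + \vp\mC^\top = 2\gamma\vp\vp^\top$ where $\mC := \frac{1}{n}\sum_i \tilde{y}_i\tilvx_i = \mSigmax\tilvw$ and $\vp := \mSigmax\vw/\normsm{\vw}_{\mSigmax}$, which in turn relies on the BN-driven choices $\beta = \muy$ and $\gamma = \sigmay$. Once this structural form is in hand and $\mSigmax\tilvw$ is recognized as manifold-invariant, identification with \eqref{eq:M} and the convergence of \eqref{eq:main-zeta} follow by routine linear-algebra and gradient-flow arguments.
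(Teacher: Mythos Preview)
Your proposal is correct and follows essentially the same approach as the paper. Both proofs compute the Hessian at minimizers via the Gauss--Newton formula and obtain $\mH(\vw)=2\normtwosm{\tilvw}^2(\mSigmax-\vz\vz^\top)$ with $\vz=\tfrac{1}{\gamma}\mSigmax\tilvw$ constant on $\manGa$ (your $\mM$ is just $\gamma^2$ times this bracket), reducing sharpness minimization to $\min_{\manGa}\normtwosm{\tilvw}^2$; the paper's convergence argument (``bounded below, unique stationary point'') is the same as your LaSalle argument, and your parameterization by $\ker(\mSigmax)$ is a slightly more explicit version of the paper's observation that the unique stationary point is the one lying in $\mathrm{span}\{\vx_i-\vmux\}$.
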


\vspace{-0.05in}
At first sight the result may appear trivial because the intent of WD is to
regularize $\normltwo$-norm. But this is deceptive because in scale-invariant
nets the regularization effect of WD is not explicit. This result also
challenges conventional view of optimization. GD is usually viewed as a
discretization of its continuous counterpart, gradient flow (GF), and
theoretical insight for the discrete update including convergence rate and
implicit bias is achieved by analyzing the continuous counterpart (See
\Cref{sec:add-related} for a list). However, GF does not have the same
sharpness-reduction bias as GD. As discussed in \citep{li2020exp}, adding WD
only performs a time-rescaling on the GF trajectory on scale-invariant loss, but
does not change the point that GF converge to if we project the trajectory onto
the unit sphere. One can easily show that GF may converge to any zero-loss
solution, but no matter how small LR is, GD exhibits the sharpness-reduction
bias towards the optimal solution of \eqref{eq:M}. To our best knowledge, this
result is the  first concrete example where even with arbitrarily small LR, GD
can still generalize better than GF under natural settings.

\vspace{-0.04in}
\section{Discussion} \label{sec:discussion}

\myparagraph{Experimental Verification of Sharpness Reduction.}
Besides \Cref{fig:matcom-intro,fig:full-cifar}, \Cref{sec:exp-matlin} provides
additional matrix completion experiments with different data size, and
\Cref{sec:exp-cifar10} provides CIFAR-10 experiments with ResNet-20. In all
these experiments, we observed that GD continues to improve the test accuracy
even after fitting the training set, and this phenomenon is correlated with the
decreasing trend of spherical sharpness. See also \Cref{sec:exp-pb} for the
validation for the periodic behavior we analyze in theory.

\myparagraph{Ablation Studies on Normalization and Weight Decay.}
Our theoretical analysis crucially relies on the interplay between normalization
and WD to establish the sharpness-reduction flow. We also conducted ablation
studies on normalization and WD to highlight the importance of this interplay.
First, if normalization is removed, the spherical sharpness becomes undefined,
and we do not know if GD implicitly minimizes any sharpness measure. But even if
a similar measure does exist, it cannot be strongly related to generalization,
because we can verify that the test accuracy becomes very bad without
normalization ($56.8\%$ on CIFAR-10, \Cref{fig:full-cifar-vgg-ab-norm}), and
continuing training after fitting the training set no longer improves test
accuracy. Second, if WD is removed, the analysis in \citet{arora2018theoretical}
guarantees convergence in the stable regime, and we can verify that the
spherical sharpness and test accuracy stop changing when the loss is small.
The final test accuracy is stuck at $66.4\%$ (\Cref{fig:full-cifar-vgg-wd-vs-nowd}),
whereas training with WD leads to $84.3\%$.

\myparagraph{Explaining the Progressive Sharpening and EoS Phenomena.}
\citet{cohen2021gradient} conducted extensive empirical studies on the dynamics of GD in deep learning (without weight decay),
formally $\vw_{t+1} \gets \vw_t - \heta \nabla \tildeLoss(\vw_t)$.
They observed the \emph{progressive sharpening} phenomenon:
$\lambda_1(\nabla^2 \tildeLoss(\vw_t))$ tends to increase so long as it is less than $2 / \heta$.
Then they observed that the training typically enters the EoS regime, which they define as a regime
that (1)~$\lambda_1(\nabla^2\tildeLoss(\vw_t))$ hovers right at, or just above
$2 / \heta$; and (2)~the training loss $\tildeLoss(\vw_t)$ goes up and down over
short timescales, yet still decreases in the long-term run.
A recent research trend focuses on explaining
the progressive sharpening and EoS phenomena~\citep{ahn2022understanding,ma2022multiscale,arora2022understanding,chen2022gradient}.
Our work corresponds
to an important special case where $\tildeLoss(\vw)$ is a scale-invariant loss
with $\normltwo$-regularization, namely $\Loss(\vw) + \frac{\hlam}{2}
\normtwosm{\vw}^2$. By analyzing the interplay between normalization and WD, the
first part of our results (\Cref{sec:main-result-1}) attributes
progressive sharpening to norm change, and the second part
(\Cref{sec:main-result-2}) justifies in theory that the training can make progress in the EoS
regime. See \Cref{sec:eos-cohen} for more discussion.

\vspace{-0.04in}
\section{Conclusions and Future Work} \label{sec:con}
We exhibited settings  where 
gradient descent has an implicit bias
to reduce spherical sharpness
in training neural nets with normalization layers and weight decay,
and we verified experimentally this sharpness-reduction bias predicted by our theorem
as well as its generalization benefit  on CIFAR-10.

Our theoretical analysis applies to dynamics around a minimizer manifold
and requires a small (but finite) learning rate
so that we can show that the parameter oscillates locally
and
approximately tracks a sharpness-reduction flow.
We note that in practice a decrease in 
 spherical sharpness is observed even with
 moderate LR and even before getting close to a minimizer manifold.
Explaining these phenomena is left for future work.
Now we list some other future directions. The first is to
generalize our results to SGD,
where the sharpness measure may not be the spherical sharpness
and could depend on the structure of gradient noise.
Second, to understand the benefit of reducing spherical sharpness on specific tasks,
e.g., why does reducing spherical sharpness encourage low-rank
on matrix completion with BN (\Cref{fig:matcom-intro})?
Third, to study sharpness-reduction bias for neural net architectures that are not scale-invariant on all parameters (e.g., with certain unnormalized layers).

\vspace{-0.04in}
\section*{Acknowledgements}
This work is funded by NSF, ONR, Simons Foundation, DARPA and SRC. 
ZL is also supported by Microsoft Research
PhD Fellowship.

\newpage
\bibliography{main}

\begin{thebibliography}{128}
\providecommand{\natexlab}[1]{#1}
\providecommand{\url}[1]{\texttt{#1}}
\expandafter\ifx\csname urlstyle\endcsname\relax
  \providecommand{\doi}[1]{doi: #1}\else
  \providecommand{\doi}{doi: \begingroup \urlstyle{rm}\Url}\fi

\bibitem[Ahn et~al.(2022)Ahn, Zhang, and Sra]{ahn2022understanding}
Kwangjun Ahn, Jingzhao Zhang, and Suvrit Sra.
\newblock Understanding the unstable convergence of gradient descent.
\newblock In Kamalika Chaudhuri, Stefanie Jegelka, Le~Song, Csaba Szepesvari,
  Gang Niu, and Sivan Sabato, editors, \emph{Proceedings of the 39th
  International Conference on Machine Learning}, volume 162 of
  \emph{Proceedings of Machine Learning Research}, pages 247--257. PMLR, 17--23
  Jul 2022.

\bibitem[Allen-Zhu et~al.(2019{\natexlab{a}})Allen-Zhu, Li, and
  Liang]{allenzhu2019gobeyond}
Zeyuan Allen-Zhu, Yuanzhi Li, and Yingyu Liang.
\newblock Learning and generalization in overparameterized neural networks,
  going beyond two layers.
\newblock In H.~Wallach, H.~Larochelle, A.~Beygelzimer, F.~d\textquotesingle
  Alch\'{e}-Buc, E.~Fox, and R.~Garnett, editors, \emph{Advances in Neural
  Information Processing Systems}, volume~32. Curran Associates, Inc.,
  2019{\natexlab{a}}.

\bibitem[Allen-Zhu et~al.(2019{\natexlab{b}})Allen-Zhu, Li, and
  Song]{allenzhu2019convergence}
Zeyuan Allen-Zhu, Yuanzhi Li, and Zhao Song.
\newblock A convergence theory for deep learning via over-parameterization.
\newblock In Kamalika Chaudhuri and Ruslan Salakhutdinov, editors,
  \emph{Proceedings of the 36th International Conference on Machine Learning},
  volume~97 of \emph{Proceedings of Machine Learning Research}, pages 242--252.
  PMLR, 09--15 Jun 2019{\natexlab{b}}.

\bibitem[Arora et~al.(2019{\natexlab{a}})Arora, Cohen, Hu, and
  Luo]{arora2019implicit}
Sanjeev Arora, Nadav Cohen, Wei Hu, and Yuping Luo.
\newblock Implicit regularization in deep matrix factorization.
\newblock In H.~Wallach, H.~Larochelle, A.~Beygelzimer, F.~d'~Alch\'{e}-Buc,
  E.~Fox, and R.~Garnett, editors, \emph{Advances in Neural Information
  Processing Systems 32}, pages 7411--7422. Curran Associates, Inc.,
  2019{\natexlab{a}}.

\bibitem[Arora et~al.(2019{\natexlab{b}})Arora, Du, Hu, Li, and
  Wang]{arora2019fine}
Sanjeev Arora, Simon Du, Wei Hu, Zhiyuan Li, and Ruosong Wang.
\newblock Fine-grained analysis of optimization and generalization for
  overparameterized two-layer neural networks.
\newblock In \emph{International Conference on Machine Learning}, pages
  322--332. PMLR, 2019{\natexlab{b}}.

\bibitem[Arora et~al.(2019{\natexlab{c}})Arora, Du, Hu, Li, Salakhutdinov, and
  Wang]{arora2019exact}
Sanjeev Arora, Simon~S Du, Wei Hu, Zhiyuan Li, Russ~R Salakhutdinov, and
  Ruosong Wang.
\newblock On exact computation with an infinitely wide neural net.
\newblock In H.~Wallach, H.~Larochelle, A.~Beygelzimer, F.~d'~Alch\'{e}-Buc,
  E.~Fox, and R.~Garnett, editors, \emph{Advances in Neural Information
  Processing Systems 32}, pages 8139--8148. Curran Associates, Inc.,
  2019{\natexlab{c}}.

\bibitem[Arora et~al.(2019{\natexlab{d}})Arora, Li, and
  Lyu]{arora2018theoretical}
Sanjeev Arora, Zhiyuan Li, and Kaifeng Lyu.
\newblock Theoretical analysis of auto rate-tuning by batch normalization.
\newblock In \emph{International Conference on Learning Representations},
  2019{\natexlab{d}}.

\bibitem[Arora et~al.(2022)Arora, Li, and Panigrahi]{arora2022understanding}
Sanjeev Arora, Zhiyuan Li, and Abhishek Panigrahi.
\newblock Understanding gradient descent on the edge of stability in deep
  learning.
\newblock In Kamalika Chaudhuri, Stefanie Jegelka, Le~Song, Csaba Szepesvari,
  Gang Niu, and Sivan Sabato, editors, \emph{Proceedings of the 39th
  International Conference on Machine Learning}, volume 162 of
  \emph{Proceedings of Machine Learning Research}, pages 948--1024. PMLR,
  17--23 Jul 2022.

\bibitem[Ba et~al.(2016)Ba, Kiros, and Hinton]{ba2016layer}
Jimmy~Lei Ba, Jamie~Ryan Kiros, and Geoffrey~E Hinton.
\newblock Layer normalization.
\newblock \emph{arXiv preprint arXiv:1607.06450}, 2016.

\bibitem[Balduzzi et~al.(2017)Balduzzi, Frean, Leary, Lewis, Ma, and
  McWilliams]{balduzzi2017shattered}
David Balduzzi, Marcus Frean, Lennox Leary, J.~P. Lewis, Kurt Wan-Duo Ma, and
  Brian McWilliams.
\newblock The shattered gradients problem: If resnets are the answer, then what
  is the question?
\newblock In Doina Precup and Yee~Whye Teh, editors, \emph{Proceedings of the
  34th International Conference on Machine Learning}, volume~70 of
  \emph{Proceedings of Machine Learning Research}, pages 342--350. PMLR, 06--11
  Aug 2017.

\bibitem[Barrett and Dherin(2021)]{barrett2021implicit}
David Barrett and Benoit Dherin.
\newblock Implicit gradient regularization.
\newblock In \emph{International Conference on Learning Representations}, 2021.

\bibitem[Bjorck et~al.(2018)Bjorck, Gomes, and Selman]{bjorck2018understanding}
Johan Bjorck, Carla Gomes, and Bart Selman.
\newblock Understanding batch normalization.
\newblock \emph{arXiv preprint arXiv:1806.02375}, 2018.

\bibitem[Blanc et~al.(2020)Blanc, Gupta, Valiant, and Valiant]{blanc2020ou}
Guy Blanc, Neha Gupta, Gregory Valiant, and Paul Valiant.
\newblock Implicit regularization for deep neural networks driven by an
  ornstein-uhlenbeck like process.
\newblock In Jacob Abernethy and Shivani Agarwal, editors, \emph{Proceedings of
  Thirty Third Conference on Learning Theory}, volume 125 of \emph{Proceedings
  of Machine Learning Research}, pages 483--513. PMLR, 09--12 Jul 2020.

\bibitem[Brock et~al.(2021)Brock, De, and Smith]{brock2021characterizing}
Andrew Brock, Soham De, and Samuel~L Smith.
\newblock Characterizing signal propagation to close the performance gap in
  unnormalized resnets.
\newblock In \emph{International Conference on Learning Representations}, 2021.

\bibitem[Brown et~al.(2020)Brown, Mann, Ryder, Subbiah, Kaplan, Dhariwal,
  Neelakantan, Shyam, Sastry, Askell, Agarwal, Herbert-Voss, Krueger, Henighan,
  Child, Ramesh, Ziegler, Wu, Winter, Hesse, Chen, Sigler, Litwin, Gray, Chess,
  Clark, Berner, McCandlish, Radford, Sutskever, and Amodei]{brown2020gpt3}
Tom Brown, Benjamin Mann, Nick Ryder, Melanie Subbiah, Jared~D Kaplan, Prafulla
  Dhariwal, Arvind Neelakantan, Pranav Shyam, Girish Sastry, Amanda Askell,
  Sandhini Agarwal, Ariel Herbert-Voss, Gretchen Krueger, Tom Henighan, Rewon
  Child, Aditya Ramesh, Daniel Ziegler, Jeffrey Wu, Clemens Winter, Chris
  Hesse, Mark Chen, Eric Sigler, Mateusz Litwin, Scott Gray, Benjamin Chess,
  Jack Clark, Christopher Berner, Sam McCandlish, Alec Radford, Ilya Sutskever,
  and Dario Amodei.
\newblock Language models are few-shot learners.
\newblock In H.~Larochelle, M.~Ranzato, R.~Hadsell, M.F. Balcan, and H.~Lin,
  editors, \emph{Advances in Neural Information Processing Systems}, volume~33,
  pages 1877--1901. Curran Associates, Inc., 2020.

\bibitem[Cai et~al.(2019)Cai, Li, and Shen]{cai2019quantitative}
Yongqiang Cai, Qianxiao Li, and Zuowei Shen.
\newblock A quantitative analysis of the effect of batch normalization on
  gradient descent.
\newblock In Kamalika Chaudhuri and Ruslan Salakhutdinov, editors,
  \emph{Proceedings of the 36th International Conference on Machine Learning},
  volume~97 of \emph{Proceedings of Machine Learning Research}, pages 882--890.
  PMLR, 09--15 Jun 2019.

\bibitem[Cao and Gu(2019)]{cao2019generalization}
Yuan Cao and Quanquan Gu.
\newblock Generalization bounds of stochastic gradient descent for wide and
  deep neural networks.
\newblock In H.~Wallach, H.~Larochelle, A.~Beygelzimer, F.~d\textquotesingle
  Alch\'{e}-Buc, E.~Fox, and R.~Garnett, editors, \emph{Advances in Neural
  Information Processing Systems}, volume~32. Curran Associates, Inc., 2019.

\bibitem[Chen and Bruna(2022)]{chen2022gradient}
Lei Chen and Joan Bruna.
\newblock On gradient descent convergence beyond the edge of stability.
\newblock \emph{arXiv preprint arXiv:2206.04172}, 2022.

\bibitem[Chen et~al.(2021)Chen, Cao, Zou, and Gu]{chen2021how}
Zixiang Chen, Yuan Cao, Difan Zou, and Quanquan Gu.
\newblock How much over-parameterization is sufficient to learn deep {ReLU}
  networks?
\newblock In \emph{International Conference on Learning Representations}, 2021.

\bibitem[Chi et~al.(2019)Chi, Lu, and Chen]{chi2019nonconvex}
Yuejie Chi, Yue~M. Lu, and Yuxin Chen.
\newblock Nonconvex optimization meets low-rank matrix factorization: An
  overview.
\newblock \emph{IEEE Transactions on Signal Processing}, 67\penalty0
  (20):\penalty0 5239--5269, 2019.
\newblock \doi{10.1109/TSP.2019.2937282}.

\bibitem[Chiley et~al.(2019)Chiley, Sharapov, Kosson, Koster, Reece, Samaniego
  de~la Fuente, Subbiah, and James]{chiley2019online}
Vitaliy Chiley, Ilya Sharapov, Atli Kosson, Urs Koster, Ryan Reece, Sofia
  Samaniego de~la Fuente, Vishal Subbiah, and Michael James.
\newblock Online normalization for training neural networks.
\newblock In H.~Wallach, H.~Larochelle, A.~Beygelzimer, F.~d\textquotesingle
  Alch\'{e}-Buc, E.~Fox, and R.~Garnett, editors, \emph{Advances in Neural
  Information Processing Systems}, volume~32. Curran Associates, Inc., 2019.

\bibitem[Chizat and Bach(2020)]{chizat2020implicit}
Lenaic Chizat and Francis Bach.
\newblock Implicit bias of gradient descent for wide two-layer neural networks
  trained with the logistic loss.
\newblock In \emph{Conference on Learning Theory}, pages 1305--1338. PMLR,
  2020.

\bibitem[Chizat et~al.(2019)Chizat, Oyallon, and Bach]{chizat2019lazy}
L\'{e}na\"{\i}c Chizat, Edouard Oyallon, and Francis Bach.
\newblock On lazy training in differentiable programming.
\newblock In H.~Wallach, H.~Larochelle, A.~Beygelzimer, F.~d\textquotesingle
  Alch\'{e}-Buc, E.~Fox, and R.~Garnett, editors, \emph{Advances in Neural
  Information Processing Systems 32}, pages 2937--2947. Curran Associates,
  Inc., 2019.

\bibitem[Cohen et~al.(2021)Cohen, Kaur, Li, Kolter, and
  Talwalkar]{cohen2021gradient}
Jeremy Cohen, Simran Kaur, Yuanzhi Li, J~Zico Kolter, and Ameet Talwalkar.
\newblock Gradient descent on neural networks typically occurs at the edge of
  stability.
\newblock In \emph{International Conference on Learning Representations}, 2021.

\bibitem[Damian et~al.(2021)Damian, Ma, and Lee]{damian2021label}
Alex Damian, Tengyu Ma, and Jason~D Lee.
\newblock Label noise {SGD} provably prefers flat global minimizers.
\newblock In M.~Ranzato, A.~Beygelzimer, Y.~Dauphin, P.S. Liang, and J.~Wortman
  Vaughan, editors, \emph{Advances in Neural Information Processing Systems},
  volume~34, pages 27449--27461. Curran Associates, Inc., 2021.

\bibitem[Daneshmand et~al.(2020)Daneshmand, Kohler, Bach, Hofmann, and
  Lucchi]{daneshmand2020rankcollapse}
Hadi Daneshmand, Jonas Kohler, Francis Bach, Thomas Hofmann, and Aurelien
  Lucchi.
\newblock Batch normalization provably avoids ranks collapse for randomly
  initialised deep networks.
\newblock In H.~Larochelle, M.~Ranzato, R.~Hadsell, M.F. Balcan, and H.~Lin,
  editors, \emph{Advances in Neural Information Processing Systems}, volume~33,
  pages 18387--18398. Curran Associates, Inc., 2020.

\bibitem[Daneshmand et~al.(2021)Daneshmand, Joudaki, and
  Bach]{daneshmand2021batch}
Hadi Daneshmand, Amir Joudaki, and Francis Bach.
\newblock Batch normalization orthogonalizes representations in deep random
  networks.
\newblock In A.~Beygelzimer, Y.~Dauphin, P.~Liang, and J.~Wortman Vaughan,
  editors, \emph{Advances in Neural Information Processing Systems}, 2021.

\bibitem[De and Smith(2020)]{de2020identity}
Soham De and Sam Smith.
\newblock Batch normalization biases residual blocks towards the identity
  function in deep networks.
\newblock In H.~Larochelle, M.~Ranzato, R.~Hadsell, M.F. Balcan, and H.~Lin,
  editors, \emph{Advances in Neural Information Processing Systems}, volume~33,
  pages 19964--19975. Curran Associates, Inc., 2020.

\bibitem[Devlin et~al.(2019)Devlin, Chang, Lee, and Toutanova]{devlin2019bert}
Jacob Devlin, Ming-Wei Chang, Kenton Lee, and Kristina Toutanova.
\newblock {BERT}: Pre-training of deep bidirectional transformers for language
  understanding.
\newblock In \emph{Proceedings of the 2019 Conference of the North {A}merican
  Chapter of the Association for Computational Linguistics: Human Language
  Technologies, Volume 1 (Long and Short Papers)}, pages 4171--4186,
  Minneapolis, Minnesota, June 2019. Association for Computational Linguistics.
\newblock \doi{10.18653/v1/N19-1423}.

\bibitem[Ding et~al.(2022)Ding, Drusvyatskiy, and Fazel]{ding2022flat}
Lijun Ding, Dmitriy Drusvyatskiy, and Maryam Fazel.
\newblock Flat minima generalize for low-rank matrix recovery.
\newblock \emph{arXiv preprint arXiv:2203.03756}, 2022.

\bibitem[Dinh et~al.(2017)Dinh, Pascanu, Bengio, and Bengio]{dinh2017sharp}
Laurent Dinh, Razvan Pascanu, Samy Bengio, and Yoshua Bengio.
\newblock Sharp minima can generalize for deep nets.
\newblock In Doina Precup and Yee~Whye Teh, editors, \emph{Proceedings of the
  34th International Conference on Machine Learning}, volume~70 of
  \emph{Proceedings of Machine Learning Research}, pages 1019--1028. PMLR,
  06--11 Aug 2017.

\bibitem[Du et~al.(2019)Du, Lee, Li, Wang, and Zhai]{du2019gradient}
Simon Du, Jason Lee, Haochuan Li, Liwei Wang, and Xiyu Zhai.
\newblock Gradient descent finds global minima of deep neural networks.
\newblock In \emph{International Conference on Machine Learning}, pages
  1675--1685. PMLR, 2019.

\bibitem[Dukler et~al.(2020)Dukler, Gu, and Montufar]{dukler2020optimization}
Yonatan Dukler, Quanquan Gu, and Guido Montufar.
\newblock Optimization theory for {R}e{LU} neural networks trained with
  normalization layers.
\newblock In Hal~Daumé III and Aarti Singh, editors, \emph{Proceedings of the
  37th International Conference on Machine Learning}, volume 119 of
  \emph{Proceedings of Machine Learning Research}, pages 2751--2760. PMLR,
  13--18 Jul 2020.

\bibitem[Falconer(1983)]{falconer1983differentiation}
K.~J. Falconer.
\newblock {Differentiation of the Limit Mapping in a Dynamical System}.
\newblock \emph{Journal of the London Mathematical Society}, s2-27\penalty0
  (2):\penalty0 356--372, 04 1983.
\newblock ISSN 0024-6107.
\newblock \doi{10.1112/jlms/s2-27.2.356}.

\bibitem[Fehrman et~al.(2020)Fehrman, Gess, and
  Jentzen]{fehrman2020convergence}
Benjamin Fehrman, Benjamin Gess, and Arnulf Jentzen.
\newblock Convergence rates for the stochastic gradient descent method for
  non-convex objective functions.
\newblock \emph{Journal of Machine Learning Research}, 21\penalty0
  (136):\penalty0 1--48, 2020.

\bibitem[Foote(1984)]{foote1984distance}
Robert~L. Foote.
\newblock Shorter notes: Regularity of the distance function.
\newblock \emph{Proceedings of the American Mathematical Society}, 92\penalty0
  (1):\penalty0 153--155, 1984.
\newblock ISSN 00029939, 10886826.

\bibitem[Foret et~al.(2021)Foret, Kleiner, Mobahi, and Neyshabur]{foret2021sam}
Pierre Foret, Ariel Kleiner, Hossein Mobahi, and Behnam Neyshabur.
\newblock Sharpness-aware minimization for efficiently improving
  generalization.
\newblock In \emph{International Conference on Learning Representations}, 2021.

\bibitem[Ge et~al.(2021)Ge, Ren, Wang, and Zhou]{ge2021understanding}
Rong Ge, Yunwei Ren, Xiang Wang, and Mo~Zhou.
\newblock Understanding deflation process in over-parametrized tensor
  decomposition.
\newblock \emph{Advances in Neural Information Processing Systems}, 34, 2021.

\bibitem[Ghorbani et~al.(2019)Ghorbani, Krishnan, and
  Xiao]{chaudhuri2019investigation}
Behrooz Ghorbani, Shankar Krishnan, and Ying Xiao.
\newblock An investigation into neural net optimization via hessian eigenvalue
  density.
\newblock In Kamalika Chaudhuri and Ruslan Salakhutdinov, editors,
  \emph{Proceedings of the 36th International Conference on Machine Learning},
  volume~97 of \emph{Proceedings of Machine Learning Research}, pages
  2232--2241. PMLR, 09--15 Jun 2019.

\bibitem[Gilmer et~al.(2022)Gilmer, Ghorbani, Garg, Kudugunta, Neyshabur,
  Cardoze, Dahl, Nado, and Firat]{gilmer2022curvature}
Justin Gilmer, Behrooz Ghorbani, Ankush Garg, Sneha Kudugunta, Behnam
  Neyshabur, David Cardoze, George~Edward Dahl, Zachary Nado, and Orhan Firat.
\newblock A loss curvature perspective on training instabilities of deep
  learning models.
\newblock In \emph{International Conference on Learning Representations}, 2022.

\bibitem[Gunasekar et~al.(2017)Gunasekar, Woodworth, Bhojanapalli, Neyshabur,
  and Srebro]{gunasekar2017implicit}
Suriya Gunasekar, Blake~E Woodworth, Srinadh Bhojanapalli, Behnam Neyshabur,
  and Nati Srebro.
\newblock Implicit regularization in matrix factorization.
\newblock In I.~Guyon, U.~V. Luxburg, S.~Bengio, H.~Wallach, R.~Fergus,
  S.~Vishwanathan, and R.~Garnett, editors, \emph{Advances in Neural
  Information Processing Systems 30}, pages 6151--6159. Curran Associates,
  Inc., 2017.

\bibitem[Gunasekar et~al.(2018{\natexlab{a}})Gunasekar, Lee, Soudry, and
  Srebro]{gunasekar2018implicit}
Suriya Gunasekar, Jason~D Lee, Daniel Soudry, and Nati Srebro.
\newblock Implicit bias of gradient descent on linear convolutional networks.
\newblock In S.~Bengio, H.~Wallach, H.~Larochelle, K.~Grauman, N.~Cesa-Bianchi,
  and R.~Garnett, editors, \emph{Advances in Neural Information Processing
  Systems 31}, pages 9482--9491. Curran Associates, Inc., 2018{\natexlab{a}}.

\bibitem[Gunasekar et~al.(2018{\natexlab{b}})Gunasekar, Lee, Soudry, and
  Srebro]{gunasekar2018implicitconv}
Suriya Gunasekar, Jason~D Lee, Daniel Soudry, and Nati Srebro.
\newblock Implicit bias of gradient descent on linear convolutional networks.
\newblock \emph{Advances in Neural Information Processing Systems}, 31,
  2018{\natexlab{b}}.

\bibitem[HaoChen et~al.(2021)HaoChen, Wei, Lee, and Ma]{haochen2021shape}
Jeff~Z. HaoChen, Colin Wei, Jason Lee, and Tengyu Ma.
\newblock Shape matters: Understanding the implicit bias of the noise
  covariance.
\newblock In Mikhail Belkin and Samory Kpotufe, editors, \emph{Proceedings of
  Thirty Fourth Conference on Learning Theory}, volume 134 of \emph{Proceedings
  of Machine Learning Research}, pages 2315--2357. PMLR, 15--19 Aug 2021.

\bibitem[He et~al.(2019)He, Huang, and Yuan]{he2019asymmetric}
Haowei He, Gao Huang, and Yang Yuan.
\newblock Asymmetric valleys: Beyond sharp and flat local minima.
\newblock \emph{Advances in neural information processing systems}, 32, 2019.

\bibitem[He et~al.(2016{\natexlab{a}})He, Zhang, Ren, and Sun]{he2016deepres}
Kaiming He, Xiangyu Zhang, Shaoqing Ren, and Jian Sun.
\newblock Deep residual learning for image recognition.
\newblock In \emph{Proceedings of the IEEE conference on computer vision and
  pattern recognition}, pages 770--778, 2016{\natexlab{a}}.

\bibitem[He et~al.(2016{\natexlab{b}})He, Zhang, Ren, and
  Sun]{he2016identityres}
Kaiming He, Xiangyu Zhang, Shaoqing Ren, and Jian Sun.
\newblock Identity mappings in deep residual networks.
\newblock In Bastian Leibe, Jiri Matas, Nicu Sebe, and Max Welling, editors,
  \emph{Computer Vision -- ECCV 2016}, pages 630--645, Cham,
  2016{\natexlab{b}}. Springer International Publishing.
\newblock ISBN 978-3-319-46493-0.

\bibitem[Hendrycks and Gimpel(2016)]{hendrycks2016gelu}
Dan Hendrycks and Kevin Gimpel.
\newblock Gaussian error linear units ({GELU}s).
\newblock \emph{arXiv preprint arXiv:1606.08415}, 2016.

\bibitem[Hinton et~al.(2012)Hinton, Srivastava, and Swersky]{hinton2012rmsprop}
Geoffrey Hinton, Nitish Srivastava, and Kevin Swersky.
\newblock Neural networks for machine learning lecture 6a: Overview of
  mini-batch gradient descent.
\newblock Technical report, 2012.
\newblock URL
  \url{https://www.cs.toronto.edu/~tijmen/csc321/slides/lecture_slides_lec6.pdf}.

\bibitem[Hochreiter and Schmidhuber(1997)]{hochreiter1997flat}
Sepp Hochreiter and J{\"u}rgen Schmidhuber.
\newblock Flat minima.
\newblock \emph{Neural computation}, 9\penalty0 (1):\penalty0 1--42, 1997.

\bibitem[Hoffer et~al.(2017)Hoffer, Hubara, and Soudry]{hoffer2017trainlonger}
Elad Hoffer, Itay Hubara, and Daniel Soudry.
\newblock Train longer, generalize better: closing the generalization gap in
  large batch training of neural networks.
\newblock In I.~Guyon, U.~Von Luxburg, S.~Bengio, H.~Wallach, R.~Fergus,
  S.~Vishwanathan, and R.~Garnett, editors, \emph{Advances in Neural
  Information Processing Systems}, volume~30. Curran Associates, Inc., 2017.

\bibitem[Hoffer et~al.(2018)Hoffer, Banner, Golan, and Soudry]{hoffer2018norm}
Elad Hoffer, Ron Banner, Itay Golan, and Daniel Soudry.
\newblock Norm matters: efficient and accurate normalization schemes in deep
  networks.
\newblock In S.~Bengio, H.~Wallach, H.~Larochelle, K.~Grauman, N.~Cesa-Bianchi,
  and R.~Garnett, editors, \emph{Advances in Neural Information Processing
  Systems}, volume~31. Curran Associates, Inc., 2018.

\bibitem[Huang et~al.(2017)Huang, Liu, Liu, Lang, and Tao]{huang2017centered}
Lei Huang, Xianglong Liu, Yang Liu, Bo~Lang, and Dacheng Tao.
\newblock Centered weight normalization in accelerating training of deep neural
  networks.
\newblock In \emph{Proceedings of the IEEE International Conference on Computer
  Vision (ICCV)}, Oct 2017.

\bibitem[Ibayashi and Imaizumi(2021)]{ibayashi2022expescape}
Hikaru Ibayashi and Masaaki Imaizumi.
\newblock Exponential escape efficiency of {SGD} from sharp minima in
  non-stationary regime.
\newblock \emph{arXiv preprint arXiv:2111.04004}, 2021.

\bibitem[Ioffe and Szegedy(2015)]{ioffe2015batch}
Sergey Ioffe and Christian Szegedy.
\newblock Batch normalization: Accelerating deep network training by reducing
  internal covariate shift.
\newblock In Francis Bach and David Blei, editors, \emph{Proceedings of the
  32nd International Conference on Machine Learning}, volume~37 of
  \emph{Proceedings of Machine Learning Research}, pages 448--456, Lille,
  France, 07--09 Jul 2015. PMLR.

\bibitem[Jacot et~al.(2018)Jacot, Gabriel, and Hongler]{jacot2018ntk}
Arthur Jacot, Franck Gabriel, and Clement Hongler.
\newblock Neural tangent kernel: Convergence and generalization in neural
  networks.
\newblock In S.~Bengio, H.~Wallach, H.~Larochelle, K.~Grauman, N.~Cesa-Bianchi,
  and R.~Garnett, editors, \emph{Advances in Neural Information Processing
  Systems 31}, pages 8571--8580. Curran Associates, Inc., 2018.

\bibitem[Jastrz{\k{e}}bski et~al.(2017)Jastrz{\k{e}}bski, Kenton, Arpit,
  Ballas, Fischer, Bengio, and Storkey]{jastrzkebski2017three}
Stanis{\l}aw Jastrz{\k{e}}bski, Zachary Kenton, Devansh Arpit, Nicolas Ballas,
  Asja Fischer, Yoshua Bengio, and Amos Storkey.
\newblock Three factors influencing minima in {SGD}.
\newblock \emph{arXiv preprint arXiv:1711.04623}, 2017.

\bibitem[Jastrzebski et~al.(2020)Jastrzebski, Szymczak, Fort, Arpit, Tabor,
  Cho, and Geras]{jastrzebski2020breakeven}
Stanislaw Jastrzebski, Maciej Szymczak, Stanislav Fort, Devansh Arpit, Jacek
  Tabor, Kyunghyun Cho, and Krzysztof Geras.
\newblock The break-even point on optimization trajectories of deep neural
  networks.
\newblock In \emph{International Conference on Learning Representations}, 2020.

\bibitem[Ji and Telgarsky(2020)]{ji2020directional}
Ziwei Ji and Matus Telgarsky.
\newblock Directional convergence and alignment in deep learning.
\newblock In H.~Larochelle, M.~Ranzato, R.~Hadsell, M.~F. Balcan, and H.~Lin,
  editors, \emph{Advances in Neural Information Processing Systems}, volume~33,
  pages 17176--17186. Curran Associates, Inc., 2020.

\bibitem[Jiang et~al.(2020)Jiang, Neyshabur, Mobahi, Krishnan, and
  Bengio]{jiang2020fantastic}
Yiding Jiang, Behnam Neyshabur, Hossein Mobahi, Dilip Krishnan, and Samy
  Bengio.
\newblock Fantastic generalization measures and where to find them.
\newblock In \emph{International Conference on Learning Representations}, 2020.

\bibitem[Karakida et~al.(2019)Karakida, Akaho, and
  Amari]{karakida2019normalization}
Ryo Karakida, Shotaro Akaho, and Shun-ichi Amari.
\newblock The normalization method for alleviating pathological sharpness in
  wide neural networks.
\newblock In H.~Wallach, H.~Larochelle, A.~Beygelzimer, F.~d\textquotesingle
  Alch\'{e}-Buc, E.~Fox, and R.~Garnett, editors, \emph{Advances in Neural
  Information Processing Systems}, volume~32. Curran Associates, Inc., 2019.

\bibitem[Karimi et~al.(2016)Karimi, Nutini, and Schmidt]{karimi2016pl}
Hamed Karimi, Julie Nutini, and Mark Schmidt.
\newblock Linear convergence of gradient and proximal-gradient methods under
  the polyak-\l{}ojasiewicz condition.
\newblock In \emph{European Conference on Machine Learning and Knowledge
  Discovery in Databases - Volume 9851}, ECML PKDD 2016, pages 795--811,
  Berlin, Heidelberg, 2016. Springer-Verlag.
\newblock ISBN 9783319461274.
\newblock \doi{10.1007/978-3-319-46128-1_50}.

\bibitem[Keskar et~al.(2017)Keskar, Mudigere, Nocedal, Smelyanskiy, and
  Tang]{keskar2017large}
Nitish~Shirish Keskar, Dheevatsa Mudigere, Jorge Nocedal, Mikhail Smelyanskiy,
  and Ping Tak~Peter Tang.
\newblock On large-batch training for deep learning: Generalization gap and
  sharp minima.
\newblock In \emph{International Conference on Learning Representations}, 2017.

\bibitem[Kleinberg et~al.(2018)Kleinberg, Li, and
  Yuan]{kleinberg2018alternative}
Bobby Kleinberg, Yuanzhi Li, and Yang Yuan.
\newblock An alternative view: When does {SGD} escape local minima?
\newblock In Jennifer Dy and Andreas Krause, editors, \emph{Proceedings of the
  35th International Conference on Machine Learning}, volume~80 of
  \emph{Proceedings of Machine Learning Research}, pages 2698--2707. PMLR,
  10--15 Jul 2018.

\bibitem[Kohler et~al.(2019)Kohler, Daneshmand, Lucchi, Hofmann, Zhou, and
  Neymeyr]{kohler2019exp}
Jonas Kohler, Hadi Daneshmand, Aurelien Lucchi, Thomas Hofmann, Ming Zhou, and
  Klaus Neymeyr.
\newblock Exponential convergence rates for batch normalization: The power of
  length-direction decoupling in non-convex optimization.
\newblock In Kamalika Chaudhuri and Masashi Sugiyama, editors,
  \emph{Proceedings of the Twenty-Second International Conference on Artificial
  Intelligence and Statistics}, volume~89 of \emph{Proceedings of Machine
  Learning Research}, pages 806--815. PMLR, 16--18 Apr 2019.

\bibitem[Kong and Tao(2020)]{kong2020multiscale}
Lingkai Kong and Molei Tao.
\newblock Stochasticity of deterministic gradient descent: Large learning rate
  for multiscale objective function.
\newblock In H.~Larochelle, M.~Ranzato, R.~Hadsell, M.F. Balcan, and H.~Lin,
  editors, \emph{Advances in Neural Information Processing Systems}, volume~33,
  pages 2625--2638. Curran Associates, Inc., 2020.

\bibitem[Labatie et~al.(2021)Labatie, Masters, Eaton-Rosen, and
  Luschi]{labatie2021proxy}
Antoine Labatie, Dominic Masters, Zach Eaton-Rosen, and Carlo Luschi.
\newblock Proxy-normalizing activations to match batch normalization while
  removing batch dependence.
\newblock In M.~Ranzato, A.~Beygelzimer, Y.~Dauphin, P.S. Liang, and J.~Wortman
  Vaughan, editors, \emph{Advances in Neural Information Processing Systems},
  volume~34, pages 16990--17006. Curran Associates, Inc., 2021.

\bibitem[Lange et~al.(2022)Lange, Helfrich, and Ye]{lange2022preconditioning}
Susanna Lange, Kyle Helfrich, and Qiang Ye.
\newblock Batch normalization preconditioning for neural network training.
\newblock \emph{Journal of Machine Learning Research}, 23\penalty0
  (72):\penalty0 1--41, 2022.

\bibitem[Laurent and Massart(2000)]{laurent2000adaptive}
Beatrice Laurent and Pascal Massart.
\newblock Adaptive estimation of a quadratic functional by model selection.
\newblock \emph{Annals of Statistics}, pages 1302--1338, 2000.

\bibitem[Lee et~al.(2016)Lee, Simchowitz, Jordan, and Recht]{lee2016minimizers}
Jason~D. Lee, Max Simchowitz, Michael~I. Jordan, and Benjamin Recht.
\newblock Gradient descent only converges to minimizers.
\newblock In Vitaly Feldman, Alexander Rakhlin, and Ohad Shamir, editors,
  \emph{29th Annual Conference on Learning Theory}, volume~49 of
  \emph{Proceedings of Machine Learning Research}, pages 1246--1257, Columbia
  University, New York, New York, USA, 23--26 Jun 2016. PMLR.

\bibitem[Lee et~al.(2017)Lee, Panageas, Piliouras, Simchowitz, Jordan, and
  Recht]{lee2017first}
Jason~D Lee, Ioannis Panageas, Georgios Piliouras, Max Simchowitz, Michael~I
  Jordan, and Benjamin Recht.
\newblock First-order methods almost always avoid saddle points.
\newblock \emph{arXiv preprint arXiv:1710.07406}, 2017.

\bibitem[Lewkowycz and Gur-Ari(2020)]{lewkowycz2020training}
Aitor Lewkowycz and Guy Gur-Ari.
\newblock On the training dynamics of deep networks with {L}\_2 regularization.
\newblock In H.~Larochelle, M.~Ranzato, R.~Hadsell, M.F. Balcan, and H.~Lin,
  editors, \emph{Advances in Neural Information Processing Systems}, volume~33,
  pages 4790--4799. Curran Associates, Inc., 2020.

\bibitem[Lewkowycz et~al.(2020)Lewkowycz, Bahri, Dyer, Sohl-Dickstein, and
  Gur-Ari]{lewkowycz2020large}
Aitor Lewkowycz, Yasaman Bahri, Ethan Dyer, Jascha Sohl-Dickstein, and Guy
  Gur-Ari.
\newblock The large learning rate phase of deep learning: the catapult
  mechanism.
\newblock \emph{arXiv preprint arXiv:2003.02218}, 2020.

\bibitem[Li et~al.(2018{\natexlab{a}})Li, Xu, Taylor, Studer, and
  Goldstein]{li2018visualizing}
Hao Li, Zheng Xu, Gavin Taylor, Christoph Studer, and Tom Goldstein.
\newblock Visualizing the loss landscape of neural nets.
\newblock \emph{Advances in neural information processing systems}, 31,
  2018{\natexlab{a}}.

\bibitem[Li and Liang(2018)]{li2018learning}
Yuanzhi Li and Yingyu Liang.
\newblock Learning overparameterized neural networks via stochastic gradient
  descent on structured data.
\newblock In S.~Bengio, H.~Wallach, H.~Larochelle, K.~Grauman, N.~Cesa-Bianchi,
  and R.~Garnett, editors, \emph{Advances in Neural Information Processing
  Systems}, volume~31. Curran Associates, Inc., 2018.

\bibitem[Li et~al.(2018{\natexlab{b}})Li, Ma, and Zhang]{li2018algorithmic}
Yuanzhi Li, Tengyu Ma, and Hongyang Zhang.
\newblock Algorithmic regularization in over-parameterized matrix sensing and
  neural networks with quadratic activations.
\newblock In S\'ebastien Bubeck, Vianney Perchet, and Philippe Rigollet,
  editors, \emph{Proceedings of the 31st Conference On Learning Theory},
  volume~75 of \emph{Proceedings of Machine Learning Research}, pages 2--47.
  PMLR, 06--09 Jul 2018{\natexlab{b}}.

\bibitem[Li and Arora(2020)]{li2020exp}
Zhiyuan Li and Sanjeev Arora.
\newblock An exponential learning rate schedule for deep learning.
\newblock In \emph{International Conference on Learning Representations}, 2020.

\bibitem[Li et~al.(2020)Li, Lyu, and Arora]{li2020reconciling}
Zhiyuan Li, Kaifeng Lyu, and Sanjeev Arora.
\newblock Reconciling modern deep learning with traditional optimization
  analyses: The intrinsic learning rate.
\newblock In H.~Larochelle, M.~Ranzato, R.~Hadsell, M.~F. Balcan, and H.~Lin,
  editors, \emph{Advances in Neural Information Processing Systems}, volume~33,
  pages 14544--14555. Curran Associates, Inc., 2020.

\bibitem[Li et~al.(2021)Li, Luo, and Lyu]{li2021towards}
Zhiyuan Li, Yuping Luo, and Kaifeng Lyu.
\newblock Towards resolving the implicit bias of gradient descent for matrix
  factorization: Greedy low-rank learning.
\newblock In \emph{International Conference on Learning Representations}, 2021.

\bibitem[Li et~al.(2022{\natexlab{a}})Li, Bhojanapalli, Zaheer, Reddi, and
  Kumar]{li2022robust}
Zhiyuan Li, Srinadh Bhojanapalli, Manzil Zaheer, Sashank Reddi, and Sanjiv
  Kumar.
\newblock Robust training of neural networks using scale invariant
  architectures.
\newblock In Kamalika Chaudhuri, Stefanie Jegelka, Le~Song, Csaba Szepesvari,
  Gang Niu, and Sivan Sabato, editors, \emph{Proceedings of the 39th
  International Conference on Machine Learning}, volume 162 of
  \emph{Proceedings of Machine Learning Research}, pages 12656--12684. PMLR,
  17--23 Jul 2022{\natexlab{a}}.

\bibitem[Li et~al.(2022{\natexlab{b}})Li, Wang, and Arora]{li2022what}
Zhiyuan Li, Tianhao Wang, and Sanjeev Arora.
\newblock What happens after {SGD} reaches zero loss? --a mathematical
  framework.
\newblock In \emph{International Conference on Learning Representations},
  2022{\natexlab{b}}.

\bibitem[Lin et~al.(2021)Lin, Sekar, and Fanti]{lin2021spectral}
Zinan Lin, Vyas Sekar, and Giulia Fanti.
\newblock Why spectral normalization stabilizes {GANs}: Analysis and
  improvements.
\newblock In M.~Ranzato, A.~Beygelzimer, Y.~Dauphin, P.S. Liang, and J.~Wortman
  Vaughan, editors, \emph{Advances in Neural Information Processing Systems},
  volume~34, pages 9625--9638. Curran Associates, Inc., 2021.

\bibitem[Liu et~al.(2020)Liu, Papailiopoulos, and Achlioptas]{liu2020bad}
Shengchao Liu, Dimitris Papailiopoulos, and Dimitris Achlioptas.
\newblock Bad global minima exist and sgd can reach them.
\newblock In H.~Larochelle, M.~Ranzato, R.~Hadsell, M.F. Balcan, and H.~Lin,
  editors, \emph{Advances in Neural Information Processing Systems}, volume~33,
  pages 8543--8552. Curran Associates, Inc., 2020.

\bibitem[Lobacheva et~al.(2021)Lobacheva, Kodryan, Chirkova, Malinin, and
  Vetrov]{lobacheva2021on}
Ekaterina Lobacheva, Maxim Kodryan, Nadezhda Chirkova, Andrey Malinin, and
  Dmitry~P Vetrov.
\newblock On the periodic behavior of neural network training with batch
  normalization and weight decay.
\newblock In M.~Ranzato, A.~Beygelzimer, Y.~Dauphin, P.S. Liang, and J.~Wortman
  Vaughan, editors, \emph{Advances in Neural Information Processing Systems},
  volume~34, pages 21545--21556. Curran Associates, Inc., 2021.

\bibitem[Lubana et~al.(2021)Lubana, Dick, and Tanaka]{lubana2021beyond}
Ekdeep~S Lubana, Robert Dick, and Hidenori Tanaka.
\newblock Beyond batchnorm: Towards a unified understanding of normalization in
  deep learning.
\newblock In M.~Ranzato, A.~Beygelzimer, Y.~Dauphin, P.S. Liang, and J.~Wortman
  Vaughan, editors, \emph{Advances in Neural Information Processing Systems},
  volume~34, pages 4778--4791. Curran Associates, Inc., 2021.

\bibitem[Luo et~al.(2019)Luo, Wang, Shao, and Peng]{luo2019towards}
Ping Luo, Xinjiang Wang, Wenqi Shao, and Zhanglin Peng.
\newblock Towards understanding regularization in batch normalization.
\newblock In \emph{International Conference on Learning Representations}, 2019.

\bibitem[Lyu and Li(2020)]{lyu2020gradient}
Kaifeng Lyu and Jian Li.
\newblock Gradient descent maximizes the margin of homogeneous neural networks.
\newblock In \emph{International Conference on Learning Representations}, 2020.

\bibitem[Lyu et~al.(2021)Lyu, Li, Wang, and Arora]{lyu2021gradient}
Kaifeng Lyu, Zhiyuan Li, Runzhe Wang, and Sanjeev Arora.
\newblock Gradient descent on two-layer nets: Margin maximization and
  simplicity bias.
\newblock \emph{Advances in Neural Information Processing Systems}, 34, 2021.

\bibitem[Ma and Ying(2021)]{ma2021linear}
Chao Ma and Lexing Ying.
\newblock On linear stability of {SGD} and input-smoothness of neural networks.
\newblock In A.~Beygelzimer, Y.~Dauphin, P.~Liang, and J.~Wortman Vaughan,
  editors, \emph{Advances in Neural Information Processing Systems}, 2021.

\bibitem[Ma and Ying(2022)]{ma2021riemannian}
Chao Ma and Lexing Ying.
\newblock A {Riemannian} mean field formulation for two-layer neural networks
  with batch normalization.
\newblock \emph{Research in the Mathematical Sciences}, 9\penalty0
  (3):\penalty0 47, July 2022.
\newblock ISSN 2197-9847.

\bibitem[Ma et~al.(2022)Ma, Kunin, Wu, and Ying]{ma2022multiscale}
Chao Ma, Daniel Kunin, Lei Wu, and Lexing Ying.
\newblock Beyond the quadratic approximation: The multiscale structure of
  neural network loss landscapes.
\newblock \emph{Journal of Machine Learning}, 1\penalty0 (3):\penalty0
  247--267, 2022.
\newblock ISSN 2790-2048.

\bibitem[McAllester(2003)]{mcallester2003simplified}
David McAllester.
\newblock Simplified {PAC}-{B}ayesian margin bounds.
\newblock In \emph{Learning theory and Kernel machines}, pages 203--215.
  Springer, 2003.

\bibitem[Mulayoff and Michaeli(2020)]{mulayoff2020unique}
Rotem Mulayoff and Tomer Michaeli.
\newblock Unique properties of flat minima in deep networks.
\newblock In Hal~Daumé III and Aarti Singh, editors, \emph{Proceedings of the
  37th International Conference on Machine Learning}, volume 119 of
  \emph{Proceedings of Machine Learning Research}, pages 7108--7118. PMLR,
  13--18 Jul 2020.

\bibitem[Mulayoff et~al.(2021)Mulayoff, Michaeli, and
  Soudry]{mulayoff2021minimastability}
Rotem Mulayoff, Tomer Michaeli, and Daniel Soudry.
\newblock The implicit bias of minima stability: A view from function space.
\newblock In M.~Ranzato, A.~Beygelzimer, Y.~Dauphin, P.S. Liang, and J.~Wortman
  Vaughan, editors, \emph{Advances in Neural Information Processing Systems},
  volume~34, pages 17749--17761. Curran Associates, Inc., 2021.

\bibitem[Neyshabur et~al.(2017)Neyshabur, Bhojanapalli, McAllester, and
  Srebro]{neyshabur2017exploring}
Behnam Neyshabur, Srinadh Bhojanapalli, David McAllester, and Nati Srebro.
\newblock Exploring generalization in deep learning.
\newblock \emph{Advances in neural information processing systems}, 30, 2017.

\bibitem[Panageas and Piliouras(2017)]{panageas2017only}
Ioannis Panageas and Georgios Piliouras.
\newblock {Gradient Descent Only Converges to Minimizers: Non-Isolated Critical
  Points and Invariant Regions}.
\newblock In Christos~H. Papadimitriou, editor, \emph{8th Innovations in
  Theoretical Computer Science Conference (ITCS 2017)}, volume~67 of
  \emph{Leibniz International Proceedings in Informatics (LIPIcs)}, pages
  2:1--2:12, Dagstuhl, Germany, 2017. Schloss Dagstuhl--Leibniz-Zentrum fuer
  Informatik.
\newblock ISBN 978-3-95977-029-3.
\newblock \doi{10.4230/LIPIcs.ITCS.2017.2}.

\bibitem[Qiao et~al.(2019)Qiao, Wang, Liu, Shen, and Yuille]{qiao2019micro}
Siyuan Qiao, Huiyu Wang, Chenxi Liu, Wei Shen, and Alan Yuille.
\newblock Micro-batch training with batch-channel normalization and weight
  standardization.
\newblock \emph{arXiv preprint arXiv:1903.10520}, 2019.

\bibitem[Ramachandran et~al.(2017)Ramachandran, Zoph, and
  Le]{ramachandran2017swish}
Prajit Ramachandran, Barret Zoph, and Quoc~V Le.
\newblock Searching for activation functions.
\newblock \emph{arXiv preprint arXiv:1710.05941}, 2017.

\bibitem[Rangamani et~al.(2021)Rangamani, Nguyen, Kumar, Phan, Chin, and
  Tran]{rangamani2021invariant}
Akshay Rangamani, Nam~H. Nguyen, Abhishek Kumar, Dzung Phan, Sang~Peter Chin,
  and Trac~D. Tran.
\newblock A scale invariant measure of flatness for deep network minima.
\newblock In \emph{ICASSP 2021 - 2021 IEEE International Conference on
  Acoustics, Speech and Signal Processing (ICASSP)}, pages 1680--1684, 2021.

\bibitem[Razin and Cohen(2020)]{razin2020implicit}
Noam Razin and Nadav Cohen.
\newblock Implicit regularization in deep learning may not be explainable by
  norms.
\newblock In H.~Larochelle, M.~Ranzato, R.~Hadsell, M.F. Balcan, and H.~Lin,
  editors, \emph{Advances in Neural Information Processing Systems}, volume~33,
  pages 21174--21187. Curran Associates, Inc., 2020.

\bibitem[Razin et~al.(2022)Razin, Maman, and Cohen]{razin2022implicit}
Noam Razin, Asaf Maman, and Nadav Cohen.
\newblock Implicit regularization in hierarchical tensor factorization and deep
  convolutional neural networks.
\newblock In Kamalika Chaudhuri, Stefanie Jegelka, Le~Song, Csaba Szepesvari,
  Gang Niu, and Sivan Sabato, editors, \emph{Proceedings of the 39th
  International Conference on Machine Learning}, volume 162 of
  \emph{Proceedings of Machine Learning Research}, pages 18422--18462. PMLR,
  17--23 Jul 2022.

\bibitem[Salimans and Kingma(2016)]{salimans2016weight}
Tim Salimans and Durk~P Kingma.
\newblock Weight normalization: A simple reparameterization to accelerate
  training of deep neural networks.
\newblock In D.~Lee, M.~Sugiyama, U.~Luxburg, I.~Guyon, and R.~Garnett,
  editors, \emph{Advances in Neural Information Processing Systems}, volume~29.
  Curran Associates, Inc., 2016.

\bibitem[Santurkar et~al.(2018)Santurkar, Tsipras, Ilyas, and
  Madry]{santurkar2018does}
Shibani Santurkar, Dimitris Tsipras, Andrew Ilyas, and Aleksander Madry.
\newblock How does batch normalization help optimization?
\newblock In S.~Bengio, H.~Wallach, H.~Larochelle, K.~Grauman, N.~Cesa-Bianchi,
  and R.~Garnett, editors, \emph{Advances in Neural Information Processing
  Systems 31}, pages 2483--2493. Curran Associates, Inc., 2018.

\bibitem[Shekhovtsov and Flach(2019)]{shekhovtsov2019stochastic}
Alexander Shekhovtsov and Boris Flach.
\newblock Stochastic normalizations as bayesian learning.
\newblock In C.~V. Jawahar, Hongdong Li, Greg Mori, and Konrad Schindler,
  editors, \emph{Computer Vision -- ACCV 2018}, pages 463--479, Cham, 2019.
  Springer International Publishing.
\newblock ISBN 978-3-030-20890-5.

\bibitem[Simonyan and Zisserman(2015)]{simonyan2014vgg}
Karen Simonyan and Andrew Zisserman.
\newblock Very deep convolutional networks for large-scale image recognition.
\newblock In \emph{International Conference on Learning Representations}, 2015.

\bibitem[Soudry et~al.(2018{\natexlab{a}})Soudry, Hoffer, Nacson, Gunasekar,
  and Srebro]{soudry2018implicit}
Daniel Soudry, Elad Hoffer, Mor~Shpigel Nacson, Suriya Gunasekar, and Nathan
  Srebro.
\newblock The implicit bias of gradient descent on separable data.
\newblock \emph{Journal of Machine Learning Research}, 19\penalty0
  (70):\penalty0 1--57, 2018{\natexlab{a}}.

\bibitem[Soudry et~al.(2018{\natexlab{b}})Soudry, Hoffer, and
  Srebro]{soudry2018iclrImplicit}
Daniel Soudry, Elad Hoffer, and Nathan Srebro.
\newblock The implicit bias of gradient descent on separable data.
\newblock In \emph{International Conference on Learning Representations},
  2018{\natexlab{b}}.

\bibitem[St{\"o}ger and Soltanolkotabi(2021)]{stoger2021small}
Dominik St{\"o}ger and Mahdi Soltanolkotabi.
\newblock Small random initialization is akin to spectral learning:
  Optimization and generalization guarantees for overparameterized low-rank
  matrix reconstruction.
\newblock \emph{Advances in Neural Information Processing Systems}, 34, 2021.

\bibitem[Szegedy et~al.(2016)Szegedy, Vanhoucke, Ioffe, Shlens, and
  Wojna]{szegedy2016rethinking}
Christian Szegedy, Vincent Vanhoucke, Sergey Ioffe, Jon Shlens, and Zbigniew
  Wojna.
\newblock Rethinking the inception architecture for computer vision.
\newblock In \emph{The IEEE Conference on Computer Vision and Pattern
  Recognition (CVPR)}, June 2016.

\bibitem[Tanaka and Kunin(2021)]{tanaka2021noether}
Hidenori Tanaka and Daniel Kunin.
\newblock Noether’s learning dynamics: Role of symmetry breaking in neural
  networks.
\newblock In M.~Ranzato, A.~Beygelzimer, Y.~Dauphin, P.S. Liang, and J.~Wortman
  Vaughan, editors, \emph{Advances in Neural Information Processing Systems},
  volume~34, pages 25646--25660. Curran Associates, Inc., 2021.

\bibitem[Teye et~al.(2018)Teye, Azizpour, and Smith]{teye2018bayesian}
Mattias Teye, Hossein Azizpour, and Kevin Smith.
\newblock {B}ayesian uncertainty estimation for batch normalized deep networks.
\newblock In Jennifer Dy and Andreas Krause, editors, \emph{Proceedings of the
  35th International Conference on Machine Learning}, volume~80 of
  \emph{Proceedings of Machine Learning Research}, pages 4907--4916. PMLR,
  10--15 Jul 2018.

\bibitem[Tsuzuku et~al.(2020)Tsuzuku, Sato, and Sugiyama]{tsuzuku20normalized}
Yusuke Tsuzuku, Issei Sato, and Masashi Sugiyama.
\newblock Normalized flat minima: Exploring scale invariant definition of flat
  minima for neural networks using {PAC}-{B}ayesian analysis.
\newblock In Hal~Daumé III and Aarti Singh, editors, \emph{Proceedings of the
  37th International Conference on Machine Learning}, volume 119 of
  \emph{Proceedings of Machine Learning Research}, pages 9636--9647. PMLR,
  13--18 Jul 2020.

\bibitem[{van}~Laarhoven(2017)]{van2017l2}
Twan {van}~Laarhoven.
\newblock L2 regularization versus batch and weight normalization.
\newblock \emph{arXiv preprint arXiv:1706.05350}, 2017.

\bibitem[Wan et~al.(2021)Wan, Zhu, Zhang, and Sun]{wan2021spherical}
Ruosi Wan, Zhanxing Zhu, Xiangyu Zhang, and Jian Sun.
\newblock Spherical motion dynamics: Learning dynamics of normalized neural
  network using sgd and weight decay.
\newblock In M.~Ranzato, A.~Beygelzimer, Y.~Dauphin, P.S. Liang, and J.~Wortman
  Vaughan, editors, \emph{Advances in Neural Information Processing Systems},
  volume~34, pages 6380--6391. Curran Associates, Inc., 2021.

\bibitem[Wang and Wang(2022)]{wang2022three}
Yi~Wang and Zhiren Wang.
\newblock Three-stage evolution and fast equilibrium for {SGD} with
  non-degerate critical points.
\newblock In Kamalika Chaudhuri, Stefanie Jegelka, Le~Song, Csaba Szepesvari,
  Gang Niu, and Sivan Sabato, editors, \emph{Proceedings of the 39th
  International Conference on Machine Learning}, volume 162 of
  \emph{Proceedings of Machine Learning Research}, pages 23092--23113. PMLR,
  17--23 Jul 2022.

\bibitem[Wang et~al.(2022)Wang, Chen, Zhao, and Tao]{wang2022large}
Yuqing Wang, Minshuo Chen, Tuo Zhao, and Molei Tao.
\newblock Large learning rate tames homogeneity: Convergence and balancing
  effect.
\newblock In \emph{International Conference on Learning Representations}, 2022.

\bibitem[Wu et~al.(2017)Wu, Zhu, et~al.]{wu2017towards}
Lei Wu, Zhanxing Zhu, et~al.
\newblock Towards understanding generalization of deep learning: Perspective of
  loss landscapes.
\newblock \emph{arXiv preprint arXiv:1706.10239}, 2017.

\bibitem[Wu et~al.(2018)Wu, Ma, and E]{wu2018howsgd}
Lei Wu, Chao Ma, and Weinan E.
\newblock How sgd selects the global minima in over-parameterized learning: A
  dynamical stability perspective.
\newblock In S.~Bengio, H.~Wallach, H.~Larochelle, K.~Grauman, N.~Cesa-Bianchi,
  and R.~Garnett, editors, \emph{Advances in Neural Information Processing
  Systems}, volume~31. Curran Associates, Inc., 2018.

\bibitem[Wu et~al.(2020)Wu, Dobriban, Ren, Wu, Li, Gunasekar, Ward, and
  Liu]{wu2020implicit}
Xiaoxia Wu, Edgar Dobriban, Tongzheng Ren, Shanshan Wu, Zhiyuan Li, Suriya
  Gunasekar, Rachel Ward, and Qiang Liu.
\newblock Implicit regularization and convergence for weight normalization.
\newblock In H.~Larochelle, M.~Ranzato, R.~Hadsell, M.F. Balcan, and H.~Lin,
  editors, \emph{Advances in Neural Information Processing Systems}, volume~33,
  pages 2835--2847. Curran Associates, Inc., 2020.

\bibitem[Wu and He(2018)]{wu2018group}
Yuxin Wu and Kaiming He.
\newblock Group normalization.
\newblock In \emph{Proceedings of the European Conference on Computer Vision
  (ECCV)}, September 2018.

\bibitem[Xie et~al.(2021)Xie, Sato, and Sugiyama]{xie2021diffusion}
Zeke Xie, Issei Sato, and Masashi Sugiyama.
\newblock A diffusion theory for deep learning dynamics: Stochastic gradient
  descent exponentially favors flat minima.
\newblock In \emph{International Conference on Learning Representations}, 2021.

\bibitem[Yang(2019)]{yang2019scaling}
Greg Yang.
\newblock Scaling limits of wide neural networks with weight sharing: Gaussian
  process behavior, gradient independence, and neural tangent kernel
  derivation.
\newblock \emph{arXiv preprint arXiv:1902.04760}, 2019.

\bibitem[Yi et~al.(2019{\natexlab{a}})Yi, Meng, Chen, Ma, and
  Liu]{yi2019positively}
Mingyang Yi, Qi~Meng, Wei Chen, Zhi-ming Ma, and Tie-Yan Liu.
\newblock Positively scale-invariant flatness of {ReLU} neural networks.
\newblock \emph{arXiv preprint arXiv:1903.02237}, 2019{\natexlab{a}}.

\bibitem[Yi et~al.(2019{\natexlab{b}})Yi, Zhang, Chen, Ma, and
  Liu]{yi2019bninvariant}
Mingyang Yi, Huishuai Zhang, Wei Chen, Zhi-Ming Ma, and Tie-Yan Liu.
\newblock Bn-invariant sharpness regularizes the training model to better
  generalization.
\newblock In \emph{Proceedings of the Twenty-Eighth International Joint
  Conference on Artificial Intelligence, {IJCAI-19}}, pages 4164--4170.
  International Joint Conferences on Artificial Intelligence Organization, 7
  2019{\natexlab{b}}.

\bibitem[Zhang et~al.(2017)Zhang, Bengio, Hardt, Recht, and
  Vinyals]{zhang2017rethinking}
Chiyuan Zhang, Samy Bengio, Moritz Hardt, Benjamin Recht, and Oriol Vinyals.
\newblock Understanding deep learning requires rethinking generalization.
\newblock In \emph{International Conference on Learning Representations}, 2017.

\bibitem[Zhang et~al.(2019)Zhang, Wang, Xu, and Grosse]{zhang2018three}
Guodong Zhang, Chaoqi Wang, Bowen Xu, and Roger Grosse.
\newblock Three mechanisms of weight decay regularization.
\newblock In \emph{International Conference on Learning Representations}, 2019.

\bibitem[Zhu et~al.(2019)Zhu, Wu, Yu, Wu, and Ma]{zhu2019anisotropic}
Zhanxing Zhu, Jingfeng Wu, Bing Yu, Lei Wu, and Jinwen Ma.
\newblock The anisotropic noise in stochastic gradient descent: Its behavior of
  escaping from sharp minima and regularization effects.
\newblock In Kamalika Chaudhuri and Ruslan Salakhutdinov, editors,
  \emph{Proceedings of the 36th International Conference on Machine Learning},
  volume~97 of \emph{Proceedings of Machine Learning Research}, pages
  7654--7663. PMLR, 09--15 Jun 2019.

\bibitem[Zou et~al.(2018)Zou, Cao, Zhou, and Gu]{zou2018stochastic}
Difan Zou, Yuan Cao, Dongruo Zhou, and Quanquan Gu.
\newblock Stochastic gradient descent optimizes over-parameterized deep {ReLU}
  networks.
\newblock \emph{arXiv preprint arXiv:1811.08888}, 2018.

\end{thebibliography}



\newpage
\appendix

\tableofcontents

\newpage

\section{Additional Related Works} \label{sec:add-related}

\myparagraph{Sharpness Measures and Parameter Rescaling.}
To best capture the generalization performance, the measure of sharpness should
give the same value whenever the function represented by the neural net is the
same. As mentioned in the introduction, $\lambda_1(\nabla^2 \Loss(\vw))$ does
not satisfy this property for normalized nets because it is sensitive to weight
rescaling. The spherical sharpness takes care of scale-invariance w.r.t.~all
parameters, but it is certainly not the only rescaling symmetry in deep nets.
However, to the best of our knowledge, the spherical sharpness is
the only measure that provably decreases in training normalized nets.
Although many other sharpness measures may take care of more
symmetries~\citep{yi2019positively,yi2019bninvariant,tsuzuku20normalized,rangamani2021invariant,foret2021sam},
it is unclear whether GD/SGD is implicitly reducing
them.

\myparagraph{SGD Noise Helps to Escape Sharp Minima.}
It has been a folklore that noise in stochastic gradient helps escapes sharp local minima. With the simplification of
assuming the loss is quadratic and treating SGD as its canonical continuous SDE approximation,
\citet{zhu2019anisotropic,xie2021diffusion} showed that anisotropic noise (\emph{e.g.}, the noise covariance is equal to
Hessian) has a better escape efficiency of SGD out of a sharp minimizer,
in comparison with isotropic noise. Under the same
assumptions, \citet{ibayashi2022expescape} proved the exponential escape efficiency without assuming SDE reaches the
stationary distribution. \citet{kleinberg2018alternative} showed that SGD can escape sharp local minima assuming one-point convexity. While all previous escaping analysis of SGD are based on the continuous approximation,
another approach called stability analysis is able to show that SGD cannot converge to sharp local minima when learning
rate is larger than some threshold~\citep{wu2017towards,wu2018howsgd,ma2021linear}.

\myparagraph{Implicit Bias of GD.} There are mainly two types of implicit bias
result for GD, where the first type of result applies essentially to the
continuous limit of gradient descent, namely gradient flow, and tolerates error
discretization and stochasticity when the learning rate is sufficiently
small~\citep{soudry2018iclrImplicit,soudry2018implicit,lyu2020gradient,ji2020directional,gunasekar2018implicitconv,
gunasekar2018implicit,li2018algorithmic,razin2020implicit,
arora2019implicit,chizat2020implicit,li2021towards,
lyu2021gradient,razin2022implicit,stoger2021small,ge2021understanding}. The
analyses of GD/SGD based on Neural Tangent Kernel (NTK) also essentially belong to this
type, because though the analysis in NTK regime tolerates stochasticity and finite learning rate, GD/ SGD do not learn different solutions compared to gradient flow. Such works includes (but are not limited to)~\citep{jacot2018ntk,li2018learning,du2019gradient,arora2019exact,arora2019fine,allenzhu2019convergence,allenzhu2019gobeyond,zou2018stochastic,chizat2019lazy,yang2019scaling,cao2019generalization,chen2021how}.
This type of result typically relates the generalization quality to the
initialization of GD, and GD in such regimes cannot escape from bad local minima
once reaching there. In contrast, the second type of results, to which the
current paper belongs, fundamentally relies on the discrete nature of gradient
descent. For example, \citet{barrett2021implicit} showed that for small LR,
gradient descent is approximately equal to gradient flow minimizing a new
objective, \emph{i.e.}, the original objective plus $\eta$ times squared norm of
gradients.
\citet{kong2020multiscale}
studied a special class of ``multiscale'' loss functions,
and they showed that large learning rate
introduces chaos to the dynamics of GD
and provides a mechanism to escape local minima.
\citet{wang2022large} proved that for the matrix factorization problem, GD with a large learning rate has an implicit bias towards
a solution with balanced matrix norms.
Stability analysis~\citep{wu2017towards,wu2018howsgd,mulayoff2021minimastability,ma2021linear} also
belongs to this type.

\myparagraph{Comparison with \citet{arora2022understanding}.} The paper by \citet{arora2022understanding} is probably the most related work among the second type of the implicit bias results.
They assume that there is a smooth function $L$
which satisfies certain regularity conditions around the minimizer manifold (including that Hessian is maximally non-degenerate on the manifold),
and show that running normalized GD on $L$
or GD on $\sqrt{L}$
with sufficiently small LR tracks a deterministic flow
on the minimizer manifold and decreases the largest eigenvalue of $\nabla^2 L$, which has a similar flavor to our result.
However, our setting (scale invariant loss + WD) is more natural and their result and technique do not apply to our
setting because there is no minimizer under their definition, not to mention manifolds. To show the
spherical sharpness decreases, we have to develop new proof techniques,
including connecting the dynamics to a 1-dimensional Hamiltonian system.
Another difference is that our analysis applies to gradient descent directly,
but 
the analysis by
\citet{arora2022understanding}
requires injecting stochastic noise to gradients.

\myparagraph{Edge of Stability.}
\citet{cohen2021gradient} provided an extensive empirical study showing that
GD typically occurs at the Edge of Stability (EoS), where the top eigenvalue of Hessian is approximately $2$ / LR
and the descent lemma does not guarantee the loss to decrease.
\citet{ahn2022understanding}
explored the dynamics of GD in the EoS regime
through insightful experiments, and
they attribute the EoS phenomenon to 
the lack of flat stationary points near GD trajectory
and the existence of a subset near minima that is forward invariant under GD update.
\citet{ma2022multiscale} proved the EoS phenomenon for a class of loss functions that
are decomposable as a sum of 1-dimensional functions with subquadratic growth.
\citet{chen2022gradient} provided detailed analyses for the EoS phenomenon on two-layer single-neuron net
and matrix factorization. The aforementioned work by \citet{arora2022understanding}
analyzes normalized GD on $L$ or GD on $\sqrt{L}$ also in the EoS regime,
where the latter case corresponds to a class of loss functions that grow approximately linearly near minima (e.g., the absolute value function $\abssm{x}$).
The evolution of the eigenvalues of Hessian has also been studied for SGD \citep{gilmer2022curvature,jastrzebski2020breakeven,lewkowycz2020large}.
Our work can be seen as a theoretical explanation of the EoS phenomenon
for scale-invariant loss functions with $\normltwo$-regularization (see also \Cref{sec:eos-cohen}),
which are a more broad and natural class of training loss in deep learning compared with those being studied in previous theoretical analyses.

\newpage

\section{A General Theory for a Broader Class of Adaptive Gradient Methods} \label{sec:qrms}

Our main theorem for \GDWD on scale-invariant loss, \Cref{thm:gdwd-main},
is actually a corollary of a more general theorem that holds
for any PGD on $\sphS^{D-1}$
with effective learning rates changing adaptively
according a specific kind of update rules.
We name this kind of update rules as \textit{Quasi-RMSprop Scheduler},
where the name is due to its similarity to RMSprop~\citep{hinton2012rmsprop}.
One of the main reasons that we define this concept is that proving a theorem
for quasi-RMSprop schedulers in general is no harder than proving that only for \GDWD on scale-invariant loss,
and sometimes the math involved is more simple and elegant when we analyze the dynamics at a higher level through quasi-RMSprop.

In this section, we introduce this key notion, Quasi-RMSprop scheduler.
To motivate it, we first recall the update
rule of RMSprop and present one of its variants with a single scalar learning rate.
Then we formally introduce the notion of quasi-RMSprop schedulers,
which is a class of rules for setting effective learning rates similarly as RMSprop.
Then we
prove a recursive formula for the effective learning rates when \GDWD optimizes a scale-invariant loss,
and we categorize it as an instance of quasi-RMSprop scheduler.

After introducing this key notion, we then present our general theorem
that holds for any PGD, or even GD, as long as the effective learning rate is set by a quasi-RMSprop scheduler in each step.
Examples include Scalar RMSprop (\Cref{cor:scalar-rmsprop})
and \GDWD on scale-invariant loss (\Cref{thm:gdwd-main}).

\subsection{Scalar RMSprop and Quasi-RMSprop Scheduler}
The usual RMSprop algorithm maintains a vector $\vv_t$ storing the moving
average of the squared gradients for every coordinate, i.e., $\vv_{t+1} \gets \beta \vv_t + (1-\beta) (\nabla \Loss(\vtheta_t))^{\odot 2}$,
where $\vg^{\odot 2}$ stands for the vector obtained by squaring $\vg$ coordinatewise.
When updating the training parameter, RMSprop divides the usual GD update by the square root of the moving average coordinatewise,
i.e., $\vtheta_{t+1} \gets \vtheta_t - \frac{\eta}{\sqrt{\vv_{t+1} + \epsilon}} \odot \nabla \Loss(\vtheta_t)$,
where 
$\epsilon$ is a small constant to avoid division by zero. 
Here all the addition, division, square root operations are coordinatewise,
and $\odot$ stands for coordinatewise multiplication.

Now we
consider a variant of RMSprop, which we call \textit{Scalar RMSprop}, where
the moving average $v_t$ is maintained as a scalar storing the moving average of the squared norm of gradients,
rather than a vector that stores the moving averages separately for each coordinate.

\begin{definition}[Scalar RMSprop, Standard Form] \label{def:scalar-rms-std}
\textit{Scalar RMSprop} is an iterative method with the following update rule:
\begin{align}
  \vtheta_{t+1} &\gets \vtheta_t - \frac{\eta}{\sqrt{v_t}} \nabla \Loss(\vtheta_t), \label{eq:scalar-rmsprop-theta} \\
  v_{t+1} &\gets \beta v_t + (1-\beta) \normtwosm{\nabla \Loss(\vtheta_t)}^2. \label{eq:scalar-rmsprop-v}
\end{align}
\end{definition}

Besides that $v_t$ is changed from a vector to a scalar, another difference is that the gradient is divided by $\sqrt{v_t}$ in \eqref{eq:scalar-rmsprop-theta},
while in the usual RMSprop it is $\sqrt{\vv_{t+1}}$. In fact,
our later analysis also applies if $\sqrt{v_t}$ is changed to $\sqrt{v_{t+1}}$, but the version with $\sqrt{v_t}$
leads to simpler math. 

An alternative view of Scalar RMSprop is to regard it as
GD with time-varying learning rate $\eeta_t$,
that is,
$\vtheta_{t+1} \gets \vtheta_{t} - \eeta_t \nabla \Loss(\vtheta_t)$,
where $\eeta_t$ is the learning rate being used at the $t$-th step, which we call the \textit{effective learning rate}
at step $t$.

We view the effective learning rate here as the output of a \textit{learning rate scheduler}.\footnote{The scheduler here is similar to \texttt{torch.optim.lr\_scheduler} in PyTorch, see \url{https://pytorch.org/docs/stable/optim.html\#how-to-adjust-learning-rate}}
In this view, we call the learning rate scheduler for Scalar RMSprop as \textit{RMSprop scheduler}.
\begin{definition}[Gradient-Based Learning Rate Scheduler]
  A \emph{gradient-based learning rate scheduler} is an algorithm $\lrS$
  that reads from a stream of vectors $\vg_0, \vg_1, \vg_2, \dots$,
  For all $t \ge 0$, $\lrS$ outputs a real number $\eeta_t := \lrS(\vg_0, \vg_1, \dots, \vg_t)$
  as soon as $\lrS$ finishes reading the first $t+1$ vectors.
  At each step of gradient descent equipped with a gradient-based learning rate scheduler $\lrS$,
  the gradients in training are revealed one by one as an input stream to $\lrS$,
  and $\eeta_t$ produced by $\lrS$ is used as the effective learning rate at step $t$.
\end{definition}
\begin{definition}[RMSprop Scheduler] \label{def:rms-sche}
  Given a constant $\tilv_0 > 0$, a base learning rate $\eta$ and a decay rate $\beta$ as hyperparameters,
  an RMSprop scheduler $\rmsS$ is a gradient-based learning rate scheduler that
  reads from a stream of vectors $\vg_0, \vg_1, \dots$,
  and generates effective learning rates $\eeta_t := \rmsS(\vg_0, \dots, \vg_t)$ for all $t \ge 0$
  according to the following recursion:
  \begin{align*}
    \eeta_t &\gets \frac{1}{\sqrt{\tilv_t}}, &
    \tilv_{t+1} &\gets \beta \tilv_t + (1-\beta) \barg_t^2, &
    &\text{where} \quad \barg_t := \normtwosm{\vg_t} / \eta.
  \end{align*}
\end{definition}
\begin{definition}[Scalar RMSprop, Alternative Form] \label{def:scalar-rms-alt}
  \textit{Scalar RMSprop} is gradient descent with a RMSprop scheduler $\rmsS$.
  \begin{align}
    \vtheta_{t+1} &\gets \vtheta_t - \eeta_t \nabla \Loss(\vtheta_t),  \qquad \text{where} \qquad \eeta_t := \rmsS(\nabla \Loss(\vtheta_0), \dots, \nabla \Loss(\vtheta_t)).
  \end{align}
\end{definition}
Note that we use $\tilv_t$ as an internal state of $\rmsS$ in \Cref{def:rms-sche}.
It is easy to verify that $\tilv_t$ is nothing but a reparameterization of $v_t$: setting $\tilv_t = v_t / \eta^2$ in \Cref{def:rms-sche}
recovers the update rule in \Cref{def:scalar-rms-std}.

Now we introduce the notion of quasi-RMSprop scheduler,
which is a class of gradient-based learning rate schedulers
that have update rules similar to RMSprop scheduler.
At first reading, one can just ignore the details and regard quasi-RMSprop scheduler
as an RMSprop scheduler with negligible perturbations when $\beta \to 1$.
\begin{definition}[Quasi-RMSprop Scheduler] \label{def:qrms-sche}
  A \textit{quasi-RMSprop scheduler} $\lrS$ is a gradient-based learning rate scheduler
  parameterized by a base learning rate $\eta$ and a decay rate $\beta$,
  satisfying the following properties:
  \begin{enumerate}
  \item $(\eta, \beta)$ is allowed to take value from a hyperparameter space $\gP_\lrS \subseteq (0, +\infty) \times (0, 1)$.
  \item There exist thresholds $\eta_{\max}, \beta_{\min}$,
  a continuous function $\delta: (0, +\infty) \to \R$,
  and a polynomial $P: \R \to \R$ such that the following holds.
  If $(\eta, \beta) \in \gP_{\lrS}$ and $\eta < \eta_{\max}, \beta > \beta_{\min}$,
  for any input stream $\vg_0, \vg_1, \vg_2, \dots$,
  there exists a sequence of positive real numbers $\tilde{v}_0, \tilde{v}_1, \tilde{v}_2, \dots$
  such that 
    $\eeta_t:= \lrS(\vg_0,\ldots, \vg_t)$
  satisfies the following two inequalities for all $t \ge 0$:
  \begin{equation*}
    \begin{aligned}
      \abs{\eeta_t - \frac{1}{\sqrt{\tilde{v}_t}}} &\le \delta(\tilde{v}_t) \cdot (1 - \beta) \cdot (1 + \bar{g}_t^2) \\
      \abs{\tilde{v}_{t+1} - \left(\beta \tilde{v}_t + (1-\beta) \bar{g}_t^2 \right)} &\le \delta(\tilde{v}_t) \cdot (1-\beta)^2 \cdot P(\bar{g}_t)
    \end{aligned}
    \qquad \text{where} \quad \bar{g}_t := \normtwosm{\vg_t} / \eta.
  \end{equation*}
  The real number $\tilv_t$ is called the \textit{moment estimate} at step $t$
  associated with the input stream and effective learning rates.
  \end{enumerate}
\end{definition}

It is clear that a RMSprop scheduler can be seen as a quasi-RMSprop scheduler with the same hyperparameters $\eta, \beta$,
and $\delta \equiv 0$.

\subsection{Reformulation of \GDWD on Scale-Invariant Loss via Quasi-RMSprop Scheduler} \label{sec:reform-gdwd}
Now we reformulate \GDWD on scale-invariant loss as
PGD with a quasi-RMSprop scheduler.
Recall that we say a loss function $\Loss(\vw)$ is  scale-invariant if $\Loss(c\vw) = \Loss(\vw)$ for all $c > 0$,
and \Cref{lm:scale-invariant-on-sphere} converts \GDWD on scale-invariant functions
to Projected Gradient Descent (PGD) on unit sphere,
i.e., $\vtheta_{t+1} \gets \Pi(\vtheta_t - \eeta_t \nabla \Loss(\vtheta_t))$,
where $\eeta_t := \frac{\heta}{(1 - \heta \hlam) \normtwosm{\vw_t}^2}$ 
is the effective learning rate
at step $t$.
However, the evolution of $\eeta_t$ over time is unclear unless we know how the parameter norm $\normtwosm{\vw_t}$ changes.
Similar as the above analysis for Scalar RMSprop, where we decompose Scalar RMSprop as the GD method and an RMSprop scheduler,
here we abstract the evolution of $\eeta_t$ as a learning rate scheduler,
which we call \textit{GWSI scheduler}
(name picked from the initials of \textbf{G}D+\textbf{W}D on \textbf{S}cale-\textbf{I}nvariant loss)
and view \GDWD on scale-invariant loss as
PGD on $\sphS^{D-1}$
with effective learning rates being set by a GWSI scheduler.
\begin{definition}[GWSI Scheduler] \label{def:gwsi-sche}
  Given a constant $\tilv_0 > 0$,
  a base learning rate $\eta$
  and a decay rate $\beta$ as hyperparameters,
  a GWSI scheduler $\gwsiS$ is a gradient-based learning rate scheduler that
  reads from a stream of vectors $\vg_0, \vg_1, \dots$,
  and generates effective learning rates $\eeta_t := \gwsiS(\vg_0, \dots, \vg_t)$ for all $t \ge 0$
  according to the following recursion:
  \begin{align*}
    \eeta_t &\gets \frac{1}{\sqrt{\tilv_t}}, &
    \tilv_{t+1} &\gets \beta \tilv_t + (1-\beta) \barg_t^2 + \frac{1}{4\beta \tilv_t} (1-\beta)^2 \barg_t^4, &
    &\text{where} \quad \barg_t := \normtwosm{\vg_t} / \eta.
  \end{align*}
\end{definition}
\begin{theorem} \label{thm:connect-siwd-rmsprop}
  For gradient descent \eqref{eq:upd-gdwd} with learning rate $\heta > 0$ and weight decay $\hlam > 0$
  on scale-invariant function $\Loss(\vw)$,
  let $\vtheta_t := \frac{\vw_t}{\normtwosm{\vw_t}}$ be the direction of $\vw_t$
  and $\eeta_t := \frac{\heta}{(1-\heta\hlam) \normtwosm{\vw_t}^2}$ be the effective learning rate at time $t$.
  Then $\eeta_t$ evolves exactly the same as
  the GWSI scheduler with hyperparameters $\tilv_0 := \frac{(1-\heta \hlam)^2\normtwosm{\vw_0}^4}{\heta^2}$,
  $\beta := (1-\heta \hlam)^4$,
  $\eta := \sqrt{(\beta^{-1} - 1)/2}$, and we can write the dynamics of $\vtheta_t$ as:
  \begin{align*}
    \vtheta_{t+1} &\gets \Pi(\vtheta_t - \eeta_t \nabla \Loss(\vtheta_t)),  \qquad \text{where} \qquad \eeta_t := \gwsiS(\nabla \Loss(\vtheta_0), \dots, \nabla \Loss(\vtheta_t)).
  \end{align*}
\end{theorem}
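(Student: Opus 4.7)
The plan is to derive an exact recursion for $\normtwosm{\vw_t}^2$ from the \GDWD update, rephrase it as a recursion on $\tilv_t := 1/\eeta_t^2$, and then match it term by term against \Cref{def:gwsi-sche}. The projected update for $\vtheta_t$ is immediate from \Cref{lm:scale-invariant-on-sphere}, so the only real content is the identification of the scheduler for $\eeta_t$, and this should be pure algebra rather than any asymptotic argument.

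First I would set $\tilv_t := (1-\heta\hlam)^2 \normtwosm{\vw_t}^4 / \heta^2$, so that $\eeta_t = 1/\sqrt{\tilv_t}$ and the initial value $\tilv_0$ agrees with the one claimed. Because $\Loss$ is scale-invariant one has $\dotpsm{\nabla \Loss(\vw_t)}{\vw_t} = 0$, so the \GDWD update $\vw_{t+1} = (1-\heta\hlam)\vw_t - \heta \nabla \Loss(\vw_t)$ decomposes orthogonally as
$$\normtwosm{\vw_{t+1}}^2 = (1-\heta\hlam)^2 \normtwosm{\vw_t}^2 + \heta^2 \normtwosm{\nabla \Loss(\vw_t)}^2,$$
and the $(-1)$-homogeneity of $\nabla \Loss$ then gives $\normtwosm{\nabla \Loss(\vw_t)}^2 = \normtwosm{\nabla \Loss(\vtheta_t)}^2 / \normtwosm{\vw_t}^2$.

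Second I would square this identity, multiply by $(1-\heta\hlam)^2/\heta^2$, and collect. Writing $\beta := (1-\heta\hlam)^4$ and $\vg_t := \nabla \Loss(\vtheta_t)$, and substituting $1/\normtwosm{\vw_t}^4 = (1-\heta\hlam)^2/(\heta^2 \tilv_t)$ in the resulting cross and quadratic terms, the recursion becomes
$$\tilv_{t+1} = \beta \tilv_t + 2\beta \normtwosm{\vg_t}^2 + \beta \normtwosm{\vg_t}^4 / \tilv_t.$$
Finally, I would verify that this coincides with the GWSI update under the choice $\eta := \sqrt{(\beta^{-1}-1)/2}$, which is equivalent to $1-\beta = 2\beta\eta^2$. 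With $\barg_t = \normtwosm{\vg_t}/\eta$, this yields $(1-\beta)\barg_t^2 = 2\beta \normtwosm{\vg_t}^2$ and $(1-\beta)^2 \barg_t^4/(4\beta\tilv_t) = \beta \normtwosm{\vg_t}^4 / \tilv_t$, so the two recursions agree term by term at every step.

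The whole argument is a bookkeeping exercise, and I do not expect any conceptual obstacle. If anything, the only slight subtlety is the unnatural-looking relation $\eta = \sqrt{(\beta^{-1}-1)/2}$, which is precisely the rescaling of the gradient norm that forces the three terms in the GWSI recursion to align with the three terms coming from expanding $\normtwosm{\vw_{t+1}}^4$. Unlike \Cref{thm:gdwd-main}, no smoothness, small-LR, perturbation, or approximation assumptions enter: the identification is an exact equality that holds for every finite step and every positive $(\heta,\hlam)$.
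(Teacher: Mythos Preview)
Your proposal is correct and follows essentially the same approach as the paper: derive the norm recursion from orthogonality and $(-1)$-homogeneity, square it to obtain the recursion $\tilv_{t+1} = \beta \tilv_t + 2\beta \normtwosm{\vg_t}^2 + \beta \normtwosm{\vg_t}^4/\tilv_t$, and then verify term by term that the choice $\eta = \sqrt{(1-\beta)/(2\beta)}$ makes this coincide with the GWSI update. The paper's proof is precisely this bookkeeping, with the norm recursion isolated as a preliminary lemma.
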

\begin{remark}
  To the best of our knowledge, this particular form of the recursion formula (\Cref{def:gwsi-sche}) for the effective learning rates
  of \GDWD on scale-invariant loss does not appear in prior works, but some variants have been studied before.
  \citet{arora2018theoretical} derived a similar formula when the weight decay is zero.
  \citet{li2020reconciling} obtained a Stochastic Differential Equation (SDE) for \SGDWD on scale-invariant functions, which is essentially a continuous approximation of our formulation
  when $\heta \hlam \to 0$.
  \citet{tanaka2021noether} studied the continuous-time approximation of the momentum method with WD on scale-invariant loss
  and establish a connection to adaptive gradient methods.
\end{remark}

We defer the proof for \Cref{thm:connect-siwd-rmsprop} to \Cref{sec:proof-connect-siwd-rmsprop}.
The formula of GWSI scheduler clearly resembles RMSprop scheduler:
the only difference is that GWSI scheduler has an extra $\frac{1}{4\beta \tilv_t} (1-\beta)^2 \barg_t^4$ term,
which is negligible when $(1-\beta)^2$ is small.
This intuition is formalized through the definition of quasi-RMSprop.
It can be easily seen that GWSI scheduler is a quasi-RMSprop scheduler with the same hyperparameters $\eta, \beta$.

In fact,
we can use \Cref{thm:connect-siwd-rmsprop} as a basis
to obtain a better way to 
write \GDWD on scale-invariant loss as PGD with a quasi-RMSprop scheduler,
in which $\eta, \beta$ are expressed more simply in terms of the intrinsic learning rate $\ieta := \heta \hlam$.
The main idea is that 
$\beta = 1 - 4 \ieta + O(\ieta^2)$ and $\eta = \sqrt{2 \ieta} \cdot (1 + O(\ieta))$
when $\ieta$ is small,
and we can absorb these approximation errors
into the error bounds in \Cref{def:qrms-sche}.
We defer the details to \Cref{sec:proof-gdwd-is-quasi-rmsprop}.
\begin{theorem} \label{thm:gdwd-is-quasi-rmsprop}
  There exists a quasi-RMSprop scheduler $\qrmsS$ with
  hyperparameter space $\{(\eta, \beta) : \beta = 1 - 2 \eta^2, \eta \in (0, \frac{1}{\sqrt{2}}) \}$
  such that the following holds
  for \GDWD on scale-invariant loss.
  If the intrinsic learning rate $\ieta := \heta \hlam$ lies in the range $(0, 1/4)$,
  then
  we can set the hyperparameters of~$\qrmsS$
  to be 
  $\eta = \sqrt{2\ieta}, \beta = 1 - 4\ieta$
  so that the dynamics of $\vtheta_t$ can be written as
  \begin{align*}
    \vtheta_{t+1} &\gets \Pi(\vtheta_t - \eeta_t \nabla \Loss(\vtheta_t)),  \qquad \text{where} \qquad \eeta_t := \qrmsS(\nabla \Loss(\vtheta_0), \dots, \nabla \Loss(\vtheta_t)).
  \end{align*} 
\end{theorem}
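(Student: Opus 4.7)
The plan is to build $\qrmsS$ by explicitly reusing the exact GWSI recursion from \Cref{thm:connect-siwd-rmsprop}, only reparametrized so that the hyperparameters become $(\eta,\beta)$ with $\beta = 1-2\eta^2$. Concretely, given $(\eta,\beta)$ in the stated hyperparameter space, I would define $\ieta := \eta^2/2 = (1-\beta)/4 \in (0,1/4)$, and then let $\qrmsS$ internally simulate the GWSI scheduler with hyperparameters $\beta^{\star} := (1-\ieta)^4$ and $\eta^{\star} := \sqrt{(\beta^{\star-1}-1)/2}$, outputting $\eeta_t := 1/\sqrt{\tilv_t^\star}$ where $\tilv_t^\star$ follows the GWSI update in \Cref{def:gwsi-sche}. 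With this definition, choosing $\eta=\sqrt{2\ieta}$ and $\beta=1-4\ieta$ reproduces exactly the internal hyperparameters in \Cref{thm:connect-siwd-rmsprop}, so the dynamics of $\vtheta_t$ under \GDWD coincide with PGD driven by $\qrmsS$. This takes care of the ``matching'' part of the claim.

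The real work is verifying that $\qrmsS$ so defined satisfies the two bounds in \Cref{def:qrms-sche}. I would set the ``witness'' sequence to be $\tilv_t := \tilv_t^\star$, so the first inequality $|\eeta_t - 1/\sqrt{\tilv_t}| \le \delta(\tilv_t)(1-\beta)(1+\bar g_t^2)$ holds exactly with zero error. For the second inequality, the strategy is to expand
\[
\tilv_{t+1}^{\star} - \bigl(\beta \tilv_t + (1-\beta)\bar g_t^2\bigr) = (\beta^{\star}-\beta)\tilv_t + \bigl[(1-\beta^{\star})(\bar g_t^{\star})^2 - (1-\beta)\bar g_t^2\bigr] + \frac{(1-\beta^{\star})^2 (\bar g_t^{\star})^4}{4\beta^{\star}\tilv_t},
\]
using $\bar g_t = \|\vg_t\|/\eta$, $\bar g_t^\star = \|\vg_t\|/\eta^\star$, and the identity $\eta^2 = (1-\beta)/2$, $\eta^{\star 2}=(1-\beta^\star)/(2\beta^\star)$. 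A short calculation gives $(\bar g_t^{\star})^2 = \bar g_t^2 \cdot \beta^\star(1-\beta)/(1-\beta^\star)$, so the middle bracket collapses to $-(1-\beta)(1-\beta^\star)\bar g_t^2$. Taylor-expanding $(1-\ieta)^4 = 1 - 4\ieta + 6\ieta^2 - 4\ieta^3 + \ieta^4$ shows $|\beta^\star - \beta| = 6\ieta^2 + O(\ieta^3) = O((1-\beta)^2)$ and $1-\beta^\star = (1-\beta)(1 + O(1-\beta))$.

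Combining these estimates yields a three-term bound of the form $O((1-\beta)^2)\bigl(\tilv_t + \bar g_t^2 + \bar g_t^4/\tilv_t\bigr)$, which fits the required template $\delta(\tilv_t)(1-\beta)^2 P(\bar g_t)$ by taking, e.g., $\delta(v) := C(v + 1 + 1/v)$ for a sufficiently large constant $C$ depending only on $\eta_{\max},\beta_{\min}$ and $P(x) := 1 + x^4$. Positivity of $\tilv_t^\star$ is preserved by the GWSI recursion since all additive pieces are nonnegative, so $\delta(\tilv_t)$ stays finite, and uniformity of the constants follows because $\beta^\star$ stays in a compact subinterval of $(0,1)$ as $\beta$ ranges over any interval bounded away from $1$.

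The main obstacle I anticipate is the uniformity of the quasi-RMSprop bounds across the whole hyperparameter curve $\{\beta = 1-2\eta^2\}$, particularly ensuring that $\delta$ and $P$ can be chosen once and for all rather than depending on $(\eta,\beta)$. This amounts to showing that the ratios $\beta^\star/\beta$, $(1-\beta^\star)/(1-\beta)$, and $\eta^\star/\eta$ stay bounded with bounded $(1-\beta)$-derivatives for all admissible $\ieta \in (0,1/4)$, which is a direct consequence of the explicit polynomial relation $\beta^\star = (1 - (1-\beta)/4)^4$. Once this is in place, the thresholds $\eta_{\max}, \beta_{\min}$ in \Cref{def:qrms-sche} can be chosen so that all approximations are valid, completing the proof.
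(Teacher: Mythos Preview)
Your proposal is correct and follows essentially the same approach as the paper: both define $\qrmsS$ to internally run the exact GWSI scheduler with parameters $\beta^{\star}=(1-(1-\beta)/4)^4$ and $\eta^{\star}=\sqrt{((\beta^{\star})^{-1}-1)/2}$, take the witness sequence $\tilv_t$ to be the GWSI state (so the first quasi-RMSprop bound holds with zero error), and verify the second bound by expanding the difference and using $|\beta^\star-\beta|=O((1-\beta)^2)$, $1-\beta^\star=(1-\beta)(1+O(1-\beta))$. Your exact collapse of the middle bracket to $-(1-\beta)(1-\beta^\star)\bar g_t^2$ is a slightly cleaner version of the paper's triangle-inequality bound on the same term; one minor slip is your phrase ``$\beta$ ranges over any interval bounded away from $1$''---what you actually need (and what the polynomial relation gives) is that the ratios stay bounded as $\beta$ ranges over $(\beta_{\min},1)$, i.e., $\beta^\star$ stays bounded away from $0$.
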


\subsection{Main Results for GD/PGD with Quasi-RMSprop Scheduler} \label{sec:quasi-main-thm}
As mentioned in \Cref{sec:proof-idea},
a key step in our analysis is to show the oscillation and periodic behaviors
of the effective learning rates.
But these behaviors can show up in other algorithms
besides \GDWD on scale-invariant loss.

In the following, we first present \Cref{thm:gdwd-main}, which holds for any PGD on $\sphS^{D-1}$
with effective learning rates being set by a quasi-RMSprop scheduler.
We focus on the case where $\eta > 0$ is small and $\beta = \Cb \eta^2 + O(\eta^4)$
for some constant $\Cb > 0$.
Our theorem shows that
if $\vtheta_0$ is initially near a local minimizer manifold $\manGa$
and is in the EoS regime,
then $\vtheta_t$ approximately tracks a sharpness-reduction flow defined as follows:
\begin{equation}
    \vzeta(0) = \vzeta_0 \in \manGa, \qquad \frac{\dd}{\dd \tau} \vzeta(\tau) = -\frac{\gradGa \log \lamH_1(\vzeta(\tau))}{4 + \frac{2}{\Cb}\normtwosm{\gradGa \log \lamH_1(\vzeta(\tau))}^2},
    \label{eq:zeta-general}
\end{equation}
Besides, we also generalize our theorem to any GD with quasi-RMSprop scheduler,
where Scalar RMSprop serves as an important example.

The proof outlines of these theorems are given in \Cref{sec:proof-outline-main},
and the proof details are spread over
\Cref{sec:working-zone-proof,sec:gd-proof,sec:lr-proof,sec:ful-proof,sec:sph-proof,sec:rmsdrift-proof}.

\subsubsection{Spherical Optimization}

Consider Projected Gradient Descent (PGD) on the unit sphere $\sphS^{D-1}$:
\begin{equation} \label{eq:sph-thm-pgd}
  \vtheta_{t+1} \gets \Pi(\vtheta_t - \eeta_t \nabla \Loss(\vtheta_t)),
\end{equation}
where $\Pi: \vw \mapsto \frac{\vw}{\normtwosm{\vw}}$ is the projection operator,
and the effective learning rate $\eeta_t$
is set by a quasi-RMSprop scheduler with base learning rate $\eta > 0$ and decay rate $\beta = 1 - \Cb \eta^2 + O(\eta^4)$.

Recall that the dynamic is in the EoS regime
if $\eeta_t \approx 2/\lammax^{(t)}$
(see \Cref{lm:descent}).
When
$\vtheta_0$ is around a local minimizer $\vzeta_0$
and $\eeta_0$ is generated by a quasi-RMSprop scheduler,
the condition of being in EoS is essentially
$\frac{1}{\sqrt{\tilv_0}} \approx \frac{2}{\lamH_1(\vzeta_0)}$.
In the following theorem,
we show that PGD with quasi-RMSprop scheduler evolves as \eqref{eq:zeta-general}
if it is initially in the EoS regime.

\myparagraph{Initialization Scheme.}
Given a local minimizer $\vzeta_0 \in \manGa$ and a hyperparameter $\alpha_0$,
we initialize the initial direction $\vtheta_0$ and initial moment estimate $\tilv_0$ as follows:
draw $\vxi \sim \Normal(\vzero, \sigma_0^2 \mI / D)$
from Gaussian and set the direction of $\vw_0$ to $\frac{\vzeta_0 + \vxi}{\normtwosm{\vzeta_0 + \vxi}}$,
where $\sigma_0$ can take any value in $[\exp(-\alpha_0^2) \eta, \alpha_0 \eta]$;
then set $\tilv_0$
to be any positive value that satisfies $\abs{\frac{1}{\sqrt{\tilv_0}} - \frac{2}{\lamH_1(\vzeta_0)}} \le \alpha_0 \eta$.

\begin{theorem} \label{thm:main-sph}
  Under \Cref{ass:man,ass:topeigen},
  for PGD described as \eqref{eq:sph-thm-pgd}
  and initialized as above scheme for some $\vzeta_0 \in \manGa$ and some $1 \le \alpha_0 \le \eta^{-o(1)}$,
  with probability $1 - O(\alpha_0 \eta \sqrt{\log(1/\eta)})$,
  the trajectory of $\vtheta_t$ approximately tracks a sharpness-reduction flow $\vzeta: [0, T] \to \manGa$
  that starts from $\vzeta_0$ and evolves as the ODE \eqref{eq:zeta-general} (if solution exists),
  in the sense that $\normtwosm{\vtheta_t - \vzeta(t \eta^2)} = O(\alpha_0^2 \eta^{1/2} \log(1/\eta))$ for all $0 \le t \le T / \eta^2$.
\end{theorem}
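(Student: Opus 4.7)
The plan is to track the PGD iterates by decomposing $\vtheta_t = \vphi_t + \vx_t$, where $\vphi_t \in \manGa$ is the (locally unique) projection of $\vtheta_t$ onto the manifold, so that $\vx_t$ lies approximately in the normal bundle $\NGa{\vphi_t}$. I would run an induction in which the hypothesis is the pair of EoS invariants $\normtwosm{\vx_t} = O(\eta)$ and $|\eeta_t - 2/\lamH_1(\vphi_t)| = O(\eta)$, together with an alignment hypothesis $\vx_t = h_t \vvH_1(\vphi_t) + O(\normtwosm{\vx_t}^2)$ for a scalar $h_t$ whose sign alternates with $t$. Two-step Taylor expansion of both $\nabla \Loss$ and the projection $\Pi$ around $\vphi_t$ then yields the local linearization $\vx_{t+1} \approx (\mI - \eeta_t \mH(\vphi_t))\vx_t + (\text{quadratic corrections})$, and because \Cref{ass:man} makes $\mH(\vphi_t)$ have rank $D-\DGa$ with null space equal to $\TGa{\vphi_t}$, this map acts as a power iteration on the normal directions.

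Using the quasi-RMSprop bounds of \Cref{def:qrms-sche} to control $\eeta_t$, I would first establish the alignment by a power-method argument: the eigenvalue of $\mI - \eeta_t \mH(\vphi_t)$ in the $\vvH_1$ direction is $\approx -1$ while the others lie in $(-1+c, 1]$, so any non-negligible $\vvH_1$-component of $\vx_0$ is amplified in absolute value to order $\eta$ within $O(\log(1/\eta))$ steps and dominates thereafter. The required initial overlap $|\dotpsm{\vx_0}{\vvH_1(\vzeta_0)}| \gtrsim \exp(-\alpha_0^2)\eta/\sqrt{D}$ is provided (with the claimed probability) by the Gaussian perturbation $\vxi \sim \Normal(\vzero, \sigma_0^2 \mI/D)$ via standard Gaussian anti-concentration, and the construction of $\vphi_t$ via the nearest-point map on $\manGa$ kills the spurious $1$-eigenspace overlap. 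The second-order correction in the expansion of $\vtheta_{t+2}$ gives the drift formula $\vphi_{t+2} - \vphi_t \approx -4 h_t^2\, \gradGa \log \lamH_1(\vphi_t) + O(\eta^3)$, which can be read off by projecting the two-step update onto $\TGa{\vphi_t}$ and using $\gradGa \lamH_1 = \gradGa \dotpsm{\mH \vvH_1}{\vvH_1}$ plus scale-invariance identities from \Cref{lm:scale-invariant-grad-hess}.

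The remaining and hardest step is to turn the pair of update rules for $(h_t, \tilv_t)$, namely $h_{t+2} \approx (1-\eeta_t\lamH_1(\vphi_t))^2 h_t$ and the quasi-RMSprop recursion $\tilv_{t+1} \approx \beta \tilv_t + (1-\beta)\barg_t^2$ with $\barg_t \propto |h_t|/\eta$, into a quantitative drift rate. The plan is to rescale $(h_t, \tilv_t)$ to slow variables and show they approximately conserve a Hamiltonian with level sets indexed by $\normtwosm{\gradGa \log \lamH_1(\vphi_t)}^2$; averaging $2 h_t^2$ over one period of this Hamiltonian flow yields, after a short computation using the balance between the $(1-\beta)\barg_t^2$ growth of $\tilv_t$ and the $|1-\eeta_t\lamH_1|^2$ shrinkage of $h_t^2$, the predicted per-step drift rate $\frac{2\eta^2}{4 + \frac{2}{\Cb}\normtwosm{\gradGa \log \lamH_1(\vphi_t)}^2}$. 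Comparing the discrete drift of $\vphi_t$ with the ODE \eqref{eq:zeta-general} through a standard Euler-type discrete Gronwall argument over $T/\eta^2$ steps then gives the $O(\alpha_0^2 \eta^{1/2}\log(1/\eta))$ tracking bound; the square-root rate reflects that errors from each period accumulate like $\sqrt{1/\eta^2}$ times the per-period residual $O(\eta^{2.5})$.

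The main obstacle will be closing the induction: the alignment, the EoS invariants, and the Hamiltonian approximation must all be maintained simultaneously across $\Theta(\eta^{-2})$ steps while $\vphi_t$ itself slowly drifts, so $\vvH_1(\vphi_t)$ and $\lamH_1(\vphi_t)$ are changing in lockstep with the oscillation. I expect to handle this by splitting time into blocks of length comparable to one oscillation period, freezing $\vphi_t$ within each block up to an $O(\eta)$ error, and carefully re-alignment tracking across block boundaries using continuity of $\vvH_1$ guaranteed by \Cref{ass:topeigen}. The quasi-RMSprop slack terms in \Cref{def:qrms-sche} are harmless because they are $O((1-\beta)^2) = O(\eta^4)$ per step, which is absorbed into the error budget.
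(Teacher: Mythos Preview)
Your proposal is essentially the paper's approach: project $\vtheta_t$ onto $\manGa$, run a power-method alignment argument seeded by the Gaussian anti-concentration of $\dotpsm{\vxi}{\vvH_1(\vzeta_0)}$, extract a two-step tangential drift proportional to $h_t^2\,\gradGa\log\lamH_1$, and control the oscillation $(h_t,u_t)$ via a conserved Hamiltonian to obtain the averaged drift rate.

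Two implementation points differ slightly from the paper and are worth knowing. First, the paper projects via the gradient-flow limit $\Phi(\vtheta):=\lim_{t\to\infty}\tilde\vtheta(t)$ with $\dot{\tilde\vtheta}=-\nabla\Loss(\tilde\vtheta)$, not the nearest-point map; both satisfy $\partial^2\Phi_\vphi[\vx,\vx]=\vzero$ for normal $\vx$, so your choice also works, but $\Phi$ interacts more directly with the Taylor expansion of $\nabla\Loss$. Second, the paper never identifies individual Hamiltonian periods: it simply telescopes the update $u_{t+2}-u_t\approx 4\eta K_t^2 h_t^2 - 2\eta\Cb$ over a window of $M$ steps and uses $|u_{2M}-u_0|=O(\alpha)$ to read off $\sum_{m<M} h_{2m}^2 = \tfrac{\Cb}{2K_0^2}M + O(\alpha/\eta)$ (\Cref{lm:sum-h2}); this is morally the same as your period average but technically cleaner. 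Finally, your explanation of the $\eta^{1/2}$ rate is garbled: it does not come from $\sqrt{1/\eta^2}\cdot\eta^{2.5}$, but from splitting $[0,T/\eta^2]$ into $O(\eta^{-1/2})$ windows of length $O(\eta^{-3/2})$ (so that the drift of $\vphi_t$, hence of $K_t$, stays $O(\eta^{1/2})$ within each window), with each window contributing $O(\alpha^2\eta)$ tracking error via the $O(\alpha/\eta)$ boundary term above.
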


\Cref{thm:gdwd-main} is a direct corollary of \Cref{thm:main-sph}, as \GDWD on scale-invariant loss can be seen as PGD with GWSI scheduler.
\begin{proof}[Proof for \Cref{thm:gdwd-main}]
\Cref{thm:gdwd-is-quasi-rmsprop} implies that \GDWD
on scale-invariant loss with LR $\heta$
and WD $\hlam$
can be seen as a PGD on $\sphS^{D-1}$
with quasi-RMSprop scheduler,
where the base learning rate and decay rate
of this scheduler is $\eta = \sqrt{2\ieta}$ and $\beta = 1 - 4 \ieta$.
Therefore, $\beta = 1 - \Cb \eta^2$ for $\Cb = 2$.

Now we apply \Cref{thm:main-sph} to prove \Cref{thm:gdwd-main}.
It is easy to translate the initial conditions of \Cref{thm:gdwd-main} to \Cref{thm:main-sph} with $\alpha_0 = \eta^{-o(1)} = \ieta^{-o(1)}$.
Let $\hat{\vzeta}(t)$
be the sharpness-reduction flow defined as in \eqref{eq:zeta-general} with $\Cb=2$ and horizon $\hat{T} := 2T$.
\begin{equation}
    \hat{\vzeta}(0) = \vzeta_0 \in \manGa, \qquad \frac{\dd}{\dd \tau} \hat{\vzeta}(\tau) = -\frac{\gradGa \log \lamH_1(\hat{\vzeta}(\tau))}{4 + \normtwosm{\gradGa \log \lamH_1(\hat{\vzeta}(\tau))}^2}.
\end{equation}
Then
with probability $1 - O(\alpha_0 \eta \sqrt{1/\delta}) = 1 - O(\eta^{1-o(1)})$,
$\vtheta_t$ approximately tracks $\hat{\vzeta}(\tau)$ in the sense that $\normtwosm{\vtheta_t - \hat{\vzeta}(t \eta^2)} = O(\eta^{1/2-o(1)})$ for all $0 \le t \le \hat{T} /\eta^2$.
Replacing $\eta$ with $\sqrt{2\ieta}$ gives
$\normtwosm{\vtheta_t - \hat{\vzeta}(2t\ieta)} = O(\ieta^{1/4-o(1)})$ for all $0 \le t \le T / \ieta$,
and the success probability becomes $1 - O(\ieta^{1/2-o(1)})$.
We can finish the proof by noting that
$\hat{\vzeta}(2\tau)$ is just $\vzeta(\tau)$ defined in \eqref{eq:main-zeta}.
\end{proof}

\subsubsection{Full Space Optimization}
Now we present our general theorem for GD on $\R^D$ with quasi-RMSprop scheduler.
Here
GD with quasi-RMSprop scheduler can be written as:
\begin{equation} \label{eq:ful-thm-gd}
  \vtheta_{t+1} \gets \vtheta_t - \eeta_t \nabla \Loss(\vtheta_t),
\end{equation}
where the effective learning rate $\eeta_t$
is set by a quasi-RMSprop scheduler with base learning rate $\eta > 0$ and decay rate $\beta = 1 - \Cb \eta^2 + O(\eta^4)$.

Similar to the spherical case,
we assume that there is a manifold $\manGa$
consisting of local minimizers,
but now we are not assuming scale-invariance.
\begin{assumption} \label{ass:man-ful}
  The loss function $\Loss: \R^D \to \R$ is $\contC^4$-smooth.
  $\manGa$ is a $\contC^2$-smooth, $\DGa$-dimensional submanifold of $\sphS^{D-1}$
  for some $0 \le \DGa \le D$, where every $\vtheta \in \manGa$ is a local minimizer of $\Loss$ on $\R^{D}$
  and $\rank(\mH(\vtheta)) = D - \DGa$.
\end{assumption}

The following assumption is essentially the same as \Cref{ass:topeigen} except that now $\manGa$ is defined differently.
\begin{assumption} \label{ass:topeigen-ful}
  For all $\vtheta \in \manGa$, $\lamH_1(\vtheta) > \lamH_2(\vtheta)$. That is, the top eigenvalue of $\mH(\vtheta)$ is unique.
\end{assumption}

Based on \Cref{ass:man-ful,ass:topeigen-ful} above,
we study GD
starting in the EoS regime.
Similar to the spherical case,
we focus on the case where $\vtheta_0$ is around a local minimizer $\vzeta \in \manGa$,
and EoS is the regime in which $\eeta_t \approx \frac{2}{\lamH_1(\vzeta_0)}$,
which 
essentially means $\frac{1}{\sqrt{\tilv_0}}$
if the effective LR is set by a quasi-RMSprop scheduler.
In the theorem below, we show that GD with quasi-RMSprop scheduler
tracks the sharpness-reduction flow defined in \eqref{eq:zeta-general}.
Note that the ODE here is the same as the spherical case, but $\manGa$ is defined differently.

\myparagraph{Initialization Scheme.}
Given a local minimizer $\vzeta_0 \in \manGa$ and a hyperparameter $\alpha_0$,
we initialize the initial parameter $\vtheta_0$ and initial moment estimate $\tilv_0$ as follows:
draw $\vxi \sim \Normal(\vzero, \sigma_0^2 \mI / D)$
from Gaussian and set $\vw_0 \gets \vzeta_0 + \vxi$,
where $\sigma_0$ can take any value in $[\exp(-\alpha_0^2) \eta, \alpha_0 \eta]$;
then set $\tilv_0$
to be any positive value that satisfies $\abs{\frac{1}{\sqrt{\tilv_0}} - \frac{2}{\lamH_1(\vzeta_0)}} \le \alpha_0 \eta$.

\begin{theorem} \label{thm:main-ful}
  Under \Cref{ass:man-ful,ass:topeigen-ful},
  for GD described as \eqref{eq:ful-thm-gd}
  and initialized as above scheme for some $\vzeta_0 \in \manGa$ and some $1 \le \alpha_0 \le \eta^{-o(1)}$,
  with probability $1 - O(\alpha_0 \eta \sqrt{\log(1/\eta)})$,
  the trajectory of $\vtheta_t$ approximately tracks a sharpness-reduction flow $\vzeta: [0, T] \to \manGa$
  that starts from $\vzeta_0$ and evolves as the ODE \eqref{eq:zeta-general} (if solution exists),
  in the sense that $\normtwosm{\vtheta_t - \vzeta(t \eta^2)} = O(\alpha_0^2 \eta^{1/2} \log(1/\eta))$ for all $0 \le t \le T / \eta^2$.
\end{theorem}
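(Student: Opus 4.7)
The plan is to mirror the proof of \Cref{thm:main-sph}, exploiting the fact that the full-space setting is actually somewhat simpler than the spherical one because we no longer need to Taylor-expand the projection operator $\Pi$. The overall strategy still consists of (i) constructing a smooth projection $\vphi_t \in \manGa$ of $\vtheta_t$ with displacement $\vx_t := \vtheta_t - \vphi_t$ roughly normal to $\TGa{\vphi_t}$, (ii) showing via a power-method argument that $\vx_t$ aligns with $\pm \vvH_1(\vphi_t)$ and oscillates with period two, (iii) using higher-order Taylor expansion to extract the net drift of $\vphi_t$ along $-\gradGa \log \lamH_1$, and (iv) coupling this to a Hamiltonian ODE for the pair $(h_t, \eeta_t)$, where $h_t := \dotpsm{\vx_t}{\vvH_1(\vphi_t)}$.

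First, I would set up the local geometry. Since $\rank(\mH(\vtheta)) = D - \DGa$ for $\vtheta \in \manGa$ by \Cref{ass:man-ful}, the null space of $\mH(\vphi)$ coincides with $\TGa{\vphi}$, so near $\manGa$ each $\vtheta$ has a unique closest point $\vphi \in \manGa$ with $\vtheta - \vphi \in \NGa{\vphi}$; use this to define $\vphi_t$. The induction hypothesis to maintain is the EoS conditions $\normtwosm{\vx_t} \le \alpha_0 \eta \cdot \polylog(1/\eta)$ and $\abssm{\eeta_t - 2/\lamH_1(\vphi_t)} \le \alpha_0 \eta \cdot \polylog(1/\eta)$. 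Then, the GD step unwraps as
\begin{equation*}
\vtheta_{t+1} = \vtheta_t - \eeta_t \nabla \Loss(\vtheta_t) = \vtheta_t - \eeta_t \mH(\vphi_t)\vx_t + O(\normtwosm{\vx_t}^2),
\end{equation*}
because $\nabla \Loss(\vphi_t) = 0$ on the manifold. Showing $\vphi_{t+1} = \vphi_t + O(\normtwosm{\vx_t}^2)$ (from the orthogonality of the update to $\TGa{\vphi_t}$ up to first order) then yields $\vx_{t+1} \approx (\mI - \eeta_t \mH(\vphi_t))\vx_t$. Since $1 - \eeta_t \lamH_1(\vphi_t) \approx -1$ while the other eigenvalues of $\mI - \eeta_t \mH(\vphi_t)$ lie strictly inside $(-1, 1]$ by \Cref{ass:topeigen-ful}, a power-method argument, seeded by the Gaussian perturbation, forces $\vx_t$ to align to $\pm \vvH_1(\vphi_t)$ within $O(\log(1/\eta))$ steps; this alignment step is exactly where the $1 - O(\alpha_0 \eta \sqrt{\log(1/\eta)})$ probability bound comes from, via standard anti-concentration of $\dotpsm{\vxi}{\vvH_1(\vzeta_0)}$.

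Next, I would push the Taylor expansion one order further to capture the drift. Writing $\vx_t = h_t \vvH_1(\vphi_t) + \vr_t$ with $\vr_t$ small, the cubic term $\frac{1}{2} \nabla^3 \Loss(\vphi_t)[\vx_t, \vx_t]$ produces, after one oscillation cycle $t \to t+2$, a component tangent to $\manGa$ that evaluates (via differentiating the eigenvalue equation $\mH(\vphi)\vvH_1(\vphi) = \lamH_1(\vphi)\vvH_1(\vphi)$ along $\manGa$) to $-4 h_t^2\, \gradGa \log \lamH_1(\vphi_t) + O(\eta^{3-o(1)})$. This is precisely two steps of gradient descent on $\manGa$ for $\log \lamH_1$ with step size $2 h_t^2$, exactly as in the spherical case. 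In parallel, the quasi-RMSprop update rule in \Cref{def:qrms-sche} gives an explicit recursion for $\eeta_t$ that depends on $\normtwosm{\nabla \Loss(\vtheta_t)}^2 \approx h_t^2 \lamH_1(\vphi_t)^2$, so $(h_t, \eeta_t)$ follow an almost autonomous two-dimensional map; rescaling to the slow timescale $\tau = t\eta^2$ this map converges to the 1D Hamiltonian flow from the spherical proof, whose period-averaged value of $2 h_t^2$ equals $\frac{2}{4 + (2/\Cb)\normtwosm{\gradGa \log \lamH_1}^2}$, matching the coefficient in the ODE \eqref{eq:zeta-general}.

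Finally, I would close the induction by a Grönwall-type error accumulation: the per-step tracking error is $O(\alpha_0^2 \eta^{5/2} \log(1/\eta))$ and it accumulates over $T/\eta^2$ steps to the stated $O(\alpha_0^2 \eta^{1/2}\log(1/\eta))$. The main obstacle I anticipate is the same as in the spherical case, namely Step 3: making the period-averaging rigorous, because $h_t$ and $\eeta_t$ are only approximately periodic and the oscillation amplitude is itself slowly varying with $\vphi_t$. Handling this requires treating the map as a near-integrable perturbation of the limiting Hamiltonian system and carefully controlling action-angle variables over many cycles while $\vphi_t$ drifts; this is the delicate bookkeeping that I would expect to dominate the technical length of the proof.
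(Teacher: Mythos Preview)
Your high-level plan matches the paper's closely: projection $\vphi_t$ onto $\manGa$, power-method alignment of $\vx_t$ to $\pm\vvH_1(\vphi_t)$ seeded by Gaussian anti-concentration, third-order Taylor to extract the two-step drift $-2\eta^2 h_t^2\gradGa\log\lamH_1(\vphi_t)$ (in the paper's rescaled $h_t:=\eta^{-1}\dotpsm{\vx_t}{\vvH_1(\vphi_t)}$), and a Hamiltonian description of the oscillation. Two points of divergence are worth noting.

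First, a minor one: the paper does not use the nearest-point projection but the \emph{gradient-flow} projection $\Phi(\vtheta):=\lim_{s\to\infty}\tilde\vtheta(s)$ with $\dot{\tilde\vtheta}=-\nabla\Loss(\tilde\vtheta)$. This choice buys the identity $\partial^2\Phi_{\vphi}[\vx,\vx]=0$ for $\vx\in\NGa{\vphi}$ (\Cref{lm:Phi-2nd-normal-ful}), which kills second-order cross terms when computing $\vphi_{t+2}-\vphi_t$; with nearest-point projection you would need to track extrinsic curvature of $\manGa$ explicitly.

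Second, and more substantively: the step you flag as the main obstacle---rigorous period-averaging via action-angle variables---is \emph{not} how the paper proceeds, and the paper's route is considerably simpler. Introduce $u_t:=\eta^{-1}(\mu_t^2\tilv_t-1)$, which measures how far the effective LR is from $2/\lamH_1(\vphi_t)$. The quasi-RMSprop update yields
\[
u_{t+2}-u_t = 4\eta K_t^2 h_t^2 - 2\eta\Cb + O(\alpha(1+h_t^2)\eta^2),\qquad K_t^2:=2\Cb+\normtwosm{\gradGa\log\lamH_1(\vphi_t)}^2.
\]
Telescoping this over $M$ cycles and using only that $u_0,u_{2M}=O(\alpha)$ (energy conservation) gives $\sum_{m<M}h_{2m}^2=\tfrac{\Cb}{2K_0^2}M+O(\alpha^2\eta^2M^2+\alpha/\eta)$ directly (\Cref{lm:sum-h2}). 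No averaging over periods, no near-integrable perturbation theory: the sum you need is read off from the boundedness of $u$. The Hamiltonian picture is used only to motivate the energy function $E(S)=\tfrac12 u^2+K^2h^2+\Cb\log\tfrac{1}{|h|}$ and to prove it is approximately conserved (\Cref{lm:e-cons}), which in turn keeps $(h_t,u_t)$ $\alpha$-bounded throughout. Your action-angle approach would likely work but is overkill and would entail exactly the delicate bookkeeping you fear; the telescoping trick sidesteps it entirely.
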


A direct corollary is that Scalar RMSprop follows the sharpness-reduction flow~\eqref{eq:zeta-general},
since the RMSprop scheduler is a quasi-RMSprop scheduler.
\begin{corollary} \label{cor:scalar-rmsprop}
  The statement of \Cref{thm:main-ful} holds
  for Scalar RMSprop
  if the decay rate is set to $\beta = 1 - \Cb\eta^2 + O(\eta^2)$ for some constant $\Cb > 0$.
\end{corollary}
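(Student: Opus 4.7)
The plan is to reduce the corollary to \Cref{thm:main-ful} by showing that Scalar RMSprop is literally an instance of GD with a quasi-RMSprop scheduler. By \Cref{def:scalar-rms-alt}, Scalar RMSprop can be written as $\vtheta_{t+1} \gets \vtheta_t - \eeta_t \nabla \Loss(\vtheta_t)$ with $\eeta_t$ produced by an RMSprop scheduler $\rmsS$. Hence the only thing I need to verify is that an RMSprop scheduler fits \Cref{def:qrms-sche}.

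First, I would compare \Cref{def:rms-sche} and \Cref{def:qrms-sche} directly. The RMSprop scheduler already outputs $\eeta_t = 1/\sqrt{\tilv_t}$ exactly, and its internal state satisfies $\tilv_{t+1} = \beta \tilv_t + (1-\beta)\barg_t^2$ on the nose. Consequently, both approximation inequalities required by \Cref{def:qrms-sche} are satisfied with error $0$; choosing $\delta \equiv 0$ and $P \equiv 1$ (and any $\eta_{\max}, \beta_{\min}$) suffices. Therefore every RMSprop scheduler is a (degenerate) quasi-RMSprop scheduler with the \emph{same} hyperparameters $(\eta, \beta)$, and Scalar RMSprop is GD with that quasi-RMSprop scheduler.

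Next, I would check the hyperparameter condition demanded by \Cref{thm:main-ful}, namely $\beta = 1 - \Cb \eta^2 + O(\eta^4)$ for some constant $\Cb > 0$. Under the corollary's hypothesis $\beta = 1 - \Cb\eta^2 + O(\eta^2)$ with $\Cb > 0$ (read in the small-$\eta$ asymptotic sense consistent with the statement of \Cref{thm:main-ful}), and taking $\eta$ small, the scheduler's hyperparameters land in the admissible region. The ambient \Cref{ass:man-ful} and \Cref{ass:topeigen-ful} on $\Loss$ and $\manGa$ are inherited verbatim, and the initialization scheme for $(\vtheta_0, \tilv_0)$ in \Cref{thm:main-ful} coincides with initializing the internal state $v_0 = \eta^2 \tilv_0$ of Scalar RMSprop (per the reparameterization noted after \Cref{def:rms-sche}).

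With these identifications in place, \Cref{thm:main-ful} applies to Scalar RMSprop word for word, giving the claimed tracking of the sharpness-reduction flow \eqref{eq:zeta-general} with the same probability and error bounds. There is essentially no obstacle here: the corollary is a pure specialization, and the only point requiring care is the bookkeeping to confirm that the RMSprop recursion satisfies \Cref{def:qrms-sche} with zero slack, so that no additional error terms appear when invoking \Cref{thm:main-ful}.
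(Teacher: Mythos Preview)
Your proposal is correct and is essentially the same as the paper's: the paper also observes (right after \Cref{def:qrms-sche}) that an RMSprop scheduler is a quasi-RMSprop scheduler with $\delta \equiv 0$, and states the corollary as an immediate consequence of \Cref{thm:main-ful}. Your additional bookkeeping on the initialization and the $\beta$-condition is reasonable, though note that the $O(\eta^2)$ in the corollary versus the $O(\eta^4)$ in \Cref{thm:main-ful} appears to be a typo in the paper rather than something requiring a separate argument.
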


\newpage

\section{PAC-Bayes Bounds Based on Spherical Sharpness} \label{sec:pac-bayes} 

In this section we give the PAC-Bayes bound for generalization error using spherical sharpness~(\Cref{def:sph-sha}). We will start with our setting and then recap the classic PAC-Bayes theorem in \citep{mcallester2003simplified}. The main result in this section is \Cref{thm:spherical_Sharpness_pac_bayes}.

\paragraph{Setting.} Let $\ell(\vw,\vz)$ be the loss of parameter $\vw$ on data point $\vz$ and assume  $\ell_{\max}=
\sup_{z\in\DatZ,\vw\in \sphS^{D-1}} \ell(\vw,\vz)<\infty$. Let $ \DatS := \{ \vz_i \}_{i=1}^{n}$ where $\vz_i$ are
sampled independently. Different to the previous notation, we use $\Loss_\DatS$ to denote the empirical loss on training
dataset $\DatS$ where we run optimization algorithms and $\Loss$ to denote the population loss. Let
$\thirdL{\Loss_\DatS}:= \sup_{\vtheta\in\sphS^{D-1}} \normtwosm{\nabla^3  \Loss_\DatS(\vtheta)}$. Since
$\sphS^{D-1}$ is compact and  $\normtwosm{\nabla^3 \Loss_\DatS(\cdot)}$ is continuous,  $\thirdL{\Loss_\DatS}$ is finite.

\begin{theorem}[PAC-Bayes theorem~\citep{mcallester2003simplified}]\label{thm:PAC_bayes_theorem}
Given any distribution $P$  on $\mathbb{R}^D$, with at least $1-\delta$ probability over the randomness of the dataset $\DatS$, for all distribution $Q$ on  $\mathbb{R}^D$, it holds that 
\begin{align*}\label{eq:pac_bayes}
\E_{\vw\sim Q} \Loss_{S}(\vw) - \E_{\vw\sim Q}\Loss(\vw)	 \le     \ell_{\max} \sqrt{\frac{\KL(Q \| P) + \ln(n/\delta)}{2(n-1)}}.
\end{align*} 
	
\end{theorem}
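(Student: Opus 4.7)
The plan is to follow the classical four-step route for proving a McAllester-style PAC-Bayes bound, exploiting only that the loss is bounded in $[0, \ell_{\max}]$. First I would fix a single $\vw$ and use Hoeffding's inequality on the i.i.d.\ sample $\{\vz_i\}_{i=1}^n$ to show that $\Loss_\DatS(\vw) - \Loss(\vw)$ is sub-Gaussian with parameter $\ell_{\max}^2 / (4n)$. The key quantitative input I need in the next steps is the exponential-moment bound
\begin{equation*}
  \E_\DatS \exp\!\left( \frac{2(n-1)}{\ell_{\max}^2} (\Loss_\DatS(\vw) - \Loss(\vw))^2 \right) \;\le\; n,
\end{equation*}
which is a standard consequence of Hoeffding (or Maurer's lemma) applied carefully to squared bounded deviations.

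Next I would average the above over a $\vw$ drawn from the prior $P$ and swap the order of expectations via Fubini, so that the same upper bound $n$ holds for $\E_\DatS \E_{\vw \sim P} \exp(\cdots)$. Markov's inequality then gives, with probability at least $1-\delta$ over $\DatS$,
\begin{equation*}
  \E_{\vw \sim P} \exp\!\left( \frac{2(n-1)}{\ell_{\max}^2} (\Loss_\DatS(\vw) - \Loss(\vw))^2 \right) \;\le\; n/\delta.
\end{equation*}
This is the only place where randomness of $\DatS$ is used; all subsequent steps hold deterministically on this high-probability event.

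The third step is the change-of-measure trick. By the Donsker--Varadhan variational formula for KL divergence, for every absolutely continuous $Q \ll P$ and every measurable $f$,
\begin{equation*}
  \E_{\vw \sim Q} f(\vw) \;\le\; \KL(Q \| P) + \ln \E_{\vw \sim P} \exp(f(\vw)).
\end{equation*}
Plugging $f(\vw) = \frac{2(n-1)}{\ell_{\max}^2}(\Loss_\DatS(\vw) - \Loss(\vw))^2$ and combining with step 2 gives $\frac{2(n-1)}{\ell_{\max}^2} \E_{\vw \sim Q}(\Loss_\DatS(\vw) - \Loss(\vw))^2 \le \KL(Q\|P) + \ln(n/\delta)$. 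Finally, Jensen's inequality ($x \mapsto x^2$ convex) yields $(\E_{\vw\sim Q}[\Loss_\DatS(\vw) - \Loss(\vw)])^2 \le \E_{\vw \sim Q}(\Loss_\DatS(\vw) - \Loss(\vw))^2$; taking square roots and rearranging produces the claimed bound.

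The main technical obstacle is step 1, i.e.\ obtaining the exponential-moment bound with the precise prefactor $2(n-1)$ that makes the logarithmic term come out as $\ln(n/\delta)$ rather than a weaker $\ln(2n/\delta)$ or similar. One typically either invokes Maurer's refinement of Hoeffding directly, or integrates the two-sided Hoeffding tail $\Pr[|\Loss_\DatS(\vw) - \Loss(\vw)| \ge t] \le 2\exp(-2nt^2/\ell_{\max}^2)$ against the density of a squared Gaussian and optimizes constants. The remaining steps (Fubini, Markov, Donsker--Varadhan, Jensen) are standard and pose no real difficulty.
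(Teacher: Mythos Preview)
The paper does not prove this theorem at all: it is stated as a black-box citation to \citet{mcallester2003simplified} and then immediately applied in the proof of \Cref{thm:spherical_Sharpness_pac_bayes}. So there is no ``paper's own proof'' to compare against.

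That said, your proposed argument is the standard route to McAllester's bound and is essentially correct. The four steps---Hoeffding/Maurer exponential-moment inequality for a fixed $\vw$, Fubini plus Markov to pass to a high-probability event over $\DatS$, Donsker--Varadhan change of measure from $P$ to an arbitrary $Q$, and Jensen to extract the mean from the second moment---are exactly how the result is proved in the PAC-Bayes literature. Your identification of the only delicate point (getting the constant $2(n-1)$ and hence $\ln(n/\delta)$ rather than a looser constant) is also accurate; this is where one needs Maurer's refinement rather than the naive Hoeffding tail integration. One minor remark: the statement as written in the paper has $\Loss_\DatS - \Loss$ on the left-hand side, but in its application (proof of \Cref{thm:spherical_Sharpness_pac_bayes}) the paper uses the inequality in the direction $\Loss - \Loss_\DatS$; your squared-deviation approach automatically yields the two-sided bound, so this is harmless.
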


Now we are ready to state the main theorem in this section, \Cref{thm:spherical_Sharpness_pac_bayes}, which shows that small spherical sharpness $\lambda_1(\nabla^2 \Loss_S(\vtheta))$ leads to small generalization error.
\begin{theorem}\label{thm:spherical_Sharpness_pac_bayes} For any $\sigma \le \frac{1}{2 + 2\sqrt{(\ln n)/D}}$, with at
	least $1-\delta$ probability over the randomness of the dataset $\DatS$, where  $\DatS := \{ \vz_i \}_{i=1}^{n}$ and
	every $\vz_i$ is sampled independently, for any $\vtheta \in \sphS^{D-1}$,
\begin{align*}
&\E_{\vepsilon\sim \mathcal{N}(\bm{0}, \sigma^2\mI_D / D)} \Loss(\vtheta+\vepsilon) - \Loss_\DatS(\vtheta)\\
& \quad \le\frac{\sigma^2}{2} \lambda_1 (\nabla^2\Loss_\DatS(\vtheta)) 
+ \frac{16\sigma^3}{3}\thirdL{\Loss_\DatS} (1 + ((\ln n)/D)^{1.5}) 
+ \ell_{\max} \sqrt{\frac{D/\sigma^2 + 2\ln(n/\delta)}{n-1}},
\end{align*}
Thus with the standard assumption in \citep{foret2021sam} that $\E_{\vepsilon\sim \mathcal{N}(\bm{0}, \sigma^2\mI_D/D)} \Loss(\vtheta+\vepsilon) \ge \Loss(\vtheta)$, we have the same upper bound for $\Loss(\vtheta) - \Loss_{\DatS}(\vtheta)$.
\end{theorem}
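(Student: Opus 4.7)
My plan is to decompose the target quantity into a statistical part, which is controlled by the PAC-Bayes theorem~\ref{thm:PAC_bayes_theorem}, and a purely analytic part, which is controlled by a local Taylor expansion of $\Loss_\DatS$. Writing
\begin{equation*}
  \E_{\vepsilon}\Loss(\vtheta+\vepsilon) - \Loss_\DatS(\vtheta)
  = \underbrace{\bigl(\E_{\vepsilon}\Loss(\vtheta+\vepsilon) - \E_{\vepsilon}\Loss_\DatS(\vtheta+\vepsilon)\bigr)}_{(\mathrm{I})}
  + \underbrace{\bigl(\E_{\vepsilon}\Loss_\DatS(\vtheta+\vepsilon) - \Loss_\DatS(\vtheta)\bigr)}_{(\mathrm{II})},
\end{equation*}
I would bound (I) and (II) separately and then combine.

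For (I), I invoke \Cref{thm:PAC_bayes_theorem} with the data-independent prior $P = \mathcal{N}(\vzero,\sigma^{2}\mI_D/D)$ and, for each candidate $\vtheta \in \sphS^{D-1}$, the posterior $Q_{\vtheta} = \mathcal{N}(\vtheta,\sigma^{2}\mI_D/D)$. Since $P$ and $Q_\vtheta$ share the covariance $\sigma^{2}\mI_D/D$, a direct Gaussian KL computation gives $\KL(Q_{\vtheta}\|P) = \tfrac{D}{2\sigma^{2}}\normtwosm{\vtheta}^{2} = \tfrac{D}{2\sigma^{2}}$. Because $P$ does not depend on $\DatS$, \Cref{thm:PAC_bayes_theorem} holds simultaneously for every choice of $Q$, so plugging in yields a bound on (I) uniformly over $\vtheta \in \sphS^{D-1}$ of the form $\ell_{\max}\sqrt{(D/\sigma^{2}+2\ln(n/\delta))/(4(n-1))}$, which I relax to the looser constant stated in the theorem. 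A subtlety is that $\ell_{\max}$ is a sup only over the sphere; the hypothesis $\sigma \le \frac{1}{2+2\sqrt{(\ln n)/D}}$ guarantees, by a standard $\chi^{2}$ concentration inequality, that $\normtwosm{\vepsilon}\le \tfrac12$ (hence $\normtwosm{\vtheta+\vepsilon}\ge \tfrac12$) except on an event of mass at most $1/n$, so the boundedness assumption can be propagated off the sphere either by absorbing this rare event into $\ell_{\max}\sqrt{\ln n/(n-1)}$ or by appealing to scale-invariance of the loss in our main setting.

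For (II), I Taylor expand to third order: there exists a remainder $R(\vepsilon)$ with $\abssm{R(\vepsilon)}\le \tfrac{1}{6}\thirdL{\Loss_\DatS}\normtwosm{\vepsilon}^{3}$ such that
\begin{equation*}
  \Loss_\DatS(\vtheta+\vepsilon) = \Loss_\DatS(\vtheta) + \vepsilon^{\top}\nabla\Loss_\DatS(\vtheta) + \tfrac12\vepsilon^{\top}\nabla^{2}\Loss_\DatS(\vtheta)\vepsilon + R(\vepsilon).
\end{equation*}
Under $\vepsilon\sim\mathcal{N}(\vzero,\sigma^{2}\mI_D/D)$, the linear term vanishes by symmetry, and the quadratic term equals $\tfrac{\sigma^{2}}{2D}\Tr(\nabla^{2}\Loss_\DatS(\vtheta))\le \tfrac{\sigma^{2}}{2}\lambda_{1}(\nabla^{2}\Loss_\DatS(\vtheta))$, since all eigenvalues are at most $\lambda_1$. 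For the cubic remainder I need a clean bound on $\E\normtwosm{\vepsilon}^{3}$: the leading order is $\sigma^{3}$ (from $\E\normtwosm{\vepsilon}^{2}=\sigma^{2}$ and $\chi$-distribution moments), and the $(1+((\ln n)/D)^{1.5})$ correction arises from a truncation argument at the $1/n$ concentration level for $\normtwosm{\vepsilon}/\sigma$, combined with a crude tail bound on $\E[\normtwosm{\vepsilon}^{3}\1[\normtwosm{\vepsilon}>\sigma(1+\sqrt{(\ln n)/D})]]$ using sub-exponential tails of $\chi^{2}$. Summing (I) and (II) gives the first claim, and the second sentence of the theorem follows immediately by replacing $\E_{\vepsilon}\Loss(\vtheta+\vepsilon)$ with $\Loss(\vtheta)$ via the stated assumption.

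The main obstacle is the cubic remainder: one must (a) extract the correct constant $\tfrac{16}{3}$ while using a loose enough truncation to keep the correction at $((\ln n)/D)^{1.5}$ rather than $\sqrt{(\ln n)/D}$, and (b) verify that the off-sphere contribution from $\vtheta+\vepsilon$ does not inflate either the third-order remainder or the PAC-Bayes boundedness constant. Both issues are handled by the hypothesis on $\sigma$, which jointly ensures that $\normtwosm{\vepsilon}\le 1/2$ with probability $\ge 1-1/n$ and that the sub-Gaussian tail of $\normtwosm{\vepsilon}$ beyond $\sigma(1+\sqrt{(\ln n)/D})$ contributes at most an $O(\sigma^{3}((\ln n)/D)^{3/2})$ correction to $\E\normtwosm{\vepsilon}^{3}$, exactly matching the claimed form.
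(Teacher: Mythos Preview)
Your decomposition into (I) and (II), the choice of prior $P=\mathcal{N}(\vzero,\sigma^2\mI_D/D)$ and posterior $Q_\vtheta=\mathcal{N}(\vtheta,\sigma^2\mI_D/D)$, the KL computation, and the quadratic-term bound $\tfrac{\sigma^2}{2D}\Tr(\nabla^2\Loss_\DatS)\le \tfrac{\sigma^2}{2}\lambda_1$ all match the paper exactly.

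The gap is in your handling of the cubic remainder. You write $\abssm{R(\vepsilon)}\le \tfrac{1}{6}\thirdL{\Loss_\DatS}\normtwosm{\vepsilon}^{3}$ and then plan to bound $\E\normtwosm{\vepsilon}^3$ via truncation plus a tail estimate. But $\thirdL{\Loss_\DatS}$ is defined as a supremum over the \emph{unit sphere} only, so this remainder bound is not valid for arbitrary $\vepsilon$; on the tail event $\normtwosm{\vepsilon}>1/2$, the line segment $[\vtheta,\vtheta+\vepsilon]$ can pass near the origin where $\nabla^3\Loss_\DatS$ blows up, and no tail bound on $\normtwosm{\vepsilon}^3$ alone rescues this. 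The paper's fix is to truncate the \emph{entire} expectation $\E[\Loss_\DatS(\vtheta+\vepsilon)]$ at the event $\{\normtwosm{\vepsilon}\le h(\sigma)\}$ with $h(\sigma):=\sigma(1+\sqrt{(\ln n)/D})\le 1/2$, pay $\ell_{\max}/\sqrt{n}$ for the complement (this is absorbed into the PAC-Bayes square root), and only then do the Taylor expansion. On the good event one has $\normtwosm{\vtheta+\lambda\vepsilon}\ge 1/2$ for all $\lambda\in[0,1]$, and scale-invariance gives $\normtwosm{\nabla^3\Loss_\DatS(\vtheta+\lambda\vepsilon)}\le 2^3\,\thirdL{\Loss_\DatS}=8\,\thirdL{\Loss_\DatS}$. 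The cubic term is then bounded \emph{deterministically} by $\tfrac{1}{6}\cdot 8\,\thirdL{\Loss_\DatS}\cdot h(\sigma)^3$, and the factor $(1+((\ln n)/D)^{1.5})$ comes simply from $(1+\sqrt{(\ln n)/D})^3\le 4(1+((\ln n)/D)^{1.5})$, yielding $\tfrac{8}{6}\cdot 4=\tfrac{16}{3}$. So the correction term is not a tail contribution to $\E\normtwosm{\vepsilon}^3$ as you suggest; it is the cube of the truncation radius, and the factor $8$ from scale-invariance is what produces the stated constant.
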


To prove \Cref{thm:spherical_Sharpness_pac_bayes}, we will first need the following lemma. The proof is standard so it is omitted.
\begin{lemma} 
Let $Q = \mathcal{N}(\vmu_Q,\sigma^2_Q \mI_D)$ and $P = \mathcal{N}(\vmu_P,\sigma^2_P\mI_D)$, we have that 
	\begin{align*}
		D_{KL}(Q\|P)  = \frac{1}{2}\left[ \frac{D \sigma_Q^2 + \norm{\vmu_P-\vmu_Q}_2^2}{\sigma_P^2}-D+D\log\left( \frac{\sigma^2_P}{\sigma^2_Q}\right)\right]
	\end{align*}
\end{lemma}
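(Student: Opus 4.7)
The plan is to prove this identity by direct computation from the definition $\KL(Q \| P) = \E_{\vx \sim Q}[\log q(\vx) - \log p(\vx)]$, exploiting the isotropic structure of the covariances to reduce everything to two elementary Gaussian moment calculations. Since both covariances are multiples of the identity, the densities factor coordinatewise, and no matrix algebra (determinants, traces, inverses of general covariance matrices) is actually required; this is strictly simpler than the general multivariate Gaussian KL formula.

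First I would write the log-densities as
\[
\log q(\vx) = -\tfrac{D}{2}\log(2\pi \sigma_Q^2) - \tfrac{1}{2\sigma_Q^2}\normtwosm{\vx - \vmu_Q}^2, \qquad \log p(\vx) = -\tfrac{D}{2}\log(2\pi \sigma_P^2) - \tfrac{1}{2\sigma_P^2}\normtwosm{\vx - \vmu_P}^2,
\]
and subtract to get
\[
\log \frac{q(\vx)}{p(\vx)} = \tfrac{D}{2}\log\frac{\sigma_P^2}{\sigma_Q^2} - \frac{\normtwosm{\vx - \vmu_Q}^2}{2\sigma_Q^2} + \frac{\normtwosm{\vx - \vmu_P}^2}{2\sigma_P^2}.
\]
Next I would take $\E_{\vx \sim Q}$ termwise. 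The first term is deterministic. For the second, since $\vx - \vmu_Q \sim \mathcal{N}(\vzero, \sigma_Q^2 \mI_D)$, one has $\E_Q\normtwosm{\vx - \vmu_Q}^2 = D\sigma_Q^2$, so this term contributes $-D/2$. For the third, I would use the bias--variance decomposition $\vx - \vmu_P = (\vx - \vmu_Q) + (\vmu_Q - \vmu_P)$; expanding the square and using $\E_Q[\vx - \vmu_Q] = \vzero$ gives $\E_Q \normtwosm{\vx - \vmu_P}^2 = D\sigma_Q^2 + \normtwosm{\vmu_P - \vmu_Q}^2$, contributing $\frac{D\sigma_Q^2 + \normtwosm{\vmu_P - \vmu_Q}^2}{2\sigma_P^2}$.

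Collecting the three contributions and factoring out $\tfrac{1}{2}$ yields exactly the claimed formula. There is no real obstacle here: the only thing to be careful about is the sign of the log term (it is $\log(\sigma_P^2 / \sigma_Q^2)$, not its reciprocal, because the $\log q - \log p$ difference puts $-\log \sigma_Q^2$ on the positive side) and the bookkeeping in the bias--variance expansion. As a sanity check I would verify the two standard limits: when $\sigma_P = \sigma_Q = \sigma$, the formula collapses to $\normtwosm{\vmu_P - \vmu_Q}^2/(2\sigma^2)$, and when additionally $\vmu_P = \vmu_Q$ it vanishes, as it must for a KL divergence.
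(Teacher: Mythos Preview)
Your proposal is correct and is exactly the standard direct computation one would expect; the paper in fact omits the proof entirely, stating only ``The proof is standard so it is omitted.'' Your derivation fills that gap cleanly.
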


\begin{proof}[Proof of \Cref{thm:spherical_Sharpness_pac_bayes}]
Let $Q := \Normal(\vtheta, \sigma^2 \mI_D / D)$, $P := \Normal(\vzero, \sigma^2 \mI_D / D)$.
Then $D_{KL}(Q\|P) = \frac{D}{2\sigma^2}$.
By \Cref{thm:PAC_bayes_theorem} we have
\begin{align*}
\E_{\vepsilon\sim \mathcal{N}(\bm{0}, \sigma^2\mI_D / D)} \Loss(\vtheta+\vepsilon) - \E_{\vepsilon\sim \mathcal{N}(\bm{0}, \sigma^2\mI_D / D)} \Loss_{S}(\vtheta+\vepsilon) \le \ell_{\max}\sqrt{\frac{\frac{D}{2\sigma^2 } + \ln(n/\delta)}{2(n-1)}}.
\end{align*}
Let $h(\sigma):= (1+\sqrt{(\ln n) / D})\sigma$. By assumption it holds that $h(\sigma)\le \frac{1}{2}$. Thus by Lemma 1 in \cite{laurent2000adaptive}, we have for any positive $t$:
\begin{align}
	\mathbb{P}[\norm{\vepsilon}_2^2-\sigma^2\ge 2\sigma^2 \sqrt{t/D} +2t\sigma^2 / D] \le \exp(-t).
\end{align}
Therefore, with probability $1-1/\sqrt{n}$, we have 
\begin{align*}
	\norm{\vepsilon}_2^2 \le \sigma^2\left(1 + 2 \sqrt{(\ln \sqrt{n}) / D} + 2 \cdot (\ln \sqrt{n}) / D\right)\le \sigma^2\left(1+\sqrt{(\ln n)/D}\right)^2\le h(\sigma)^2.
\end{align*}
Thus for any $\vtheta$ with $\norm{\vtheta}_2=1$, we have
\begin{align*}
	\E_{\vepsilon\sim \mathcal{N}(\bm{0}, \sigma^2\mI_D / D)} \Loss_{S}(\vtheta+\vepsilon) \le \ell_{\max}/\sqrt{n} + \E_{\vepsilon\sim \mathcal{N}(\bm{0}, \sigma^2\mI_D / D)} \left[ \Loss_{S}(\vtheta+\vepsilon) \onec{\norm{\vepsilon}_2\le h(\sigma)} \right].
\end{align*}

By Taylor expansion, $$\Loss_\DatS(\vtheta+\vepsilon)\le \Loss_\DatS(\vtheta) + \dotp{\vepsilon}{\nabla \Loss_\DatS(\vtheta)}+\frac{1}{2}\dotp{\vepsilon}{\nabla^2 \Loss_\DatS(\vtheta)\vepsilon} + \frac{1}{6}\sup_{\lambda\in [0,1]} \dotp{\nabla^3 \Loss_\DatS(\vtheta+\lambda\vepsilon)}{\vepsilon^{\otimes 3}}.$$

Note that $\norm{\vtheta+\lambda \vepsilon}_2 \ge \norm{\vtheta}_2 - \lambda\norm{\vepsilon}_2 \ge \frac{1}{2}$ for all $\lambda\in [0,1]$, 
it holds that $\norm{\nabla^3 \Loss_\DatS(\vtheta+\lambda\vepsilon)}= \norm{\vtheta+\lambda \vepsilon}_2^{-3}\norm{\nabla^3 \Loss_\DatS( \frac{\vtheta+\lambda\vepsilon}{\norm{\vtheta+\lambda\vepsilon}_2})}_2 \le 8 \thirdL{\Loss_\DatS} $, we have
\begin{align*}
	\E_{\vepsilon\sim \mathcal{N}(\bm{0}, \sigma^2\mI_D / D)} \left[\Loss_{S}(\vtheta+\vepsilon) \onec{\norm{\vepsilon}\le h(\sigma)}\right]
	&\le \Loss_\DatS(\vtheta) 
	+ \frac{\sigma^2}{2D} \Tr[\nabla^2 \Loss_\DatS(\vtheta)] + \frac{4}{3}\thirdL{\Loss_\DatS} h(\sigma)^3 \\
	&\le \Loss_\DatS(\vtheta) 
	+ \frac{\sigma^2}{2} \lambda_1(\nabla^2 \Loss_\DatS(\vtheta)) + \frac{4}{3}\thirdL{\Loss_\DatS} h(\sigma)^3.
\end{align*}
Thus we conclude that 
	\begin{align*}
&~\E_{\vepsilon\sim \mathcal{N}(\bm{0}, \sigma^2\mI_D / D)} \Loss(\vtheta+\vepsilon) - \Loss_\DatS(\vtheta)\\
\le &~\frac{\sigma^2}{2}\lambda_1(\nabla^2 \Loss_\DatS(\vtheta)) + \frac{4}{3}\thirdL{\Loss_\DatS} h(\sigma)^3 +
\ell_{\max} \left(\sqrt{\frac{\frac{D}{2\sigma^2 } + \ln(n/\delta)}{2(n-1)}} +1 \right)\\
\le &~\frac{\sigma^2}{2} \lambda_1 (\nabla^2\Loss_\DatS(\vtheta))  + \frac{16\sigma^3}{3}\thirdL{\Loss_\DatS} (1 + (\ln n / D)^{1.5}) +
\ell_{\max} \sqrt{\frac{D/\sigma^2 + 2\ln(n/\delta)}{n-1}},
	\end{align*}
which completes the proof.
\end{proof}

\newpage

\section{Additional Preliminaries} \label{sec:add-prelim}

\subsection{Additional Notations}
We use $\cl(\gM)$ to denote the closure of a set $\gM$.
For $\vtheta \in \R^D$ and $\gM \subseteq \R^D$,
we use $d_2(\vtheta, \gM) := \inf\{ \normtwosm{\vtheta - \vphi} : \vphi \in \gM \}$
to denote the $\normltwo$-distance from $\vtheta$ to $\gM$.
For $\vtheta \in \R^D$ and $\epsilon \ge 0$,
$\Ball{\epsilon}{\vtheta} := \{ \vtheta' \in \R^D : \normtwosm{\vtheta - \vtheta'} < \epsilon \}$
is the open $\epsilon$-ball centered at $\vtheta$.
For a set $\gM \subseteq \R^D$, $\gM^{\epsilon} := \bigcup_{\vtheta \in \gM} \Ball{\epsilon}{\vtheta}$
is the (open) $\epsilon$-neighborhood of $\gM$.
All the manifolds in our paper refer to manifolds without boundary.
For a manifold $\manGa$, we use $\TGa{p}$ and $\NGa{p}$
to denote the tangent and normal space of $\manGa$ at a point $p \in \manGa$.

Given a function $f: \R^D \to \R^n$
and vectors $\vx, \vv \in \R^D$,
we use $\partial f_{\vx}[\vv]$
to denote the directional derivative
$\partial f_{\vx}[\vv] := \lim_{t \to 0} \frac{1}{t} f(\vx + t\vv)$,
which also equals to the Jacobian of $f$ at $\vx$ multiplied with $\vv$.
We use $\partial^2 f_{\vx}[\vv, \vu]$ to denote the second-order derivative
$\partial (\partial f_{\vx}[\vv])_{\vx}[\vu]$.
For a real-valued function $\Loss: \R^D \to \R$,
we use $\nabla \Loss(\vtheta)$ for gradient,
$\nabla^2 \Loss(\vtheta)$ for Hessian.
For third-order derivatives of $\Loss$,
we define $\partial^3 \Loss_{\vtheta}[\vv, \vu] := \partial^2 (\nabla \Loss)_{\vtheta}[\vv, \vu] \in \R^D$.
If $\Loss$ is $\contC^4$-smooth, then the following Taylor expansion holds for $\nabla \Loss$:
\begin{align*}
  \nabla \Loss(\vtheta + \vx) = \nabla \Loss(\vtheta) + \nabla^2 \Loss(\vtheta) \vx + \frac{1}{2} \partial^3 \Loss_{\vtheta}[\vx, \vx] + O(\normtwosm{\vx}^3).
\end{align*}

\subsection{Scale-Invariant Functions}

The following lemma summarizes a few important properties of scale-invariant loss that have been 
exploited in previous works \citep{arora2018theoretical,li2020exp}.
For completeness, we include a proof here.
\begin{lemma} \label{lm:scale-invariant-grad-hess}
    The following hold for a twice-differentiable scale-invariant function $\Loss(\vw)$:
    \begin{enumerate}
      \item The gradient is $(-1)$-homogeneous and it is always perpendicular to $\vw$, i.e.,
        $\nabla \Loss(c\vw) = c^{-1} \nabla \Loss(\vw)$ for all $c > 0$ and $\dotp{\nabla \Loss(\vw)}{\vw} = 0$;
      \item The Hessian matrix is $(-2)$-homogeneous, i.e.,
        $\nabla^2 \Loss(c\vw) = c^{-2} \nabla^2 \Loss(\vw)$ for all $c > 0$.
      \item $\nabla^2 \Loss(\vw) \vw = -\nabla \Loss(\vw)$.
    \end{enumerate}
\end{lemma}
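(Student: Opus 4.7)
The plan is to derive all three claims by differentiating the defining identity $\Loss(c\vw)=\Loss(\vw)$ in two different ways: once with respect to $\vw$ (holding $c$ fixed), and once with respect to $c$ (holding $\vw$ fixed), then bootstrapping the resulting identities. Since $\Loss$ is twice differentiable, the chain rule applies without technicalities, so there is no genuine obstacle — the proof is a short calculus exercise — and the only thing to be careful about is not to accidentally use the homogeneity of $\nabla\Loss$ before establishing it.

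For item (1), I would first differentiate the scale-invariance identity $\Loss(c\vw)=\Loss(\vw)$ with respect to $\vw$ at fixed $c>0$, obtaining $c\,\nabla\Loss(c\vw)=\nabla\Loss(\vw)$ by the chain rule, which rearranges to $\nabla\Loss(c\vw)=c^{-1}\nabla\Loss(\vw)$, i.e.\ the $(-1)$-homogeneity. For perpendicularity, I would differentiate $\Loss(c\vw)=\Loss(\vw)$ with respect to $c$ at $c=1$: the left side gives $\dotpsm{\nabla\Loss(\vw)}{\vw}$ and the right side gives $0$.

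For item (2), I would take the $(-1)$-homogeneity identity $\nabla\Loss(c\vw)=c^{-1}\nabla\Loss(\vw)$ just established in (1) and differentiate it with respect to $\vw$ at fixed $c>0$. The chain rule on the left yields $c\,\nabla^2\Loss(c\vw)$, while the right side gives $c^{-1}\nabla^2\Loss(\vw)$; dividing by $c$ gives the desired $(-2)$-homogeneity $\nabla^2\Loss(c\vw)=c^{-2}\nabla^2\Loss(\vw)$.

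For item (3), I would differentiate the same identity $\nabla\Loss(c\vw)=c^{-1}\nabla\Loss(\vw)$ with respect to $c$ at $c=1$: the left side gives $\nabla^2\Loss(\vw)\vw$ by the chain rule, and differentiating the right side gives $-c^{-2}\nabla\Loss(\vw)\bigr|_{c=1}=-\nabla\Loss(\vw)$, yielding $\nabla^2\Loss(\vw)\vw=-\nabla\Loss(\vw)$. (As a sanity check, this identity can also be derived directly by applying the scale-invariance $\Loss(c\vw)=\Loss(\vw)$ to an Euler-type argument, or by noting that item (3) is just the infinitesimal version of item (2) evaluated along the radial direction.) The three parts together give the lemma.
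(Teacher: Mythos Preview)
Your proposal is correct and follows essentially the same approach as the paper: both proofs differentiate the scale-invariance identity $\Loss(c\vw)=\Loss(\vw)$ with respect to $\vw$ and with respect to $c$ to obtain all three claims. The only cosmetic difference is in item (3), where the paper differentiates the perpendicularity relation $\dotpsm{\nabla\Loss(\vw)}{\vw}=0$ with respect to $\vw$, while you differentiate the $(-1)$-homogeneity relation with respect to $c$; both yield the same identity immediately.
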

\begin{proof}
  Taking gradients with respect to $\vw$ on both sides of $\Loss(c\vw) = \Loss(\vw)$
  gives $\nabla \Loss(c\vw) = c^{-1} \nabla \Loss(\vw)$.
  Taking gradients again proves that $\nabla^2 \Loss(c\vw) = c^{-2} \nabla^2 \Loss(\vw)$.

  Taking derivative with respect to $c$ on both sides of $\Loss(c\vw) = \Loss(\vw)$
  gives $\dotpsm{\nabla \Loss(c\vw)}{\vw} = 0$.
  Taking $c=1$ gives $\dotpsm{\nabla \Loss(\vw)}{\vw} = 0$.
  Finally, we take gradients with respect to $\vw$,
  then $\nabla^2 \Loss(\vw) \vw + \nabla \Loss(\vw) = \vzero$.
\end{proof}

Now we provide proofs for \Cref{lm:scale-invariant-on-sphere} and \Cref{thm:li-gdwd} in \Cref{sec:prelims}.
\begin{proof}[Proof for \Cref{lm:scale-invariant-on-sphere}]
By \eqref{eq:upd-gdwd} and definition of $\vtheta_t$,
\begin{align*}
  \vtheta_{t+1} = \Pi(\vw_{t+1})
  &= \Pi\left((1-\heta\hlam)\vw_t - \heta \nabla \Loss(\vw_t)\right).
\end{align*}
Since $\Pi(\vx) = \Pi(c\vx)$ for all $c > 0$,
we can divide $\vw_{t+1}$ by $(1 - \heta \hlam) \normtwosm{\vw_t}$ and obtain
\begin{align*}
  \vtheta_{t+1}
  &= \Pi\left(\vtheta_t - \frac{\heta}{(1-\heta\hlam)\normtwosm{\vw_t}} \nabla \Loss(\vw_t)\right).
\end{align*}
Note that $\nabla \Loss(\vw_t) = \frac{1}{\normtwosm{\vw_t}} \Loss(\vtheta_t)$ by \Cref{lm:scale-invariant-grad-hess}.
So we can further rewrite the above formula:
\begin{align*}
  \vtheta_{t+1}
  &= \Pi\left(\vtheta_t - \frac{\heta}{(1-\heta\hlam)\normtwosm{\vw_t}^2} \nabla \Loss(\vtheta_t)\right),
\end{align*}
which proves the lemma by definition of $\eeta_t$.
\end{proof}

\begin{proof}[Proof of \Cref{thm:li-gdwd}]
	The first claim is directly from the Lemma D.2 of \citep{li2022robust}. The second claim about $\heta$ holds by scrutinizing their proof.
\end{proof}

\subsection{\PolyakLojasiewicz Condition}

\begin{definition}[\PolyakLojasiewicz]
  For a loss function $\Loss(\vtheta)$ and a constant $\mu > 0$,
  we say that $\Loss$ satisfies $\mu$-\PolyakLojasiewicz condition
  (or $\mu$-PL for brevity)
  on a set $U$ if
  \[
    \frac{1}{2}\normtwosm{\nabla \Loss(\vtheta)}^2 \ge \mu \cdot \left( \Loss(\vtheta) - \inf_{\vtheta' \in U} \Loss(\vtheta') \right),
  \]
  for all $\vtheta \in U$.
\end{definition}

\subsubsection{Full Space Optimization}

\begin{theorem} \label{thm:rank-to-pl-ful}
  Let $\Loss: \R^D \to \R$ be a $\contC^3$-smooth function,
  and $\manGa$ be a $\contC^1$-smooth, $\DGa$-dimensional submanifold of $\R^D$,
  where every $\vtheta \in \manGa$ is a local minimizer of $\Loss$ and $\rank(\nabla^2 \Loss(\vtheta)) = D - \DGa$.
  If $\gZ$ is a compact subset of $\manGa$,
  then there exist $\epsilon > 0, \mu > 0$ such that
  $\cl(\gZ^{\epsilon}) \cap \manGa$ is compact and
  $\Loss$ satisfies $\mu$-PL
  on
  $\gZ^{\epsilon}$.
\end{theorem}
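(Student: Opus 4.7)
The plan is to establish the PL inequality locally at each point of $\gZ$ using a Taylor expansion in the normal direction, then patch the local estimates together via compactness. The key structural observation is that since every $\vtheta \in \manGa$ is a local minimizer of $\Loss$, we have $\nabla \Loss(\vtheta) = \vzero$ on $\manGa$, and the tangent space $\TGa{\vtheta}$ is contained in the kernel of $\nabla^2 \Loss(\vtheta)$. The rank assumption $\rank(\nabla^2 \Loss(\vtheta)) = D - \DGa$ then forces $\ker(\nabla^2 \Loss(\vtheta)) = \TGa{\vtheta}$ exactly, so the Hessian restricted to the normal space $\NGa{\vtheta}$ is positive definite. I will let $\mu_{\perp}(\vtheta) > 0$ denote its smallest (necessarily positive) eigenvalue; by continuity of $\vtheta \mapsto \nabla^2 \Loss(\vtheta)$ and of $\vtheta \mapsto \NGa{\vtheta}$, the function $\mu_{\perp}$ is continuous on $\manGa$, so compactness of $\gZ$ gives a uniform lower bound $\mu_0 := \inf_{\vzeta \in \gZ} \mu_{\perp}(\vzeta) > 0$.

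Next I would invoke the tubular neighborhood theorem: since $\gZ$ is compact and $\manGa$ is a $C^2$ submanifold, there exists $\epsilon_0 > 0$ such that the closest-point projection $\pi: \gZ^{\epsilon_0} \to \manGa$ is well-defined, $C^1$, and for every $\vtheta \in \gZ^{\epsilon_0}$ we have $\vtheta - \pi(\vtheta) \in \NGa{\pi(\vtheta)}$ with $\normtwosm{\vtheta - \pi(\vtheta)} = d_2(\vtheta, \manGa)$. Since local minimizers on a connected manifold component yield locally constant loss values, shrinking $\epsilon_0$ if necessary guarantees that $\Loss \circ \pi$ is constant on $\gZ^{\epsilon_0}$ and equals $\inf_{\vtheta' \in \gZ^{\epsilon_0}} \Loss(\vtheta')$. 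Writing $\vtheta = \vphi + \vx$ with $\vphi := \pi(\vtheta) \in \manGa$ and $\vx \in \NGa{\vphi}$, Taylor's theorem (using $\nabla \Loss(\vphi) = \vzero$ and $C^3$-smoothness with uniform bounds on $\nabla^3 \Loss$ over a compact neighborhood) yields
\begin{align*}
\Loss(\vtheta) - \Loss(\vphi) &\le \tfrac{1}{2} \vx^\top \nabla^2 \Loss(\vphi) \vx + C_1 \normtwosm{\vx}^3, \\
\nabla \Loss(\vtheta) &= \nabla^2 \Loss(\vphi) \vx + \vr(\vphi, \vx), \qquad \normtwosm{\vr(\vphi, \vx)} \le C_2 \normtwosm{\vx}^2,
\end{align*}
for constants $C_1, C_2$ uniform on the compact set $\cl(\gZ^{\epsilon_0 / 2}) \cap \manGa$. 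Since $\vx \in \NGa{\vphi}$, the eigenvalue bound gives $\normtwosm{\nabla^2 \Loss(\vphi) \vx} \ge \mu_0 \normtwosm{\vx}$, so by the triangle inequality $\normtwosm{\nabla \Loss(\vtheta)}^2 \ge \tfrac{1}{2}\mu_0^2 \normtwosm{\vx}^2$ whenever $\normtwosm{\vx}$ is smaller than some $\epsilon_1 \in (0, \epsilon_0]$ depending only on $\mu_0, C_2$. Similarly, $\Loss(\vtheta) - \Loss(\vphi) \le (\lammax + 1) \normtwosm{\vx}^2$ on the same neighborhood, where $\lammax$ bounds $\normtwosm{\nabla^2 \Loss}$ over the compact set. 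Combining the two estimates yields $\tfrac{1}{2}\normtwosm{\nabla \Loss(\vtheta)}^2 \ge \mu \cdot (\Loss(\vtheta) - \Loss(\vphi))$ with $\mu := \mu_0^2 / (4(\lammax+1))$, which is exactly $\mu$-PL after setting $\epsilon := \epsilon_1$ and noting that $\Loss(\vphi) = \inf_{\vtheta' \in \gZ^{\epsilon}} \Loss(\vtheta')$.

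Finally, compactness of $\cl(\gZ^{\epsilon}) \cap \manGa$ follows from compactness of $\gZ$: since $\pi$ is continuous on the compact set $\cl(\gZ^{\epsilon/2}) \subseteq \gZ^{\epsilon_0}$ and $\pi$ maps into $\manGa$, the image is compact, and it is straightforward to verify $\cl(\gZ^{\epsilon}) \cap \manGa$ lies within a bounded neighborhood of $\gZ$ on $\manGa$ which is closed in $\R^D$ (using that $\manGa$ is locally closed near each point of $\gZ$, after possibly shrinking $\epsilon$ once more).

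The main obstacle I anticipate is the patching of the local Taylor estimates into uniform bounds while simultaneously controlling the tubular neighborhood radius, the third-derivative remainder, and the eigenvalue gap along the normal bundle. In particular, one must be careful that the quantities $\mu_0$, $C_1$, $C_2$ and the injectivity radius of the normal exponential map can all be chosen uniformly over $\gZ$, which is where compactness is used in an essential way; without this uniformity the constants would degenerate as one approaches the boundary of the compact set. A secondary subtlety is ensuring that the projection $\pi(\vtheta)$ always lies in the region where $\Loss \circ \pi$ is exactly the infimum of $\Loss$ over $\gZ^{\epsilon}$; this is handled by choosing $\epsilon$ small enough that $\pi(\gZ^{\epsilon})$ lies in a single ``locally minimizing'' component near $\gZ$.
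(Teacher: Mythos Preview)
Your proposal is correct and follows essentially the same approach as the paper: project onto $\manGa$ via the nearest-point map, use the rank condition to get a uniform positive lower bound on the Hessian restricted to the normal space, and then Taylor-expand both $\Loss(\vtheta)-\Loss(\vphi)$ and $\normtwosm{\nabla\Loss(\vtheta)}^2$ in the normal displacement to read off the PL constant. Two minor slips to clean up: the hypothesis gives only a $\contC^1$ manifold (not $\contC^2$), which is still enough for the tubular neighborhood and for continuity of $\NGa{\vtheta}$; and your $\mu_0$ should be taken over the compact set $\cl(\gZ^{\epsilon_0/2})\cap\manGa$ rather than just $\gZ$, since $\pi(\vtheta)$ need not land in $\gZ$ itself---you already do this for $C_1,C_2$, so just be consistent.
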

\begin{proof}
  Since $\gZ$ is compact and $\manGa$ is a submanifold of $\R^D$,
  we can choose a small $\delta > 0$ such that 
  $\gN := \manGa \cap \cl(\gZ^{\delta})$ is compact. 

  It can be shown
  that 
  there exists an open neighborhood $U$ of the compact submanifold $\gN$ such that
  for every $\vtheta \in U$,
  the nearest point on $\gN$,
  $P(\vtheta) := \argmin\{ \normtwosm{\vtheta - \vphi} : \vphi \in \gN \}$,
  exists and is unique~\citep{foote1984distance}. 

  We choose $\epsilon < \delta / 2$
  to be small enough so that $\epsilon < \delta / 2$ and $\gZ^{\epsilon} \subseteq U$.
  For $\vtheta \in \gZ^{\epsilon}$, $P(\vtheta)$ lies in the interior of the manifold $\gN$,
  since $d_2(P(\vtheta), \gZ) \le \normtwosm{P(\vtheta) - \vtheta} + d(\vtheta, \gZ) < \delta$.
  Then it must hold that $\vtheta - P(\vtheta) \in \NGa[\gN]{P(\vtheta)}$;
  otherwise
  the differential of $\normtwosm{\vtheta - \vphi}^2$ on $\gN$ is non-zero
  at $\vphi = P(\vtheta)$,
  which contradicts to the fact that $P(\vtheta)$ is the nearest point to $\vtheta$ on $\gN$.

  Since $\Loss \in \contC^3$,
  and $\nabla^2 \Loss(\vtheta)$ is of constant rank $D-\DGa$
  on the compact manifold $\gN$,
  there exist
  $\lambda_{\min} > 0, \lambda_{\max} > 0$
  such that $\lambda_{D-\DGa}(\nabla^2 \Loss(\vtheta)) \ge \lambda_{\min}$
  and $\lambda_{1}(\nabla^2 \Loss(\vtheta)) \le \lambda_{\max}$
  for all $\vtheta \in \gN$.
  Also by $\Loss \in \contC^3$ and compactness of $\gN$,
  there exists $C_3 > 0$ such that
  the following Taylor expansions hold
  for all $\vtheta \in \gZ^{\epsilon}$,
  \begin{align*}
    \Loss(\vtheta) - \Loss(P(\vtheta)) &\le (\vtheta - P(\vtheta))^\top \nabla^2 \Loss(P(\vtheta)) (\vtheta - P(\vtheta)) + C_3 \normtwosm{\vtheta - P(\vtheta)}^3, \\
    &\le (\lambda_{\max} + C_3 \epsilon) \cdot \normtwosm{\vtheta - P(\vtheta)}^2. \\
    \normtwosm{\nabla \Loss(\vtheta)}^2
    &\ge (\vtheta - P(\vtheta))^\top \left(\nabla^2 \Loss(P(\vtheta))\right)^2 (\vtheta - P(\vtheta)) - C_3 \normtwosm{\vtheta - P(\vtheta)}^3 \\
    &\ge (\lambda_{\min}^2 - C_3 \epsilon) \cdot \normtwosm{\vtheta - P(\vtheta)}^2.
  \end{align*}
  Then $\Loss$ satisfies $\mu$-PL on $\gZ^{\epsilon}$
  for $\mu := \frac{\lambda_{\min}^2 - C_3 \epsilon}{2(\lambda_{\max} + C_3 \epsilon)}$,
  which is positive if we choose $\epsilon$ to be small enough in the beginning.
\end{proof}

\begin{theorem} \label{thm:pl-converge-near-ful}
  If $\vthetaopt$ is a local minimizer of a $\contC^2$-smooth function $\Loss: \R^D \to \R$ and
  $\Loss$ is $\mu$-PL on an open neighborhood $U$ of $\vthetaopt$,
  then for any $\vtheta_0$ sufficiently close to $\vthetaopt$,
  a gradient flow $\frac{\dd \vtheta}{\dd t} = -\nabla \Loss(\vtheta)$
  starting with $\vtheta_0$ converges to a point $\vtheta_{\infty}$ as
  $t \to +\infty$ and $\normtwosm{\vtheta_{\infty} - \vthetaopt} = O(\normtwosm{\vtheta_0 - \vthetaopt})$.
\end{theorem}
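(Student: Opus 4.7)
The plan is to exploit the classical fact that under the PL condition, a gradient-flow trajectory has finite arc length that can be bounded in terms of $\sqrt{\Loss(\vtheta_0) - \Loss^*}$, which in turn is controlled by $\normtwosm{\vtheta_0 - \vthetaopt}$ via $\contC^2$-smoothness. I would proceed in three stages: localize to a well-behaved ball on which both PL and smoothness give clean estimates, derive the arc-length bound, and close the loop with a bootstrap-plus-Cauchy argument.

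First I would shrink to a closed ball $V := \{\vtheta : \normtwosm{\vtheta - \vthetaopt} \le r_0\} \subseteq U$ on which $\vthetaopt$ attains the minimum (possible because $\vthetaopt$ is a local minimizer), writing $\Loss^* := \Loss(\vthetaopt)$. Since $\inf_V \Loss = \Loss^* \ge \inf_U \Loss$, the $\mu$-PL inequality of \Cref{def:pl} passes down to $\normtwosm{\nabla \Loss(\vtheta)}^2 \ge 2\mu(\Loss(\vtheta) - \Loss^*)$ on $V$, and by $\contC^2$-smoothness there is a Hessian bound $L$ on $V$, which together with $\nabla \Loss(\vthetaopt) = \vzero$ yields the quadratic upper bound $\Loss(\vtheta) - \Loss^* \le \tfrac{L}{2}\normtwosm{\vtheta - \vthetaopt}^2$.

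The main estimate comes from differentiating $\phi(t) := \sqrt{\Loss(\vtheta(t)) - \Loss^*}$ along the flow: a direct chain-rule computation combined with PL gives $\phi'(t) \le -\sqrt{\mu/2}\,\normtwosm{\nabla \Loss(\vtheta(t))}$, so integrating produces the arc-length bound
\begin{align*}
\int_0^T \normtwosm{\nabla \Loss(\vtheta(t))}\,\dd t \;\le\; \sqrt{2/\mu}\,\phi(0) \;\le\; \sqrt{L/\mu}\,\normtwosm{\vtheta_0 - \vthetaopt},
\end{align*}
valid as long as the trajectory has remained in $V$. Setting $C := 1 + \sqrt{L/\mu}$ and taking $\vtheta_0$ so that $C\normtwosm{\vtheta_0 - \vthetaopt} \le r_0$, the triangle inequality along the curve gives $\normtwosm{\vtheta(T) - \vthetaopt} \le C\normtwosm{\vtheta_0 - \vthetaopt} \le r_0$, which is strictly inside $V$; a standard bootstrap on $T^* := \sup\{T : \vtheta([0,T]) \subseteq V\}$ then rules out leaving $V$ in finite time.

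Finally, uniform arc-length boundedness over $[0,\infty)$ makes $\vtheta(t)$ Cauchy, so it converges to some $\vtheta_\infty$, and taking $T \to \infty$ in the triangle-inequality bound above yields $\normtwosm{\vtheta_\infty - \vthetaopt} \le C\normtwosm{\vtheta_0 - \vthetaopt} = O(\normtwosm{\vtheta_0 - \vthetaopt})$, as claimed. The only real obstacle is the bootstrap containment argument; this is precisely the reason the hypothesis requires $\vtheta_0$ to be ``sufficiently close'' to $\vthetaopt$, and it is handled by insisting $\normtwosm{\vtheta_0 - \vthetaopt} \le r_0/C$. No other step goes beyond direct Taylor expansion, and there is no subtle issue with the gradient flow existing globally because the trajectory is confined to the compact set $V$.
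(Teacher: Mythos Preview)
Your proposal is correct and follows essentially the same approach as the paper: both differentiate $\sqrt{\Loss(\vtheta(t)) - \Loss^*}$ along the flow, use PL to bound this derivative by $-\sqrt{\mu/2}\,\normtwosm{\dot\vtheta}$, integrate to obtain a finite arc-length bound of order $\sqrt{\Loss(\vtheta_0) - \Loss^*} = O(\normtwosm{\vtheta_0 - \vthetaopt})$, and then use a bootstrap/containment argument plus the triangle inequality to conclude. Your write-up is in fact slightly more careful than the paper's in explicitly shrinking to a ball where $\vthetaopt$ realizes the infimum and in spelling out the constant $C = 1 + \sqrt{L/\mu}$.
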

\begin{proof}
Let $T := \inf\{ t : \vtheta(t) \notin U \}$. For all $t < T$,
\begin{align*}
  \frac{\dd}{\dd t} (\Loss(\vtheta) - \Loss(\vthetaopt))^{1/2}
  &= \frac{1}{2} (\Loss(\vtheta) - \Loss(\vthetaopt))^{-1/2} \cdot \dotp{\nabla\Loss(\vtheta)}{\frac{\dd \vtheta}{\dd t}} \\
  &= -\frac{1}{2}(\Loss(\vtheta) - \Loss(\vthetaopt))^{-1/2} \cdot \normtwosm{\nabla\Loss(\vtheta)} \cdot \normtwo{\frac{\dd \vtheta}{\dd t}}.
\end{align*}
Since $(\Loss(\vtheta) - \Loss(\vthetaopt))^{1/2} \le \frac{1}{\sqrt{2\mu}} \normtwosm{\nabla \Loss(\vtheta)}$,
we have
\begin{align*}
  \frac{\dd}{\dd t} (\Loss(\vtheta) - \Loss(\vthetaopt))^{1/2}
  \le -\frac{\sqrt{2\mu}}{2} \normtwo{\frac{\dd \vtheta}{\dd t}}.
\end{align*}
Integrating on both sides proves the following
\[
  \frac{\sqrt{2\mu}}{2}\int_{0}^{T} \normtwo{\frac{\dd \vtheta(\tau)}{\dd \tau}} \dd \tau \le 
  \sqrt{\Loss(\vtheta_0) - \Loss(\vthetaopt)} = O(\normtwosm{\vtheta_0 - \vthetaopt}).
\]
So if $\normtwosm{\vtheta_0 - \vthetaopt}$ is small enough, then $T = +\infty$
and $\vtheta(t)$ converges to a point in $U$ as $t \to +\infty$.
Moreover, $\normtwosm{\vtheta_{\infty} - \vthetaopt} \le \normtwosm{\vtheta_{\infty} - \vtheta_0} + \normtwosm{\vtheta_0 - \vthetaopt} = O(\normtwosm{\vtheta_0 - \vthetaopt})$.
\end{proof}

\subsubsection{Spherical Optimization}

\begin{theorem} \label{thm:rank-to-pl-sph}
  Let $\Loss: \R^D \to \R$ be a $\contC^3$-smooth scale-invariant function,
  and $\manGa$ be a $\contC^1$-smooth, $(\DGa - 1)$-dimensional submanifold of $\sphS^{D-1}$,
  where every $\vtheta \in \manGa$ is a local minimizer of $\Loss$
  on $\sphS^{D-1}$
  and $\rank(\nabla^2 \Loss(\vtheta)) = D - \DGa$.
  If $\gZ$ is a compact subset of $\manGa$,
  then there exist $\epsilon > 0, \mu > 0$ such that
  $\cl(\gZ^{\epsilon}) \cap \manGa$ is compact
  and
  $\Loss$ satisfies $\mu$-PL
  on $\gZ^{\epsilon} \cap \sphS^{D-1}$.
\end{theorem}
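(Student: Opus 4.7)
The plan is to reduce \Cref{thm:rank-to-pl-sph} to its full-space counterpart \Cref{thm:rank-to-pl-ful} via a radial lifting. I define the cone manifold $\tilde{\manGa} := \{c\vtheta : c > 0,\ \vtheta \in \manGa\} \subseteq \R^D \setminus \{\vzero\}$. Because $\manGa$ is a $(\DGa-1)$-dimensional $\contC^1$-submanifold of $\sphS^{D-1}$ and the scaling map $\manGa \times \R_{>0} \to \tilde{\manGa}$, $(\vtheta, c) \mapsto c\vtheta$, is a diffeomorphism onto its image, the set $\tilde{\manGa}$ is a $\contC^1$-smooth, $\DGa$-dimensional submanifold of $\R^D \setminus \{\vzero\}$.

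Next I would verify the hypotheses of \Cref{thm:rank-to-pl-ful} on $\tilde{\manGa}$. First, every $\vtheta \in \tilde{\manGa}$ is a local minimizer of $\Loss$ in the ambient space: given $\vtheta^* \in \manGa$ with spherical neighborhood $U \cap \sphS^{D-1}$ on which $\Loss \ge \Loss(\vtheta^*)$, for any $\vx$ sufficiently close to $\vtheta^*$ in $\R^D$ the unit vector $\vx / \normtwosm{\vx}$ still lies in $U \cap \sphS^{D-1}$, so scale invariance gives $\Loss(\vx) = \Loss(\vx/\normtwosm{\vx}) \ge \Loss(\vtheta^*)$; applying scale invariance once more transfers this local-min property to every $c\vtheta^* \in \tilde{\manGa}$. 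Second, the rank condition $\rank(\nabla^2 \Loss(\vtheta)) = D - \DGa$ holds on $\tilde{\manGa}$: it holds on $\manGa$ by assumption, and by $\nabla^2 \Loss(c\vtheta) = c^{-2}\nabla^2 \Loss(\vtheta)$ from \Cref{lm:scale-invariant-grad-hess} the rank is preserved along each ray.

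With the hypotheses in hand, I would apply \Cref{thm:rank-to-pl-ful} to the compact set $\gZ$, viewed as a compact subset of $\tilde{\manGa}$. This yields constants $\epsilon > 0$ and $\mu > 0$ such that $\cl(\gZ^{\epsilon}) \cap \tilde{\manGa}$ is compact and $\Loss$ is $\mu$-PL on the full-space neighborhood $\gZ^{\epsilon}$. Restricting to the sphere, $\cl(\gZ^{\epsilon}) \cap \manGa = (\cl(\gZ^{\epsilon}) \cap \tilde{\manGa}) \cap \sphS^{D-1}$ is a closed subset of the compact set $\cl(\gZ^{\epsilon}) \cap \tilde{\manGa}$, hence compact, and the PL inequality restricts in particular to $\gZ^{\epsilon} \cap \sphS^{D-1} \subseteq \gZ^{\epsilon}$, giving exactly the conclusion of \Cref{thm:rank-to-pl-sph}.

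The only mild obstacle is bookkeeping: \Cref{thm:rank-to-pl-ful} is stated for $\Loss : \R^D \to \R$, whereas our scale-invariant $\Loss$ is only guaranteed to be defined on $\R^D \setminus \{\vzero\}$. Since $\gZ \subseteq \sphS^{D-1}$ is bounded away from the origin, choosing $\epsilon$ small enough forces $\gZ^{\epsilon} \subseteq \R^D \setminus \{\vzero\}$; the proof of \Cref{thm:rank-to-pl-ful} is purely local around the manifold (nearest-point projection plus Taylor expansion) and applies verbatim on $\R^D \setminus \{\vzero\}$. Alternatively, one may extend $\Loss$ smoothly across $\vzero$ by any cutoff that leaves $\Loss$ unchanged on a neighborhood of $\gZ$. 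I do not anticipate further difficulty.
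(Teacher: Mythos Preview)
Your proposal is correct and follows essentially the same approach as the paper: both lift $\manGa$ to the cone $\{c\vtheta : c>0,\ \vtheta\in\manGa\}$, verify that it is a $\DGa$-dimensional submanifold of local minimizers with the required rank condition, apply \Cref{thm:rank-to-pl-ful}, and then restrict back to the sphere. Your version is in fact more carefully argued than the paper's brief proof, since you explicitly verify the local-minimizer and rank conditions on the lifted manifold, handle the compactness clause, and address the domain mismatch near the origin.
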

\begin{proof}
  Let $\manGa' := \{ \nu \vtheta : \vtheta \in \manGa, \nu > 0\}$.
  Then $\manGa'$ is a $\contC^1$-smooth, $\DGa$-dimensional submanifold of $\R^D$,
  where every $\vtheta \in \manGa$ is a local minimizer of $\Loss$ on $\R^D$
  and $\rank(\nabla^2 \Loss(\vtheta)) = D - \DGa$.
  By \Cref{thm:rank-to-pl-sph},
  there exist $\epsilon > 0, \mu > 0$ such that $\Loss$ satisfies $\mu$-PL
  on $\gZ^{\epsilon}$, so it satisfies $\mu$-PL on $\gZ^{\epsilon} \cap \sphS^{D-1}$.
\end{proof}

\begin{theorem} \label{thm:pl-converge-near-sph}
  If $\vthetaopt$ is a local minimizer of a $\contC^2$-smooth and scale-invariant function $\Loss: \R^D \setminus \{\vzero\} \to \R$ and
  $\Loss$ is $\mu$-PL on an open neighborhood $U$ of $\vthetaopt$ on $\sphS^{D-1}$,
  then for any $\vtheta_0 \in \sphS^{D-1}$ sufficiently close to $\vthetaopt$,
  a gradient flow $\frac{\dd \vtheta}{\dd t} = -\nabla \Loss(\vtheta)$
  starting with $\vtheta_0$ converges to a point $\vtheta_{\infty}$ as
  $t \to +\infty$ and $\normtwosm{\vtheta_{\infty} - \vthetaopt} = O(\normtwosm{\vtheta_0 - \vthetaopt})$.
\end{theorem}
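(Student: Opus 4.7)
The plan is to mirror the proof of \Cref{thm:pl-converge-near-ful} after first observing that scale-invariance forces the gradient flow to stay on the unit sphere, reducing the spherical statement to essentially the same argument. By \Cref{lm:scale-invariant-grad-hess}, for scale-invariant $\Loss$ we have $\dotp{\nabla \Loss(\vtheta)}{\vtheta} = 0$, so along the flow $\frac{\dd}{\dd t}\normtwosm{\vtheta}^2 = -2\dotp{\nabla \Loss(\vtheta)}{\vtheta} = 0$. Hence, starting from $\vtheta_0 \in \sphS^{D-1}$, the trajectory $\vtheta(t)$ stays on $\sphS^{D-1}$ for all $t$, and the ambient gradient $\nabla \Loss(\vtheta)$ coincides with the Riemannian gradient on the sphere. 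In particular, $U$ being an open neighborhood of $\vthetaopt$ on the sphere is the right ambient object.

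Next, define $T := \inf\{t \ge 0 : \vtheta(t) \notin U\}$ and, for $t < T$, compute
\begin{equation*}
\frac{\dd}{\dd t}\bigl(\Loss(\vtheta)-\Loss(\vthetaopt)\bigr)^{1/2} = \tfrac{1}{2}\bigl(\Loss(\vtheta)-\Loss(\vthetaopt)\bigr)^{-1/2}\dotp{\nabla\Loss(\vtheta)}{\dot\vtheta} = -\tfrac{1}{2}\bigl(\Loss(\vtheta)-\Loss(\vthetaopt)\bigr)^{-1/2}\normtwosm{\nabla\Loss(\vtheta)}\cdot\normtwosm{\dot\vtheta}.
\end{equation*}
Applying the $\mu$-PL inequality to lower bound $\normtwosm{\nabla\Loss(\vtheta)}$ by $\sqrt{2\mu}\,\bigl(\Loss(\vtheta)-\Loss(\vthetaopt)\bigr)^{1/2}$, I obtain
\begin{equation*}
\frac{\dd}{\dd t}\bigl(\Loss(\vtheta)-\Loss(\vthetaopt)\bigr)^{1/2} \;\le\; -\frac{\sqrt{2\mu}}{2}\,\normtwosm{\dot\vtheta}.
\end{equation*}
Integrating from $0$ to any $t < T$ and using $\contC^2$-smoothness of $\Loss$ (which yields $\Loss(\vtheta_0)-\Loss(\vthetaopt) = O(\normtwosm{\vtheta_0-\vthetaopt}^2)$ since $\nabla\Loss(\vthetaopt)=\vzero$ by the local-minimizer property on the sphere combined with $\dotp{\nabla\Loss(\vthetaopt)}{\vthetaopt}=0$) gives
\begin{equation*}
\int_0^{t}\normtwosm{\dot\vtheta(\tau)}\,\dd\tau \;\le\; \sqrt{\tfrac{2}{\mu}}\bigl(\Loss(\vtheta_0)-\Loss(\vthetaopt)\bigr)^{1/2} = O(\normtwosm{\vtheta_0-\vthetaopt}).
\end{equation*}

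Now the standard Cauchy-type argument finishes the proof. Choose $r>0$ so that the $r$-ball (on $\sphS^{D-1}$) around $\vthetaopt$ lies inside $U$. If $\normtwosm{\vtheta_0-\vthetaopt}$ is small enough that the arc-length bound above is less than, say, $r/2$, then $\vtheta(t)$ cannot exit $U$ before time $T$, so $T=+\infty$. The arc length being finite forces the trajectory to be Cauchy, hence $\vtheta_\infty := \lim_{t\to\infty}\vtheta(t)$ exists, and $\normtwosm{\vtheta_\infty-\vthetaopt}\le\normtwosm{\vtheta_\infty-\vtheta_0}+\normtwosm{\vtheta_0-\vthetaopt} = O(\normtwosm{\vtheta_0-\vthetaopt})$. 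The only subtlety, which I regard as the main (minor) obstacle, is checking carefully that the PL inequality on $U\subseteq\sphS^{D-1}$ is applicable to the ambient gradient: this works precisely because scale-invariance makes the ambient gradient equal to the spherical gradient, so $\normtwosm{\nabla\Loss(\vtheta)}$ appearing in PL and appearing in the flow equation refer to the same quantity.
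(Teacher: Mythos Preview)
Your proof is correct, but it takes a different route from the paper. You observe that scale-invariance keeps the gradient flow on $\sphS^{D-1}$ and then rerun the entire argument of \Cref{thm:pl-converge-near-ful} intrinsically on the sphere. The paper instead reduces to \Cref{thm:pl-converge-near-ful} as a black box: it extends the $\mu$-PL condition from $U \subseteq \sphS^{D-1}$ to a $\tfrac{\mu}{2}$-PL condition on the ambient shell $U' = \{\vw : \vw/\normtwosm{\vw} \in U,\ \normtwosm{\vw} \in [1/\sqrt{2},\sqrt{2}]\}$ (using $\nabla\Loss(\vw) = \tfrac{1}{\normtwosm{\vw}}\nabla\Loss(\vw/\normtwosm{\vw})$ and $\normtwosm{\vw}^2 \le 2$), and then simply invokes the full-space theorem on $U'$. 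Your approach is more self-contained and makes the role of scale-invariance (gradient tangent to the sphere, hence ambient and spherical gradients coincide) explicit; the paper's approach is shorter and more modular, trading a small computation (the PL-extension to the shell) for not having to repeat the potential-function argument.
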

\begin{proof}
  Since $\Loss$ is $\mu$-PL on $U\subseteq \sphS^{D-1}$ and scale-invariant, we know $\Loss$ is $\frac{\mu}{2}$-PL on an open set in $\mathbb{R}^D$,
  $U' = \{\vw : \frac{\vw}{\norm{\vw}_2} \in U, \norm{\vw}_2\in [\frac{1}{\sqrt{2}}, \sqrt{2}]\}$.
  The proof is completed by  applying \Cref{thm:pl-converge-near-ful}.
\end{proof}

\section{Supplementary Material for Appendix~\ref{sec:reform-gdwd}}
\label{sec:connection-proof}
  
\subsection{Proof for Theorem \ref{thm:connect-siwd-rmsprop}} \label{sec:proof-connect-siwd-rmsprop}

\begin{lemma} \label{lm:connection-prelim}
  In the setting of \Cref{thm:connect-siwd-rmsprop},
  \begin{align}
    \normtwosm{\vw_{t+1}}^2 &= (1-\heta \hlam)^2 \normtwosm{\vw_t}^2 + \frac{\heta^2}{\normtwosm{\vw_t}^2} \normtwosm{\nabla \Loss(\vtheta_t)}^2. \label{eq:connection-w-norm2}
  \end{align}
\end{lemma}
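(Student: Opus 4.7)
The plan is to directly expand $\normtwosm{\vw_{t+1}}^2$ using the \GDWD update rule \eqref{eq:upd-gdwd} and then simplify by invoking the two algebraic consequences of scale-invariance stated in \Cref{lm:scale-invariant-grad-hess}. Specifically, substituting $\vw_{t+1} = (1-\heta\hlam)\vw_t - \heta \nabla \Loss(\vw_t)$ and expanding the square gives three terms:
\begin{equation*}
\normtwosm{\vw_{t+1}}^2 = (1-\heta\hlam)^2 \normtwosm{\vw_t}^2 - 2\heta(1-\heta\hlam)\dotp{\vw_t}{\nabla \Loss(\vw_t)} + \heta^2 \normtwosm{\nabla \Loss(\vw_t)}^2 .
\end{equation*}

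Next, I would eliminate the cross term. By \Cref{lm:scale-invariant-grad-hess}, scale-invariance of $\Loss$ implies $\dotp{\nabla \Loss(\vw_t)}{\vw_t} = 0$, so the middle term vanishes outright. This is the only ``non-trivial'' ingredient, and it is essentially free given the results already proved.

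Finally, I would handle the last term using $(-1)$-homogeneity of $\nabla \Loss$, again from \Cref{lm:scale-invariant-grad-hess}: since $\vtheta_t = \vw_t/\normtwosm{\vw_t}$, we have $\nabla \Loss(\vw_t) = \normtwosm{\vw_t}^{-1} \nabla \Loss(\vtheta_t)$, hence $\normtwosm{\nabla \Loss(\vw_t)}^2 = \normtwosm{\vw_t}^{-2}\normtwosm{\nabla \Loss(\vtheta_t)}^2$. Substituting this back yields the claimed identity
\begin{equation*}
\normtwosm{\vw_{t+1}}^2 = (1-\heta\hlam)^2 \normtwosm{\vw_t}^2 + \frac{\heta^2}{\normtwosm{\vw_t}^2}\normtwosm{\nabla \Loss(\vtheta_t)}^2.
\end{equation*}
There is no real obstacle here; the lemma is just a one-line recursion obtained by squaring the update and applying the two standard identities for scale-invariant functions. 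Its role is preparatory, feeding into the derivation of the GWSI-scheduler recursion for $\tilv_t$ in \Cref{thm:connect-siwd-rmsprop}.
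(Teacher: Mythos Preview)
Your proposal is correct and matches the paper's own proof essentially step for step: expand the update, use orthogonality $\dotp{\nabla \Loss(\vw_t)}{\vw_t}=0$ to kill the cross term (the paper phrases this as the Pythagorean theorem), and then use $(-1)$-homogeneity of $\nabla \Loss$ to rewrite the last term in terms of $\vtheta_t$.
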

\begin{proof}
Recall that $\vw_{t+1} = (1 - \heta \hlam) \vw_t - \heta \nabla \Loss(\vw_t)$.
By scale-invariance, $\dotpsm{\nabla \Loss(\vw_t)}{\vw_t} = 0$. Then by Pythagorean theorem (or Gougu Theorem),
\begin{align*}
  \normtwosm{\vw_{t+1}}^2 = (1 - \heta \hlam)^2 \normtwosm{\vw_t}^2 + \heta^2 \normtwosm{\nabla \Loss(\vw_t)}^2.
\end{align*}
Since $\nabla \Loss(\vw_t) = \frac{1}{\normtwosm{\vw_t}} \nabla \Loss(\vtheta_t)$ by scale-invariance, we can rewrite
the last term $\heta^2 \normtwosm{\nabla \Loss(\vw_t)}^2$ as $\frac{\heta^2}{\normtwosm{\vw_t}^2} \normtwosm{\nabla \Loss(\vtheta_t)}^2$,
which implies \eqref{eq:connection-w-norm2}.
\end{proof}

\begin{proof}[Proof for \Cref{thm:connect-siwd-rmsprop}]
Squaring both sides of \eqref{eq:connection-w-norm2}, we have
\begin{align*}
  \normtwosm{\vw_{t+1}}^4 &= (1-\heta \hlam)^4 \normtwosm{\vw_t}^4 + 2(1-\heta \hlam)^2 \heta^2 \normtwosm{\nabla \Loss(\vtheta_t)}^2 + \frac{\heta^4}{\normtwosm{\vw_t}^4} \normtwosm{\nabla \Loss(\vtheta_t)}^4.
\end{align*}
Let $\beta := (1-\heta\hlam)^4$, $\tilv_t := \frac{1}{\eeta_t^2} = \frac{(1-\heta \hlam)^2}{\heta^2} \normtwosm{\vw_t}^4$. Then
\begin{align*}
    \tilv_{t+1} &= \frac{(1-\heta \hlam)^2}{\heta^2} \left((1-\heta \hlam)^4 \normtwosm{\vw_t}^4 + 2(1-\heta \hlam)^2 \heta^2 \normtwosm{\nabla \Loss(\vtheta_t)}^2 + \frac{\heta^4}{\normtwosm{\vw_t}^4} \normtwosm{\nabla \Loss(\vtheta_t)}^4 \right) \\
    &= (1-\heta \hlam)^4 \tilv_t + 2(1-\heta\hlam)^4 \normtwosm{\nabla \Loss(\vtheta_t)}^2 + \frac{(1-\heta\hlam)^2\heta^2}{\normtwosm{\vw_t}^4} \normtwosm{\nabla \Loss(\vtheta_t)}^4 \\
    &= \beta \tilv_t + 2\beta \normtwosm{\nabla \Loss(\vtheta_t)}^2 + \frac{\beta}{\tilv_t} \normtwosm{\nabla \Loss(\vtheta_t)}^4,
\end{align*}
where the last equality uses the definition of $\beta$ and $\tilv_t$.

Let $\eta := \sqrt{\frac{1-\beta}{2\beta}}$ and $\barg_t := \normtwosm{\nabla \Loss(\vtheta_t)} / \eta$. Then
\begin{align*}
  \tilv_{t+1} &= \beta \tilv_t + 2\beta \cdot \eta^2 \barg_t^2 + \frac{\beta}{\tilv_t} \cdot \eta^4\barg_t^4 \\
  &= \beta \tilv_t + (1-\beta) \barg_t^2 + \frac{1}{4\beta \tilv_t} (1-\beta)^2 \barg_t^4,
\end{align*}
which is exactly the update rule of GWSI scheduler.
\end{proof}

\subsection{Proof for Theorem \ref{thm:gdwd-is-quasi-rmsprop}} \label{sec:proof-gdwd-is-quasi-rmsprop}

\begin{proof}
We specify the quasi-RMSprop scheduler $\qrmsS$ as follows.
Let $\gP_{\qrmsS} := \{(\eta, \beta) : \beta = 1- 2 \eta^2, \eta \in (0, \frac{1}{\sqrt{2}})\}$.
Given hyperparameters $\eta, \beta$,
we define $\beta' := (1-\frac{1}{4}(1 - \beta))^4$
and $\eta' := \sqrt{(\beta' - 1) / 2}$.
Then $\qrmsS$ produces the effective LRs as a GWSI scheduler with $(\eta', \beta')$:
\begin{align*}
  \eeta_t &\gets \frac{1}{\sqrt{\tilv_t}}, &
  \tilv_{t+1} &\gets \beta' \tilv_t + (1-\beta') \hatg_t^2 + \frac{1}{4\beta' \tilv_t} (1-\beta')^2 \hatg_t^4, &
  &\text{where} \quad \hatg_t := \normtwosm{\vg_t} / \eta'.
\end{align*}
When $\eta = \sqrt{2\ieta}$ and $\beta = 1 - 4\ieta$,
it is easy to see that $\qrmsS$ produces the same effective LRs as \GDWD on scale-invariant functions (\Cref{thm:connect-siwd-rmsprop}).
Now we only need to verify that
$\qrmsS$ is indeed a quasi-RMSprop scheduler.

When $\beta$ is close enough to $1$, we have $\beta' = \beta + O((1-\beta)^2)$, $\beta' \ge 1/2$, $\eta' = \eta \cdot (1 + O(1-\beta))$.
Let $C_0$ be a constant such that
$\eta / \eta' \le C_0, \abssm{1 - (\eta / \eta')^2} \le C_0(1 - \beta),
1-\beta' \le C_0(1-\beta), \abssm{\beta' - \beta} \le C_0 (1-\beta)^2$.
Let $\barg_t := \normtwosm{\vg_t} / \eta$.
Then $\hatg_t = (\eta / \eta')\barg_t$, and thus
\begin{align*}
  \hatg_t &\le C_0 \barg_t, &
  \abssm{\hatg_t^2 - \barg_t^2} &= \abssm{1 - (\eta/\eta')^2} \cdot \barg_t^2 \le C_0 (1-\beta) \barg_t^2.
\end{align*}
We only need to verify that 
$\abs{\tilv_{t+1} - \left(\beta \tilv_t + (1-\beta) \barg_t^2 \right)} \le \delta(\tilv_t) \cdot (1-\beta)^2 \cdot P(\barg_t)$
for some continuous function $\delta$ and some polynomial $P$.
\begin{align*}
    & \abs{\tilv_{t+1} - \left(\beta \tilv_t + (1-\beta) \barg_t^2 \right)} \\
    & \qquad \le \abs{\tilv_{t+1} - \left(\beta' \tilv_t + (1-\beta') \hatg_t^2 \right)}
    + (1-\beta')\abs{\hatg^2_t - \barg_t^2}
    + \abssm{\beta' - \beta} \cdot (\tilv_t + \barg_t^2).
\end{align*}
For the first term,
we have
\[
\abs{\tilv_{t+1} - \left(\beta' \tilv_t + (1-\beta') \hatg_t^2 \right)} = 
\frac{1}{4 \beta' \tilv_t} (1-\beta')^2 \hatg_t^4 \le \frac{C_0^6}{2\tilv_t} \cdot (1-\beta)^2 \cdot \barg_t^4.
\]
For the second and third terms, we have
\begin{align*}
  (1-\beta')\abs{\hatg^2_t - \barg_t^2}
  &\le C_0^2(1-\beta)^2 \barg_t^2, &
  \abssm{\beta' - \beta} \cdot (\tilv_t + \barg_t^2)
  &\le C_0(1-\beta)^2 \cdot (\tilv_t + \barg_t^2).
\end{align*}
Finally we can conclude
\begin{align*}
  \abs{\tilv_{t+1} - \left(\beta \tilv_t + (1-\beta) \barg_t^2 \right)} 
  &\le 
\frac{C_0^6}{2\tilv_t} \cdot (1-\beta)^2 \cdot \barg_t^4
+
  C_0^2(1-\beta)^2 \barg_t^2
+ C_0(1-\beta)^2 \cdot (\tilv_t + \barg_t^2) \\
&\le \left(\frac{C_0^6}{2\tilv_t} + C_0^2 + C_0 (1 + \tilv_t) \right) \cdot (1-\beta)^2 \cdot (\barg_t^4 + \barg_t^2),
\end{align*}
which verifies that $\qrmsS$ is indeed a quasi-RMSprop scheduler.
\end{proof}

\section{Details of the 3D Example} \label{sec:3d}

In this section, we give more details for \Cref{fig:ex3d}.

The loss $\Loss(\vw)$ is constructed as follows.
First, we define the following scale-invariant function:
\begin{align*}    
    F(x, y, z) := 2 - \frac{x+y}{\sqrt{x^2-xy+y^2}}.
\end{align*}
By taking gradient on $\sphS^{2}$, one can easily see that 
the minimum is attained when $(x, y)$ points to $(1, 1)$ in direction.
In other words, the minimizer manifold of $F$
is $\manGa := \{ (x, y, z) \in \sphS^2 : x = y > 0 \}$.

Then we fix an orthogonal matrix $\mU$ (generated randomly)
and define $\Loss: \R^3 \setminus \{ \vzero \} \to \R, \vw \mapsto F(\mU\vw)$,
i.e., the function $F$ after an orthogonal transformation.
For plotting the figure,
we transform the coordinates back to the domain of $F$.

The initial point is $\vw_0 = (0.3, 1.3, 1.2)$ in the domain of $F$.
We run gradient descent on $\Loss$ with LR $\heta = 0.5$
and WD $\hlam = 0.08$.
It can be seen from the figure that 
$\vtheta_t$ does not stop moving after reaching $\vzeta_0$.
The point that $\vtheta_t$ eventually oscillate around is $\vzeta_* = (\frac{1}{\sqrt{2}}, \frac{1}{\sqrt{2}}, 0)$.

One can check that the Hessian matrix of $F$ at $(x, y, z) \in \manGa$ is
\begin{align*}
H(x, y, z) = \frac{3}{x^2 + y^2}
    \begin{bmatrix}
    1 & -1 & 0 \\
    -1 & 1 & 0 \\
    0 & 0 & 0
\end{bmatrix} = \frac{3}{1 - z^2}
    \begin{bmatrix}
    1 & -1 & 0 \\
    -1 & 1 & 0 \\
    0 & 0 & 0
\end{bmatrix}.
\end{align*}
Therefore, the spherical sharpness is controlled by $\abssm{z}$.
The smaller the absolute value of $z$,
the flatter the minimizer.
And the flattest one is 
$\vzeta_*$,
which has $z$-coordinate being zero.
This matches with our theory of sharpness-reduction bias
as \GDWD moves along $\manGa$ and oscillates near $\vzeta_*$ in the end.

\section{Supplementary Material for Section~\ref{sec:main-result-1}}
\label{sec:app-enter-eos}

\subsection{Proof for Descent Lemma}

\begin{proof}[Proof for \Cref{lm:descent}]
By Taylor expansion,
\begin{align*}
    \Loss(\vtheta_{t+1}) = \Loss(\vtheta_t - \eeta_t \nabla \Loss(\vtheta_t))
    &\le \Loss(\vtheta_t) - \dotpsm{\nabla \Loss(\vtheta_t)}{\eeta_t \nabla \Loss(\vtheta_t)}
    + \frac{1}{2}\lammax^{(t)} \normtwosm{\eeta_t \nabla \Loss(\vtheta_t)}^2 \\
    &= \Loss(\vtheta_t) - \eeta_t (1 - \eeta_t \lammax^{(t)} / 2) \normtwosm{\nabla \Loss(\vtheta_t)}^2,
\end{align*}
which proves the lemma.
\end{proof}

\subsection{Proof for Theorem \ref{thm:eos-near-opt}: GD Eventually Enters the EoS Regime}

Let $R := C_0 (\hlam \heta)^{1/2}$ be a radius so that
$\vthetaopt$ is a minimizer of $\Loss$ on $U := \Ball{R}{\vtheta} \cap \sphS^{D-1}$
and
$\mu$-PL holds within $U$,
where $C_0$ is a large constant to be specified later.
Let $\lammax := \sup\{ \lamH_1(\vtheta) : \vtheta \in U \}$.
Then we know that $\lammax = \lamH_1(\vthetaopt) + O(\hlam \heta)$.
Let $T_0$ be the largest number so that $\vtheta_t \in U$ for all $t_0 \le t \le T_0$,
We define a potential function $\Psi(\vtheta) := \sqrt{\Loss(\vtheta) - \Loss(\vthetaopt)}$.
\begin{lemma} \label{lm:Psi-decay}
If $\vtheta_t \in U$ and $\eeta_t < \frac{2}{\lammax}$ for some $t_0 \le t < T_0$, then
\begin{align*}
    \Psi(\vtheta_t) - \Psi(\vtheta_{t+1}) 
    &\ge \frac{\sqrt{2\mu}}{2}(1 - \eeta_t \lammax / 2) \eeta_t \normtwosm{\nabla \Loss(\vtheta_t)}.
\end{align*}
\end{lemma}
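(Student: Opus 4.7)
The plan is to follow a standard PL-plus-descent-lemma chain, modified slightly to track the square-root potential $\Psi$. First, I would apply the descent lemma (Lemma \ref{lm:descent}) at step $t$ to obtain
\[
\Loss(\vtheta_t) - \Loss(\vtheta_{t+1}) \ge \eeta_t\!\left(1 - \eeta_t \lambda_{\max}^{(t)} / 2\right) \normtwosm{\nabla \Loss(\vtheta_t)}^2.
\]
Since the interpolation path $\{\vtheta_t - \alpha \nabla \Loss(\vtheta_t) : \alpha \in [0, \eeta_t]\}$ stays within an $O(\heta)$-neighborhood of $\vtheta_t \in U$, one can replace $\lambda_{\max}^{(t)}$ by $\lammax$ at the cost of a lower-order correction absorbed into the constant $C_0$ defining $R$ (one may enlarge $U$ slightly before defining $\lammax$; by $\contC^2$-smoothness and $(-2)$-homogeneity of the Hessian this costs nothing). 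In particular, $\eeta_t < 2/\lammax$ forces the right-hand side to be nonnegative, hence $\Loss(\vtheta_{t+1}) \le \Loss(\vtheta_t)$ and consequently $\Psi(\vtheta_{t+1}) \le \Psi(\vtheta_t)$.

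Next I would convert this loss decrease into a $\Psi$-decrease via the algebraic identity
\[
\Psi(\vtheta_t) - \Psi(\vtheta_{t+1}) = \frac{\Loss(\vtheta_t) - \Loss(\vtheta_{t+1})}{\Psi(\vtheta_t) + \Psi(\vtheta_{t+1})}.
\]
Using $\Psi(\vtheta_{t+1}) \le \Psi(\vtheta_t)$ from the previous step, the denominator is bounded above by $2\Psi(\vtheta_t)$, which gives
\[
\Psi(\vtheta_t) - \Psi(\vtheta_{t+1}) \ge \frac{\eeta_t(1 - \eeta_t \lammax/2)\normtwosm{\nabla \Loss(\vtheta_t)}^2}{2\Psi(\vtheta_t)}.
\]

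Finally, invoking the $\mu$-PL hypothesis on $U$ yields $\Psi(\vtheta_t)^2 \le \tfrac{1}{2\mu}\normtwosm{\nabla \Loss(\vtheta_t)}^2$, i.e.\ $\Psi(\vtheta_t) \le \tfrac{1}{\sqrt{2\mu}}\normtwosm{\nabla \Loss(\vtheta_t)}$. Substituting this into the denominator, one factor of $\normtwosm{\nabla \Loss(\vtheta_t)}$ cancels and we arrive at
\[
\Psi(\vtheta_t) - \Psi(\vtheta_{t+1}) \ge \frac{\sqrt{2\mu}}{2}\,\eeta_t(1 - \eeta_t \lammax / 2)\,\normtwosm{\nabla \Loss(\vtheta_t)},
\]
which is exactly the claimed bound. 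The only real obstacle is the bookkeeping around $\lambda_{\max}^{(t)}$ versus $\lammax$: one must ensure the Hessian bound chosen for $\lammax$ actually controls the Hessian along the short Euclidean segment used in the descent lemma, which is why $U$ (and hence the definition of $\lammax$) should be taken slightly larger than strictly needed for the PL condition. All remaining steps are routine algebra.
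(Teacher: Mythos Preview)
Your proposal is correct and follows essentially the same argument as the paper: descent lemma, the square-root identity $\Psi(\vtheta_t)-\Psi(\vtheta_{t+1})=\frac{\Loss(\vtheta_t)-\Loss(\vtheta_{t+1})}{\Psi(\vtheta_t)+\Psi(\vtheta_{t+1})}$, bounding the denominator by $2\Psi(\vtheta_t)$, and then applying $\mu$-PL to replace $\Psi(\vtheta_t)$ by $\tfrac{1}{\sqrt{2\mu}}\normtwosm{\nabla\Loss(\vtheta_t)}$. The paper's proof in fact simply writes the descent inequality with $\lammax$ in place of $\lambda_{\max}^{(t)}$ without comment, so your explicit bookkeeping on that point is, if anything, more careful than the original.
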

\begin{proof}
By descent lemma,
\begin{align*}
    \Loss(\vtheta_{t+1})
    &\le \Loss(\vtheta_t) - \eeta_t (1 - \eeta_t \lammax / 2) \normtwosm{\nabla \Loss(\vtheta_t)}^2.
\end{align*}
Then
\begin{align*}
    \Psi(\vtheta_t) - \Psi(\vtheta_{t+1}) = \frac{\Loss(\vtheta_t) - \Loss(\vtheta_{t+1})}{\Psi(\vtheta_t) + \Psi(\vtheta_{t+1})}
    &\ge \frac{(1 - \eeta_t \lammax / 2) \eeta_t \normtwosm{\nabla \Loss(\vtheta_t)}^2}{2 \Psi(\vtheta_t)}.
\end{align*}
By $\mu$-PL, $\normtwosm{\nabla \Loss(\vtheta_t)} \ge \sqrt{2\mu} \cdot \Psi(\vtheta_t)$. Combining these together proves the lemma.
\end{proof}

\begin{lemma} \label{lm:anti-pl}
There exists $C_2 = O(1)$ such that $\normtwosm{\nabla \Loss(\vtheta)}^2 \le C_2 (\Loss(\vtheta) - \Loss(\vthetaopt))$ for all $\vtheta \in U$.
\end{lemma}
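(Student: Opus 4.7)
\textbf{Proof plan for \Cref{lm:anti-pl}.}

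The plan is to derive this upper bound on $\normtwosm{\nabla\Loss}^2$ --- a smoothness-type counterpart to the $\mu$-PL condition already assumed --- by applying \Cref{lm:descent} to a single projected gradient step from $\vtheta$ and then using that $\vthetaopt$ is a local minimizer on the sphere. The key preparatory observation is that $\nabla \Loss(\vthetaopt) = \vzero$: by \Cref{lm:scale-invariant-grad-hess}, scale-invariance forces $\nabla \Loss(\vthetaopt) \perp \vthetaopt$, so $\nabla \Loss(\vthetaopt) \in \TGa[\sphS^{D-1}]{\vthetaopt}$; together with $\vthetaopt$ being a local minimizer of $\Loss$ on $\sphS^{D-1}$, the Riemannian (and hence Euclidean) gradient must vanish.

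For any $\vtheta \in U$, I would take one PGD step with effective learning rate $\eta := 1/\rho_2$, namely $\vtheta_+ := \Pi(\vtheta - \eta \nabla \Loss(\vtheta))$. Since $\nabla \Loss(\vtheta) \perp \vtheta$, every point $\vw = \vtheta - \alpha \nabla\Loss(\vtheta)$ on the segment from $\vtheta$ to $\vtheta - \eta\nabla \Loss(\vtheta)$ satisfies $\normtwosm{\vw}^2 = 1 + \alpha^2 \normtwosm{\nabla \Loss(\vtheta)}^2 \ge 1$. Scale-invariance of the Hessian then gives $\lamH_1(\vw) = \normtwosm{\vw}^{-2} \lamH_1(\vw/\normtwosm{\vw}) \le \rho_2$, so $\lambda_{\max}^{(t)} \le \rho_2$ in the notation of \Cref{lm:descent}, and one application of the descent lemma yields
\[
  \Loss(\vtheta_+) \le \Loss(\vtheta) - \tfrac{1}{2\rho_2} \normtwosm{\nabla\Loss(\vtheta)}^2.
\]

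To close the argument I would verify that $\vtheta_+$ stays inside a neighborhood of $\vthetaopt$ on which $\vthetaopt$ is still the minimizer, forcing $\Loss(\vtheta_+) \ge \Loss(\vthetaopt)$. By scale-invariance and smoothness, $\nabla \Loss$ is Lipschitz with some $O(\rho_2)$ constant near $\vthetaopt$; combined with $\nabla \Loss(\vthetaopt) = \vzero$ this gives $\normtwosm{\nabla \Loss(\vtheta)} = O(R)$. Writing $\vy := \vtheta - \eta \nabla \Loss(\vtheta)$, one checks $\normtwosm{\Pi(\vy) - \vy} = \bigl|\normtwosm{\vy} - 1\bigr| = O(R^2)$ and $\normtwosm{\vy - \vthetaopt} \le \normtwosm{\vtheta - \vthetaopt} + \eta \normtwosm{\nabla \Loss(\vtheta)} = O(R)$, so $\normtwosm{\vtheta_+ - \vthetaopt} = O(R)$. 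Because $\vthetaopt$ is a local minimizer on some fixed positive-radius neighborhood of the sphere and $R = C_0(\hlam\heta)^{1/2}$ is small in the regime of \Cref{thm:eos-near-opt}, $\vtheta_+$ lies in that neighborhood, giving $\Loss(\vtheta_+) \ge \Loss(\vthetaopt)$; combining with the descent inequality yields the lemma with $C_2 := 2\rho_2 = O(1)$. The main (mild) obstacle is precisely this last verification that the post-projection point $\vtheta_+$ still lies in the local-minimum region of $\vthetaopt$ --- it is the only place where smallness of $\hlam\heta$ enters, and it effectively requires $\hlam\heta$ to be below a threshold determined by the radius of the local-minimality neighborhood.
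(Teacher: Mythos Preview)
Your proposal is correct and takes a genuinely different route from the paper. The paper proves the lemma by considering the ratio $G(\vtheta) := \normtwosm{\nabla\Loss(\vtheta)}^2/(\Loss(\vtheta)-\Loss(\vthetaopt))$, observing that it is continuous on $U$ away from the set of minimizers, and then bounding $G$ near each minimizer via separate Taylor expansions of $\nabla\Loss$ and $\Loss$; a compactness argument on $\cl(U)$ then yields a uniform $O(1)$ bound. Your argument instead takes a single ``virtual'' PGD step with learning rate $1/\rho_2$, applies the descent inequality of \Cref{lm:descent} (whose proof is a generic Taylor expansion valid for any such step, not just for the \GDWD iterates), and closes by lower-bounding the post-step loss via local minimality of $\vthetaopt$. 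Your approach has the merit of producing an explicit constant $C_2 = 2\rho_2$ and of reusing machinery already set up in the paper; the paper's compactness-plus-Taylor argument, by contrast, does not need the smallness of $\hlam\heta$ that you invoke to keep $\vtheta_+$ inside the local-minimum neighborhood of $\vthetaopt$---though in the context of \Cref{thm:eos-near-opt} that smallness is assumed anyway, so this is not a real limitation of your route.
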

\begin{proof}
It is equivalent to give an upper bound for $\sup_{\vtheta \in U} \{ G(\vtheta) \}$, where $G(\vtheta) := \frac{\normtwosm{\nabla \Loss(\vtheta)}^2}{\Loss(\vtheta) - \Loss(\vthetaopt)}$.
Since $\Loss \in \contC^2$, $G$ is continuous in its domain.
So it suffices to upper bound $G(\vtheta)$ around every singular point, i.e., around every minimizer of $\Loss$ on $U$.
And for every minimizer $\vtheta' \in U$, we can do Taylor expansions for $\Loss$ and $\nabla \Loss$
to show that $G(\vtheta)$ is indeed bounded by $O(1)$ around $\vtheta'$.
\end{proof}

\begin{proof}[Proof for \Cref{thm:eos-near-opt}]
Recall that $\eeta_{t_0} \le \frac{2}{\rho_2} < \frac{2}{\lamH_1(\vthetaopt)}$.
Let $\delta := 2 - \eeta_{t_0} \lammax \in (0, 2)$
and $T_1$ be the largest number so that $\eeta_t \le \frac{2 - \delta / 4}{\lammax}$ for all $t_0 \le t \le T_1$.
By \Cref{lm:Psi-decay}, for all $t_0 \le t < \min\{T_0, T_1\}$,
\begin{align*}
    \Psi(\vtheta_t) - \Psi(\vtheta_{t+1}) 
    &\ge \frac{\sqrt{2\mu}}{16} \delta \eeta_t \normtwosm{\nabla \Loss(\vtheta_t)}.
\end{align*}
Telescoping the sum we have $\frac{\sqrt{2\mu}}{16} \delta \sum_{\tau={t_0}}^{t-1}\eeta_{\tau} \normtwosm{\nabla \Loss(\vtheta_\tau)} \le \Psi(\vtheta_0)$.
By smoothness of $\Loss$, $\Psi(\vtheta_{t_0}) = O(\normtwosm{\vtheta_{t_0} - \vthetaopt}) = O((\hlam \heta)^{1/2})$.
So for all $t_0 \le t \le \min\{T_0, T_1\}$,
\[
    \normtwosm{\vtheta_t -\vtheta_{t_0}} \le 
    \sum_{\tau={t_0}}^{t-1}\eeta_{\tau} \normtwosm{\nabla \Loss(\vtheta_{\tau})} = O((\hlam\heta)^{1/2}),
\]
which implies that $T_0 > T_1$ or $T_0 = T_1 = +\infty$ if we choose $C_0$ to be large enough.

By \Cref{thm:connect-siwd-rmsprop},
$\eeta_t$ can be seen as the output of a GWSI scheduler with $\beta = (1-\hlam\heta)^4 = 1 - \Theta(\hlam\heta)$ and $\eta = \sqrt{(\beta^{-1}-1)/2} = O((\hlam\heta)^{1/2})$.
Then by the update rule,
\begin{align} \label{eq:eeta-aaa}
    \eeta_{t+1}^{-2} = \beta \eeta_t^{-2} + (1-\beta) \barg_t^2 + \frac{\eeta_t^2}{4\beta} (1-\beta)^2 \barg_t^4,
\qquad \text{where} \qquad \barg_t := \normtwosm{\nabla \Loss(\vtheta_t)} / \eta.
\end{align}
For $\vtheta \in U$,
we have $\normtwosm{\nabla \Loss(\vtheta)}^2 \le C_2(\Loss(\vtheta) - \Loss(\vthetaopt))$
by
\Cref{lm:anti-pl}.
So for all $t_0 \le t \le T_0$, we have
\[
    \barg_t = \frac{1}{\eta} \normtwosm{\nabla \Loss(\vtheta_t)}
    \le \frac{C_2}{\eta} (\Loss(\vtheta_t) - \Loss(\vthetaopt))
    \le \frac{C_2}{\eta} (\Loss(\vtheta_0) - \Loss(\vthetaopt))
    =O(1).
\]
Then 
$\eeta_{t+1}^{-2} = \beta \eeta_t^{-2} + (1-\beta) \cdot O(1)$,
which implies $\eeta_t \ge \Omega(1)$ for some $t = t_0 + O( \frac{1}{1 - \beta} \log(\eeta_0^{-2}))$.
Therefore, 
we can infer that
it must hold for some steps $t$
that
$\eeta_t \in \left[ c_{\min}, \frac{2 - \delta / 4}{\lammax} \right]$,
where $c_{\min}$ is some constant.

As $\barg_t \ge 0$,
the update rule \eqref{eq:eeta-aaa} also implies
$\eeta_{t+1}^{-2} \ge \beta \eeta_t^{-2}$,
or equivalently
$\eeta_{t+1} \le (1-\heta\hlam)^{-2} \eeta_t$.
This suggests that the number of steps such that
$\eeta_t \in \left[ \frac{2 - c}{\lammax}, \frac{2 - \delta / 4}{\lammax} \right]$,
is at least $\Omega(\sfrac{1}{(\heta \hlam)})$.
When $\eeta_t$ does lie in this range, by \Cref{lm:descent} we have
\begin{align*}
    \Loss(\vtheta_{t+1}) \le \Loss(\vtheta_t) - \frac{1}{8} \delta \eeta_t \normtwosm{\nabla \Loss(\vtheta_t)}^2
\end{align*}
Combining with $\mu$-PL gives
\begin{align*}
    \Loss(\vtheta_{t+1}) - \Loss(\vthetaopt)
    &\le (1 - \mu \delta \eeta_t / 4) \cdot (\Loss(\vtheta_t) - \Loss(\vthetaopt)) \\
    &\le \left(1 - \mu \delta c_{\min} / 4\right) \cdot (\Loss(\vtheta_t) - \Loss(\vthetaopt)).
\end{align*}
Thus the loss decays by a constant factor in every step (the factor is in $(0, 1)$ as we can choose $c_{\min}$ as small as we want).
As this process lasts for at least $\Omega(\sfrac{1}{(\heta \hlam)})$ steps,
the loss first decreases to $\Loss(\vthetaopt) + O((\heta \hlam)^{10})$ 
after $O(\log \frac{1}{\heta \hlam})$ steps,
then it stays small until $T_1$.

Now we show that $T_1$ is finite.
By \Cref{lm:anti-pl}
and \eqref{eq:eeta-aaa},
when the loss is $\Loss(\vthetaopt) + O((\heta \hlam)^{10})$
the effective LR steadily grows as $\eeta_{t+1} = (1-\heta\hlam)^{-2} \eeta_t + o(1)$.
So at some step $t$, it must hold that $\eeta_t > \frac{2-\delta/4}{\lammax}$, which proves $T_1 < +\infty$.

Let $C_1$ be a large constant,
and $T_2$ be the largest number so that $\eeta_t < \frac{2 - 2 C_1 (\heta \hlam)^{1/2}}{\lammax}$ for all $t_0 \le t \le T_2$.
By \Cref{lm:Psi-decay}, for all $T_1 \le t < \min\{T_2, T_0\}$,
\[
    \Psi(\vtheta_t) - \Psi(\vtheta_{t+1}) \ge \frac{\sqrt{2\mu}}{2} C_1 (\heta \hlam)^{1/2} \eeta_t \normtwosm{\nabla \Loss(\vtheta_t)}.
\]
Telescoping the sum gives $
(\heta \hlam)^{1/2} \sum_{\tau = T_1}^{t-1} \eeta_{\tau} \normtwosm{\nabla \Loss(\vtheta_{\tau})} \le O(\Psi(\vtheta_{T_1})) \le O((\heta \hlam)^5)$,
where the last inequality
is due to $\Loss(\vtheta_{T_1}) = \Loss(\vthetaopt) + O((\heta \hlam)^{10})$.
This shows that $\normtwosm{\vtheta_t - \vtheta_{T_1}} = O((\heta\hlam)^{4.5})$,
and thus $\normtwosm{\vtheta_t - \vthetaopt} = O((\heta\hlam)^{1/2})$ by triangle inequality.
Now we have $T_0 > T_2$ or $T_0 = T_2 = +\infty$ when $C_0$ is chosen to be large enough.
We can finish the proof with a similar argument as for $T_1$ to show that $T_2$ cannot be infinite either.
\end{proof}

\subsection{Connection to the EoS Regime in Cohen et al.'s Definition} \label{sec:eos-cohen}

Now we elaborate how our definition of EoS $\eeta_t \approx 2/\lammax^{(t)}$ is related to the original
definition of EoS in \citet{cohen2021gradient}.
In their work, they studied the dynamics of GD (without weight decay).
When the loss is $\tildeLoss$,
the update rule is given by $\vw_{t+1} \gets \vw_t - \heta \nabla \tildeLoss(\vw_t)$.
They define the EoS regime as a regime in which (1) $\lambda_1(\nabla^2\tildeLoss(\vw_t))$ hovers right at,
or just above $2 / \heta$; and (2) the training loss $\tildeLoss(\vw_t)$ goes up and down over short timescales,
yet still decreases in the long-term run.

\myparagraph{View I: Rewriting as GD.}
We can write \GDWD on scale-invariant loss as GD on scale-invariant loss with $\normltwo$-regularization,
i.e., GD on 
$\tildeLoss(\vw) := \Loss(\vw) + \frac{\hlam}{2} \normtwosm{\vw}^2$.
Now we show that $\eeta_t \approx \frac{2}{\lammax^{(t)}}$ is essentially the same as
$\heta \approx \frac{2}{\lambda_1\left(\nabla^2 \tildeLoss(\vw_t)\right)}$.

It suffices to show
$\eeta_t \cdot \lammax^{(t)} \approx \heta \cdot \lambda_1(\nabla^2 \tildeLoss(\vw_t))$.
When the gradient is small, $\vtheta_t$ does not move far in one step, then $\lammax^{(t)} \approx \lambda_1(\nabla^2 \Loss(\vtheta_t))$.
By scale-invariance, $\lambda_1(\nabla^2 \Loss(\vtheta_t)) = \normtwosm{\vw_t}^2 \cdot \lambda_1(\nabla^2 \Loss(\vw_t))$ (\Cref{lm:scale-invariant-grad-hess}).
Recall that $\eeta_t := \frac{\heta}{(1 - \heta\hlam) \normtwosm{\vw_t}^2}$. Then we have
\[
    \eeta_t \cdot \lammax^{(t)} \approx
    \tfrac{\heta}{(1 - \heta\hlam) \normtwosm{\vw_t}^2} \cdot \normtwosm{\vw_t}^2 \cdot \lambda_1(\nabla^2 \Loss(\vw_t))
    \approx
    \tfrac{\heta}{(1 - \heta\hlam)} \cdot \lambda_1(\nabla^2 \Loss(\vw_t))
    \approx
    \heta \cdot \lambda_1(\nabla^2 \Loss(\vw_t)).
\]
Note that $\heta \cdot \lambda_1(\nabla^2\tildeLoss(\vw_t)) = \heta \cdot \lambda_1(\nabla^2 \Loss(\vw_t)) + \heta\hlam$.
When $\heta\hlam$ is small, we can then conclude that
$\eeta_t \cdot \lammax^{(t)} \approx \heta \cdot \lambda_1(\nabla^2 \tildeLoss(\vw_t))$.

Now we show below that our main theorem on sharpness-reduction bias
implies the second condition in \citet{cohen2021gradient}'s definition, that is, $\tildeLoss$ decreases in the long-term run.

For the regularizer $\frac{\hlam}{2}\normtwosm{\vw_t}^2$,
note that $\normtwosm{\vw_t}^2 \approx \heta/\eeta_t \approx
\frac{1}{2}\heta \lambda_1(\nabla^2\Loss(\vtheta_t))$ in the EoS regime. So 
$\normtwosm{\vw_t}^2$ as well as the
regularizer is decreasing due to the sharpness-reduction bias (\Cref{thm:gdwd-main}).

The scale-invariant part
$\Loss(\vtheta_t)$ is not always decreasing, but now we show that its time average can be upper
bounded by the time average of norm squared.
By \Cref{lm:connection-prelim} and \Cref{lm:scale-invariant-grad-hess}, we have
\[
\normtwosm{\vw_{t+1}}^2
- \normtwosm{\vw_{t}}^2
= (2-\heta\hlam)\heta\hlam\normtwosm{\vw_{t}}^2 + \heta^2 \normtwosm{\nabla \Loss(\vw_t)}^2.
\]
Since
$\normtwosm{\vw_t}^2$ decreases in the long run, we know that for any long
enough time window $T_0$ to $T_1-1$, $\sum_{t=T_0}^{T_1-1}\normtwosm{\nabla
\Loss(\vw_t)}^2 \lesssim \sum_{t=T_0}^{T_1-1}
\frac{2\hlam}{\heta}\normtwosm{\vw_t}^2$.
Further due to the alignment between
the gradient and the top eigenvalue of the Hessian in the EoS regime, we have
$\Loss(\vw_t)\approx \frac{\normtwosm{\nabla
\Loss(\vw_t)}^2}{2\lambda_1(\nabla^2\Loss(\vw_t))} \approx \frac{\heta}{4}\normtwosm{\nabla
\Loss(\vw_t)}^2$.
Therefore, we conclude that the
average loss over a long enough time window is always upper bounded by the
average of squared weight norm, that is,
\begin{align*}
\frac{1}{T_1 - T_0}\sum_{t=T_0}^{T_1-1}\Loss(\vw_t)
\lesssim \frac{1}{T_1 - T_0}\sum_{t=T_0}^{T_1-1} 
\frac{\heta}{4} \cdot \frac{2\hlam}{\heta}\normtwosm{\vw_t}^2
\approx
\frac{1}{T_1 - T_0}
\sum_{t=T_0}^{T_1-1}
\frac{\hlam}{2}\normtwosm{\vw_t}^2,
\end{align*}
where the last step uses the fact that GD
operates in EoS.
So $\Loss(\vw_t)$ decreases in the long-term run.

Combining the above two parts, we can conclude that 
the regularized loss $\tildeLoss(\vw_t)$ has a tendency to decrease in the long-term run.

\myparagraph{View II: Generalizing EoS to PGD.}
For a gradient-based method in general, $2 / \heta$ should be replaced to the
maximum sharpness bound that the loss function is guaranteed to decrease through
Taylor expansions, e.g., \citet{cohen2021gradient} derived the bounds exactly
for Polyak and Nesterov momentum in Appendix B of their paper. In our
definition, we view \GDWD on scale-invariant loss as PGD on $\sphS^{D-1}$, and
thus we define the EoS regime for PGD as the regime where $\eeta_t \approx 2 /
\lammax^{(t)}$, where $\lammax^{(t)}$ is the local upper bound of spherical
sharpness (\Cref{lm:descent}). This captures the first condition of
\citet{cohen2021gradient}'s definition. Repeating our argument in View I, we can
show the second condition, namely the condition that the loss $\Loss(\vtheta_t)$
decreases in the long-term run.

\myparagraph{Progressive Sharpening.}
All the above discussion is about the EoS phenomenon. Another phenomenon
identified by \citet{cohen2021gradient} is progressive sharpening, which is the
phenomenon that $\lambda_1(\nabla^2 \tildeLoss(\vw_t))$ tends to increase so
long as it is less than $2 / \heta$.
\Cref{thm:eos-near-opt} in our paper justifies this phenomenon in View II,
i.e., if $\eeta_t$ is less than $\lamH_1(\vthetaopt)$,
then $\eeta_t$ increases until it reaches $\frac{2}{\lamH_1(\vthetaopt)}$.
The key insight in our analysis is that WD decreases the norm
when gradient is small,
and smaller norm leads larger $\lambda_1(\nabla^2 \tildeLoss(\vw_t))$.
This shows that the progressive sharpening phenomenon in our case
can be well explained by the interplay between normalization and WD.

\newpage

\section{Proof Outlines of Our Theorems on Sharpness Reduction} \label{sec:proof-outline-main}

In this section,
we give proof outlines of \Cref{thm:main-ful,thm:main-sph}.
The main proof idea
for the spherical case
is stated in \Cref{sec:proof-idea},
but technically it is easier
to state and prove the lemmas for full space optimization.
Therefore, we present the full details for the full space case
and omit certain details for the spherical case if they are similar
to the full space case.

As mentioned in \Cref{sec:proof-idea},
a key ingredient in our proof is to show that
the period-2 oscillation drives the parameter to move along the manifold.
For both full space and spherical cases,
we project $\vtheta_t$
onto the manifold $\manGa$
with a carefully-defined projection function, $\vphi_t := \Phi(\vtheta_t)$.
Then we show that 
after a period of oscillation ($2$ steps),
the projection drifts from $\vphi_t$ to a new position $\vphi_{t+2}$
approximately along the direction of $\gradGa \log \lamH_1(\vphi_t)$.
To analyze the speed of each drift,
we show that
the oscillation can be tracked with
two variables $h \in \R$, $u \in \R$,
where
$h$ is related to the displacement of $\vtheta_t$ from the manifold,
$u$ is related to the closeness of the current dynamic to the edge of stability.
We formally define 
a discrete process called \textit{RMS-drift process}
and show that 
the oscillations in both full space and spherical cases
can be regarded as RMS-drift processes.

The rest of the section is organized as follows.
In \Cref{sec:outline-notation} we introduce some additional notations.
In \Cref{sec:outline-rmsdrift-def} we formally introduce the concept
of RMS-drift process.
In \Cref{sec:outline-ful} we show how to
reduce our analysis in the full space case
into studying an RMS-drift process.
In \Cref{sec:outline-sph} we show how to
reduce our analysis in the spherical case
into studying an RMS-drift process.
Finally, in \Cref{sec:outline-rmsdrift-analysis}
we analyze the RMS-drift process
and show that the projections of parameters in training can be tracked with
the sharpness-reduction flow defined in \eqref{eq:zeta-general}.

\subsection{Additional Notations} \label{sec:outline-notation}

We follow the notations in \Cref{sec:add-prelim}.
We also need some additional notations in this section.
It is implied by
the uniqueness of the top eigenvalue
(\Cref{ass:topeigen-ful} or \Cref{ass:topeigen})
and $\Loss \in \contC^4$
that $\lamH_1(\vtheta)$ is $\contC^2$-smooth on $\manGa$,
and we can construct a $\contC^2$-smooth function $\vvH_1(\vtheta)$
such that
$\vvH_1(\vtheta)$ is a unit top eigenvector of $\mH(\vtheta)$ on $\manGa$.
Let $\mPHz(\vtheta)$ be the projection matrix onto the null space of $\mH_t$,
and $\mPHnzt(\vtheta)$ be the projection matrix onto the space spanned by the eigenvectors of $\mH_t$
corresponding to non-zero and non-top eigenvalues ($\lambda \ne 0, \lamH_1(\vphi_t)$).
For any local minimizer 
$\vtheta \in \manGa$,
we define $\gamma(\vtheta) := \frac{1}{\lamH_1(\vtheta)} \min\left\{\lamH_1(\vtheta) -
\lamH_2(\vtheta), \lamH_{D-\DGa}(\vtheta)\right\} $ to be the
relative eigenvalue gap (between $\lamH_1$ and $\lamH_2$, or between $\lamH_{D- \DGa}$ and $0$).
Then $\mPHz(\vtheta), \mPHnzt(\vtheta), \gamma(\vtheta)$ are all $\contC^2$-smooth on $\manGa$.
Finally, we define $\mu(\vtheta) := \frac{2}{\lamH_1(\vtheta)}$.

We define $\Phi(\vtheta)$ to be the convergence point of a gradient flow of $\Loss$ starting from $\vtheta$.
That is, the limit of $\tilde{\vtheta}(t)$ as $t \to +\infty$ when
$\tilde{\vtheta}(t)$ is described by the following ODE:
\[
  \frac{\dd \tilde{\vtheta}}{\dd t} = -\nabla \Loss(\tilde{\vtheta}), \quad \text{where} \quad \tilde{\vtheta}(0) = \vtheta.
\]
We leave $\Phi(\vtheta)$ undefined if the ODE does not converge to any point.

For GD/PGD with quasi-RMSprop scheduler (\Cref{def:qrms-sche}),
the state at step $t$ can be written as $(\vtheta_t, \tilv_t)$,
where $\vtheta_t$ is the trainable parameter in GD/PGD and $\tilv_t$ is the moment estimate.
Whenever $\Phi(\vtheta_t)$ exists and is in $\manGa$,
we define
\begin{align*}
  \vphi_t &:= \Phi(\vtheta_t) & \mH_t &:= \mH(\vphi_t) \\
  \mu_t &:= \mu(\vphi_t) & \mU_t &:= \mI - \mu_t \mH_t \\
  \vx_t &:= \vtheta_t - \vphi_t
\end{align*}
Most importantly, we define two hidden variables $(h_t, u_t)$ as follows:
\begin{align}
  h_t &:= \tfrac{1}{\eta}\dotpsm{\vvH_1(\vphi_t)}{\vx_t}, &
  u_t &:= \tfrac{1}{\eta}( \mu_t^2 \tilv_t- 1).
  \label{eq:hu-def}
\end{align}
Note that the $h_t$ defined here {\em differs} with that in \Cref{sec:proof-idea} by a factor of $1/\eta$.
We introduce this factor for the sake of convenience,
and we will use the definition with the factor $1/\eta$
only in our theoretical analysis.

\subsection{RMS-drift Process: Introduction} \label{sec:outline-rmsdrift-def}

\begin{definition}
  A \textit{drift state} is described by a tuple $S = (h, u, \vphi)$ in the drift state space $\Sta := \R \times \R \times \manGa$.
  We say that $S$ is \textit{$\alpha$-bounded} if $\max\{\abssm{h}, \abssm{u}\} \le \alpha$.
\end{definition}
\begin{definition}
  Given two drift states $S_t = (h_t, u_t, \vphi_t)$, $S_{t+2} = (h_{t+2}, u_{t+2},
  \vphi_{t+2})$ in the drift state space $\Sta$,
  for learning rate $\eta > 0$ and hyperparameter $\Cb > 0$,
  we say that the transition $S_t \to S_{t+2}$
  is a \textit{$C_0$-RMS-drift transition}
  if for all $\alpha \ge 1$,
  as long as $S_t$ is $\alpha$-bounded,
  $S_{t+2}$ is close to an auxiliary state $S'_{t+2} := (h'_{t+2}, u'_{t+2}, \vphi'_{t+2})$
  in the following sense:
  \begin{align*}
    h'_{t+2} &:= (1 - 2\eta u_t) h_t,  &
    \abssm{h_{t+2} - h'_{t+2}} &\le C_0 \alpha^2 \abssm{h_t} \eta^2, \\
    u'_{t+2} &:= u_t + 4 \eta h_t^2 (2 \Cb + \normtwosm{\gradGa \log \lamH_1(\vphi_t)}^2) - 2\eta \Cb, &
    \abssm{u_{t+2} - u'_{t+2}} &\le C_0 \alpha (1 + h_t^2) \eta^2,  \\
    \vphi'_{t+2} &:= \vphi_t - 2 \eta^2 h_t^2 \gradGa \log \lamH_1(\vphi_t), &
    \normtwosm{\vphi_{t+2} - \vphi'_{t+2}} &\le C_0 \alpha h_t^2 \eta^3.
  \end{align*}
  For a sequence of states $S_0, S_2, S_4, \dots, S_{2M}$, we say it is a $C_0$-\textit{RMS-drift process} if $S_t \to S_{t+2}$ is a
  $C_0$-RMS-drift transition for all even
  numbers $0 \le t < 2M$.
\end{definition}

In our analysis of GD/PGD with quasi-RMSprop scheduler,
we can rewrite the dynamics as RMS-drift processes (after a few warm-up steps),
where 
$\vphi_t := \Phi(\vtheta_t)$ is the gradient flow projection
of the parameter at step $t$ onto $\manGa$,
and $h_t,u_t$ are two hidden variables defined in \eqref{eq:hu-def}.
This RMS-drift process
serves as an abstraction of the original dynamics
that contains the minimal but sufficient amount of information
so that we can compute the continuous approximation for the trajectory of $\vphi_t$.

In RMS-drift process, $\vphi_t$ evolves as gradient descent on $\manGa$ for minimizing $\log \lamH_1(\vtheta)$,
and the corresponding learning rate is changing with $h_t$ per step.
To obtain the final flow approximation \eqref{eq:zeta-general},
we need to sum up the learning rates over time.

An intuitive way to understand RMS-drift process is to use the following first-order continuous approximation with time scaling $(h(\tau),u(\tau),\vphi(\tau)) \approx (h_{\tau/\eta}, u_{\tau/\eta}, \vphi_{\tau/\eta})$ and ignore all the second order terms $O(\eta^2)$:
\begin{align*}
  \frac{\dd h}{\dd \tau} &= - u h, & \frac{\dd u}{\dd \tau} &= 2h^2(2\Cb + \normtwosm{\gradGa \log \lamH_1(\vphi)}^2) - \Cb,
  & \frac{\dd \vphi}{\dd \tau} &= \vzero.
\end{align*}
This approximation gives an important insight:
$h_t$ and $u_t$ are changing much faster than $\vphi_t$
when $\eta$ is small.
Therefore, we can analyze this first-order approximation
to obtain an average value of $2\eta^2h_t^2$,
and use this average value as the ``effective'' learning rate
in the flow approximation.

\begin{figure}[htbp]
  \centering
  \includegraphics[width=\textwidth]{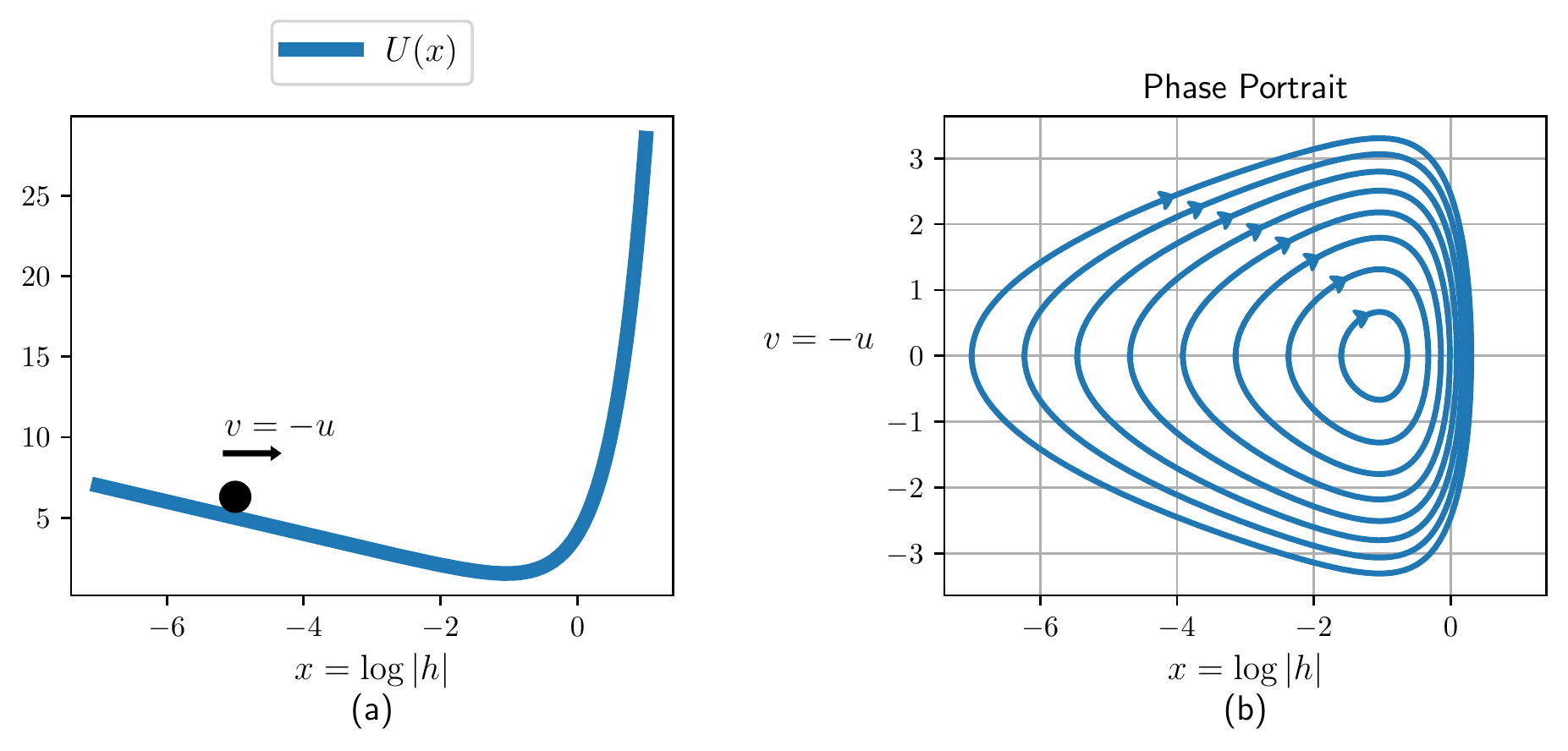}
  \vspace{-0.2in}
  \caption{
    A visualization of the dynamical system described by \eqref{eq:qrms-cont} with $K=2, \Cb=1$.
    This system can be associated with a physical system in which a unit-mass particle
    moves in a potential well $U(x) = K^2 e^{2x} - \Cb x$ without any energy loss; see (a).
    $\log \abssm{h}$ can be seen as the position of the particle,
    and $-u$ can be seen as the velocity.
    This system must be periodic because of the conservation of energy,
    and it can be seen clearly from its phase portrait (b).
  }
  \label{fig:qrms-vis}
\end{figure}

In fact, $(h, u)$ forms a $1$-dimensional Hamiltonian system after a coordinate transformation
and evolves periodically in the above ODE.
To see this, 
we can compute the time derivatives of $(\log \abssm{h}, -u)$ while letting $K := \sqrt{2\Cb + \normtwosm{\gradGa \log \lamH_1(\vphi)}^2}$:
\begin{align} \label{eq:qrms-cont}
  \frac{\dd \log \abssm{h}}{\dd \tau} &= -u, & \frac{\dd (-u)}{\dd \tau} &= -\left(2K^2 h^2 - \Cb\right).
\end{align}

Consider a unit-mass particle in the system with position $x(t)$, velocity $v(t) = x'(t)$ and acceleration $a(t) = v'(t)$.
Suppose that this system has the potential energy
$U(x) := K^2e^{2x} - \Cb x$.
Then we can see that the position and velocity of this particle evolve exactly the same as $(\log \abssm{h}, -u)$!
This also shows that $(\log \abssm{h}, -u)$
evolves periodically because $1$-dimensional Hamiltonian system with unimodal potential energy must be periodic.
See also \Cref{fig:qrms-vis}.

In \Cref{sec:outline-rmsdrift-analysis} below, we will use this observation
to obtain the flow approximation nicely.
But now 
we first outline how to reduce the original dynamics to an RMS-drift process.

\subsection{Reduction to RMS-drift Process: The Case of Full Space Optimization} \label{sec:outline-ful}

Now we outline how to reduce the dynamics to an RMS-drift process
in the setting of \Cref{thm:main-ful},
the main theorem for the case of full-space optimization.

\subsubsection{Construction of Working Zones}

Let $\gZ := \{ \vzeta(t) : t \in [0, T] \} \subseteq \manGa$
be the set of points passed by the sharpness-reduction flow~\eqref{eq:zeta-general}.
Inspired by \citet{arora2022understanding}, we construct a two-level nested working zone
$(\Zei, \Zeo)$,
where $\Zei, \Zeo$ are essentially the $\epsilon_0$- and $\epsilon_1$-neighborhoods of $\gZ$
with $\epsilon_0, \epsilon_1$ carefully chosen.
In our later analysis, we will ensure that $\vtheta_t$ is always in $\Zei$,
and its gradient flow projection $\vphi_t$ is always well-defined and lies in $\Zeog$.

The following lemma shows some important properties of the working zone in our construction.
We defer the proof to \Cref{sec:working-zone-proof-construction}.
\begin{lemma}[Working Zone Lemma] \label{lm:ful-working-zone}
  There exist $0 < \epsilon_0 < \epsilon_1$ such that $\Zei, \Zeo$ satisfy the following:
  \begin{enumerate}
    \item $\cl(\Zeo) \cap \manGa$ is compact;
    \item $\Loss$ satisfies $\muPL$-PL on $\Zeo$ for some $\muPL > 0$;
    \item $\Phi$ is well-defined on $\Zei$, and $\Phi(\vtheta) \in \Zeog$ for all $\vtheta \in \Zei$;
    \item $\Phi$ is $\contC^3$-smooth on $\Zei$;
    \item $\gamma(\vtheta) \ge \gamin$ holds uniformly on $\Zeog$ for some $\gamin > 0$.
  \end{enumerate}
\end{lemma}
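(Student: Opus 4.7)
The plan is to first choose $\epsilon_1 > 0$ to secure properties (1), (2), and (5), and then pick $\epsilon_0 \in (0, \epsilon_1/2)$ small enough to guarantee (3) and (4). The critical fact I would exploit throughout is that $\gZ = \{\vzeta(\tau) : \tau \in [0, T]\}$ is a compact subset of $\manGa$ (as the continuous image of $[0, T]$ under the sharpness-reduction flow), so all continuity-based estimates can be made uniform across $\gZ$.

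For (1) and (2) I would simply apply Theorem~\ref{thm:rank-to-pl-ful} to $\gZ$, which directly produces $\epsilon_1 > 0$ and $\muPL > 0$ making $\cl(\gZ^{\epsilon_1}) \cap \manGa$ compact and $\Loss$ satisfy $\muPL$-PL on $\gZ^{\epsilon_1}$. For (5), I would argue that Assumption~\ref{ass:man-ful} forces $\lamH_{D-\DGa}(\vphi) > 0$ on $\manGa$ (the Hessian is positive semidefinite with full rank $D - \DGa$ in the normal directions), while Assumption~\ref{ass:topeigen-ful} forces $\lamH_1(\vphi) > \lamH_2(\vphi)$. Uniqueness of the top eigenvalue together with $\Loss \in \contC^4$ implies that $\lamH_1$, $\lamH_2$, and $\lamH_{D-\DGa}$ are all continuous on $\manGa$, so $\gamma(\vphi)$ is continuous and strictly positive on the compact set $\cl(\Zeog)$, yielding a uniform $\gamin > 0$. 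After shrinking $\epsilon_1$ once more if necessary, (1), (2), and (5) hold simultaneously.

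For (3), I would invoke Theorem~\ref{thm:pl-converge-near-ful} pointwise at each $\vzeta_0 \in \gZ$, using the PL constant from (2): this gives a neighborhood of $\vzeta_0$ on which the gradient flow converges to a limit, together with the quantitative bound $\normtwosm{\Phi(\vtheta) - \vzeta_0} = O(\normtwosm{\vtheta - \vzeta_0})$, which keeps $\Phi(\vtheta)$ inside $\Zeog$ for $\vtheta$ sufficiently close to $\vzeta_0$. A finite subcover of $\gZ$ then produces a uniform $\epsilon_0 > 0$ with the required property. The hardest step will be (4), the $\contC^3$-smoothness of $\Phi$: pointwise existence of the gradient flow limit does not by itself give smooth dependence on the initial condition, and the degeneracy of $\mH$ in the tangent directions rules out a direct application of the single-fixed-point stable manifold theorem. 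My plan is to exploit the fact that Assumption~\ref{ass:man-ful} together with property (5) makes $\manGa$ a \emph{normally hyperbolic} manifold of equilibria for $\dot{\vtheta} = -\nabla \Loss(\vtheta)$: at each $\vphi \in \manGa$, the linearization $-\mH(\vphi)$ has zero eigenvalues precisely along $\TGa{\vphi}$ and uniformly negative eigenvalues (bounded above by $-\gamin \lamH_1(\vphi)$) along $\NGa{\vphi}$. Classical normally hyperbolic invariant manifold theory (in the spirit of Fenichel or Hirsch--Pugh--Shub) then gives a $\contC^3$-smooth stable foliation of a neighborhood of $\gZ$ whose leaves are the stable manifolds of points of $\manGa$, and $\Phi$ is identified with the $\contC^3$ projection onto the base. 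A more self-contained alternative is to parametrize $\manGa$ smoothly near each $\vzeta_0 \in \gZ$ and characterize $\vphi = \Phi(\vtheta)$ locally as the unique root of a $\contC^3$ equation encoding the stable-foliation condition; nondegeneracy of $\mH$ on $\NGa{\vphi}$ makes the corresponding partial derivative invertible, so the implicit function theorem delivers local $\contC^3$-smoothness, and patching across a finite cover of $\gZ$, while shrinking $\epsilon_0$ once more if needed, completes the proof.
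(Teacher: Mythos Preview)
Your proposal is correct and follows essentially the same route as the paper: you invoke Theorem~\ref{thm:rank-to-pl-ful} for (1)--(2), Theorem~\ref{thm:pl-converge-near-ful} plus a compactness argument for (3), and continuity of $\gamma$ on the compact set $\cl(\Zeog)$ for (5), exactly as the paper does. The only substantive difference is in (4): the paper simply cites a result of Falconer (1983) on the differentiability of the limit mapping in a dynamical system to obtain $\contC^3$-smoothness of $\Phi$, whereas you appeal to normally hyperbolic invariant manifold theory (Fenichel / Hirsch--Pugh--Shub) or an implicit function theorem construction of the stable foliation. Both routes are valid; Falconer's result is essentially a direct black-box answer tailored to this gradient-flow situation, while your approach is more structural and perhaps more transparent about \emph{why} the projection is smooth (nondegeneracy of $\mH$ on $\NGa{\vphi}$ gives the normal hyperbolicity that drives the foliation regularity). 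Your implicit function alternative would need a bit of care in writing down the defining equation for the leaf through $\vtheta$, but the nondegeneracy you identify is indeed the right invertibility hypothesis.
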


It can be seen from \Cref{lm:ful-working-zone} that 
$\vphi_t, \mH_t, \mu_t, \mU_t, \vx_t, h_t, u_t$
are all well-defined
as long as $\vtheta_t \in \Zei$.
Now we define some useful notions in the EoS regime.
A state is $\alpha$-bounded
if $\vtheta_t$ is close to $\manGa$
and the dynamic is in the EoS regime due to $\tilv_t \approx \mu_t^{-2}$.
A state is $\alpha$-deviated if $\vtheta_t$
is not too close to $\manGa$.
\begin{definition}[$\alpha$-Bounded State]
  We say that the state $(\vtheta_t, \tilv_t)$ at some step $t$
  is $\alpha$-bounded if
  $\vtheta_t \in \Zei$,
  $\normtwosm{\vx_t} \le \alpha \eta$ and $\abssm{u_t} \le \alpha$.
\end{definition}
\begin{definition}[$\alpha$-Deviated State]
  We say that the state $(\vtheta_t, \tilv_t)$ at some step $t$
  is $\alpha$-deviated (from the manifold $\manGa$) if
  $\normtwosm{\vx_t} \ge \eta\exp(-\alpha^2)$ or
  $\vtheta_t \notin \Zei$.
\end{definition}

Next, we define
a quantitative measurement
for how much $\vx_t$ aligns to the top eigenvector of $\mH_t$.
Note that $\vx_t$ can be decomposed into the projections
onto the top eigenspace, the null space, and the space spanned by non-zero and non-top eigenvectors;
we can write this decomposition as
$\vx_t = h_t \eta \vvH_1(\vphi_t) + \mPHz(\vphi_t) \vx_t + \mPHnzt(\vphi_t) \vx_t$.
The following lemma shows that $\mPHnzt(\vphi_t)$ is always negligible,
so we characterize the alignment to the top eigenvector only through comparing
$\eta h_t \vvH_1(\vphi_t)$ and $\mPHnzt(\vphi_t) \vx_t$.
\begin{lemma} \label{lm:mPHz-x-small}
  At any step $t$, if $\vtheta_t \in \Zei$, then $\normtwosm{\mPHz(\vphi_t) \vx_t} = O(\normtwosm{\vx_t}^2)$.
\end{lemma}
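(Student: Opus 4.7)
My plan is to Taylor-expand the gradient-flow retraction $\Phi$ around $\vphi_t$ and show that its linearization at $\vphi_t$ coincides with $\mPHz(\vphi_t)$, so that the identity $\Phi(\vtheta_t)=\vphi_t=\Phi(\vphi_t)$ forces the tangential component $\mPHz(\vphi_t)\vx_t$ to be of second order in $\normtwosm{\vx_t}$.

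First I would identify $\mPHz(\vphi_t)$ with the orthogonal projection onto the tangent space $\TGa{\vphi_t}$. By \Cref{ass:man-ful} every point of $\manGa$ is a minimizer of $\Loss$, so differentiating the identity $\nabla\Loss|_{\manGa}=\vzero$ along any smooth curve in $\manGa$ through $\vphi_t$ shows that $\TGa{\vphi_t}\subseteq\ker\mH(\vphi_t)$. Combined with the rank condition $\rank(\mH(\vphi_t))=D-\DGa=D-\dim\TGa{\vphi_t}$, this forces $\ker\mH(\vphi_t)=\TGa{\vphi_t}$, and by symmetry of $\mH(\vphi_t)$ the spectral projector $\mPHz(\vphi_t)$ is exactly the orthogonal projection onto $\TGa{\vphi_t}$.

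Next I would Taylor-expand $\Phi$ at $\vphi_t$. By the Working Zone Lemma (\Cref{lm:ful-working-zone}), $\Phi$ is $\contC^3$ on $\Zei$ and $\vphi_t=\Phi(\vphi_t)$ since every point of $\manGa$ is a fixed point of the gradient flow, so
\[
  \vphi_t=\Phi(\vtheta_t)=\Phi(\vphi_t)+\partial\Phi_{\vphi_t}[\vx_t]+O(\normtwosm{\vx_t}^2)=\vphi_t+\partial\Phi_{\vphi_t}[\vx_t]+O(\normtwosm{\vx_t}^2),
\]
where the implicit constant is uniform over $\vphi_t$ because $\cl(\Zeo)\cap\manGa$ is compact and $\Phi\in\contC^3$. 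Thus $\partial\Phi_{\vphi_t}[\vx_t]=O(\normtwosm{\vx_t}^2)$. It remains to argue $\partial\Phi_{\vphi_t}=\mPHz(\vphi_t)$. Linearizing the gradient flow $\dot{\tilde\vtheta}=-\nabla\Loss(\tilde\vtheta)$ at $\vphi_t$ gives $\dot\vy=-\mH(\vphi_t)\vy$, whose flow map is $e^{-\mH(\vphi_t)t}$; as $t\to\infty$ this converges to the spectral projector onto $\ker\mH(\vphi_t)$, which equals $\mPHz(\vphi_t)$ by the previous paragraph. Concretely, for $\vv\in\NGa{\vphi_t}$ the normal component decays exponentially and $\Phi(\vphi_t+\epsilon\vv)=\vphi_t+O(\epsilon^2)$, while for $\vv\in\TGa{\vphi_t}$ the curvature of $\manGa$ makes $\vphi_t+\epsilon\vv$ land $O(\epsilon^2)$ off the manifold and the flow retracts it to a point $\vphi_t+\epsilon\vv+O(\epsilon^2)$ on $\manGa$, so $\partial\Phi_{\vphi_t}[\vv]=\vv$. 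Combining both directions gives $\partial\Phi_{\vphi_t}=\mPHz(\vphi_t)$, and hence $\mPHz(\vphi_t)\vx_t=O(\normtwosm{\vx_t}^2)$.

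The main technical subtlety is the rigorous passage from the linearized flow to $\partial\Phi_{\vphi_t}$: at $\vphi_t$ the equilibrium of the gradient flow is degenerate, with a $\DGa$-dimensional center direction consisting of $\manGa$ itself, so textbook stable-manifold statements do not directly give a $\contC^1$ formula for $\Phi$. The cleanest way around this is to switch to tubular-neighborhood coordinates $(\vphi,\vn)\in\manGa\times\NGa{\vphi}$ around $\manGa$, available on $\Zei$ by \Cref{lm:ful-working-zone}. In these coordinates the flow decouples into a strictly contracting normal component (governed by the positive eigenvalues of $\mH$, which are uniformly bounded below on $\cl(\Zeo)\cap\manGa$ via $\gamma(\vtheta)\ge\gamin$) and a higher-order tangential drift of size $O(\normtwosm{\vn}^2)$, from which both the well-definedness of $\Phi$ and the claimed Taylor expansion with linear part $\mPHz(\vphi_t)$ follow uniformly on $\Zei$.
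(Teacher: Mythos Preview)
Your proposal is correct and follows essentially the same route as the paper. The paper's proof simply invokes \Cref{lm:phiphix-ful} (which is exactly your Taylor-expansion argument $\Phi(\vtheta_t)=\Phi(\vphi_t)+\partial\Phi_{\vphi_t}[\vx_t]+O(\normtwosm{\vx_t}^2)$ combined with $\Phi(\vtheta_t)=\vphi_t=\Phi(\vphi_t)$) together with \Cref{lm:Phi-proj-ful} (the identification $\partial\Phi_{\vphi_t}=\mPHz(\vphi_t)$), the latter being cited from \citet{li2022what} rather than rederived; your extra discussion of the degenerate-equilibrium subtlety and tubular-neighborhood coordinates is a reasonable sketch of what that cited lemma actually requires.
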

\begin{proof}
  Direct consequence of \Cref{lm:phiphix-ful}.
\end{proof}

\begin{definition}[$p$-Misaligned State]
  We say that the state $(\vtheta_t, \tilv_t)$ at some step $t$
  is at most $p$-misaligned (to the top eigenvector) if
  $\vtheta_t \in \Zei$,
  $\normtwosm{\mPHnzt(\vphi_t) \vx_t} \le p \cdot h_t \eta$.
\end{definition}

\subsubsection{Good Initialization}

First, we show that the initialization satisfies some desirable properties
with high probability.
The proof is deferred to \Cref{sec:ful-good-init}.
\begin{lemma} \label{lm:ful-good-init}
  There exists $\delta = O(\alpha_0 \eta \sqrt{\log(1/\eta)})$ such that
  the following holds.
  With probability $1 - \delta$,
  the initial state is $\Osm{\alpha_0\sqrt{\log(1/\delta)}}$-bounded,
  $O(\alpha_0 + \sqrt{\log(1/\delta)})$-deviated,
  at most $O(1/\delta)$-misaligned,
  and satisfies $\normtwosm{\vphi_0 - \vzeta_0} \le O(\alpha_0\eta \sqrt{\log(1 / \delta)})$.
\end{lemma}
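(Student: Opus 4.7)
The plan is to decompose the initialization noise $\vxi$ into its tangential and normal components at $\vzeta_0 \in \manGa$, apply Gaussian concentration and anti-concentration separately, and then use the Working Zone Lemma (\Cref{lm:ful-working-zone}) to transport these bounds through the gradient-flow projection $\Phi$ into all four required quantities.

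Set $\delta := C_0 \alpha_0 \eta \sqrt{\log(1/\eta)}$ for a sufficiently large constant $C_0$. The first step is to establish two high-probability events for $\vxi \sim \Normal(\vzero, \sigma_0^2 \mI / D)$. A standard $\chi^2$-tail bound yields $\normtwosm{\vxi} \le O(\sigma_0 \sqrt{\log(1/\delta)})$ with probability at least $1 - \delta/2$. Independently, the scalar $\dotpsm{\vvH_1(\vzeta_0)}{\vxi}$ is a one-dimensional Gaussian of standard deviation $\sigma_0/\sqrt{D}$, so Gaussian anti-concentration gives $\abssm{\dotpsm{\vvH_1(\vzeta_0)}{\vxi}} \ge \Omega(\delta \sigma_0)$ with probability at least $1 - \delta/2$. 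Both events hold simultaneously with probability $1 - \delta$.

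On this good event, \Cref{lm:ful-working-zone} applies: since $\normtwosm{\vxi}$ is much smaller than $\epsilon_0$ for sufficiently small $\eta$, we have $\vtheta_0 = \vzeta_0 + \vxi \in \Zei$, so $\vphi_0 := \Phi(\vtheta_0)$ is well-defined, lies in $\Zeog$, and $\Phi$ is $\contC^3$-smooth at $\vzeta_0$. Because $\Phi$ fixes $\manGa$ pointwise and its Jacobian at any $\vtheta \in \manGa$ is the orthogonal projection onto $\TGa{\vtheta}$, which by \Cref{ass:man-ful} coincides with $\mPHz(\vtheta)$, Taylor expansion yields
\begin{align*}
  \vphi_0 - \vzeta_0 &= \mPHz(\vzeta_0)\vxi + O(\normtwosm{\vxi}^2), &
  \vx_0 = \vtheta_0 - \vphi_0 &= (\mI - \mPHz(\vzeta_0))\vxi + O(\normtwosm{\vxi}^2).
\end{align*}
The first expression immediately gives $\normtwosm{\vphi_0 - \vzeta_0} \le O(\alpha_0 \eta \sqrt{\log(1/\delta)})$.

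The remaining three properties then follow with $\alpha := O(\alpha_0 \sqrt{\log(1/\delta)})$. For $\alpha$-boundedness, $\normtwosm{\vx_0} \le O(\normtwosm{\vxi}) \le \alpha \eta$ is immediate, and $\abssm{u_0}$ is handled by writing $u_0 = (\mu_0\sqrt{\tilv_0} - 1)(\mu_0\sqrt{\tilv_0} + 1)/\eta$ and combining the initialization assumption $\abssm{\mu(\vzeta_0) - 1/\sqrt{\tilv_0}} \le \alpha_0 \eta$ with $\abssm{\mu_0 - \mu(\vzeta_0)} = O(\normtwosm{\vphi_0 - \vzeta_0})$, which holds since $\mu$ is $\contC^2$ on $\manGa$. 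For $\alpha$-deviatedness, $\normtwosm{\vx_0} \ge \abssm{\dotpsm{\vvH_1(\vphi_0)}{\vx_0}} \ge \Omega(\delta \sigma_0) \ge \eta \exp(-\alpha_0^2) \ge \eta \exp(-\alpha^2)$. For the misalignment bound, using $\vvH_1 \in \contC^2$ on $\manGa$ and $\vvH_1(\vzeta_0) \in \NGa{\vzeta_0}$ one computes
\begin{equation*}
  h_0\eta = \dotpsm{\vvH_1(\vphi_0)}{\vx_0} = \dotpsm{\vvH_1(\vzeta_0)}{\vxi} + O(\normtwosm{\vxi}^2),
\end{equation*}
and dividing the crude bound $\normtwosm{\mPHnzt(\vphi_0)\vx_0} \le O(\sigma_0 \sqrt{\log(1/\delta)})$ by the lower bound $\abssm{h_0 \eta} \ge \Omega(\delta \sigma_0)$ yields the claimed $O(1/\delta)$ ratio (up to logarithmic factors that can be absorbed).

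The main obstacle will be making sure that the anti-concentration lower bound $\Omega(\delta \sigma_0)$ on $\abssm{h_0 \eta}$ genuinely dominates the Taylor remainder $O(\normtwosm{\vxi}^2)$ arising in its expansion, because in the worst case $\sigma_0$ can be as large as $\alpha_0\eta$ and the two scales become comparable up to logarithmic factors. This will be resolved by replacing the crude concentration $\normtwosm{\vxi} \le O(\sigma_0 \sqrt{\log(1/\delta)})$ by the sharper bound $\normtwosm{\vxi} = \sigma_0(1 + o(1))$, which holds on the same event for fixed dimension $D$, so that the remainder is actually $O(\sigma_0^2)$ and $\delta \sigma_0 \gg \sigma_0^2$ whenever the leading constant $C_0$ in the definition of $\delta$ is chosen large and $\eta$ is taken small enough.
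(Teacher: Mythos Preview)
Your overall strategy matches the paper's, but there is a genuine gap in the step you yourself flag at the end, and your proposed fix does not work.

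The issue is the comparison between the anti-concentration lower bound $\abssm{\dotpsm{\vvH_1(\vzeta_0)}{\vxi}} \ge \Omega(\delta\sigma_0)$ and the Taylor remainder $O(\normtwosm{\vxi}^2)$. With $\normtwosm{\vxi}^2 \le O(\sigma_0^2 \log(1/\delta))$ you need $\delta \gg \sigma_0 \log(1/\delta)$; but in the worst case $\sigma_0 = \alpha_0\eta$, this becomes $C_0\alpha_0\eta\sqrt{\log(1/\eta)} \gg \alpha_0\eta\log(1/\eta)$, i.e.\ $C_0 \gg \sqrt{\log(1/\eta)}$, which fails for any fixed $C_0$. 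Your fix, that $\normtwosm{\vxi} = \sigma_0(1+o(1))$ with probability $1-\delta$, is false: for fixed $D$ the law of $\normtwosm{\vxi}/\sigma_0$ is a fixed $\chi$-type distribution that does not concentrate as $\eta\to 0$, so on any event of probability $1-\delta$ with $\delta\to 0$ you only get $\normtwosm{\vxi} \le O(\sigma_0\sqrt{\log(1/\delta)})$, not $O(\sigma_0)$.

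The paper closes this gap by scaling the anti-concentration by the \emph{realized} norm rather than by $\sigma_0$: writing $r := \normtwosm{\vxi}$, rotational invariance makes $\vxi/r$ uniform on $\sphS^{D-1}$ independently of $r$, so $\abssm{\dotpsm{\vxi}{\vvH_1(\vzeta_0)}} \ge \Omega(\delta r)$ with probability $1-\delta$. Now the comparison is $\Omega(\delta r)$ versus $O(r^2)$, which reduces to $\delta \gg r$; since $r \le O(\alpha_0\eta\sqrt{\log(1/\delta)}) = O(\alpha_0\eta\sqrt{\log(1/\eta)})$, choosing $C_0$ large enough gives exactly this. This also makes the misalignment bound clean: $\normtwosm{\mPHnzt(\vphi_0)\vx_0} \le O(r) \le O(1/\delta)\cdot\abssm{h_0}\eta$ with no stray $\sqrt{\log(1/\delta)}$ factor to absorb. (Your deviatedness chain ``$\Omega(\delta\sigma_0) \ge \eta\exp(-\alpha_0^2)$'' is also not right as written; with the corrected lower bound $\abssm{h_0}\eta \ge \Omega(\delta r) \ge \Omega(\delta^2\sigma_0)$ and $\sigma_0 \ge \eta\exp(-\alpha_0^2)$, one gets $\normtwosm{\vx_0} \ge \eta\exp(-O(\alpha_0+\sqrt{\log(1/\delta)})^2)$ as required.)
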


\subsubsection{Alignment Phase}
At initialization, the state is not well-aligned to the top eigenvector.
But in the following, we show that it becomes at most $O(\ahte)$-misaligned
after only $\eta^{-o(1)}$ steps.
We defer the proofs to \Cref{sec:ful-proof-alignment}.

The key lemma is the following, which gives good approximations for various important variables.
\begin{lemma} \label{lm:ful-align-1-step}
  For small enough base learning rate $\eta$,
  at any step $t$,
  if the state $(\vtheta_t, \tilv_t)$ is $\alpha$-bounded
  for some $1 \le \alpha \le \eta^{-o(1)}$,
  then $\vtheta_{t+1} \in \Zei$, and
  \begin{align}
    \vphi_{t+1} &= \vphi_t + O(\normtwosm{\vx_t}^2)  \label{eq:ful-align-1-step-phi} \\
    \vx_{t+1} &= (\mI - \eeta_t \mH_t) \vx_t + O(\normtwosm{\vx_t}^2) \label{eq:ful-align-1-step-x} \\
    h_{t+1} &= -h_t + O(\alpha \normtwosm{\vx_t}) \label{eq:ful-align-1-step-h} \\
    \normtwosm{\mPHnzt(\vphi_{t+1}) \vx_{t+1}} &\le (1 - 1.9\gamin) \normtwosm{\mPHnzt(\vphi_t) \vx_t} + O(\normtwosm{\vx_t}^2) \label{eq:ful-align-1-step-pnzt} \\
    u_{t+1} &= u_t + O(\alpha^2 \eta) \label{eq:ful-align-1-step-u}
  \end{align}
\end{lemma}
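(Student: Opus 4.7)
The plan is to anchor every estimate at the projection $\vphi_t \in \manGa$ and Taylor expand from there. Since $\vphi_t$ is a local minimizer, $\nabla \Loss(\vphi_t) = \vzero$, and $\Loss \in \contC^4$ on a neighborhood of the compact set $\cl(\Zeo)\cap\manGa$ yields $\nabla \Loss(\vtheta_t) = \mH_t \vx_t + O(\normtwosm{\vx_t}^2)$. Combined with the GD update $\vtheta_{t+1} = \vtheta_t - \eeta_t \nabla \Loss(\vtheta_t)$ and the (to-be-verified) bound $\eeta_t = O(1)$, this gives $\normtwosm{\vtheta_{t+1} - \vtheta_t} = O(\alpha\eta)$, which together with $\vphi_t \in \Zeog$ places $\vtheta_{t+1}$ inside $\Zei$ for small enough $\eta$, so $\vphi_{t+1} := \Phi(\vtheta_{t+1})$ is well-defined.

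The crucial step is establishing \eqref{eq:ful-align-1-step-phi}. The key identity is $\partial\Phi_{\vtheta}[\nabla \Loss(\vtheta)] = \vzero$ for every $\vtheta \in \Zei$, obtained by differentiating the gradient-flow invariance $\Phi(\vtheta - s\nabla \Loss(\vtheta)) = \Phi(\vtheta)$ at $s = 0$. A second-order Taylor expansion of the $\contC^3$-smooth map $\Phi$ about $\vtheta_t$ therefore kills the first-order term and leaves $\vphi_{t+1} = \vphi_t + O(\eeta_t^2 \normtwosm{\nabla \Loss(\vtheta_t)}^2) = \vphi_t + O(\normtwosm{\vx_t}^2)$. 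Subtracting this from the GD update immediately yields \eqref{eq:ful-align-1-step-x}.

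For the hidden variables I will convert the definitions $\tilv_t = (1+\eta u_t)/\mu_t^2$ and the quasi-RMSprop bound $|\eeta_t - 1/\sqrt{\tilv_t}| \le \delta(\tilv_t)(1-\beta)(1+\bar g_t^2)$ into $\eeta_t = \mu_t(1 - \tfrac{1}{2}\eta u_t) + O(\alpha^2\eta^2)$, using $1-\beta = \Theta(\eta^2)$ and $\bar g_t \le \normtwosm{\nabla \Loss(\vtheta_t)}/\eta = O(\alpha)$. Consequently $\eeta_t \lamH_1(\vphi_t) = 2 - \eta u_t + O(\alpha^2\eta^2)$, so $\vvH_1(\vphi_t)$ is a near $(-1)$-eigenvector of $\mI - \eeta_t\mH_t$. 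Taking the inner product of \eqref{eq:ful-align-1-step-x} with $\vvH_1(\vphi_{t+1})$, replaced by $\vvH_1(\vphi_t)$ at cost $O(\normtwosm{\vx_t}^2)$ via $\contC^2$-smoothness of $\vvH_1$ on $\manGa$, gives \eqref{eq:ful-align-1-step-h} after using $\abssm{h_t} \le \normtwosm{\vx_t}/\eta$ to convert $\eta u_t h_t$ and $\normtwosm{\vx_t}^2/\eta$ into $O(\alpha\normtwosm{\vx_t})$. For \eqref{eq:ful-align-1-step-pnzt}, the definition of $\gamma$ and the working-zone bound $\gamma(\vphi_t) \ge \gamin$ give $\mu_t \lamH_i(\vphi_t) \in [2\gamin, 2(1-\gamin)]$ for $2 \le i \le D - \DGa$, so $|1-\eeta_t\lamH_i(\vphi_t)| \le 1 - 2\gamin + O(\eta) \le 1 - 1.9\gamin$, yielding the desired contraction (the change of projector from $\mPHnzt(\vphi_t)$ to $\mPHnzt(\vphi_{t+1})$ is absorbed in the $O(\normtwosm{\vx_t}^2)$ remainder by $\contC^2$-smoothness).

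Finally, \eqref{eq:ful-align-1-step-u} comes from substituting the quasi-RMSprop recursion into $\mu_{t+1}^2 \tilv_{t+1}$, using $\mu_{t+1}^2 = \mu_t^2 + O(\normtwosm{\vx_t}^2)$, $1-\beta = \Cb \eta^2 + O(\eta^4)$, and the bound $\mu_t^2 \bar g_t^2 = O(\alpha^2)$ derived from $\nabla \Loss(\vtheta_t) = \mH_t\vx_t + O(\normtwosm{\vx_t}^2)$; the net deviation from $u_t$ is $\eta\Cb(\mu_t^2 \bar g_t^2 - 1) + O(\alpha^2\eta^2) = O(\alpha^2\eta)$. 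The main obstacle I anticipate is bookkeeping: making sure that each appearance of $\vvH_1$, $\mPHnzt$, $\mPHz$, or $\mu$ at the shifted base point $\vphi_{t+1}$ is replaced by its value at $\vphi_t$ without inflating errors beyond the claimed order, and that Lipschitzness of the eigendata (from \Cref{ass:topeigen-ful} and the smoothness statements of \Cref{lm:ful-working-zone}) is invoked consistently so the final constants depend only on geometric quantities uniform over the compact set $\cl(\Zeo)\cap\manGa$.
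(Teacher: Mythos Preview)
Your proposal is correct and follows the paper's organization closely for \eqref{eq:ful-align-1-step-x}, \eqref{eq:ful-align-1-step-h}, \eqref{eq:ful-align-1-step-pnzt}, and \eqref{eq:ful-align-1-step-u}; the paper likewise derives these via the first-order approximation of \Cref{lm:gd-1-step}, the quasi-RMSprop estimate of \Cref{lm:lr-1-step}, the eigengap bound of \Cref{lm:ful-gap-2gamin}, and then transfers base points using $\contC^1$-smoothness of $\vvH_1,\mPHnzt,\mu$ on $\manGa$.

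The one genuine difference is your derivation of \eqref{eq:ful-align-1-step-phi}. You Taylor-expand $\Phi$ at $\vtheta_t$ and kill the first-order term via the identity $\partial\Phi_{\vtheta}[\nabla\Loss(\vtheta)]=\vzero$, obtained by differentiating the gradient-flow invariance $s\mapsto \Phi(\tilde\vtheta(s))$ at $s=0$ (note: your phrasing ``$\Phi(\vtheta - s\nabla\Loss(\vtheta))=\Phi(\vtheta)$'' is only true to first order, but that is all you need). The paper instead expands $\Phi$ at $\vphi_t$ and invokes two structural lemmas: $\partial\Phi_{\vphi_t}=\mPHz(\vphi_t)$ (\Cref{lm:Phi-proj-ful}), which makes $\partial\Phi_{\vphi_t}[(\mI-\eeta_t\mH_t)\vx_t]=\partial\Phi_{\vphi_t}[\vx_t]$, and then $\partial\Phi_{\vphi_t}[\vx_t]=O(\normtwosm{\vx_t}^2)$ (\Cref{lm:phiphix-ful}). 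Your route is more direct and self-contained for this lemma. The paper's route, however, develops machinery (\Cref{lm:Phi-proj-ful,lm:Phi-2nd-normal-ful,lm:phiphix-ful}) that is reused verbatim in the drifting-phase analysis (\Cref{lm:ful-drift-2-step-phi}), where one needs the second-order structure of $\Phi$ at manifold points to extract the $\gradGa\log\lamH_1$ drift term; so the paper front-loads those lemmas rather than using the flow-invariance shortcut here.
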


Applying the above lemma through an induction proves the following theorem.
\begin{theorem} \label{thm:ful-alignment-main}
  There exists $\delta = O(\alpha_0 \eta \sqrt{\log(1/\eta)})$
  and 
  $T_1 = O(\log \frac{1}{\eta} + \alpha_0 \sqrt{\log(1/\delta)})$
  such that the following holds.
  Let $\alphamax := \alpha_0 \sqrt{\log(1/\delta)}$.
  If the initial state is $O(\alphamax)$-bounded,
  $O(\alphamax)$-deviated,
  at most $O(1/\delta)$-misaligned,
  and satisfies $\normtwosm{\vphi_0 - \vzeta_0} \le O(\alphamax\eta)$,
  then
  at step $t = T_1$,
  the state is
  at most $O(\ahte)$-misaligned
  while still being
  $O(\alphamax)$-bounded,
  $O(\alphamax)$-deviated,
  and satisfying $\normtwosm{\vphi_t - \vzeta_0} \le O(\alphamax \eta)$.
\end{theorem}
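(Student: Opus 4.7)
The proof strategy is to induct on $t$, applying \Cref{lm:ful-align-1-step} at each step and tracking four quantities simultaneously: the boundedness of $(\vx_t, u_t)$, the projection displacement $\normtwosm{\vphi_t - \vzeta_0}$, a lower bound on $\abssm{h_t}$, and the non-top misalignment $m_t := \normtwosm{\mPHnzt(\vphi_t) \vx_t}$. Throughout, I maintain the strengthened inductive hypothesis that the state is $O(\alphamax)$-bounded, $O(\alphamax)$-deviated, and $\normtwosm{\vphi_t - \vzeta_0} \le O(\alphamax \eta)$, while $m_t$ decreases geometrically until reaching its equilibrium level.

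The central mechanism is a power-method style contraction. The estimate \eqref{eq:ful-align-1-step-pnzt} gives $m_{t+1} \le (1 - 1.9\gamin) m_t + C \normtwosm{\vx_t}^2$, and since $\normtwosm{\vx_t}^2 = O(\alphamax^2 \eta^2)$ under the inductive hypothesis, a standard linear-recursion argument yields $m_t \le (1-1.9\gamin)^t m_0 + O(\alphamax^2 \eta^2 / \gamin)$. The $O(1/\delta)$-misalignment of the initial state bounds $m_0 \le (C/\delta) \abssm{h_0} \eta \le (C/\delta) \alphamax \eta$, and with $\delta = O(\alpha_0 \eta \sqrt{\log(1/\eta)})$ the transient term decays below $O(\alphamax^2 \eta^2)$ after $T_1 = O(\log(1/\eta)/\gamin)$ iterations, giving $m_{T_1} = O(\alphamax^2 \eta^2)$.

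Maintaining the inductive hypothesis uses that the per-step changes from \eqref{eq:ful-align-1-step-phi}, \eqref{eq:ful-align-1-step-h}, \eqref{eq:ful-align-1-step-u} accumulate to lower-order corrections over $T_1$ steps: $\vphi_t$ drifts by $O(\alphamax^2 \eta^2)$ per step for a total of $O(\alphamax^2 \eta^2 \log(1/\eta)) \ll \alphamax \eta$; $u_t$ changes by $O(\alphamax^2 \eta)$ per step for a total of $O(\alphamax^2 \eta \log(1/\eta)) \ll \alphamax$; and $\abssm{h_t}$ changes by at most $O(\alpha \normtwosm{\vx_t}) = O(\alphamax^2 \eta)$ per step. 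The bound $\normtwosm{\vx_t} \le O(\alphamax \eta)$ follows from the decomposition $\normtwosm{\vx_t}^2 = h_t^2 \eta^2 + \normtwosm{\mPHz(\vphi_t)\vx_t}^2 + m_t^2$ with the null-space term controlled by \Cref{lm:mPHz-x-small}. The fact that $\vtheta_{t+1} \in \Zei$ at each step is already part of the conclusion of \Cref{lm:ful-align-1-step}.

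The main obstacle is converting the absolute bound $m_{T_1} = O(\alphamax^2 \eta^2)$ into the required $O(\ahte)$-misalignment, $m_{T_1} \le O(\abssm{h_{T_1}}\eta) \cdot \abssm{h_{T_1}} \eta = O(h_{T_1}^2 \eta^2)$, which hinges on a lower bound for $\abssm{h_{T_1}}$. Here the $O(\alphamax)$-deviated condition enters critically: it gives $\normtwosm{\vx_0} \ge \eta \exp(-O(\alphamax^2))$, and combined with the $O(1/\delta)$-misalignment ratio one obtains $\abssm{h_0} \eta \gtrsim \delta \normtwosm{\vx_0}$, and hence $\abssm{h_0} \gtrsim \delta \exp(-O(\alphamax^2))$. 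Since $\abssm{h_t}$ changes by at most $O(\alphamax^2 \eta \log(1/\eta))$ over the alignment phase, $\abssm{h_{T_1}}$ remains comparable to $\abssm{h_0}$, and the additional $\alpha_0 \sqrt{\log(1/\delta)}$ term in $T_1$ provides the extra iterations needed so that the ratio $m_{T_1}/h_{T_1}^2 \eta^2$ is controlled by an absolute constant. This closes the induction and yields the stated conclusion.
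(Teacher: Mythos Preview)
There is a genuine gap in your argument. Your single-phase linear recursion $m_{t+1} \le (1-1.9\gamin)\, m_t + C\,\normtwosm{\vx_t}^2$ together with the uniform bound $\normtwosm{\vx_t}^2 = O(\alphamax^2\eta^2)$ can only drive $m_t$ down to the equilibrium level $O(\alphamax^2\eta^2)$; running more iterations does not push it lower. But the target $O(\ahte)$-misalignment demands $m_{T_1} = O(h_{T_1}^2\eta^2)$, and $\abssm{h_{T_1}}$ can be as small as $\delta\exp(-O(\alphamax^2)) \ll \alphamax$. Hence your last step --- that the extra $\alpha_0\sqrt{\log(1/\delta)}$ iterations make $m_{T_1}/(h_{T_1}^2\eta^2)$ bounded --- fails: the numerator is stuck at $O(\alphamax^2\eta^2)$ while the denominator may be exponentially smaller. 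A related problem is your control of $\abssm{h_t}$: with only the uniform bound $\normtwosm{\vx_t}=O(\alphamax\eta)$, the accumulated drift $O(\alphamax^2\eta\log(1/\eta))$ can swamp $\abssm{h_0}$ entirely, so the claim that $\abssm{h_{T_1}}$ remains comparable to $\abssm{h_0}$ is not justified.

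The paper closes both gaps via a two-phase argument. In Phase~1 (length $T_0 = O(\log(1/\delta))$) one observes that while $m_t$ dominates $\abssm{h_t}\eta$, the full displacement $\normtwosm{\vx_t}$ --- not just its non-top component --- contracts geometrically; this brings $\normtwosm{\vx_t}$ down to $O(\abssm{h_0}\eta)$, and because the per-step errors in $h$ form a geometric series, the total drift in $h$ is $O(\alphamax\normtwosm{\vx_0}) = O(\alphamax\abssm{h_0}\eta/\delta) \le \abssm{h_0}/4$ once the constant in $\delta$ is taken large enough. Phase~2 then reruns the recursion with the sharpened bound $\normtwosm{\vx_t}^2 = O(h_0^2\eta^2)$, whose equilibrium $O(h_0^2\eta^2)$ is exactly what is needed for $O(\ahte)$-misalignment. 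The key point you are missing is that the floor of the $m_t$-recursion depends on $\normtwosm{\vx_t}$, so one must first shrink $\normtwosm{\vx_t}$ to $O(\abssm{h_0}\eta)$ before the contraction of $m_t$ can reach the correct target.
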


\subsubsection{Drifting Phase}

After the alignment phase, the state is now at most $O(\ahte)$-misaligned.
Then we have the following lemma showing that $(h_t, u_t, \vphi_t)$ evolves
as an $O(1)$-RMS-drift process.
We defer the proof to \Cref{sec:ful-proof-drifting}.

\begin{lemma} \label{lm:ful-drift-2-step}
  For small enough base learning rate $\eta$,
  at any step $t$,
  if 
  for some $1 \le \alpha \le \eta^{-o(1)}$,
  the state $(\vtheta_t, \tilv_t)$ is $\alpha$-bounded
  and at most $O(\ahte)$-misaligned,
  then $\vtheta_{t+2} \in \Zei$, and
  \begin{align}
    h_{t+2} &= (1 - 2 \eta u_t) h_t + \Osm{\alpha^2 \aht\eta^2}, \label{eq:ful-drift-2-step-h} \\
    \normtwosm{\mPHnzt(\vphi_{t+2}) \vx_{t+2}} &\le (1- 1.9\gamin)^2 \normtwosm{\mPHnzt(\vphi_t) \vx_t} + O(h_t^2 \eta^2), \label{eq:ful-drift-2-step-x-w} \\
    u_{t+2} &= u_t + 4 \eta h_t^2 (2 \Cb + \normtwosm{\gradGa \log \lamH_1(\vphi_t)}^2) - 2 \eta \Cb + \Osm{\alpha(1+h_t^2)\eta^2},  \label{eq:ful-drift-2-step-u} \\
    \vphi_{t+2} &= \vphi_t
    - 2 \eta^2 h_t^2 \gradGa \log \lamH_1(\vphi_t) + \Osm{\alpha h_t^2 \eta^3}. \label{eq:ful-drift-2-step-phi}
  \end{align}
  In other words, $(h_t, u_t) \to (h_{t+2}, u_{t+2})$ is an $O(1)$-RMS-drift transition
  if the state at step $t$ is $\alpha$-bounded and at most $O(\ahte)$-misaligned.
\end{lemma}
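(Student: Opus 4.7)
My plan is to apply Lemma~\ref{lm:ful-align-1-step} at steps $t$ and $t+1$ to secure $\vtheta_{t+1},\vtheta_{t+2}\in\Zei$ and crude one-step bounds, then upgrade each claim to second order in $\eta$ by carrying one extra Taylor term. A preliminary expansion from the quasi-RMSprop definition gives $\eeta_t = 1/\sqrt{\tilv_t} + O(\eta^2(1+\barg_t^2))$, which combined with $\mu_t^2\tilv_t = 1+\eta u_t$ yields $\eeta_t\lamH_1(\vphi_t) = 2 - \eta u_t + O(\alpha^2\eta^2)$. Substituting into \eqref{eq:ful-align-1-step-x} and projecting onto $\vvH_1(\vphi_{t+1}) = \vvH_1(\vphi_t) + O(\normtwosm{\vphi_{t+1}-\vphi_t})$ gives $h_{t+1} = (-1+\eta u_t)h_t + O(\alpha\aht\eta)$; iterating once more with $u_{t+1} = u_t + O(\alpha^2\eta)$ from \eqref{eq:ful-align-1-step-u} yields \eqref{eq:ful-drift-2-step-h}, and the alignment decay \eqref{eq:ful-drift-2-step-x-w} follows by applying \eqref{eq:ful-align-1-step-pnzt} twice and absorbing the $O(\normtwosm{\vx_s}^2)$ residue into $O(h_t^2\eta^2)$.

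For the manifold drift, I would expand $\nabla\Loss(\vphi_s+\vx_s) = \mH_s\vx_s + \tfrac12\partial^3\Loss_{\vphi_s}[\vx_s,\vx_s] + O(\normtwosm{\vx_s}^3)$, use $\mPHz(\vphi_s)\mH_s = 0$ together with Lemma~\ref{lm:mPHz-x-small}, and linearize $\Phi$ about $\vphi_s$ via $(\partial\Phi)_{\vphi_s} = \mPHz(\vphi_s)$. The misalignment hypothesis allows the substitution $\vx_s = h_s\eta\vvH_1(\vphi_s) + O(\aht\eta)$, and after bookkeeping all quadratic contributions the leading drift collapses to
\[
\vphi_{s+1}-\vphi_s = -\tfrac{\eeta_s}{2}\,h_s^2\eta^2\,\mPHz(\vphi_s)\,\partial^3\Loss_{\vphi_s}[\vvH_1,\vvH_1] + O(\aht^3\eta^3).
\]
The eigenvalue-perturbation identity $\partial_{\vv}\lamH_1(\vphi) = \vvH_1^\top(\partial_{\vv}\mH)\vvH_1$ for $\vv\in\TGa{\vphi}$ identifies $\mPHz(\vphi_s)\,\partial^3\Loss_{\vphi_s}[\vvH_1,\vvH_1]$ with $\gradGa\lamH_1(\vphi_s)$, so using $\eeta_s \approx \mu_s = 2/\lamH_1$ one obtains $\vphi_{s+1}-\vphi_s = -h_s^2\eta^2\,\gradGa\log\lamH_1(\vphi_s) + O(\aht^3\eta^3)$. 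Summing $s=t,t+1$ and using $h_{t+1}^2 = h_t^2 + O(\alpha\aht^2\eta)$ gives \eqref{eq:ful-drift-2-step-phi}. The main technical obstacle lies precisely here: the various second-order contributions to $\Phi(\vtheta_{s+1})-\vphi_s$ — the $\mPHz\vx_s$ residue (which is $O(\normtwosm{\vx_s}^2)$ rather than zero), the second derivative of the retraction $\Phi$, and the $\partial^3\Loss$ term in $\nabla\Loss$ — must be combined with the correct signs so that the net drift lies along $-\gradGa\log\lamH_1$; getting the signs right is what converts a generic quadratic correction into a sharpness-reducing flow.

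Finally, for the $u$-update I would expand $\barg_s^2 = \normtwosm{\nabla\Loss(\vtheta_s)}^2/\eta^2$ via $\nabla\Loss(\vtheta_s) = \mH_s\vx_s + O(\normtwosm{\vx_s}^2)$ to obtain $\mu_s^2\barg_s^2 = 4h_s^2 + O(\alpha^2\eta)$, and apply the quasi-RMSprop recursion with $\beta = 1 - \Cb\eta^2 + O(\eta^4)$ to get $\mu_s^2(\tilv_{s+1}-\tilv_s) = \Cb\eta^2(4h_s^2 - 1) + O(\alpha^2\eta^3)$. Summing $s=t,t+1$ accounts for $8\Cb\eta^2 h_t^2 - 2\Cb\eta^2$ of the change in $\eta(u_{t+2}-u_t) = \mu_{t+2}^2\tilv_{t+2} - \mu_t^2\tilv_t$. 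The remaining piece $(\mu_{t+2}^2-\mu_t^2)\tilv_{t+2}$ is evaluated using the manifold drift above together with $(\partial\mu^2)[\vv] = -2\mu^2\,\dotpsm{\gradGa\log\lamH_1}{\vv}$, contributing $4\eta^2 h_t^2\,\normtwosm{\gradGa\log\lamH_1(\vphi_t)}^2 + O(\alpha\eta^3)$. Dividing by $\eta$ and combining gives \eqref{eq:ful-drift-2-step-u}.
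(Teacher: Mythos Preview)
Your plan for $\vphi_{t+2}$, $u_{t+2}$, and the $\mPHnzt$-misalignment is essentially correct and close in spirit to the paper (which packages the two steps via Lemma~\ref{lm:gd-2-step} and the function $\vpsi_{\vphi}$, working from the fixed base point $\vphi_t$ rather than step-by-step). Your identification of $\mPHz(\vphi)\,\partial^3\Loss_{\vphi}[\vvH_1,\vvH_1]=\gradGa\lamH_1(\vphi)$ and of the $(\mu_{t+2}^2-\mu_t^2)\tilv_{t+2}$ contribution to $u_{t+2}$ are both exactly what is needed.

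There is, however, a genuine gap in your derivation of \eqref{eq:ful-drift-2-step-h}. Using only the first-order approximation \eqref{eq:ful-align-1-step-x} gives $h_{t+1}=(-1+\eta u_t)h_t+O(h_t^2\eta)$, the $O(h_t^2\eta)$ coming from the $O(\normtwosm{\vx_t}^2)$ residue (which is really the $\partial^3\Loss$ term). Iterating this once yields $h_{t+2}=(1-2\eta u_t)h_t+O(h_t^2\eta)$, and $O(h_t^2\eta)=O(\alpha\aht\eta)$ is \emph{not} within the target $O(\alpha^2\aht\eta^2)$, since $\alpha\eta=o(1)$. This looser bound would break the downstream energy-conservation argument (Lemma~\ref{lm:e-cons}), where the $h$-error enters as $\delta_4$ and must be $O(\alpha^2\eta^2)$ per step.

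The missing ingredient is a cancellation of the $\partial^3\Loss$ contribution to the $\vvH_1$-component over two steps. In your one-step language: writing $c:=\lamH_1^{-1}\dotpsm{\vvH_1}{\partial^3\Loss_{\vphi_t}[\vvH_1,\vvH_1]}$, the second-order one-step approximation (Lemma~\ref{lm:gd-1-step}) gives $h_{t+1}=(-1+\eta u_t)h_t-c\,h_t^2\eta+O(\alpha h_t^2\eta^2)$; since $h_{t+1}\approx -h_t$ and $h_{t+1}^2\approx h_t^2$, the term $-c\,h_{t+1}^2\eta\approx -c\,h_t^2\eta$ at step $t{+}1$ cancels the $+c\,h_t^2\eta$ arising from multiplying the step-$t$ contribution by $-1$. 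The paper encodes exactly this via Lemma~\ref{lm:ful-psi-formula}, which shows $\dotpsm{\vpsi_{\vphi}(\vvH_1)}{\vvH_1}=0$ (equivalently, $(\mU_\vphi+\mI)\vvH_1=0$). You already carry $\partial^3\Loss$ explicitly for the manifold drift; you need to do the same for $h$ and exhibit this two-step cancellation, rather than absorbing it into a generic $O(\normtwosm{\vx_t}^2)$.
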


\subsection{Reduction to RMS-drift Process: The Case of Spherical Optimization} \label{sec:outline-sph}

Now we outline how to reduce the dynamics to an RMS-drift process
in the setting of \Cref{thm:main-ful},
the main theorem for the case of spherical optimization.
The basic logic is the same as the case of full space optimization,
so we only list the new lemma and theorem statements here.
The proofs in this section are deferred to
\Cref{sec:working-zone-proof-construction,sec:sph-proof}.

\subsubsection{Construction of Working Zones}

We still define $\gZ := \{ \vzeta(t) : t \in [0, T] \} \subseteq \manGa$.
The construction of working zone becomes the following.
We defer the proof to \Cref{sec:working-zone-proof-construction}.
\begin{lemma}[Working Zone Lemma] \label{lm:sph-working-zone}
  There exist $0 < \epsilon_0 < \epsilon_1$ such that $\Zei, \Zeo$ satisfy the following:
  \begin{enumerate}
    \item $\cl(\Zeo) \cap \manGa$ is compact;
    \item $\Loss$ satisfies $\muPL$-PL on $\Zeo$ for some $\muPL > 0$;
    \item $\Phi$ is well-defined on $\Zeis$ and $\Phi(\vtheta) \in \Zeog$ for all $\vtheta \in \Zeis$;
    \item $\Phi$ is $\contC^3$-smooth on $\Zeos$;
    \item $\gamma(\vtheta) \ge \gamin$ holds uniformly on $\Zeog$ for some $\gamin > 0$.
  \end{enumerate}
\end{lemma}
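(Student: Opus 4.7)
The plan is to mirror the argument used for the full-space Working Zone Lemma~\ref{lm:ful-working-zone}, replacing gradient flows and neighborhoods in $\R^D$ with their spherical analogs. The starting observation is that $\gZ = \{\vzeta(\tau) : \tau \in [0,T]\}$ is the continuous image of a compact interval and hence a compact subset of $\manGa$, and both $\manGa$ and $\sphS^{D-1}$ are $\contC^2$ submanifolds. I would pick $\epsilon_1>0$ small first to secure items (1), (2) and (5), and then choose $\epsilon_0 \in (0,\epsilon_1)$ much smaller to ensure that gradient flow trajectories starting in $\Zeis$ never leave $\Zeos$ and converge into $\Zeog$ (items (3) and (4)).

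For items (1), (2), and (5): since $\manGa$ is a submanifold of $\sphS^{D-1}$ (which is compact) and $\gZ$ is compact, by choosing $\epsilon_1$ small enough the closed $\epsilon_1$-tube intersected with $\manGa$ stays inside a compact coordinate chart on $\manGa$, giving~(1). Item~(2) is immediate from \Cref{thm:rank-to-pl-sph} applied to the compact set $\gZ$: it directly furnishes $\epsilon, \mu > 0$ such that $\mu$-PL holds on $\gZ^{\epsilon}\cap\sphS^{D-1}$, and we take $\epsilon_1\le\epsilon$. For item (5), the functions $\lamH_1$ and $\lamH_2$ are continuous on $\manGa$ (the former by \Cref{ass:topeigen}), and $\lamH_{D-\DGa}$ is the smallest nonzero eigenvalue under \Cref{ass:man}; continuity of $\gamma(\vtheta)$ and compactness of $\cl(\Zeog)$ (after possibly shrinking $\epsilon_1$) then yield a uniform positive lower bound $\gamin$.

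For items (3) and (4), by \Cref{thm:pl-converge-near-sph} any $\vtheta\in\sphS^{D-1}$ sufficiently close to a given $\vzeta(\tau)\in\gZ$ has a well-defined convergence point $\Phi(\vtheta)$ with $\normtwosm{\Phi(\vtheta)-\vzeta(\tau)}=O(\normtwosm{\vtheta-\vzeta(\tau)})$. Covering the compact curve $\gZ$ by finitely many such balls, and taking $\epsilon_0$ small enough that the $O(\cdot)$ constants keep $\Phi(\vtheta)\in\Zeog$, proves~(3). For smoothness~(4), I would argue as follows: the rank-$(D-\DGa)$ Hessian condition in \Cref{ass:man} means the null space of $\mH(\vphi)$ at $\vphi\in\manGa$ equals $\TGa{\vphi}\oplus\R\vphi$ (the extra direction coming from scale invariance intersected with the tangent space of $\sphS^{D-1}$), while the normal bundle direction in $\sphS^{D-1}$ is a stable direction of the (spherical) gradient flow. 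A stable-manifold / implicit-function argument — writing $\Phi(\vtheta)$ as the unique solution in $\manGa$ of the normal-bundle equation $\nabla_{\sphS} \Loss(\vtheta)\perp \NGa{\Phi(\vtheta)}$ — then yields $\Phi\in\contC^{3}$ on a possibly further shrunken $\Zeos$, since $\Loss\in\contC^{4}$ and the relevant linear map is invertible on the normal bundle by the eigenvalue gap from (5).

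The main obstacle I expect is item (4): the $\contC^3$-smoothness of $\Phi$. The convergence of gradient flow alone only gives continuity, and naively applying the implicit function theorem is delicate because the natural equation defining $\Phi$ involves the whole infinite-time flow. My plan is to avoid this by invoking the nondegenerate Morse–Bott structure of $\Loss$ along $\manGa$ (guaranteed by the maximal rank assumption), which lets one replace the infinite-time flow with a finite-time normal-form argument: locally around each $\vzeta(\tau)$, choose $\contC^{3}$ coordinates adapted to the decomposition $\TGa{\vzeta(\tau)} \oplus \mathrm{span}(\vzeta(\tau)) \oplus \NGa[\sphS^{D-1}\setminus\manGa]{\vzeta(\tau)}$, and use the implicit function theorem on the vanishing of the normal component of $\nabla_{\sphS}\Loss$. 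Once this local smoothness is established, a standard partition-of-unity/uniqueness argument patches the local representations into a single $\contC^3$ map $\Phi$ on $\Zeos$. The compactness already established in item~(1) ensures the uniform choice of $\epsilon_0$.
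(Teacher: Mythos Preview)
Your treatment of items (1), (2), (3), and (5) matches the paper's argument essentially verbatim: the paper simply says to repeat the full-space proof using \Cref{thm:rank-to-pl-sph} and \Cref{thm:pl-converge-near-sph} in place of their full-space counterparts, and that is precisely what you do.

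The difference is in item (4). The paper does not attempt a hands-on argument for the $\contC^3$-smoothness of $\Phi$; it cites Falconer's 1983 result on differentiation of the limit map of a dynamical system, which directly yields $\Phi \in \contC^{p-1}$ when the vector field $-\nabla\Loss$ is $\contC^{p-1}$ (here $\Loss\in\contC^4$ gives $\Phi\in\contC^3$). Your proposed implicit-function route has a genuine gap: the equation you write, ``$\nabla_{\sphS}\Loss(\vtheta)\perp\NGa{\Phi(\vtheta)}$,'' does not characterize the gradient-flow limit $\Phi(\vtheta)$. The limit point depends on the entire trajectory, not on a single perpendicularity condition at the initial point; in particular, $\Phi$ is \emph{not} the orthogonal projection onto $\manGa$ (it differs from it at second order), so there is no algebraic equation of this kind to apply the implicit function theorem to. The Morse--Bott normal-form idea you mention afterwards is a more legitimate alternative, but you should be aware that the standard parametrized Morse lemma typically loses two derivatives when passing to normal-form coordinates, so with $\Loss\in\contC^4$ you would a priori only get $\Phi\in\contC^2$ this way, which is not enough. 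The Falconer reference is the clean way to get the full $\contC^3$ regularity.
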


In the working zone,
we continue to define $\alpha$-bounded states,
$\alpha$-deviated states,
at most $p$-misaligned states
following the same definitions
as the full space case except that the definition of working zone is changed.

\begin{definition}[$\alpha$-Bounded State]
  We say that the state $(\vtheta_t, \tilv_t)$ at some step $t$
  is $\alpha$-bounded if
  $\vtheta_t \in \Zeis$,
  $\normtwosm{\vx_t} \le \alpha \eta$ and $\abssm{u_t} \le \alpha$.
\end{definition}
\begin{definition}[$\alpha$-Deviated State]
  We say that the state $(\vtheta_t, \tilv_t)$ at some step $t$
  is $\alpha$-deviated (from the manifold $\manGa$) if
  $\normtwosm{\vx_t} \ge \eta\exp(-\alpha^2)$ or
  $\vtheta_t \notin \Zeis$.
\end{definition}

\begin{lemma} \label{lm:mPHz-x-small-sph}
  At any step $t$, if $\vtheta_t \in \Zeis$, then $\normtwosm{\mPHz(\vphi_t) \vx_t} = O(\normtwosm{\vx_t}^2)$.
\end{lemma}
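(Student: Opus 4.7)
The plan is to mirror the argument for the full-space analog (\Cref{lm:mPHz-x-small}), adding one elementary ingredient to accommodate the extra null direction of $\mH_t$ that arises from scale-invariance. First, I would identify the structure of the null space on the sphere. By \Cref{lm:scale-invariant-grad-hess}(3), $\mH_t \vphi_t = -\nabla \Loss(\vphi_t)$, and since $\vphi_t \in \manGa$ is a local minimizer of the scale-invariant loss on $\sphS^{D-1}$, the first-order Lagrange condition combined with the fact that $\nabla \Loss(\vphi_t) \perp \vphi_t$ forces $\nabla \Loss(\vphi_t) = \vzero$. Thus $\vphi_t \in \ker \mH_t$. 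Together with $\TGa{\vphi_t} \subseteq \ker \mH_t$ (standard for a manifold of minimizers) and the dimension count $\dim \ker \mH_t = D - \rank(\mH_t) = \DGa = (\DGa - 1) + 1$ coming from \Cref{ass:man}, this gives
\begin{equation*}
\ker \mH_t = \TGa{\vphi_t} \oplus \mathrm{span}(\vphi_t),
\qquad \mPHz(\vphi_t) = \mP_{\TGa{\vphi_t}} + \vphi_t \vphi_t^\T,
\end{equation*}
where $\mP_{\TGa{\vphi_t}}$ denotes the orthogonal projection onto the tangent space.

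Next, I would bound the two pieces of $\mPHz(\vphi_t)\vx_t$ separately. For the radial piece, the key fact is that both $\vtheta_t$ and $\vphi_t$ lie on $\sphS^{D-1}$ (the gradient flow of a scale-invariant loss preserves the norm by \Cref{lm:scale-invariant-grad-hess}, so $\Phi$ maps the sphere into itself). Expanding $\normtwosm{\vtheta_t - \vphi_t}^2 = 2 - 2\dotp{\vtheta_t}{\vphi_t}$ yields
\begin{equation*}
\dotp{\vx_t}{\vphi_t} = -\tfrac{1}{2}\normtwosm{\vx_t}^2,
\end{equation*}
so $\normtwosm{\vphi_t \vphi_t^\T \vx_t} = \tfrac{1}{2}\normtwosm{\vx_t}^2 = O(\normtwosm{\vx_t}^2)$. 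This component has no full-space counterpart; it uses the sphere constraint crucially. For the tangential piece $\mP_{\TGa{\vphi_t}}\vx_t$, the argument proceeds as in the full-space case: since $\Phi$ is $\contC^3$-smooth on $\Zeos$ (\Cref{lm:sph-working-zone}), constant along gradient-flow orbits, and satisfies $\Phi(\vphi_t) = \vphi_t$, Taylor expanding $\Phi(\vphi_t + \vx_t) = \vphi_t$ and using that $d\Phi_{\vphi_t}$ projects precisely onto $\ker \mH_t$ annihilates the first-order term in the tangential direction, leaving $\mP_{\TGa{\vphi_t}}\vx_t = O(\normtwosm{\vx_t}^2)$. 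This is the spherical analog of the cited \Cref{lm:phiphix-ful} and will be established alongside the latter.

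The argument is essentially a restatement of the full-space proof with one additional scalar identity. The only subtlety --- and thus the main point to be careful about --- is keeping track of which of the $\DGa$ null directions come from $\TGa{\vphi_t}$ and which from the radial direction $\vphi_t$, so that the decomposition $\mPHz(\vphi_t) = \mP_{\TGa{\vphi_t}} + \vphi_t\vphi_t^\T$ is justified cleanly. Once this decomposition is in hand, both contributions are manifestly $O(\normtwosm{\vx_t}^2)$, and combining them by the triangle inequality yields the claim.
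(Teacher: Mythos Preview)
Your proposal is correct and lands on the same core argument as the paper, namely the Taylor expansion of the gradient-flow projection $\Phi$ at $\vphi_t$ (this is exactly \Cref{lm:phiphix-sph}, which the paper invokes as a one-line proof). The only difference is packaging: the paper works with the conical extension $\manGa' := \{ \nu\vtheta : \vtheta \in \manGa,\, \nu \in (1/2,2)\}$, under which $\partial \Phi_{\vphi_t}$ equals $\mPHz(\vphi_t)$ on all of $\R^D$, so \Cref{lm:phiphix-sph} yields $\normtwosm{\mPHz(\vphi_t)\vx_t} = O(\normtwosm{\vx_t}^2)$ in one stroke. You instead split $\mPHz(\vphi_t) = \mP_{\TGa{\vphi_t}} + \vphi_t\vphi_t^\top$ and handle the radial piece by the elementary sphere identity $\dotpsm{\vx_t}{\vphi_t} = -\tfrac{1}{2}\normtwosm{\vx_t}^2$; this is a valid and slightly more self-contained route that avoids appealing to the full-space extension for the radial direction. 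Note that once you grant your own claim that ``$d\Phi_{\vphi_t}$ projects precisely onto $\ker\mH_t$,'' the separate radial argument becomes redundant---that claim already gives $\partial\Phi_{\vphi_t}[\vx_t] = \mPHz(\vphi_t)\vx_t$, which is the paper's shortcut.
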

\begin{proof}
  Direct consequence of \Cref{lm:phiphix-sph}.
\end{proof}
\begin{definition}[$p$-Misaligned State]
  We say that the state $(\vtheta_t, \tilv_t)$ at some step $t$
  is at most $p$-misaligned (to the top eigenvector) if
  $\vtheta_t \in \Zeis$,
  $\normtwosm{\mPHnzt(\vphi_t) \vx_t} \le p \cdot h_t \eta$.
\end{definition}

\subsubsection{Good Initialization}

\begin{lemma} \label{lm:sph-good-init}
  The same statement as \Cref{lm:ful-good-init} holds for the spherical case.
\end{lemma}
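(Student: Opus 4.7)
The plan is to reduce the spherical case to the full-space case already proved as Lemma~\ref{lm:ful-good-init}, exploiting the fact that the only structural difference is the projection $\Pi$ of $\vzeta_0 + \vxi$ onto $\sphS^{D-1}$, together with the changed working-zone geometry; the projection perturbs the raw initialization by an amount of order $\normtwosm{\vxi}^2$, a full order smaller than $\sigma_0$, and so is harmless.

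First, I would Taylor-expand the projection: since $\normtwosm{\vzeta_0}=1$,
\[
\Pi(\vzeta_0 + \vxi) = \vzeta_0 + \vxi_{\perp} + O(\normtwosm{\vxi}^2),\qquad \vxi_{\perp} := \vxi - \dotp{\vzeta_0}{\vxi}\vzeta_0 \in \TGa[\sphS^{D-1}]{\vzeta_0}.
\]
Standard Gaussian concentration (with $D$ fixed) gives $\normtwosm{\vxi} \le O(\sigma_0 \sqrt{\log(1/\delta)})$ with probability $\ge 1 - \delta/4$. Hence $\vtheta_0 \in \Zeis$ by items~1 and 3 of Lemma~\ref{lm:sph-working-zone}, and smoothness of $\Phi$ on $\Zeos$ combined with the PL condition (Theorem~\ref{thm:pl-converge-near-sph}) yields $\vphi_0 = \Phi(\vtheta_0) \in \Zeog$ together with $\normtwosm{\vphi_0 - \vzeta_0} = O(\alpha_0 \eta \sqrt{\log(1/\delta)})$ and $\normtwosm{\vx_0} = O(\alpha_0 \eta \sqrt{\log(1/\delta)})$. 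The bound on $\abssm{u_0}$ follows by expanding $u_0 = \frac{1}{\eta}(\mu_0^2 \tilv_0 - 1)$: continuity of $\mu(\cdot) = 2/\lamH_1(\cdot)$ gives $\mu_0 = \mu(\vzeta_0) + O(\alpha_0 \eta \sqrt{\log(1/\delta)})$, and the initialization condition on $\tilv_0$ gives $\tilv_0 = \mu(\vzeta_0)^{-2} + O(\alpha_0 \eta)$, so $\abssm{u_0} = O(\alpha_0 \sqrt{\log(1/\delta)})$. These together verify $O(\alpha_0\sqrt{\log(1/\delta)})$-boundedness and the $\vphi_0$-to-$\vzeta_0$ closeness.

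The remaining two properties rest on anti-concentration of one-dimensional Gaussian projections. Up to an $O(\sigma_0^2)$ perturbation that is absorbable because of smoothness of $\vvH_1$ and $\Phi$, the signed coefficient $h_0 \eta = \dotp{\vvH_1(\vphi_0)}{\vx_0}$ equals $\dotp{\vvH_1(\vzeta_0)}{\vxi_{\perp}}$; since $\vvH_1(\vzeta_0) \in \NGa{\vzeta_0}$ lies in $\TGa[\sphS^{D-1}]{\vzeta_0}$ (as $\manGa \subseteq \sphS^{D-1}$, every normal vector to $\manGa$ at a point of $\manGa$ is automatically tangent to the sphere), this is a centred univariate Gaussian with standard deviation $\Theta(\sigma_0)$. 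Gaussian anti-concentration therefore gives $\abssm{h_0}\eta \ge \Omega(\sigma_0 \delta)$ with probability $\ge 1 - \delta/2$. Combined with the lower bound $\sigma_0 \ge \eta\exp(-\alpha_0^2)$ from the initialization scheme this yields the $O(\alpha_0 + \sqrt{\log(1/\delta)})$-deviation, and combined with the upper estimate $\normtwosm{\mPHnzt(\vphi_0)\vx_0} \le \normtwosm{\vxi_{\perp}} = O(\sigma_0\sqrt{\log(1/\delta)})$ it yields the $O(1/\delta)$-misalignment. A union bound over the $O(1)$ many high-probability events gives overall failure probability $O(\delta)$ for $\delta = \Theta(\alpha_0\eta\sqrt{\log(1/\eta)})$.

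The main technical nuisance I expect is verifying that the sphere-projection error $O(\sigma_0^2)$ in $h_0\eta$ does not swamp the genuine Gaussian signal of size $\sigma_0\delta$ in the misalignment estimate. This requires $\delta \gtrsim \sigma_0$, which does hold since $\sigma_0 \le \alpha_0 \eta$ while $\delta = \Theta(\alpha_0\eta\sqrt{\log(1/\eta)})$, so the ratio $\sigma_0/\delta = O(1/\sqrt{\log(1/\eta)}) \to 0$. Aside from this bookkeeping, the spherical argument is structurally identical to the full-space proof, with $\vxi_{\perp}$ and Lemma~\ref{lm:sph-working-zone} playing the roles of $\vxi$ and Lemma~\ref{lm:ful-working-zone}.
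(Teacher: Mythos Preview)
Your proposal is correct and follows essentially the same approach as the paper's (very terse) proof: reduce to the full-space argument of Lemma~\ref{lm:ful-good-init} and observe that the projection $\Pi$ only introduces an $O(\normtwosm{\vxi}^2)$ perturbation, which is negligible compared to the $\Omega(\delta\sigma_0)$ anti-concentration lower bound. One small imprecision worth fixing: your parenthetical justification that $\vvH_1(\vzeta_0)$ is tangent to the sphere (``every normal vector to $\manGa$ is automatically tangent to the sphere'') is not correct as stated, since the radial direction $\vzeta_0$ is normal to $\manGa$ in $\R^D$ yet not tangent to $\sphS^{D-1}$; the right reason is that $\mH(\vzeta_0)\vzeta_0 = -\nabla\Loss(\vzeta_0) = \vzero$ at a minimizer (Lemma~\ref{lm:scale-invariant-grad-hess}), so $\vzeta_0$ is a $0$-eigenvector and hence orthogonal to the top eigenvector $\vvH_1(\vzeta_0)$.
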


\subsubsection{Alignment Phase}

\begin{lemma} \label{lm:sph-align-1-step}
  For small enough base learning rate $\eta$,
  at any step $t$,
  if the state $(\vtheta_t, \tilv_t)$ is $\alpha$-bounded
  for some $1 \le \alpha \le \eta^{-o(1)}$,
  then $\vtheta_{t+1} \in \Zeis$, and
  \eqref{eq:ful-align-1-step-phi} to \eqref{eq:ful-align-1-step-u}
  in the full space case
  continue to hold in the spherical case.
\end{lemma}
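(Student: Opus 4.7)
}
My plan is to mirror the proof of the full-space Lemma~\ref{lm:ful-align-1-step} while carefully absorbing the effect of the spherical projection $\Pi$ into the $O(\normtwosm{\vx_t}^2)$ error terms. The starting point is the PGD update $\vtheta_{t+1} = \Pi(\vtheta_t - \eeta_t \nabla\Loss(\vtheta_t))$ from Lemma~\ref{lm:scale-invariant-on-sphere}. Two scale-invariance facts from Lemma~\ref{lm:scale-invariant-grad-hess} are doing the work throughout: $\dotpsm{\nabla\Loss(\vtheta_t)}{\vtheta_t}=0$, and $\mH(\vphi_t)\vphi_t = -\nabla\Loss(\vphi_t) = \vzero$ for $\vphi_t\in\manGa$ (so $\vphi_t$ lies in the null space of $\mH_t$, consistent with $\rank \mH_t = D-\DGa$).

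First I would Taylor expand $\nabla\Loss$ around $\vphi_t$ using $\Loss\in\contC^4$ and $\nabla\Loss(\vphi_t)=\vzero$, giving $\nabla\Loss(\vtheta_t) = \mH_t\vx_t + O(\normtwosm{\vx_t}^2)$. Since $\eeta_t = \mu_t + O(\eta\abssm{u_t}) = O(1)$ under $\alpha$-boundedness, the pre-projection vector is $\vy_t := \vtheta_t - \eeta_t\nabla\Loss(\vtheta_t) = \vtheta_t - \eeta_t \mH_t\vx_t + O(\normtwosm{\vx_t}^2)$. Because $\dotpsm{\nabla\Loss(\vtheta_t)}{\vtheta_t}=0$ and $\normtwosm{\vtheta_t}=1$, Pythagoras gives $\normtwosm{\vy_t}^2 = 1 + \eeta_t^2\normtwosm{\nabla\Loss(\vtheta_t)}^2 = 1 + O(\normtwosm{\vx_t}^2)$. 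Therefore $\Pi(\vy_t) = \vy_t\cdot(1 + O(\normtwosm{\vx_t}^2))^{-1/2} = \vy_t + O(\normtwosm{\vx_t}^2)$, so
\begin{equation*}
  \vtheta_{t+1} = \vtheta_t - \eeta_t\mH_t\vx_t + O(\normtwosm{\vx_t}^2)
  = \vphi_t + (\mI - \eeta_t\mH_t)\vx_t + O(\normtwosm{\vx_t}^2).
\end{equation*}
This shows $\vtheta_{t+1}$ stays $O(\eta)$-close to $\Zeog$, hence in $\Zeis$ for small enough $\eta$ by Lemma~\ref{lm:sph-working-zone}. Next, using $\contC^3$-smoothness of $\Phi$ on $\Zeos$ and $\Phi(\vphi_t)=\vphi_t$, together with the fact that $D\Phi_{\vphi_t}$ kills every direction on which $\mH_t$ is nonsingular, I would expand $\vphi_{t+1} = \Phi(\vtheta_{t+1}) = \vphi_t + D\Phi_{\vphi_t}[(\mI-\eeta_t\mH_t)\vx_t] + O(\normtwosm{\vx_t}^2)$. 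The key point is that $(\mI-\eeta_t\mH_t)\vx_t$ lies in $\NGa{\vphi_t}\oplus\mathrm{span}(\vphi_t)$ up to $O(\normtwosm{\vx_t}^2)$ terms (by Lemma~\ref{lm:mPHz-x-small-sph}), and $D\Phi_{\vphi_t}$ annihilates both, yielding \eqref{eq:ful-align-1-step-phi}. Subtracting \eqref{eq:ful-align-1-step-phi} from the expansion of $\vtheta_{t+1}$ gives \eqref{eq:ful-align-1-step-x}.

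For the remaining bounds: $h_{t+1} = \frac{1}{\eta}\dotpsm{\vvH_1(\vphi_{t+1})}{\vx_{t+1}}$, and I would plug in \eqref{eq:ful-align-1-step-x} together with $\vvH_1(\vphi_{t+1}) = \vvH_1(\vphi_t) + O(\normtwosm{\vx_t}^2)$ and $(\mI-\eeta_t\mH_t)\vvH_1(\vphi_t) = (1-\eeta_t\lamH_1(\vphi_t))\vvH_1(\vphi_t)$; since $\eeta_t\lamH_1(\vphi_t) = 2 + O(\alpha\eta)$ under $\alpha$-boundedness, this gives \eqref{eq:ful-align-1-step-h}. For \eqref{eq:ful-align-1-step-pnzt}, I use that $\mPHnzt(\vphi_t)$ commutes with $\mH_t$ and the corresponding eigenvalues of $\mI - \eeta_t\mH_t$ lie in $[-1+1.9\gamin,1-1.9\gamin]$ thanks to the uniform gap $\gamma(\vphi_t)\ge\gamin$ from Lemma~\ref{lm:sph-working-zone}. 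Finally, \eqref{eq:ful-align-1-step-u} follows from the quasi-RMSprop recursion in Definition~\ref{def:qrms-sche}, $\beta = 1-\Cb\eta^2$, and the bound $\barg_t = O(\alpha)$ derived from $\normtwosm{\nabla\Loss(\vtheta_t)} = O(\normtwosm{\vx_t}) = O(\alpha\eta)$; the computation is identical to the full-space case because the update for $\tilv_t$ does not see the geometry of the sphere.

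The main obstacle I expect is cleanly controlling the normalization map $\Pi$ at the level of the derived quantities $\vphi_{t+1}, h_{t+1}, \mPHnzt(\vphi_{t+1})\vx_{t+1}$, rather than just at the level of $\vtheta_{t+1}$. The saving grace is that $\normtwosm{\vy_t}^2 = 1 + O(\normtwosm{\vx_t}^2)$ because the gradient is always tangent to the sphere under scale-invariance, so the multiplicative correction introduced by $\Pi$ is of the same order as the Taylor remainder we are already tolerating. After this observation, every estimate reduces to the full-space argument plus an additional $O(\normtwosm{\vx_t}^2)$ term, and the bookkeeping for the eigenvector-direction and non-top-eigenvector-direction components goes through unchanged.
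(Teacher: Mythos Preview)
Your approach is correct and matches the paper's own proof, which simply observes that the full-space argument for Lemma~\ref{lm:ful-align-1-step} only used the first-order approximation of the update rule with $O(r_t^2\eta^2)$ error, and Lemma~\ref{lm:pgd-1-step} shows PGD enjoys the same first-order approximation (precisely because the projection contributes $O(\normtwosm{\vx_t}^2)$, as you correctly derive from Pythagoras and $\dotpsm{\nabla\Loss(\vtheta_t)}{\vtheta_t}=0$).

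One small imprecision to tighten: your claim that $D\Phi_{\vphi_t}$ annihilates $\mathrm{span}(\vphi_t)$ is false --- since gradient flow preserves the norm under scale-invariance, $\Phi$ is $1$-homogeneous and hence $\partial\Phi_{\vphi_t}[\vphi_t] = \vphi_t \neq \vzero$. This does not damage your conclusion, because the radial component $\dotpsm{\vphi_t}{\vx_t} = -\tfrac{1}{2}\normtwosm{\vx_t}^2$ is already second-order; the cleanest fix is to argue, exactly as in the full-space proof, that $\partial\Phi_{\vphi_t}[(\mI - \eeta_t\mH_t)\vx_t] = \partial\Phi_{\vphi_t}[\vx_t]$ (since $\mH_t\vx_t$ lies in the image of $\mH_t$, which $\partial\Phi_{\vphi_t}$ does kill) and then invoke Lemma~\ref{lm:phiphix-sph} directly.
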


\begin{theorem} \label{thm:sph-alignment-main}
    The same statement as \Cref{thm:ful-alignment-main} holds for the spherical case.
\end{theorem}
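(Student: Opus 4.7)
Since \Cref{lm:sph-align-1-step} reproduces the one-step estimates \eqref{eq:ful-align-1-step-phi}--\eqref{eq:ful-align-1-step-u} verbatim, and \Cref{lm:sph-working-zone} furnishes the same structural guarantees for the spherical working zone (compactness of $\cl(\Zeo) \cap \manGa$, local $\muPL$-PL, $\contC^3$-smoothness of $\Phi$ on $\Zeos$, uniform eigengap $\gamin$) as its full-space counterpart, the proof proceeds by a direct induction on $t$ from $0$ to $T_1$, transplanting the argument for \Cref{thm:ful-alignment-main} step-for-step with the same constants. The induction hypothesis is that at step $t$ the state is $O(\alphamax)$-bounded, $O(\alphamax)$-deviated, lies in $\Zeis$ (so $\vphi_t = \Phi(\vtheta_t) \in \Zeog$), and satisfies $\normtwosm{\vphi_t - \vzeta_0} \le C(t) \alphamax \eta$ for a slowly growing constant $C(t)$.

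For the boundedness part, \eqref{eq:ful-align-1-step-u} yields $\abssm{u_{t+1} - u_t} = O(\alphamax^2 \eta)$, summing to $O(\alphamax^2 \eta T_1)$ over $T_1 = O(\log(1/\eta) + \alpha_0\sqrt{\log(1/\delta)})$ steps, which is absorbed into $O(\alphamax)$ once $\eta$ is small; for the $\vx$-norm, \eqref{eq:ful-align-1-step-x} combined with $\abssm{1 - \eeta_t \lamH_1(\vphi_t)} = O(\eta)$ near EoS gives $\normtwosm{\vx_{t+1}} \le (1+O(\eta))\normtwosm{\vx_t} + O(\alphamax^2 \eta^2)$, keeping $\normtwosm{\vx_t} = O(\alphamax \eta)$ throughout. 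Deviation is preserved because \eqref{eq:ful-align-1-step-h} forces period-two oscillation $h_{t+1} \approx -h_t$ with only $O(\alphamax^2 \eta)$ drift, so $\abssm{h_t}$ stays close to its initial magnitude $\ge \exp(-O(\alphamax^2))$; the displacement of $\vphi_t$ is $O(\alphamax^2 \eta^2)$ per step by \eqref{eq:ful-align-1-step-phi}, accumulating to $O(\alphamax^2 \eta^2 \log(1/\eta)) = o(\alphamax \eta)$, which preserves the distance invariant to $\vzeta_0$.

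The alignment claim is the core computation. Iterating \eqref{eq:ful-align-1-step-pnzt} yields $\normtwosm{\mPHnzt(\vphi_{t+1}) \vx_{t+1}} \le (1-1.9\gamin) \normtwosm{\mPHnzt(\vphi_t) \vx_t} + O(\alphamax^2 \eta^2)$; after $T_1$ steps the initial contribution $O(h_0 \eta / \delta)$ contracts to $O((1-1.9\gamin)^{T_1} \alphamax \eta / \delta)$, while the additive errors form a geometric sum bounded by $O(\alphamax^2 \eta^2 / \gamin)$. Choosing $T_1$ as in the statement makes both terms $O(h_{T_1}^2 \eta^2)$, i.e., $O(\ahte) \cdot h_{T_1}\eta$, confirming the at-most-$O(\ahte)$-misaligned target.

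The main technical subtlety is that the hidden constants must not compound unfavorably across the $T_1$ iterations, so every one of the four invariants remains strictly within its budget all the way to step $T_1$. However, because \Cref{lm:sph-working-zone} supplies the same $\gamin, \muPL, \epsilon_0, \epsilon_1$ structure as the full-space case, the exact constant choices from the proof of \Cref{thm:ful-alignment-main} carry over, and no new geometric argument intrinsic to $\sphS^{D-1}$ is required beyond what is already encapsulated in \Cref{lm:sph-working-zone,lm:sph-align-1-step}.
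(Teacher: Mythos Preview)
Your high-level strategy matches the paper exactly: since \Cref{lm:sph-align-1-step} reproduces the one-step estimates \eqref{eq:ful-align-1-step-phi}--\eqref{eq:ful-align-1-step-u} and \Cref{lm:sph-working-zone} supplies the same working-zone structure, the full-space proof of \Cref{thm:ful-alignment-main} transplants unchanged; the paper's proof says precisely this in one sentence. However, your reconstruction of that full-space argument has a real gap in the misalignment step. You bound the additive term in \eqref{eq:ful-align-1-step-pnzt} uniformly by $O(\alphamax^2\eta^2)$ via $\normtwosm{\vx_t}\le\alphamax\eta$, so the geometric sum is $O(\alphamax^2\eta^2/\gamin)$, and then assert this equals $O(h_{T_1}^2\eta^2)$. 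But $|h_{T_1}|\approx|h_0|$ can be as small as $\exp(-O(\alphamax^2))$, far below $\alphamax$; the implication fails. The same issue undermines your deviation claim: the per-step drift in $h_t$ from \eqref{eq:ful-align-1-step-h} is $O(\alphamax\normtwosm{\vx_t})$, and with only the uniform bound $\normtwosm{\vx_t}=O(\alphamax\eta)$ this accumulates to $O(\alphamax^2\eta T_1)$ over $T_1$ steps, which can swamp $|h_0|$.

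The paper's actual proof of \Cref{thm:ful-alignment-main} fixes this with a two-phase induction. In Phase~1 (steps $0$ to $T_0\approx\gamin^{-1}\log(1/\delta)$) it tracks $\normtwosm{\vx_t}$ itself contracting at rate $(1-1.9\gamin)$, which is valid because the $\mPHnzt$-component initially dominates $\vx_t$ (the state is $O(1/\delta)$-misaligned). This contraction keeps the total drift in $h_t$ at $O(\alphamax\normtwosm{\vx_0})=O(\alphamax|h_0|\eta/\delta)$, which is $\le|h_0|/4$ precisely by choosing $\delta=C_0\alpha_0\eta\sqrt{\log(1/\eta)}$ with $C_0$ large. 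By step $T_0$ one has $\normtwosm{\vx_t}=O(|h_0|\eta)$. Only then, in Phase~2 (steps $T_0$ to $T_1$), does the additive error in \eqref{eq:ful-align-1-step-pnzt} become $O(h_0^2\eta^2)$, yielding the required final misalignment $O(h_0^2\eta^2)=O(h_{T_1}^2\eta^2)$. Your single-phase induction with the crude $\alphamax\eta$ bound on $\normtwosm{\vx_t}$ cannot recover this; you need to exploit the initial contraction of $\normtwosm{\vx_t}$ explicitly.
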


\subsubsection{Drifting Phase}

\begin{lemma}\label{lm:sph-drift-2-step}
  For small enough base learning rate $\eta$,
  at any step $t$,
  if 
  for some $1 \le \alpha \le \eta^{-o(1)}$,
  the state $(\vtheta_t, \tilv_t)$ is $\alpha$-bounded
  and at most $O(\ahte)$-misaligned,
  then $\vtheta_{t+2} \in \Zeis$, and
  \eqref{eq:ful-drift-2-step-h} to \eqref{eq:ful-drift-2-step-phi}
  in the full space case
  continue to hold in the spherical case.
\end{lemma}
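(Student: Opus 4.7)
The plan is to follow the same skeleton as the proof of Lemma \ref{lm:ful-drift-2-step} for the full-space case, since Lemma \ref{lm:sph-align-1-step} already establishes that one-step recurrences in the spherical setting agree with the full-space ones up to the stated error order. The work then reduces to (i) chaining two consecutive one-step expansions and (ii) extracting the higher-order drift contribution that pushes $\vphi_t$ along the manifold. Throughout I would use the relation $\mu_t^2 \tilv_t = 1 + \eta u_t$, which via $\eeta_t = \tilv_t^{-1/2}$ gives $\eeta_t = \mu_t(1 - \eta u_t/2 + O(\eta^2))$, together with the scale-invariance identity $\mH(\vphi_t)\vphi_t = -\nabla\Loss(\vphi_t) = \vzero$ from Lemma \ref{lm:scale-invariant-grad-hess}.

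The $h$-update \eqref{eq:ful-drift-2-step-h} and the contraction \eqref{eq:ful-drift-2-step-x-w} follow by iterating \eqref{eq:ful-align-1-step-x} twice. Under the $O(\ahte)$-misalignment hypothesis and Lemma \ref{lm:mPHz-x-small-sph}, $\vx_t = h_t\eta\vvH_1(\vphi_t) + O(\normtwosm{\vx_t}^2)$ plus a non-top component of size $O(\aht\eta^2)$. Since $\vvH_1(\vphi_t)$ is an eigenvector of $\mI - \eeta_t\mH_t$ with eigenvalue $1 - \eeta_t\lamH_1(\vphi_t) = -1 + \eta u_t + O(\eta^2)$, squaring yields $(1 - 2\eta u_t) h_t$ with an $O(\alpha^2\aht\eta^2)$ remainder. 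The non-top part contracts by $(1-1.9\gamin)^2$ via the eigenvalue gap on $\Zeog$, and the small change $\vphi_{t+1}-\vphi_t = O(\normtwosm{\vx_t}^2)$ does not spoil the geometry. Equation \eqref{eq:ful-drift-2-step-u} is obtained by applying the quasi-RMSprop moment-estimate transition twice, substituting $\bar{g}_s^2 \approx h_s^2\lamH_1(\vphi_s)^2/\eta^2$, and using $\beta = 1 - \Cb\eta^2 + O(\eta^4)$.

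The main obstacle is proving the drift equation \eqref{eq:ful-drift-2-step-phi}, where second-order information in $\vx_t$ is needed. I would expand $\nabla\Loss(\vtheta_t) = \mH_t\vx_t + \tfrac{1}{2}\partial^3\Loss_{\vphi_t}[\vx_t,\vx_t] + O(\normtwosm{\vx_t}^3)$ and track how the quadratic term evolves under $\Phi$. With $\vx_t \approx h_t\eta\vvH_1(\vphi_t)$, this quadratic becomes $\tfrac{1}{2}h_t^2\eta^2\,\partial^3\Loss_{\vphi_t}[\vvH_1(\vphi_t),\vvH_1(\vphi_t)]$, whose component in $\TGa{\vphi_t}$ equals $\tfrac{1}{2}h_t^2\eta^2\,\gradGa\lamH_1(\vphi_t)$ by the standard eigenvalue-perturbation identity $\partial_{\vphi}\lamH_1(\vphi)[\vv] = \vvH_1(\vphi)^\top\bigl(\partial_{\vphi}\mH(\vphi)[\vv]\bigr)\vvH_1(\vphi)$. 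Because $\Phi$ is $\contC^3$ on $\Zeos$, this tangential second-order correction propagates through $\Phi$ to move $\vphi_{t+1}$ relative to $\vphi_t$. The odd-in-$h_t$ contributions cancel after one full oscillation (they flip sign when $h_t$ flips sign between steps), while the $h_t^2$ term accumulates, yielding the net drift $-2\eta^2 h_t^2\gradGa\log\lamH_1(\vphi_t)$ after dividing by $\lamH_1(\vphi_t)$ that appears in $\mu_t$.

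The specifically spherical difficulty is that $\Pi$ introduces a normalization factor $(1 + \eeta_t^2\normtwosm{\nabla\Loss(\vtheta_t)}^2)^{-1/2} = 1 + O(\eta^2 h_t^2)$ at each step, because $\nabla\Loss(\vtheta_t)\perp\vtheta_t$ by scale-invariance when $\vtheta_t\in\sphS^{D-1}$. Since this factor multiplies every update uniformly, its effect on $\vx_{t+1}$ is $O(\eta^2 h_t^2 \normtwosm{\vtheta_t}) = O(\eta^2 h_t^2)$, which is absorbable into the error terms in \eqref{eq:ful-drift-2-step-h}, and its effect on $\vphi_{t+2}$ after projecting with $\Phi$ contributes only $O(\alpha h_t^2\eta^3)$, matching the claimed bound. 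I would verify these absorptions carefully, since they are the only place where the spherical case genuinely differs from the full-space case, and conclude that the full-space drift analysis transfers verbatim to establish \eqref{eq:ful-drift-2-step-h}--\eqref{eq:ful-drift-2-step-phi}.
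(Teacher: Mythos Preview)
Your overall approach mirrors the paper's: reduce to the full-space argument and track the extra contribution from the projection $\Pi$. However, your treatment of the normalization correction is the one place where the argument is incomplete, and the justification you give is not the right one. The correction to $\vtheta_{t+1}$ from $\Pi$ is not small enough to be absorbed by size alone: to leading order it equals $-\tfrac{2}{\lamH_1(\vphi_t)^2}\normtwosm{\mH_t\vx_t}^2\,\vphi_t \approx -2h_t^2\eta^2\,\vphi_t$ (see Lemma \ref{lm:pgd-1-step}), which is $O(h_t^2\eta^2)$, not $O(\alpha h_t^2\eta^3)$. What makes it harmless is its \emph{direction}: since $\mH_t\vphi_t=\vzero$ (Lemma \ref{lm:scale-invariant-grad-hess}), $\vphi_t$ is orthogonal to $\vvH_1(\vphi_t)$, lies in the null space (so $\mPHnzt(\vphi_t)\vphi_t=\vzero$), and is orthogonal to $\TGa{\vphi_t}\subset\TGa[\sphS^{D-1}]{\vphi_t}$. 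These three orthogonalities are what kill the radial correction in \eqref{eq:ful-drift-2-step-h}, \eqref{eq:ful-drift-2-step-x-w}, and \eqref{eq:ful-drift-2-step-phi} respectively; you need to invoke them explicitly rather than appeal to magnitude.

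The paper organizes this via an explicit two-step second-order term $\vpsi_{\vphi}$ (Lemma \ref{lm:pgd-2-step}) that in the spherical case carries an extra $\tfrac{2}{\lamH_1^2}(\normtwosm{\mH\hatvx}^2+\normtwosm{\mH\mU\hatvx}^2)\,\vphi$ piece, and then proves in Lemma \ref{lm:sph-psi-formula} that $\mPHz(\vphi)\vpsi_{\vphi}(\vvH_1)=2\gradGa\log\lamH_1(\vphi)$ through a cancellation: the $+4\vphi$ from normalization exactly offsets the $-4\vphi$ arising from $\dotpsm{\nabla\log\lamH_1(\vphi)}{\vphi}=-2$ (scale-invariance of $\lamH_1$). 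Either your orthogonality route or the paper's cancellation route works, but the mechanism must be stated, not deferred. Two minor slips: $\bar g_s^2\approx h_s^2\lamH_1(\vphi_s)^2$ with no $1/\eta^2$, and under $O(\ahte)$-misalignment the non-top component of $\vx_t$ has size $O(h_t^2\eta^2)$, not $O(\aht\eta^2)$.
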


\subsection{RMS-drift Process: Analysis} \label{sec:outline-rmsdrift-analysis}

Now we outline how to obtain the final flow approximation \eqref{eq:zeta-general}
from the RMS-drift process.
We say that an RMS-drift process $S_0, \dots, S_{2M}$ is \textit{in the working zone}
if $\vphi_t \in \Zeog$
for all even numbers $0 \le t \le 2M$.
We focus on RMS-drift processes in the working zone,
and later we will show that the RMS-drift processes of interest
are indeed in the working zone.

First, we define a potential function that resembles the total energy (or Hamiltonian)
in physics.
\begin{definition}[Energy]
For a drift state $S = (h, u, \vphi)$, we define the energy $E(S)$ as follows: 
\[
  E(S) := \frac{1}{2} u^2 + (2\Cb + \normtwosm{\gradGa \log \lamH_1(\vphi)}^2) h^2 + \Cb \log \frac{1}{\abssm{h}}.
\]
\end{definition}
If the energy is bounded by $\alpha^2$ at some step $t$,
then it is easy to see that the state at step $t$
is $O(\alpha)$-bounded and $O(\alpha)$-deviated.

The first key lemma is the conservation of energy in RMS-drift process,
which shows that the energy is preserved for $O(1/\eta^2)$ steps.
The proof is deferred to \Cref{sec:rmsdrift-proof-energy}.
\begin{theorem} \label{thm:rmsdrift-ec}
  For an $O(1)$-RMSdrift process $S_0, \dots, S_{2M}$ in the working zone,
  if $E(S_0) \le \alpha^2$ for some parameter $1 \le \alpha \le \eta^{-o(1)}$
  and $M = O(1/\eta^2)$,
  then $E(S_t) = O(\alpha^2)$ for all even numbers $0 \le t \le 2M$.
\end{theorem}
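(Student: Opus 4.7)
The plan is to exploit the Hamiltonian structure already identified in the continuous limit \eqref{eq:qrms-cont}: the function $E$ is, after the change of variables $(\log\abssm{h},-u)$, the conserved energy of a 1D Hamiltonian system with fixed $\vphi$ whose orbits are the closed curves in \Cref{fig:qrms-vis}. I will show that one discrete RMS-drift transition changes $E$ by only $O(\eta^2)$ times a mild polynomial in the a priori bound $\alpha$, and then sum these errors over $M=O(\eta^{-2})$ two-steps, closing via a bootstrap.

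For a single two-step I would Taylor expand $E(S_{t+2})-E(S_t)$ around $S_t$, separating each of $\Delta h,\Delta u,\Delta\vphi$ into its leading ``Hamiltonian'' part (prescribed by the RMS-drift definition) plus an error of size $O(\alpha^2\abssm{h_t}\eta^2)$, $O(\alpha(1+h_t^2)\eta^2)$, and $O(\alpha h_t^2\eta^3)$ respectively. Writing $K_t^2:=2\Cb+\normtwosm{\gradGa\log\lamH_1(\vphi_t)}^2$ and using $\partial_u E=u_t$, $\partial_h E=2K_t^2 h_t-\Cb/h_t$, a direct computation gives the discrete analogue of Hamiltonian conservation:
\[
u_t\bigl(4\eta h_t^2 K_t^2-2\eta\Cb\bigr)+\bigl(2K_t^2 h_t-\Cb/h_t\bigr)(-2\eta u_t h_t)=0.
\]
What remains is (i) the RMS-drift error terms paired against $\nabla E$, (ii) the second-order Taylor remainder in $(h,u)$, and (iii) the slow $\vphi$-drift contribution $\partial_\vphi E\cdot\Delta\vphi$, where $\partial_\vphi E=h_t^2\cdot\partial_\vphi(K^2)=O(h_t^2)$ on the compact working zone. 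Each of these is manifestly $O(\eta^2)$ times a polynomial in $\alpha$, giving $\abssm{E(S_{t+2})-E(S_t)}=O(\alpha^4\eta^2)$ whenever $E(S_t)=O(\alpha^2)$, since such a bound implies $S_t$ is $O(\alpha)$-bounded and $\abssm{h_t}\ge\exp(-O(\alpha^2))$.

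A subtle point is that $\partial_h^2 E$ contains the singular factor $\Cb/h_t^2$, which could blow up as $h_t\to 0$; this is defused by the observation that the leading $(\Delta h)^2=O(\eta^2 u_t^2 h_t^2)$ carries a compensating $h_t^2$, so $(\Cb/h_t^2)(\Delta h)^2=O(\alpha^2\eta^2)$. Once the per-two-step estimate is in hand, I would run the bootstrap: suppose inductively that $E(S_{t'})\le C\alpha^2$ for all prior even $t'$; then the accumulated drift over $M/2=O(\eta^{-2})$ two-steps is $O(\alpha^4\cdot M\eta^2)=O(\alpha^4)$, which together with the initial budget $E(S_0)\le\alpha^2$ stays inside $C\alpha^2$ once $C$ is chosen large enough relative to the hidden constants and we use $\alpha\le\eta^{-o(1)}$ to keep $\alpha^2$ subdominant. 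The main obstacle I anticipate is the quantitative closure of this bootstrap: the constants in the per-step estimate must be tracked carefully, and one must simultaneously verify that $\vphi_t$ never escapes $\Zeog$ over $O(\eta^{-2})$ two-steps; the latter follows because $\normtwosm{\Delta\vphi}=O(\alpha^2\eta^2)$ accumulates to only $O(\alpha^2)$, well within the working-zone margin.
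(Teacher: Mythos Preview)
Your per-transition analysis is essentially correct and matches the paper's Lemma~\ref{lm:e-cons}: the Hamiltonian cancellation gives $E(S_{t+2})-E(S_t)=O(\alpha^2(1+h_t^2)\eta^2)$, which under the a priori bound $h_t^2=O(\alpha^2)$ is $O(\alpha^4\eta^2)$. The gap is in closing the bootstrap. Summing $O(\alpha^4\eta^2)$ over $M=O(\eta^{-2})$ transitions yields $O(\alpha^4)$, and your claim that this ``stays inside $C\alpha^2$ once $C$ is chosen large enough'' is false: the inequality $\alpha^2+c\,C^2\alpha^4\le C\alpha^2$ (which is what self-consistency of the induction demands, since under $E\le C\alpha^2$ the state is only $O(\sqrt{C}\alpha)$-bounded) has no solution $C$ once $\alpha$ exceeds an absolute constant. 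Invoking $\alpha\le\eta^{-o(1)}$ does not help---it only says $\alpha^4=\eta^{-o(1)}$, which is not the same as $\alpha^4=O(\alpha^2)$.

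The paper closes this gap with an idea your proposal lacks: rather than bounding each $h_t^2$ by $O(\alpha^2)$, it controls the \emph{sum} $\sum_m h_{2m}^2$ directly by telescoping the $u$-update. Since $u_{t+2}-u_t=4\eta K_t^2 h_t^2-2\eta\Cb+O(\alpha\eta^2)$ and $u_{2M}-u_0=O(\alpha)$, one gets $\sum_m h_{2m}^2=\frac{\Cb}{2K_0^2}M+O(\alpha^2\eta^2 M^2+\alpha/\eta)$ (Lemma~\ref{lm:sum-h2}), which is $O(M)$ rather than the naive $O(\alpha^2 M)$, at least on windows where the correction terms are subdominant. Plugging this into $\sum_m O(\alpha^2(1+h_{2m}^2)\eta^2)$ gives an energy drift of $O(\alpha^2\eta^2 M)$ plus corrections. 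The corrections are too large at $M=O(\eta^{-2})$ directly, so the paper first proves the result on segments of length $O(\eta^{-1.5})$ where the drift per segment is $O(\alpha^2\eta^{0.5})$---a multiplicative factor $1+O(\eta^{0.5})$---and then chains $O(\eta^{-0.5})$ segments to get $(1+O(\eta^{0.5}))^{O(\eta^{-0.5})}=O(1)$. This two-scale argument (telescoping to bound $\sum h^2$, then segmenting) is the missing idea.
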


Next, we show that as long as the states are $\eta^{-o(1)}$-bounded,
the RMS-drift process tracks the sharpness-reduction flow nicely.
We defer the proof to \Cref{sec:rmsdrift-flow}.
\begin{theorem} \label{thm:rmsdrift-flow}
  Let $\vzeta: [0, T] \mapsto \Zeog$
  be a sharpness-reduction flow defined in \eqref{eq:zeta-general}.
  For an $O(1)$-RMS-drift process $S_0, \dots, S_{2M}$ in the working zone, where $M := \lfloor \frac{T}{2\eta^2} \rfloor$,
  if $\normtwosm{\vphi_0 - \vzeta(0)} \le O(\alpha^2 \eta^{1/2})$,
  and $S_t$ is $O(\alpha)$-bounded for all even numbers $0 \le t \le 2M$,
  then $\normtwosm{\vphi_t - \vzeta(t\eta^2)} \le O(\alpha^2 \eta^{1/2})$
  for all even numbers $0 \le t \le 2M$.
\end{theorem}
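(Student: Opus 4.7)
The plan is to compare the discrete iterates $\vphi_t$ to the flow values $\vzeta(t\eta^2)$ by exploiting a virial-type identity hidden in the RMS-drift transition rules. The main difficulty is that the $\vphi$-update $\vphi_{t+2} - \vphi_t = -2\eta^2 h_t^2 \gradGa \log \lamH_1(\vphi_t) + O(\alpha h_t^2 \eta^3)$ is driven by the rapidly oscillating factor $h_t^2$, whereas the ODE~\eqref{eq:zeta-general} is driven by its period-average $\Cb/(2 K^2(\vphi))$ with $K^2(\vphi) := 2\Cb + \normtwosm{\gradGa \log \lamH_1(\vphi)}^2$. Rather than establishing this averaging by a block decomposition, I would extract it from the $u$-update: solving $u_{t+2} - u_t = 4\eta h_t^2 K^2(\vphi_t) - 2\eta\Cb + O(\alpha(1+h_t^2)\eta^2)$ for $2\eta^2 h_t^2$ and substituting (using $K^2 \ge 2\Cb > 0$ on the compact set $\cl(\Zeog)$) yields
\begin{equation*}
\vphi_{t+2} - \vphi_t = 2\eta^2 F(\vphi_t) - \eta \vc_t(u_{t+2} - u_t) + O\!\left(\alpha(1+h_t^2)\eta^3\right),
\end{equation*}
where $F(\vphi) := -\Cb \gradGa \log \lamH_1(\vphi)/(2 K^2(\vphi))$ is the RHS of the flow~\eqref{eq:zeta-general} and $\vc_t := \gradGa \log \lamH_1(\vphi_t)/(2 K^2(\vphi_t))$.

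Setting $\vdelta_t := \vphi_t - \vzeta(t\eta^2)$ and using the Taylor bound $\vzeta((t+2)\eta^2) - \vzeta(t\eta^2) = 2\eta^2 F(\vzeta(t\eta^2)) + O(\eta^4)$, I obtain the error recursion
\begin{equation*}
\vdelta_{t+2} = \vdelta_t + 2\eta^2\bigl[F(\vphi_t) - F(\vzeta(t\eta^2))\bigr] - \eta \vc_t(u_{t+2} - u_t) + O\!\left(\alpha(1+h_t^2)\eta^3 + \eta^4\right).
\end{equation*}
Summing over the $M$ pairs, the three pieces are handled separately. The Lipschitz term is $O(\eta^2 \normtwosm{\vdelta_{2s}})$ (with Lipschitz constant of $F$ finite on compact $\cl(\Zeog)$), and it accumulates through discrete Gr\"{o}nwall into a multiplicative factor $\exp(O(\eta^2 M)) = \exp(O(T)) = O(1)$. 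The explicit error terms, using $h_t^2 = O(\alpha^2)$ and $M = O(1/\eta^2)$, sum to $O(\alpha^3 \eta + \eta^2)$. For the $u$-difference sum, summation by parts rewrites $\eta\sum_s \vc_{2s}(u_{2s+2} - u_{2s})$ as $\eta \vc_{2(M-1)} u_{2M} - \eta \vc_0 u_0 - \eta \sum_s (\vc_{2(s+1)} - \vc_{2s}) u_{2(s+1)}$; combining the per-pair displacement bound $\normtwosm{\vphi_{2s+2} - \vphi_{2s}} = O(\alpha^2 \eta^2)$ with $\contC^1$-smoothness of $\vc$ on $\cl(\Zeog)$ gives $\normtwosm{\vc_{2(s+1)} - \vc_{2s}} = O(\alpha^2 \eta^2)$, so each of the three pieces is $O(\alpha^3 \eta)$.

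Combining everything, $\normtwosm{\vdelta_t} \le O(1)\cdot(\normtwosm{\vdelta_0} + \alpha^3 \eta + \eta^2)$ uniformly for even $t \in [0, 2M]$, and applying the same argument stopped at any intermediate even index yields the bound at every step. Under the hypotheses $\normtwosm{\vdelta_0} = O(\alpha^2 \eta^{1/2})$ and $\alpha \le \eta^{-o(1)}$, each error term is $o(\alpha^2 \eta^{1/2})$ because $\alpha^3 \eta = \alpha\eta^{1/2}\cdot \alpha^2 \eta^{1/2}$ with $\alpha \eta^{1/2} = \eta^{1/2-o(1)} \to 0$, delivering the claimed $O(\alpha^2 \eta^{1/2})$ bound. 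The main obstacle I expect is the bookkeeping for the summation-by-parts step: I need uniform $\contC^1$ control of $\vc$ on $\cl(\Zeog)$, which follows from compactness together with $K^2 \ge 2\Cb > 0$ and $\gradGa \log \lamH_1 \in \contC^2$ (a consequence of $\Loss \in \contC^4$ and the uniqueness of the top eigenvalue in \Cref{ass:topeigen-ful}/\Cref{ass:topeigen}), plus the per-pair displacement bound that reads off directly from the RMS-drift transition. A secondary technical point is verifying that the Gr\"{o}nwall constant depends only on the working-zone data, independent of $\eta$ and $\alpha$.
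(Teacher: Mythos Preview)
Your approach is correct and takes a genuinely different route from the paper. Both proofs exploit the same virial identity---that the $u$-recursion encodes a running average of $h_t^2$---but execute it differently. The paper (Lemma~\ref{lm:rmsdrift-flow-15} plus the final segment decomposition) splits the horizon into $O(\eta^{-1/2})$ blocks of length $O(\eta^{-1.5})$, freezes $K^2$ and $\gradGa R$ at the block's start point, invokes Lemma~\ref{lm:sum-h2} (itself the telescoped $u$-equation) to approximate $\sum h_{2m}^2$ by $\frac{\Cb}{2K_0^2}N$, and then assembles the blocks by induction. Your substitution $2\eta^2 h_t^2 = \frac{\eta(u_{t+2}-u_t)}{2K_t^2} + \frac{\eta^2\Cb}{K_t^2} + O(\alpha(1+h_t^2)\eta^3)$ is the same identity applied pointwise rather than in aggregate; it lets you keep the time-varying $F(\vphi_t)$ in the main term and shove the averaging into a single Abel-summed correction $\eta\sum_s \vc_{2s}(u_{2s+2}-u_{2s})$ that is uniformly $O(\alpha^3\eta)$. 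You then run Gr\"onwall once over the full horizon, avoiding the block machinery entirely and in fact obtaining a smaller dynamical error $O(\alpha^3\eta)$ than the paper's $O(\alpha^2\eta^{1/2})$.

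One small correction: from $\Loss\in\contC^4$ and simplicity of the top eigenvalue you get $\lamH_1\in\contC^2$ and hence $\gradGa\log\lamH_1\in\contC^1$, not $\contC^2$ (the tangent projection on a $\contC^2$ manifold is only $\contC^1$). This is harmless since you only use Lipschitzness of $\vc$ and $F$ on the compact set $\cl(\Zeog)$, which $\contC^1$ plus $K^2\ge 2\Cb>0$ already provides.
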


\subsection{Finalizing Proofs}

\begin{proof}[Proof for \Cref{thm:main-ful}]
First, we use \Cref{lm:ful-good-init}
to ensure a good initialization.
Then we apply \Cref{thm:ful-alignment-main}
to show that 
the state at $t = T_1$
is at most $O(\abssm{h_t}\eta)$-misaligned,
$O(\alphamax)$-bounded,
$O(\alphamax)$-deviated,
and satisfies $\normtwosm{\vphi_t - \vzeta_0} \le O(\alphamax \eta)$.
By smoothness of $\vzeta$,
we also have $\normtwosm{\vphi_t - \vzeta(t \eta^2)}$.
Then we
consider the even-indexed and odd-indexed steps separately
and do an induction for each of them to show that the later dynamics
(1)~has bounded energy $O(\alphamax^2)$ (\Cref{thm:rmsdrift-ec});
(2)~stays at most $O(\abssm{h_t}\eta)$-misaligned (\Cref{lm:ful-drift-2-step});
(3)~follows the flow (\Cref{thm:rmsdrift-flow}).
The proof is done when the induction proceeds to $T / \eta^2$.
\end{proof}

\begin{proof}[Proof for \Cref{thm:main-sph}]
Same as above but we invoke the spherical version of lemmas and theorems.  
\end{proof}

\section{Lemmas for Working Zones}
\label{sec:working-zone-proof}

\subsection{Construction of Working Zones} \label{sec:working-zone-proof-construction}

\begin{proof}[Proof for \Cref{lm:ful-working-zone}]
By \Cref{ass:man-ful} and \Cref{thm:rank-to-pl-ful},
there exists $\epsilon_1$ such that Items 1 and 2 hold.
By \Cref{thm:pl-converge-near-ful},
we can choose $\epsilon_0' > 0$ small enough so that
any gradient flow starting in $\gZ^{\epsilon_0'}$ 
moves at most $O(\epsilon_0')$ in distance
and converges in $\Zeo$.
We can further combine this with 
the results by \citet{falconer1983differentiation}
to show that $\Phi$ is $\contC^{3}$-smooth in a neighborhood of $\gZ$.
Then we can choose $\epsilon_0 > 0$ small enough
so that $\epsilon_0 < \epsilon_0'$ and $\Zei$ is a subset of that neighborhood,
which ensures Items 3 and 4.
Finally, Item 5 is directly implied by the compactness of $\cl(\Zeo) \cap \manGa$.
\end{proof}

\begin{proof}[Proof for \Cref{lm:sph-working-zone}]
We can use a similar argument as above but with
\Cref{thm:rank-to-pl-sph,thm:pl-converge-near-sph}.
\end{proof}

\subsection{Gradient Flow Projection}

In the working zone, $\Phi(\vtheta)$ is well-defined and $\contC^3$-smooth.
Now we highlight some useful properties.

\subsubsection{Full Space Optimization}

The following two lemmas are from \citet{li2022what}.
\begin{lemma} \label{lm:Phi-proj-ful}
  For $\vphi \in \Zeig$, $\partial \Phi_{\vphi}[\vx] = \mPHz(\vphi) \vx$,
  which also
  equals to the projection of $\vx$ onto the tangent space $\TGa{\vphi}$ at $\vphi$.
\end{lemma}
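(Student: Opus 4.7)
The plan has two steps: first identify $\mPHz(\vphi)$ with the orthogonal projection onto $\TGa{\vphi}$, then compute $\partial \Phi_{\vphi}[\vx]$ by splitting $\vx$ into its tangent and normal components and handling each separately. For the identification, I would use that every $\vphi \in \manGa$ is a local minimizer, so along any smooth curve $\gamma:(-\epsilon,\epsilon) \to \manGa$ with $\gamma(0) = \vphi$, differentiating $\nabla \Loss(\gamma(s)) = \vzero$ at $s=0$ yields $\mH(\vphi)\gamma'(0) = \vzero$. Hence $\TGa{\vphi} \subseteq \ker \mH(\vphi)$, and since $\dim \TGa{\vphi} = \DGa = D - \rank(\mH(\vphi)) = \dim \ker \mH(\vphi)$ by \Cref{ass:man-ful}, the inclusion is an equality. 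Therefore $\mPHz(\vphi)$ is exactly the orthogonal projection onto $\TGa{\vphi}$.

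Since $\Phi \in \contC^3$ on $\Zei$ by \Cref{lm:ful-working-zone}, the differential $\partial \Phi_\vphi[\vx]$ exists and is linear in $\vx$, so it suffices to verify the formula on a basis adapted to $\TGa{\vphi} \oplus \TGa{\vphi}^\perp = \ker \mH(\vphi) \oplus \Image(\mH(\vphi))$. For $\vu \in \TGa{\vphi}$, the path $s \mapsto \vphi + s\vu$ is tangent to $\manGa$, so within a neighborhood $\vphi + s\vu = \sigma(s) + o(s)$ for some $\contC^1$-curve $\sigma$ on $\manGa$. Because $\Phi$ restricted to $\manGa$ is the identity (gradient flow starting at a minimizer stays put), we obtain $\Phi(\vphi + s\vu) = \sigma(s) + o(s) = \vphi + s\vu + o(s)$, whence $\partial \Phi_{\vphi}[\vu] = \vu = \mPHz(\vphi)\vu$.

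For $\vu \in \Image(\mH(\vphi))$, I would linearize the gradient flow. Writing $\vtheta_\tau(s)$ for the gradient flow with $\vtheta_\tau(0) = \vphi + \tau \vu$ and setting $\vtheta_\tau(s) = \vphi + \tau \vy_\tau(s)$, Taylor expansion of $\nabla \Loss$ around $\vphi$ (using $\nabla \Loss(\vphi) = \vzero$) gives $\dot{\vy}_\tau = -\mH(\vphi) \vy_\tau + O(\tau \normtwosm{\vy_\tau}^2)$ with $\vy_\tau(0) = \vu$. To leading order $\vy_0(s) = e^{-s\mH(\vphi)} \vu$, and since $\vu$ lies in the image of $\mH(\vphi)$ and the nonzero eigenvalues are uniformly bounded below by some $\lambda_{\min} > 0$ on $\cl(\Zeo) \cap \manGa$ (compactness plus the uniform rank and eigengap assumptions), this decays to $\vzero$ at rate $\lambda_{\min}$. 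Therefore $\lim_{s\to\infty} \vy_0(s) = \vzero = \mPHz(\vphi)\vu$. Taking $s \to \infty$ and then $\tau \to 0$ yields $\partial \Phi_{\vphi}[\vu] = \vzero = \mPHz(\vphi)\vu$, completing the basis check.

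The main technical obstacle is exchanging the two limits $s \to \infty$ and $\tau \to 0$: the nonlinear error in the linearization, while small pointwise, could accumulate over the infinite time interval. I would dispatch this either by noting that the $\contC^3$-smoothness of $\Phi$ (already in hand from \Cref{lm:ful-working-zone}) automatically legitimizes the computation of $\partial \Phi_{\vphi}[\vu]$ from any tangent curve of initial conditions, or, if a self-contained argument is preferred, by using the $\muPL$-PL condition on $\Zeo$ to bound $\normtwosm{\vtheta_\tau(s) - \Phi(\vphi + \tau\vu)} \le O(\tau) e^{-\muPL s}$, combined with a Gronwall estimate on $\vy_\tau - \vy_0$ over the interval $[0, s]$ to show the remainder is $O(\tau)$ uniformly in $s$.
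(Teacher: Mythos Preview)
The paper does not actually prove this lemma; it simply attributes \Cref{lm:Phi-proj-ful} and \Cref{lm:Phi-2nd-normal-ful} to \citet{li2022what}. Your proposal therefore supplies what the paper omits, and the overall strategy --- identify $\ker\mH(\vphi)$ with $\TGa{\vphi}$ via dimension counting, then split $\vx$ into tangent and normal parts --- is the standard and correct one.

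One comment on the limit-exchange issue in the normal-direction step. Your option (a) as phrased is too glib: knowing $\Phi\in\contC^3$ tells you $\partial\Phi_\vphi$ exists, but does not by itself justify swapping $\tau\to 0$ and $s\to\infty$. A cleaner way to use the smoothness of $\Phi$ is the semigroup identity $\Phi = \Phi\circ\Psi_s$, where $\Psi_s$ is the time-$s$ gradient-flow map. Since $\vphi$ is a fixed point, the variational equation gives $\partial(\Psi_s)_{\vphi} = e^{-s\mH(\vphi)}$ exactly, so differentiating the identity at $\vphi$ yields $\partial\Phi_\vphi = \partial\Phi_\vphi \circ e^{-s\mH(\vphi)}$ for every $s$. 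Letting $s\to\infty$ and using only continuity of the linear map $\partial\Phi_\vphi$ gives $\partial\Phi_\vphi[\vu] = \partial\Phi_\vphi[\mPHz(\vphi)\vu]$, which vanishes for $\vu\in\Image(\mH(\vphi))$. This avoids Gronwall entirely and needs only $\Phi\in\contC^1$. Your option (b) also works but is heavier than necessary.
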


\begin{lemma} \label{lm:Phi-2nd-normal-ful}
  For $\vphi \in \Zeig$ and $\vx \in \NGa{\vphi}$, $\partial^2 \Phi_{\vphi}[\vx, \vx] = \vzero$.
\end{lemma}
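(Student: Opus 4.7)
My plan is to Taylor-expand $\Phi(\vphi + s\vx)$ in $s$ using the gradient-flow characterization of $\Phi$, and kill the second-order term in two stages: first its component in $\NGa{\vphi}$ via the critical-point identity, and then its component in $\TGa{\vphi}$ via an explicit semigroup computation of the flow.

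First, since $\TGa{\vphi} = \ker \mH(\vphi)$ by the maximal-rank hypothesis in \Cref{ass:man-ful}, and $\vx \in \NGa{\vphi}$ is orthogonal to $\TGa{\vphi}$, we have $\mPHz(\vphi)\vx = \vzero$. \Cref{lm:Phi-proj-ful} then yields $\partial\Phi_{\vphi}[\vx] = \vzero$, so the Taylor expansion reduces to $\Phi(\vphi + s\vx) - \vphi = \tfrac{s^2}{2}\partial^2\Phi_{\vphi}[\vx,\vx] + O(s^3)$. Because $\Phi$ lands on the critical manifold $\manGa$, differentiating the identity $\nabla\Loss(\Phi(\vphi + s\vx)) \equiv \vzero$ twice in $s$ at $s=0$ and using the vanishing first derivative eliminates the nonlinear terms and gives $\mH(\vphi)\,\partial^2\Phi_{\vphi}[\vx,\vx] = \vzero$. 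Hence $\partial^2\Phi_{\vphi}[\vx,\vx] \in \ker\mH(\vphi) = \TGa{\vphi}$, which already disposes of its $\NGa{\vphi}$-component.

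For the remaining $\TGa{\vphi}$-component I would expand the gradient flow $\vz(s,t)$ defined by $\vz(s,0) = \vphi + s\vx$ and $\partial_t \vz = -\nabla\Loss(\vz)$ as $\vz(s,t) = \vphi + s\vy_1(t) + s^2 \vy_2(t) + O(s^3)$. Order-by-order matching yields $\vy_1(t) = e^{-t\mH(\vphi)}\vx$, which decays exponentially in $t$ since $\vx \in \NGa{\vphi} = \mathrm{range}(\mH(\vphi))$ lies in the positive-eigenvalue subspace; and by Duhamel's formula,
\[
    \vy_2(t) \;=\; -\tfrac{1}{2}\int_0^t e^{-(t-\tau)\mH(\vphi)}\,\partial^3\Loss_{\vphi}[\vy_1(\tau),\vy_1(\tau)]\, d\tau.
\]
Since $\Phi(\vphi + s\vx) = \lim_{t\to\infty}\vz(s,t)$, we have $\tfrac{1}{2}\partial^2\Phi_{\vphi}[\vx,\vx] = \vy_2(\infty)$. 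Projecting with $\mPHz(\vphi)$, which commutes with $e^{-(t-\tau)\mH(\vphi)}$ and restricts to the identity on $\TGa{\vphi}$, reduces the task to showing
\[
    \int_0^\infty \mPHz(\vphi)\,\partial^3\Loss_{\vphi}[\vy_1(\tau),\vy_1(\tau)]\, d\tau \;=\; \vzero.
\]

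The main obstacle will be establishing this cancellation, because for $\vu \in \NGa{\vphi}$ the vector $\partial^3\Loss_{\vphi}[\vu,\vu]$ generally has a nonzero tangent part. My planned route is to exploit the manifold identity $\mH(\vphi')\vw(\vphi') = \vzero$ for any smooth tangent vector field $\vw(\cdot)$ on $\manGa$: differentiating along a curve in $\manGa$ and evaluating at $\vphi$ produces a structural constraint relating $\mPHz(\vphi)\,\partial^3\Loss_{\vphi}$ to $\mH(\vphi)$ and intrinsic curvature data of $\manGa$. Combined with the exponential decay of $\vy_1(\tau)$, I would then try to rewrite the integrand as a total $\tau$-derivative whose boundary values vanish at $\tau = 0$ (by $\vx \in \NGa{\vphi}$) and at $\tau = \infty$ (by decay). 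Executing this cleanly, and in particular extending the tangent-tangent identity to the normal-normal arguments of $\partial^3\Loss_{\vphi}$ that arise here, likely via a local change of coordinates adapted to the normal bundle of $\manGa$, is the technical heart of the lemma.
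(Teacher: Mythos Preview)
Your first step (showing $\partial^2\Phi_{\vphi}[\vx,\vx]\in\TGa{\vphi}$ via $\mH(\vphi)\,\partial^2\Phi_{\vphi}[\vx,\vx]=\vzero$) is correct, but the second step cannot be completed: the integral $\int_0^\infty \mPHz(\vphi)\,\partial^3\Loss_{\vphi}[\vy_1(\tau),\vy_1(\tau)]\,d\tau$ is \emph{not} zero in general, because the lemma as stated is false. Take $D=3$, $\Loss(x,y,z)=\tfrac12 x^2+\tfrac12 y^2+x^2 z$; near the origin $\manGa$ is the $z$-axis, $\mH(0,0,z_0)=\diag(1+2z_0,1,0)$, and $\ve_1\in\NGa{(0,0,z_0)}$. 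The gradient flow from $(s,0,z_0)$ conserves $z+z^2-\tfrac12 x^2$, so $z_\infty=z_0-\tfrac{s^2}{2(1+2z_0)}+O(s^4)$, giving $\partial^2\Phi_{(0,0,z_0)}[\ve_1,\ve_1]=-\tfrac{1}{1+2z_0}\ve_3\neq\vzero$. Your proposed manifold identity (differentiating $\mH(\vphi')\vw(\vphi')=\vzero$ along $\manGa$) only constrains $\mPHz(\vphi)\partial^3\Loss_\vphi$ on \emph{tangent}-tangent pairs; there is no corresponding constraint for normal-normal arguments, and the counterexample shows none exists.

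The paper does not supply a proof either---it simply cites \citet{li2022what}---so there is no ``paper's own proof'' to compare against; the statement appears to be misquoted. Fortunately the only downstream use (in \Cref{lm:ful-drift-2-step-phi}) takes the \emph{difference} $\partial^2\Phi_{\vphi_t}[\hatvx_{t+2},\hatvx_{t+2}]-\partial^2\Phi_{\vphi_t}[\vx_t,\vx_t]$. Since both $\vx_t$ and $\hatvx_{t+2}$ equal $h_t\eta\,\vvH_1(\vphi_t)$ up to $O(\alpha\abssm{h_t}\eta^2)$ corrections, the leading contributions $h_t^2\eta^2\,\partial^2\Phi_{\vphi_t}[\vvH_1,\vvH_1]$ cancel and the difference is $O(\alpha h_t^2\eta^3)$ regardless, so the paper's subsequent argument survives without the lemma.
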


The following lemma can be proved by Taylor expansion.
\begin{lemma} \label{lm:phiphix-ful}
  For $\vtheta \in \Zei$ and $\vphi = \Phi(\vtheta)$, then $\normtwosm{\partial \Phi_{\vphi}[\vtheta - \vphi]} \le O(\normtwosm{\vtheta - \vphi}^2)$.
\end{lemma}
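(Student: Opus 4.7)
The plan is to exploit the idempotency of the gradient-flow projection $\Phi$ near the manifold. Since every $\vphi \in \manGa$ is a critical point of $\Loss$, the gradient flow starting at $\vphi$ stays at $\vphi$, so $\Phi(\vphi) = \vphi$ for all $\vphi \in \manGa$. In particular, for $\vphi := \Phi(\vtheta) \in \Zeog \subseteq \manGa$ we have both $\Phi(\vtheta) = \vphi$ (by definition) and $\Phi(\vphi) = \vphi$.

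First I would invoke the regularity of $\Phi$: by Item 4 of \Cref{lm:ful-working-zone}, $\Phi$ is $\contC^3$-smooth on $\Zei$, and by Item 1 its range $\cl(\Zeo) \cap \manGa$ is compact. Shrinking $\epsilon_0$ slightly if necessary so that the whole segment $\{\vphi + s(\vtheta - \vphi) : s \in [0,1]\}$ lies in $\Zei$ (which follows because $\normtwosm{\vtheta - \vphi}$ is controlled by the size of $\Zei$ around $\manGa$), the second derivative of $\Phi$ is uniformly bounded along this segment. Taylor's theorem with second-order remainder then gives
\begin{equation*}
  \Phi(\vtheta) \;=\; \Phi(\vphi) \;+\; \partial \Phi_{\vphi}[\vtheta - \vphi] \;+\; O\!\left(\normtwosm{\vtheta - \vphi}^2\right).
\end{equation*}

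Next I would substitute the two identities $\Phi(\vtheta) = \vphi$ and $\Phi(\vphi) = \vphi$, so the left-hand side and the zeroth-order term on the right cancel, yielding
\begin{equation*}
  \partial \Phi_{\vphi}[\vtheta - \vphi] \;=\; -\,\tfrac{1}{2}\,\partial^2 \Phi_{\vphi}[\vtheta - \vphi, \vtheta - \vphi] \;+\; O\!\left(\normtwosm{\vtheta - \vphi}^3\right) \;=\; O\!\left(\normtwosm{\vtheta - \vphi}^2\right),
\end{equation*}
which is exactly the claim. There is no real obstacle here: the only subtlety is making sure Taylor's theorem applies with a uniform constant, which is handled by the compactness of $\cl(\Zeo) \cap \manGa$ together with the $\contC^3$-regularity of $\Phi$ on $\Zei$. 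No use of \Cref{lm:Phi-proj-ful} or \Cref{lm:Phi-2nd-normal-ful} is needed for this particular bound, since the argument is purely algebraic once the idempotency $\Phi \circ \Phi = \Phi$ on $\Zei$ is observed.
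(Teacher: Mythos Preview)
Your proposal is correct and follows essentially the same approach as the paper: Taylor-expand $\Phi$ at $\vphi$, use $\Phi(\vtheta) = \vphi = \Phi(\vphi)$ to cancel the zeroth-order terms, and absorb what remains into the $O(\normtwosm{\vtheta - \vphi}^2)$ remainder. The paper's proof is slightly terser and does not write out the second-order term explicitly, but the argument is the same.
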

\begin{proof}
  When $\normtwosm{\vtheta - \vphi}$ is small enough,
  the linear interpolation of $\vtheta$ and $\vphi$ lies in $\Zeo$.
  By Taylor expansion,
  $\Phi(\vtheta) = \Phi(\vphi) + \partial \Phi_{\vphi}[\vtheta - \vphi] + O(\normtwosm{\vtheta - \vphi}^2)$.
  In fact, $\Phi(\vtheta) = \vphi = \Phi(\vphi)$. So we can conclude $\partial \Phi_{\vphi}[\vtheta - \vphi] = O(\normtwosm{\vtheta - \vphi}^2)$.
\end{proof}

\subsubsection{Spherical Optimization}

In the spherical case,
we have the following lemmas that highly resemble those in the full space case.
All these lemmas can be proved in the same manner:
we can first define $\manGa' := \{ \nu \vtheta : \vtheta \in \manGa, \nu \in (1/2, 2)\}$
and apply the counterparts of these lemmas in the full space case;
then translate the results back to the spherical case.
\begin{lemma} \label{lm:Phi-proj-sph}
  For $\vphi \in \Zeig$, $(\mI - \vphi\vphi^\top)\partial \Phi_{\vphi}[\vx] = (\mI - \vphi\vphi^\top) \mPHz \vx$,
  which also equals to the projection of $\vx$ onto the tangent space $\TGa{\vphi}$ at $\vphi$.
\end{lemma}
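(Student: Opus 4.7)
The plan is to reduce the spherical statement to its full-space counterpart (\Cref{lm:Phi-proj-ful}) by lifting the submanifold $\manGa \subseteq \sphS^{D-1}$ to an open cone in $\R^D$. Concretely, define $\manGa' := \{ \nu \vtheta : \vtheta \in \manGa,\ \nu \in (1/2, 2) \}$. By scale-invariance of $\Loss$, if $\vtheta \in \manGa$ is a local minimizer on $\sphS^{D-1}$ then $\nu \vtheta$ is a local minimizer in $\R^D$ (moving radially leaves $\Loss$ unchanged, and moving off the sphere at angular rate equal to $\vtheta$ replicates a local-min direction). Since $\manGa$ is a $\contC^2$-smooth $(\DGa - 1)$-dimensional submanifold of $\sphS^{D-1}$, $\manGa'$ is a $\contC^2$-smooth $\DGa$-dimensional submanifold of $\R^D$; I would then verify the full-space hypotheses (\Cref{ass:man-ful}) for $\manGa'$ in a neighborhood of $\vphi$.

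The next step is to identify the null space of $\mH(\vphi)$. From \Cref{lm:scale-invariant-grad-hess} we have $\mH(\vphi)\vphi = -\nabla \Loss(\vphi)$, and for $\vphi \in \manGa$ the gradient $\nabla \Loss(\vphi)$ is both tangent to the sphere (perpendicular to $\vphi$ by scale-invariance) and normal to the sphere at the spherical minimizer, hence zero. Therefore $\vphi \in \ker \mH(\vphi)$. Together with $\TGa[\manGa]{\vphi} \subseteq \ker \mH(\vphi)$ (standard at a minimizer on $\sphS^{D-1}$) and the rank condition $\rank \mH(\vphi) = D - \DGa$, the null space has dimension exactly $\DGa$, so
\[
  \ker \mH(\vphi) \;=\; \TGa[\manGa]{\vphi} \oplus \R \vphi,
\]
and this is an orthogonal direct sum since $\TGa[\manGa]{\vphi} \subseteq T_{\vphi}\sphS^{D-1} = \vphi^{\perp}$. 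Consequently $\mPHz(\vphi)$ is the orthogonal projector onto $\TGa[\manGa]{\vphi} \oplus \R\vphi$.

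Now I would apply the full-space lemma (\Cref{lm:Phi-proj-ful}) to $\manGa'$: since $\Phi$ in both settings is defined as the gradient-flow limit in $\R^D$ and the lifted manifold $\manGa'$ satisfies the full-space assumptions, we obtain $\partial \Phi_{\vphi}[\vx] = \mPHz(\vphi)\vx$ for $\vphi \in \manGa' \cap \Zei$. Multiplying by $\mI - \vphi\vphi^\top$ on both sides gives the first claimed identity $(\mI - \vphi\vphi^\top)\partial\Phi_{\vphi}[\vx] = (\mI - \vphi\vphi^\top)\mPHz(\vphi)\vx$. For the second claim, because the direct sum above is orthogonal, composing the projector onto $\TGa[\manGa]{\vphi} \oplus \R\vphi$ with $\mI - \vphi\vphi^\top$ (which kills the $\vphi$-component and is the identity on $\vphi^{\perp}$, hence on $\TGa[\manGa]{\vphi}$) yields precisely the orthogonal projector onto $\TGa[\manGa]{\vphi}$, which is the tangent projection on the submanifold of the sphere.

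The main obstacle, if any, is the bookkeeping around the lifting step: ensuring that the full-space working-zone hypotheses from \Cref{lm:ful-working-zone} transfer to $\manGa'$ in a neighborhood of $\vphi$ (in particular that $\Phi$ is well-defined and $\contC^3$-smooth there), and confirming that the gradient flow used to define $\Phi$ in the spherical setup coincides with the one for the full-space analogue on this neighborhood. These are straightforward because $\manGa'$ is simply a radial $\contC^2$ fattening of $\manGa$ on which scale-invariance makes $\Loss$ constant along rays, but it is the only point where one must be careful; everything else is a direct translation from the full-space lemma via the orthogonal decomposition derived above.
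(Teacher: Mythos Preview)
Your proposal is correct and follows essentially the same approach as the paper: the paper's proof is the one-line remark that all the spherical lemmas (including this one) are obtained by defining $\manGa' := \{ \nu \vtheta : \vtheta \in \manGa,\ \nu \in (1/2, 2) \}$, applying the full-space counterpart (\Cref{lm:Phi-proj-ful}), and translating back. Your write-up simply fills in the details of that translation, including the orthogonal decomposition $\ker \mH(\vphi) = \TGa[\manGa]{\vphi} \oplus \R\vphi$ and the observation that $(\mI - \vphi\vphi^\top)$ kills the radial component.
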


\begin{lemma} \label{lm:Phi-2nd-normal-sph}
  For $\vphi \in \Zeig$ and $\vx \in \NGa{\vphi}$, if $\dotpsm{\vx}{\vphi} = 0$, then $\partial^2 \Phi_{\vphi}[\vx, \vx] = \vzero$.
\end{lemma}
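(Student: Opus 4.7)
The plan is to lift the spherical setting to the full space by considering the radially-extended manifold $\manGa' := \{\nu \vphi : \vphi \in \manGa, \nu \in (1/2, 2)\}$, and then invoke the already-proven full space version \Cref{lm:Phi-2nd-normal-ful} with $\manGa'$ in place of $\manGa$. The scale-invariance of $\Loss$ will guarantee that $\manGa'$ inherits the structural hypotheses needed for \Cref{lm:Phi-2nd-normal-ful}, and the condition $\dotpsm{\vx}{\vphi} = 0$ will be exactly what is needed to identify $\vx$ as a normal vector to $\manGa'$ at $\vphi$ in the ambient space $\R^D$.

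More concretely, I would first verify that $\manGa'$ meets the analogue of \Cref{ass:man-ful}. Since $\manGa$ is a $\contC^2$-smooth $(\DGa-1)$-dim submanifold of $\sphS^{D-1}$ and the map $(\vphi, \nu) \mapsto \nu\vphi$ is a $\contC^2$ embedding away from the origin, $\manGa'$ is a $\contC^2$-smooth $\DGa$-dim submanifold of $\R^D$. Every point $\nu\vphi \in \manGa'$ is a local minimizer of $\Loss$ on $\R^D$, because for any $\vw$ near $\nu\vphi$, scale-invariance gives $\Loss(\vw) = \Loss(\vw/\normtwosm{\vw}) \ge \Loss(\vphi) = \Loss(\nu\vphi)$ since $\vw/\normtwosm{\vw}$ lies near $\vphi \in \manGa$. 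The rank condition $\rank(\mH(\nu\vphi)) = D - \DGa$ follows from the $(-2)$-homogeneity of $\mH$ (Lemma \ref{lm:scale-invariant-grad-hess} item 2), which preserves rank, combined with \Cref{ass:man}. Moreover, $\Phi$ defined via the $\R^D$ gradient flow coincides with the gradient flow projection onto $\manGa'$ in the appropriate working zone (exactly the object to which \Cref{lm:Phi-2nd-normal-ful} refers).

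Next, I would compute the normal space $\NGa[\manGa']{\vphi}$ and check that $\vx$ lies in it. The tangent space to $\manGa'$ at $\vphi \in \manGa$ decomposes as $\TGa[\manGa']{\vphi} = \TGa{\vphi} \oplus \R\vphi$, since $\manGa'$ is $\manGa$ extended along the radial direction and $\vphi \perp \TGa{\vphi}$ (tangency to $\sphS^{D-1}$). Taking orthogonal complements in $\R^D$, we get $\NGa[\manGa']{\vphi} = \TGa{\vphi}^\perp \cap \vphi^\perp$. By hypothesis, $\vx \in \NGa{\vphi}$ means $\vx$ is perpendicular to $\TGa{\vphi}$ within $T_\vphi \sphS^{D-1} = \vphi^\perp$, and the additional assumption $\dotpsm{\vx}{\vphi} = 0$ gives $\vx \perp \vphi$; hence $\vx \in \NGa[\manGa']{\vphi}$. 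Applying \Cref{lm:Phi-2nd-normal-ful} with $\manGa'$ and this $\vx$ immediately yields $\partial^2 \Phi_{\vphi}[\vx, \vx] = \vzero$.

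The only mild obstacle is making sure that the gradient-flow projection $\Phi$ is genuinely the same object in both pictures — that is, that the $\Phi$ from the spherical working zone (\Cref{lm:sph-working-zone}) and the $\Phi$ one would obtain by instantiating the full space working zone (\Cref{lm:ful-working-zone}) with $\manGa'$ agree as $\contC^2$ maps on a common neighborhood of $\vphi$. This holds because both are defined as the $t \to \infty$ limit of the unconstrained gradient flow $\dot{\tilde\vtheta} = -\nabla\Loss(\tilde\vtheta)$ on $\R^D$, and the uniqueness and smoothness in \Cref{lm:ful-working-zone} carries over verbatim once the hypotheses for $\manGa'$ are confirmed; no separate computation of derivatives of $\Phi$ is needed, since the full space lemma supplies the second-order identity for free.
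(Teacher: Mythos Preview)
Your proposal is correct and follows essentially the same approach as the paper: the paper states that all the spherical lemmas, including this one, are proved by defining $\manGa' := \{\nu\vtheta : \vtheta \in \manGa,\ \nu \in (1/2,2)\}$ and applying the corresponding full space lemma (\Cref{lm:Phi-2nd-normal-ful}). Your write-up in fact supplies more detail than the paper gives, correctly identifying $\TGa[\manGa']{\vphi} = \TGa{\vphi} \oplus \R\vphi$ so that the hypothesis $\dotpsm{\vx}{\vphi}=0$ together with $\vx \in \NGa{\vphi}$ places $\vx$ in $\NGa[\manGa']{\vphi}$.
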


\begin{lemma} \label{lm:phiphix-sph}
  For $\vtheta \in \Zeis$ and $\vphi = \Phi(\vtheta)$, then $\normtwosm{\partial \Phi_{\vphi}[\vtheta - \vphi]} \le O(\normtwosm{\vtheta - \vphi}^2)$.
\end{lemma}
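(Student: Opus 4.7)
The plan is to follow the authors' hint and reduce the spherical statement to the already-proved full-space Lemma \ref{lm:phiphix-ful} via a scaling trick. Concretely, I would introduce the open cone $\manGa' := \{\nu \vphi : \vphi \in \manGa,\ \nu \in (1/2, 2)\}$ and verify that $\manGa'$ satisfies Assumption \ref{ass:man-ful} in ambient $\R^D$. Smoothness and that $\manGa'$ is a $\DGa$-dimensional $\contC^2$-submanifold of $\R^D \setminus \{\vzero\}$ are immediate from the smoothness of $\manGa$ together with the diffeomorphism $(\vphi,\nu)\mapsto \nu\vphi$. Local minimality of each $\nu\vphi \in \manGa'$ for $\Loss$ follows from scale-invariance: $\Loss(\nu\vphi) = \Loss(\vphi)$, and any nearby point can be written (after splitting into radial and spherical components) as a scaling of some $\vphi' \in \sphS^{D-1}$ close to $\vphi$, which has loss $\ge \Loss(\vphi)$.

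The one point requiring genuine checking is the rank condition $\rank(\nabla^2\Loss(\vw)) = D - \DGa$ on $\manGa'$. At $\vw = \nu\vphi$, Lemma \ref{lm:scale-invariant-grad-hess} gives $\nabla^2\Loss(\vw)\,\vw = -\nabla\Loss(\vw)$, which equals $\vzero$ since $\vphi$ (hence $\vw$) is a critical point of $\Loss$. So the radial direction joins the $(\DGa-1)$-dimensional tangent space of $\manGa$ at $\vphi$ (which lies in $\ker\nabla^2\Loss(\vphi)$, hence in $\ker\nabla^2\Loss(\vw) = \nu^{-2}\ker\nabla^2\Loss(\vphi)$ by scaling of the Hessian) to produce a $\DGa$-dimensional null subspace, matching the target rank $D - \DGa$.

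Having verified Assumption \ref{ass:man-ful} for $\manGa'$, I invoke the full-space Working Zone Lemma \ref{lm:ful-working-zone} to obtain full-space zones around $\manGa'$. Shrinking $\epsilon_0$ in Lemma \ref{lm:sph-working-zone} if necessary, I arrange that $\Zeis$ is contained in the full-space $\epsilon_0$-neighborhood of $\manGa'$. Since $\Loss$ is scale-invariant, $\dotp{\nabla\Loss(\vw)}{\vw}=0$, so gradient flow preserves $\normtwosm{\vw}$; therefore the full-space projection map agrees with the spherical one on $\sphS^{D-1}$. In particular, for $\vtheta \in \Zeis$ the two definitions of $\vphi = \Phi(\vtheta)$ coincide, and the Fréchet derivative $\partial\Phi_{\vphi}$ — being defined via directional derivatives in the ambient $\R^D$ — is the same linear map in both views.

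Now Lemma \ref{lm:phiphix-ful} applies verbatim in the full-space setting to the pair $(\vtheta,\vphi)$ and yields $\normtwosm{\partial\Phi_{\vphi}[\vtheta - \vphi]} \le O(\normtwosm{\vtheta - \vphi}^2)$, which is exactly the claim. I expect the rank verification to be the only part that demands any real thought; the compatibility of working zones and the identification of the spherical and ambient projection maps are both routine consequences of scale-invariance.
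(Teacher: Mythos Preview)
Your proposal is correct and follows exactly the approach the paper outlines: define the cone $\manGa' := \{\nu\vphi : \vphi \in \manGa,\ \nu \in (1/2,2)\}$, verify it satisfies \Cref{ass:man-ful}, and apply the full-space \Cref{lm:phiphix-ful}. Your added details---the rank check via $\mH(\vphi)\vphi = -\nabla\Loss(\vphi) = \vzero$ and the observation that gradient flow preserves $\normtwosm{\vtheta}$ so the spherical and ambient $\Phi$ coincide on $\sphS^{D-1}$---are exactly the right verifications and go beyond what the paper spells out.
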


\section{Lemmas for Gradient Descent} \label{sec:gd-proof}

\subsection{Full Space Optimization}

\begin{lemma} \label{lm:gd-1-step}
  Under \Cref{ass:man-ful},
  consider one step of gradient descent
  $\vtheta_{t+1} = \vtheta_t - \eeta_t \nabla \Loss(\vtheta_t)$
  with effective learning rate $\eeta_t$.
  Let $\vphi \in \Zeig$ be a local minimizer.
  If for some parameters $\alpha = \eta^{-o(1)}$ and $r = O(\alpha)$,
  $\normtwosm{\vtheta_t - \vphi} = O(r\eta)$
  and $\eeta_t = \frac{2}{\lamH_1(\vphi)} + O(\alpha \eta)$,
  then after one step, we have the following approximations for $\vtheta_{t+1}$:
  \begin{enumerate}
    \item Zeroth-order approximation:
      \begin{align*}
        \vtheta_{t+1} - \vphi = O(r \eta).
      \end{align*}
    \item First-order approximation:
      \begin{align*}
        \vtheta_{t+1} - \vphi
        &= \left(\mI - \eeta_t \mH(\vphi)\right) (\vtheta_t - \vphi) + O(r^2 \eta^2) \\
        &= \left(\mI - \tfrac{2}{\lamH_1(\vphi)} \mH(\vphi)\right) (\vtheta_t - \vphi) + O(\alpha r \eta^2).
      \end{align*}
    \item Second-order approximation:
      \begin{align*}
        \vtheta_{t+1} - \vphi
        &= \left(\mI - \eeta_t \mH(\vphi)\right) (\vtheta_t - \vphi) - \tfrac{1}{\lamH_1(\vphi)} \partial^3 \Loss_{\vphi}[\vtheta_t - \vphi, \vtheta_t - \vphi] + O(\alpha r^2 \eta^3).
      \end{align*}
  \end{enumerate}
\end{lemma}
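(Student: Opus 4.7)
The plan is a direct Taylor expansion of $\nabla \Loss$ around the minimizer $\vphi$, together with careful bookkeeping of the error terms using the assumed bounds on $\normtwosm{\vtheta_t-\vphi}$ and $\eeta_t$. The only nontrivial input is the fact that $\nabla \Loss(\vphi)=\vzero$, which holds because $\manGa \subseteq \R^D$ consists of local minimizers of $\Loss$ (\Cref{ass:man-ful}), and the uniform control on derivatives of $\Loss$ up to order four on a neighborhood of $\manGa$ in the working zone, which follows from $\Loss \in \contC^4$ and the compactness of $\cl(\Zeo)\cap \manGa$ guaranteed by \Cref{lm:ful-working-zone}.

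First, I would record the local Taylor expansion of $\nabla \Loss$ at $\vphi$:
\begin{equation*}
\nabla \Loss(\vtheta_t) = \mH(\vphi)(\vtheta_t-\vphi) + \tfrac{1}{2}\partial^3\Loss_{\vphi}[\vtheta_t-\vphi,\vtheta_t-\vphi] + O(\normtwosm{\vtheta_t-\vphi}^3),
\end{equation*}
where the zeroth-order term vanishes by $\nabla \Loss(\vphi)=\vzero$ and the hidden constant is uniform on $\Zeig$ by compactness. Substituting into the GD update gives
\begin{equation*}
\vtheta_{t+1}-\vphi = (\mI - \eeta_t \mH(\vphi))(\vtheta_t-\vphi) - \tfrac{\eeta_t}{2}\partial^3\Loss_{\vphi}[\vtheta_t-\vphi,\vtheta_t-\vphi] + O(\eeta_t\,r^3\eta^3).
\end{equation*}
Here $\eeta_t = O(1)$ (since $\lamH_1(\vphi)$ is bounded away from $0$ on $\Zeig$ by \Cref{lm:ful-working-zone}), so the last remainder is $O(r^3\eta^3)$.

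For the second-order approximation, I would simply observe that $\tfrac{\eeta_t}{2} = \tfrac{1}{\lamH_1(\vphi)} + O(\alpha\eta)$, and that the third-derivative term has magnitude $O(r^2\eta^2)$ by $\normtwosm{\vtheta_t-\vphi}=O(r\eta)$ and the uniform bound on $\partial^3\Loss$. Replacing $\eeta_t/2$ with $1/\lamH_1(\vphi)$ introduces an error of size $O(\alpha\eta)\cdot O(r^2\eta^2) = O(\alpha r^2\eta^3)$, which absorbs $r^3\eta^3$ since $r=O(\alpha)$. For the first-order approximation, I would drop the third-derivative term into the error, yielding an error of order $O(r^2\eta^2)$; for the alternative form with $\tfrac{2}{\lamH_1(\vphi)}\mH(\vphi)$ in place of $\eeta_t\mH(\vphi)$, the mismatch contributes $O(\alpha\eta)\cdot O(r\eta) = O(\alpha r\eta^2)$. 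Finally, the zeroth-order bound follows from the first-order one by noting that $\mI-\eeta_t\mH(\vphi)$ has operator norm $O(1)$: all eigenvalues of $\mH(\vphi)$ lie in $[0,\lamH_1(\vphi)]$, so the eigenvalues of $\mI-\eeta_t\mH(\vphi)$ lie in $[-1+O(\alpha\eta),1]$.

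The main obstacle, such as it is, is bookkeeping rather than conceptual: one must verify that the hidden constants in each $O(\cdot)$ are uniform across $\vphi\in \Zeig$ and that the various error terms genuinely absorb into the advertised orders using only $r=O(\alpha)$ and $\alpha\eta = \eta^{1-o(1)}$. I do not anticipate needing any auxiliary lemma beyond \Cref{lm:ful-working-zone} and the smoothness assumption.
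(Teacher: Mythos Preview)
Your proposal is correct and follows essentially the same approach as the paper: Taylor-expand $\nabla\Loss$ at $\vphi$, use $\nabla\Loss(\vphi)=\vzero$, and track error orders using $r=O(\alpha)$ and $\eeta_t=\tfrac{2}{\lamH_1(\vphi)}+O(\alpha\eta)$. The only cosmetic difference is that the paper obtains the zeroth-order bound via a direct first-order Taylor expansion of $\nabla\Loss$, whereas you deduce it from the first-order approximation together with the operator-norm bound on $\mI-\eeta_t\mH(\vphi)$; both are equally valid.
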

\begin{proof}
  For the zeroth-order approximation, we expand $\Loss(\vtheta_t)$ around $\vphi$ using Taylor expansion:
  \begin{align*}
    \nabla \Loss(\vtheta_t) &= \nabla \Loss(\vphi) + O(r \eta) = O(r\eta).
  \end{align*}
  So $\vtheta_{t+1} - \vphi = \vtheta_t - \vphi - \eeta_t \nabla \Loss(\vtheta_t) = O(r\eta)$.
  
  For the first-order approximation, we expand $\Loss(\vtheta_t)$ around $\vphi$ using Taylor expansion:
  \begin{align*}
    \nabla \Loss(\vtheta_t) &= \nabla \Loss(\vphi) + \mH(\vphi) (\vtheta_t - \vphi) + O((r \eta)^2) \\
    &= \mH(\vphi) (\vtheta_t - \vphi) + O(r^2 \eta^2).
  \end{align*}
  So $\vtheta_{t+1} - \vphi = \vtheta_t - \vphi - \eeta_t \nabla \Loss(\vtheta_t) = (\mI - \eeta_t\mH(\vphi))(\vtheta_t - \vphi) + O(r^2\eta^2)$.

  For the second-order approximation, we again use Taylor expansion to expand $\Loss(\vtheta_t)$:
  \begin{align*}
    \nabla \Loss(\vtheta_t) &= \nabla \Loss(\vphi) + \mH(\vphi) (\vtheta_t - \vphi) + \frac{1}{2}\partial^3 \Loss_{\vphi}[\vtheta_t - \vphi, \vtheta_t - \vphi] + O((r \eta)^3) \\
    &= \mH(\vphi) (\vtheta_t - \vphi) + \frac{1}{2}\partial^3 \Loss_{\vphi}[\vtheta_t - \vphi, \vtheta_t - \vphi] + O(r^3 \eta^3).
  \end{align*}
  So we have
  \begin{align*}
    \vtheta_{t+1} - \vphi &= \vtheta_t - \vphi - \eeta_t \nabla \Loss(\vtheta_t) \\
    &= (\mI - \eeta_t\mH(\vphi))(\vtheta_t - \vphi) - \frac{\eeta_t}{2}\partial^3 \Loss_{\vphi}[\vtheta_t - \vphi, \vtheta_t - \vphi] + O(r^3 \eta^3) \\
    &= (\mI - \eeta_t\mH(\vphi))(\vtheta_t - \vphi) - \left(\tfrac{1}{\lamH_1(\vphi)} + O(\alpha \eta)\right) \partial^3 \Loss_{\vphi}[\vtheta_t - \vphi, \vtheta_t - \vphi] + O(r^3 \eta^3) \\
    &= (\mI - \eeta_t\mH(\vphi))(\vtheta_t - \vphi) - \tfrac{1}{\lamH_1(\vphi)} \partial^3 \Loss_{\vphi}[\vtheta_t - \vphi, \vtheta_t - \vphi] + O(\alpha r^2 \eta^3),
  \end{align*}
  where the last equality uses $r = O(\alpha)$.
\end{proof}

\begin{lemma} \label{lm:gd-2-step}
  Under \Cref{ass:man-ful},
  consider two steps of gradient descent
  with effective learning rates $\eeta_t$ and $\eeta_{t+1}$:
  \begin{align*}
    \vtheta_{t+1} &= \vtheta_t - \eeta_t \nabla \Loss(\vtheta_t), \\
    \vtheta_{t+2} &= \vtheta_{t+1} - \eeta_{t+1} \nabla \Loss(\vtheta_{t+1}).
  \end{align*}
  Let $\vphi \in \Zeig$ be a local minimizer.
  If for some parameters $\alpha = \eta^{-o(1)}$ and $r = O(\alpha)$,
  $\normtwosm{\vtheta_t - \vphi} = O(r\eta)$,
  $\eeta_t = \frac{2}{\lamH_1(\vphi)} + O(\alpha \eta)$,
  and $\eeta_{t+1} = \frac{2}{\lamH_1(\vphi)} + O(\alpha \eta)$,
  then after two steps,
  \[
    \vtheta_{t+2} - \vphi = (\mI - \eeta_{t+1} \mH(\vphi))(\mI - \eeta_t \mH(\vphi)) (\vtheta_t - \vphi) - \vpsi_{\vphi}(\vtheta_t - \vphi) + O(\alpha r^2 \eta^3),
  \]
  where $\vpsi_{\vphi}(\hatvx) := \frac{1}{\lamH_1(\vphi)}\left( \mU_{\vphi} \partial^3 \Loss_{\vphi}[\hatvx, \hatvx] + \partial^3 \Loss_{\vphi}[\mU_\vphi \hatvx, \mU_\vphi \hatvx] \right)$,
  $\mU_{\vphi} := \mI - \frac{2}{\lamH_1(\vphi)} \mH(\vphi)$.
\end{lemma}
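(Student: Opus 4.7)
The plan is to prove \Cref{lm:gd-2-step} by applying \Cref{lm:gd-1-step} twice and carefully composing the expansions. The structure is dictated by the right-hand side: the leading linear propagator $(\mI-\eeta_{t+1}\mH(\vphi))(\mI-\eeta_t\mH(\vphi))$ must come from iterating the first-order propagator, while $\vpsi_{\vphi}$ packages the two cubic corrections coming from the second-order Taylor remainders at steps $t$ and $t+1$.

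First I would invoke the second-order approximation from \Cref{lm:gd-1-step} at step $t$, yielding
\begin{equation*}
\vtheta_{t+1}-\vphi \;=\; (\mI-\eeta_t\mH(\vphi))(\vtheta_t-\vphi) \;-\; \tfrac{1}{\lamH_1(\vphi)}\partial^3\Loss_{\vphi}[\vtheta_t-\vphi,\vtheta_t-\vphi] \;+\; O(\alpha r^2\eta^3).
\end{equation*}
The zeroth-order bound in the same lemma gives $\normtwosm{\vtheta_{t+1}-\vphi}=O(r\eta)$, which verifies that the hypothesis of \Cref{lm:gd-1-step} applies again at step $t+1$. Applying the second-order approximation at step $t+1$ yields
\begin{equation*}
\vtheta_{t+2}-\vphi \;=\; (\mI-\eeta_{t+1}\mH(\vphi))(\vtheta_{t+1}-\vphi) \;-\; \tfrac{1}{\lamH_1(\vphi)}\partial^3\Loss_{\vphi}[\vtheta_{t+1}-\vphi,\vtheta_{t+1}-\vphi] \;+\; O(\alpha r^2\eta^3).
\end{equation*}

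Substituting the first expansion into the second produces the claimed linear part directly. For the cubic part, the first cross-term is $-\tfrac{1}{\lamH_1(\vphi)}(\mI-\eeta_{t+1}\mH(\vphi))\,\partial^3\Loss_{\vphi}[\vtheta_t-\vphi,\vtheta_t-\vphi]$; since $\mI-\eeta_{t+1}\mH(\vphi)=\mU_\vphi+O(\alpha\eta)$ and the third-derivative tensor applied to $\vtheta_t-\vphi$ twice is $O(r^2\eta^2)$, the discrepancy is $O(\alpha r^2\eta^3)$, so this term contributes $-\tfrac{1}{\lamH_1(\vphi)}\mU_\vphi\partial^3\Loss_{\vphi}[\vtheta_t-\vphi,\vtheta_t-\vphi]$ up to admissible error. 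For the second cubic term, I would replace $\vtheta_{t+1}-\vphi$ inside the 3-tensor by $\mU_\vphi(\vtheta_t-\vphi)$: by the first-order approximation plus the $O(\alpha\eta)$ slack between $\mI-\eeta_t\mH(\vphi)$ and $\mU_\vphi$, we have $\vtheta_{t+1}-\vphi = \mU_\vphi(\vtheta_t-\vphi)+O(\alpha r\eta^2)$, and bilinearity of $\partial^3\Loss_{\vphi}[\cdot,\cdot]$ together with the bound $\normtwosm{\mU_\vphi(\vtheta_t-\vphi)}=O(r\eta)$ converts this perturbation into an $O(\alpha r^2\eta^3)$ error, leaving $-\tfrac{1}{\lamH_1(\vphi)}\partial^3\Loss_{\vphi}[\mU_\vphi(\vtheta_t-\vphi),\mU_\vphi(\vtheta_t-\vphi)]$.

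Summing the two cubic contributions gives exactly $-\vpsi_{\vphi}(\vtheta_t-\vphi)$, and the accumulated error is $O(\alpha r^2\eta^3)$, completing the proof. The only subtlety I anticipate is the error bookkeeping in the second cubic term, because we must not simply use the first-order propagator $(\mI-\eeta_t\mH(\vphi))$ inside the 3-tensor (that would give the wrong object involving $\eeta_t$ rather than $\mu(\vphi)$); the cleanest route is to first perturb $(\mI-\eeta_t\mH(\vphi))$ to $\mU_\vphi$ at cost $O(\alpha\eta)$ per factor of $\vtheta_t-\vphi$, which by multilinearity costs $O(\alpha r^2\eta^3)$ inside the $\tfrac{1}{\lamH_1(\vphi)}$-scaled cubic form, precisely matching the stated error budget.
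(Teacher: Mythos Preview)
Your proposal is correct and follows essentially the same approach as the paper: apply \Cref{lm:gd-1-step} twice (using the zeroth-order bound to verify the hypothesis at step $t+1$, the second-order expansion for the main estimate, and the first-order expansion $\vtheta_{t+1}-\vphi=\mU_\vphi(\vtheta_t-\vphi)+O(\alpha r\eta^2)$ to simplify the second cubic term), then replace $\mI-\eeta_{t+1}\mH(\vphi)$ by $\mU_\vphi$ in front of the first cubic term at cost $O(\alpha r^2\eta^3)$. Your error bookkeeping, including the subtlety you flag about using $\mU_\vphi$ rather than $\mI-\eeta_t\mH(\vphi)$ inside the bilinear form, matches the paper's computation exactly.
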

\begin{proof}
Let $\hatvx_{\tau} := \vtheta_{\tau} - \vphi$ for all $\tau \in \{t, t+1, t+2\}$.
By \Cref{lm:gd-1-step}, we have 
$\hatvx_{t+1} = O(r \eta)$ and the following first-order and second-order approximations for $\hatvx_{t+1}$:
\begin{align*}
    \hatvx_{t+1} &= \mU_\vphi \hatvx_t + O(\alpha r \eta^2). \\
    \hatvx_{t+1} &= (\mI - \eeta_t \mH(\vphi)) \hatvx_t - \tfrac{1}{\lamH_1(\vphi)} \partial^3 \Loss_{\vphi}[\hatvx_t, \hatvx_t] + O(\alpha r^2 \eta^3).
\end{align*}
Note that $\eeta_{t+1} = \eeta_t + O((1+r^2) \eta^2) = \frac{2}{\lamH_1(\vphi)} + O(\alpha \eta)$.
By \Cref{lm:gd-1-step} again, we have the following second-order approximation for $\hatvx_{t+2}$:
\begin{align*}
    \hatvx_{t+2} &= (\mI - \eeta_{t+1}\mH(\vphi)) \hatvx_{t+1} - \tfrac{1}{\lamH_1(\vphi)} \partial^3 \Loss_{\vphi}[\hatvx_{t+1}, \hatvx_{t+1}] + O(\alpha r^2\eta^3).
\end{align*}
Now we combine the two steps together. 
\begin{align*}
    \hatvx_{t+2} &= (\mI - \eeta_{t+1}\mH(\vphi))\left((\mI - \eeta_t \mH(\vphi)) \hatvx_t - \tfrac{1}{\lamH_1(\vphi)} \partial^3 \Loss_{\vphi}[\hatvx_t, \hatvx_t] + O(\alpha r^2 \eta^3)\right) \\
    & \qquad - \tfrac{1}{\lamH_1(\vphi)} \partial^3 \Loss_{\vphi}[\mU_\vphi \hatvx_t + O(\alpha r \eta^2), \mU_\vphi \hatvx_t + O(\alpha r \eta^2)] + O(\alpha r^2 \eta^3) \\
    &= (\mI - \eeta_{t+1} \mH(\vphi))(\mI - \eeta_t \mH(\vphi)) \hatvx_t - \tfrac{1}{\lamH_1(\vphi)} \mU_\vphi \partial^3 \Loss_{\vphi}[\hatvx_t, \hatvx_t] \\
    & \qquad - \tfrac{1}{\lamH_1(\vphi)} \partial^3 \Loss_{\vphi}[\mU_\vphi \hatvx_t, \mU_\vphi \hatvx_t] + O(\alpha r^2 \eta^3) \\
    &= (\mI - \eeta_{t+1} \mH(\vphi))(\mI - \eeta_t \mH(\vphi)) \hatvx_t - \vpsi_{\vphi}(\hatvx_t) + O(\alpha r^2 \eta^3),
\end{align*}
where the second equality uses $\mI - \eeta_{t+1}\mH(\vphi) = \mU_\vphi + O(\alpha \eta)$.
\end{proof}

The following lemma characterizes the function $\vpsi_{\vphi}(\hatvx)$
in \Cref{lm:gd-2-step} when $\hatvx$ is the top eigenvector of $\mH(\vphi)$.
We will use this property later in \Cref{sec:ful-proof-drifting}.
\begin{lemma} \label{lm:ful-psi-formula}
  Under \Cref{ass:man-ful,ass:topeigen-ful}, for $\vphi \in \manGa$,
\begin{align*}
  \vpsi_{\vphi}(\vvH_1(\vphi)) = (2\mI - \tfrac{2}{\lamH_1(\vphi)} \mH(\vphi)) \nabla \log \lamH_1(\vphi),
\end{align*}
where $\vpsi_{\vphi}$ is defined as in \Cref{lm:gd-2-step}.
Moreover, 
\begin{align*}
  \dotpsm{\vpsi_{\vphi}(\vvH_1(\vphi))}{\vvH_1(\vphi)} &= 0, \\
  \mPHz(\vphi) \vpsi_{\vphi}(\vvH_1(\vphi)) &= 2 \gradGa \log \lamH_1(\vphi).
\end{align*}
\end{lemma}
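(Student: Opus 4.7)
The plan is to exploit the fact that $\vvH_1(\vphi)$ is an eigenvector of $\mH(\vphi)$ with eigenvalue $\lamH_1(\vphi)$, so $\mU_\vphi \vvH_1(\vphi) = \vvH_1(\vphi) - \tfrac{2}{\lamH_1(\vphi)}\mH(\vphi)\vvH_1(\vphi) = -\vvH_1(\vphi)$. Substituting into the definition of $\vpsi_\vphi$, the term $\partial^3\Loss_\vphi[\mU_\vphi\vvH_1(\vphi), \mU_\vphi\vvH_1(\vphi)] = \partial^3\Loss_\vphi[\vvH_1(\vphi), \vvH_1(\vphi)]$ because $\partial^3\Loss$ is symmetric bilinear in its two vector arguments, so the two sign flips cancel. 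This reduces $\vpsi_\vphi(\vvH_1(\vphi))$ to $\tfrac{1}{\lamH_1(\vphi)}(\mU_\vphi + \mI)\,\partial^3\Loss_\vphi[\vvH_1(\vphi), \vvH_1(\vphi)] = (2\mI - \tfrac{2}{\lamH_1(\vphi)}\mH(\vphi))\cdot \tfrac{1}{\lamH_1(\vphi)}\partial^3\Loss_\vphi[\vvH_1(\vphi), \vvH_1(\vphi)]$.

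The next step is to identify $\tfrac{1}{\lamH_1(\vphi)}\partial^3\Loss_\vphi[\vvH_1(\vphi), \vvH_1(\vphi)]$ with $\nabla \log \lamH_1(\vphi)$. By \Cref{ass:topeigen-ful}, the top eigenvalue is simple on $\manGa$, so standard first-order eigenvalue perturbation gives $\partial(\lamH_1)_\vphi[\vd] = \vvH_1(\vphi)^\top (\partial \mH_\vphi[\vd]) \vvH_1(\vphi)$. Unpacking the definitions, $\partial \mH_\vphi[\vd]$ is the matrix with entries $\sum_k (\partial^3\Loss_\vphi)_{ijk} d_k$, and using symmetry of the third derivative tensor one obtains $\nabla \lamH_1(\vphi) = \partial^3\Loss_\vphi[\vvH_1(\vphi), \vvH_1(\vphi)]$. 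Dividing by $\lamH_1(\vphi)$ yields the desired identification and completes the formula for $\vpsi_\vphi(\vvH_1(\vphi))$.

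The two consequences are then immediate. For orthogonality to $\vvH_1(\vphi)$, use $\vvH_1(\vphi)^\top \mH(\vphi) = \lamH_1(\vphi)\vvH_1(\vphi)^\top$ to conclude $\vvH_1(\vphi)^\top (2\mI - \tfrac{2}{\lamH_1(\vphi)}\mH(\vphi)) = 2\vvH_1(\vphi)^\top - 2\vvH_1(\vphi)^\top = \vzero^\top$, whence $\dotpsm{\vpsi_\vphi(\vvH_1(\vphi))}{\vvH_1(\vphi)} = 0$. For the null-space projection, note that $\mPHz(\vphi)\mH(\vphi) = \vzero$ by definition, so $\mPHz(\vphi)(2\mI - \tfrac{2}{\lamH_1(\vphi)}\mH(\vphi)) = 2\mPHz(\vphi)$, and the result follows because $\mPHz(\vphi)$ is the orthogonal projection onto $\TGa{\vphi}$ by \Cref{lm:Phi-proj-ful}, so $\mPHz(\vphi) \nabla \log \lamH_1(\vphi) = \gradGa \log \lamH_1(\vphi)$.

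The only nontrivial step is the eigenvalue-derivative identity; once that is in hand, everything else is bookkeeping with the eigenvector relation and index symmetry of $\partial^3\Loss$. I do not expect any real obstacle since the top eigenvalue is simple and $\Loss$ is $\contC^4$, which gives smoothness of $\lamH_1$ and a well-defined $\vvH_1$ in a neighborhood of $\vphi$.
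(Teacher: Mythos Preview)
Your proposal is correct and follows essentially the same route as the paper: use $\mU_\vphi\vvH_1(\vphi)=-\vvH_1(\vphi)$ to collapse the two third-derivative terms, invoke the simple-eigenvalue perturbation identity $\nabla\lamH_1(\vphi)=\partial^3\Loss_\vphi[\vvH_1(\vphi),\vvH_1(\vphi)]$, and then read off the two consequences from $(2\mI-\tfrac{2}{\lamH_1}\mH)\vvH_1=\vzero$ and $\mPHz\mH=\vzero$ together with \Cref{lm:Phi-proj-ful}. The only difference is that you spell out the eigenvalue-derivative step explicitly, whereas the paper simply asserts $\partial^3\Loss_\vphi[\vvH_1(\vphi),\vvH_1(\vphi)]=\nabla\lamH_1(\vphi)$ without comment.
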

\begin{proof}
Note that $\mU_{\vphi} \vvH_1(\vphi) = - \vvH_1(\vphi)$. Then we can rewrite 
$\vpsi_{\vphi}(\vvH_1(\vphi))$ as follows:
\begin{align*}
  \vpsi_{\vphi}(\vvH_1(\vphi))
  &= \tfrac{1}{\lamH_1(\vphi)} \left( \mU_{\vphi} \partial^3\Loss_{\vphi}[\vvH_1(\vphi), \vvH_1(\vphi)] + \partial^3 \Loss_{\vphi}[-\vvH_1(\vphi), -\vvH_1(\vphi)] \right) \\
  &= \tfrac{1}{\lamH_1(\vphi)} (2\mI - \tfrac{2}{\lamH_1(\vphi)} \mH(\vphi)) \partial^3\Loss_{\vphi}[\vvH_1(\vphi), \vvH_1(\vphi)] \\
  &= \tfrac{1}{\lamH_1(\vphi)} (2\mI - \tfrac{2}{\lamH_1(\vphi)} \mH(\vphi)) \nabla \lamH_1(\vphi) \\
  &= (2\mI - \tfrac{2}{\lamH_1(\vphi)} \mH(\vphi)) \nabla \log \lamH_1(\vphi).
\end{align*}
Since $(2\mI - \tfrac{2}{\lamH_1(\vphi)} \mH(\vphi)) \vvH_1(\vphi) = 2\vvH_1(\vphi) - 2\vvH_1(\vphi) = \vzero$,
we have $\dotpsm{\vpsi_{\vphi}(\vvH_1(\vphi))}{\vvH_1(\vphi)} = 0$.

The projection matrix
onto the tangent space of $\manGa$ at $\vphi$
equals to $\mPHz(\vphi)$
by \Cref{lm:Phi-proj-ful}.
Then
\begin{align*}
  \mPHz(\vphi) \vpsi_{\vphi}(\vvH_1(\vphi))
  = \mPHz(\vphi)
    (2\mI - \tfrac{2}{\lamH_1(\vphi)} \mH(\vphi)) \nabla \log \lamH_1(\vphi)
  &= (2\mPHz(\vphi) - \vzero) \nabla \log \lamH_1(\vphi)
  \\
  &= 2 \gradGa \log \lamH_1(\vphi),
\end{align*}
which proves the lemma.
\end{proof}

\subsection{Spherical Optimization}

\begin{lemma} \label{lm:pgd-1-step}
  Under \Cref{ass:man},
  consider one step of projected gradient descent on $\sphS^{D-1}$,
  $\vtheta_{t+1} = \Pi(\vtheta_t - \eeta_t \nabla \Loss(\vtheta_t))$
  with effective learning rate $\eeta_t$.
  Let $\vphi \in \Zeig$ be a local minimizer.
  If for some parameters $\alpha = \eta^{-o(1)}$ and $r = O(\alpha)$,
  $\normtwosm{\vtheta_t - \vphi} = O(r\eta)$
  and $\eeta_t = \frac{2}{\lamH_1(\vphi)} + O(\alpha \eta)$,
  then after one step, we have the following approximations for $\vtheta_{t+1}$:
  \begin{enumerate}
    \item Zeroth-order approximation:
      \begin{align*}
        \vtheta_{t+1} - \vphi = O(r \eta).
      \end{align*}
    \item First-order approximation:
      \begin{align*}
        \vtheta_{t+1} - \vphi
        &= \left(\mI - \eeta_t \mH(\vphi)\right) (\vtheta_t - \vphi) + O(r^2 \eta^2) \\
        &= \left(\mI - \tfrac{2}{\lamH_1(\vphi)} \mH(\vphi)\right) (\vtheta_t - \vphi) + O(\alpha r \eta^2).
      \end{align*}
    \item Second-order approximation:
      \begin{align*}
        \vtheta_{t+1} - \vphi
        &= \left(\mI - \eeta_t \mH(\vphi)\right) (\vtheta_t - \vphi) - \tfrac{1}{\lamH_1(\vphi)} \partial^3 \Loss_{\vphi}[\vtheta_t - \vphi, \vtheta_t - \vphi] \\
        & \qquad - \tfrac{2}{\lamH_1(\vphi)^2}  \normtwosm{\mH(\vphi) (\vtheta_t - \vphi)}^2 \vphi + O(\alpha r^2 \eta^3).
      \end{align*}
  \end{enumerate}
\end{lemma}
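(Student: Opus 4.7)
\textbf{Proof proposal for Lemma \ref{lm:pgd-1-step}.}
The plan is to reduce the spherical update to the full-space update already handled by Lemma \ref{lm:gd-1-step} and then control the projection correction via a careful Taylor expansion of $\Pi$. Let $\vw := \vtheta_t - \eeta_t \nabla \Loss(\vtheta_t)$ denote the unprojected iterate, so that $\vtheta_{t+1} = \Pi(\vw)$. A crucial observation is that since $\Loss$ is scale-invariant and $\vtheta_t \in \sphS^{D-1}$, Lemma \ref{lm:scale-invariant-grad-hess} gives $\dotp{\nabla\Loss(\vtheta_t)}{\vtheta_t} = 0$, which means $\normtwosm{\vw}^2 = 1 + \eeta_t^2\normtwosm{\nabla\Loss(\vtheta_t)}^2$ with no cross term. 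Combined with the bound $\normtwosm{\nabla\Loss(\vtheta_t)} = O(r\eta)$ obtained by Taylor-expanding $\nabla\Loss$ around the minimizer $\vphi$, this yields $\normtwosm{\vw} - 1 = O(r^2\eta^2)$, and consequently $\normtwosm{\vtheta_{t+1} - \vw} = O(r^2\eta^2)$.

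For the zeroth- and first-order approximations I would simply apply Lemma \ref{lm:gd-1-step} to $\vw$ and then absorb the projection correction $\vtheta_{t+1} - \vw = O(r^2\eta^2)$ into the existing error term. Both statements follow immediately, since the error budget $O(r^2\eta^2)$ (or $O(\alpha r\eta^2)$ for the refined first-order form) is already looser than the projection correction. The second-order bound is where some care is needed, and it is the only substantive step.

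For the second-order approximation I would Taylor-expand $\Pi(\vtheta_t + \vu)$ to second order in $\vu := -\eeta_t\nabla\Loss(\vtheta_t)$. Writing $\Pi(\vtheta_t+\vu) = (\vtheta_t+\vu)/\sqrt{1+2\dotp{\vtheta_t}{\vu}+\normtwosm{\vu}^2}$ and expanding the reciprocal square root, the linear-in-$\vu$ correction vanishes by the scale-invariance identity $\dotp{\vtheta_t}{\vu}=0$, so we obtain
\begin{equation*}
  \Pi(\vtheta_t + \vu) \;=\; \vtheta_t + \vu \;-\; \tfrac{1}{2}\normtwosm{\vu}^2\,\vtheta_t \;+\; O(\normtwosm{\vu}^3).
\end{equation*}
Plugging in $\vu = -\eeta_t\nabla\Loss(\vtheta_t)$ and using $\eeta_t^2 = 4/\lamH_1(\vphi)^2 + O(\alpha\eta)$ together with the first-order expansion $\nabla\Loss(\vtheta_t) = \mH(\vphi)(\vtheta_t-\vphi) + O(r^2\eta^2)$ gives $\tfrac{1}{2}\eeta_t^2\normtwosm{\nabla\Loss(\vtheta_t)}^2 = \tfrac{2}{\lamH_1(\vphi)^2}\normtwosm{\mH(\vphi)(\vtheta_t-\vphi)}^2 + O(\alpha r^2\eta^3)$. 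Since this coefficient is $O(r^2\eta^2)$ and $\vtheta_t = \vphi + O(r\eta)$, one may replace $\vtheta_t$ by $\vphi$ in the outer factor at a cost of $O(r^3\eta^3) \subseteq O(\alpha r^2\eta^3)$. Substituting the second-order approximation of $\vw - \vphi$ from Lemma \ref{lm:gd-1-step} for the remaining linear part and collecting the two correction terms yields the claimed expression. The main (minor) obstacle is bookkeeping the error orders so that $O(\normtwosm{\vu}^3) = O(r^3\eta^3)$ and the mixed cross-errors all fit inside $O(\alpha r^2\eta^3)$, which is automatic from $r=O(\alpha)$.
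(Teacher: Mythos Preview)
Your proposal is correct and follows essentially the same approach as the paper: both arguments use the scale-invariance orthogonality $\dotpsm{\nabla\Loss(\vtheta_t)}{\vtheta_t}=0$ to control the normalization factor, invoke Lemma~\ref{lm:gd-1-step} for the unprojected step, and absorb the projection correction, with the second-order part handled by expanding $\normtwosm{\hatvtheta_{t+1}}^{-1}$ (the paper computes the norm and then divides, while you phrase this as a Taylor expansion of $\Pi$, which is the same calculation).
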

\begin{proof}
  Let $\hatvtheta_{t+1} = \vtheta_t - \eeta_t \nabla \Loss(\vtheta_t)$.
  Then $\vtheta_{t+1} = \frac{\hatvtheta_{t+1}}{\normtwosm{\hatvtheta_{t+1}}}$.

  Since $\nabla \Loss(\vtheta_t)$ is perpendicular to $\vtheta_t$,
  $\normtwosm{\hatvtheta_{t+1}} = \sqrt{\normtwosm{\vtheta_t}^2 + \eeta_t^2 \normtwosm{\nabla \Loss(\vtheta_t)}^2 }$.
  By Taylor expansion, $\nabla \Loss(\vtheta_t) = O(r \eta)$.
  Then the norm of $\hatvtheta_{t+1}$ can be estimated by
  \begin{align*}
    \normtwosm{\hatvtheta_{t+1}} = \sqrt{\normtwosm{\vtheta_t}^2 + \eeta_t^2 \normtwosm{\nabla \Loss(\vtheta_t)}^2 }
    = \sqrt{1 + O(r^2\eta^2) } = 1 + O(r^2 \eta^2).
  \end{align*}
  Then $\vtheta_{t+1} = \frac{\hatvtheta_{t+1}}{\normtwosm{\hatvtheta_{t+1}}} = \hatvtheta_{t+1} + O(r^2 \eta^2)$.
  Combining with \Cref{lm:gd-1-step} proves the zeroth- and first-order approximations.

  To prove the second-order approximation, we need a tighter estimate for the norm of $\hatvtheta_{t+1}$.
  By Taylor expansion, we have $\nabla \Loss(\vtheta_t) = \mH(\vphi) (\vtheta_t - \vphi) + O(r^2 \eta^2)$.
  Squaring both sides gives $\normtwosm{\nabla \Loss(\vtheta_t)}^2 = \normtwosm{\mH(\vphi) (\vtheta_t - \vphi)}^2 + O(r^3 \eta^3)$.
  Then we have
  \begin{align*}
    \normtwosm{\hatvtheta_{t+1}} &= \sqrt{\normtwosm{\vtheta_t}^2 + \eeta_t^2 \normtwosm{\nabla \Loss(\vtheta_t)}^2 } \\
    &= \sqrt{1 + \left(\tfrac{2}{\lamH_1(\vphi)} + O(\alpha \eta)\right)^2 (\normtwosm{\mH(\vphi) (\vtheta_t - \vphi)}^2 + O(r^3 \eta^3)) } \\
    &= \sqrt{1 + \tfrac{4}{\lamH_1(\vphi)^2}  \normtwosm{\mH(\vphi) (\vtheta_t - \vphi)}^2 + O(\alpha r^2 \eta^3) } \\
    &= \sqrt{1 + \tfrac{4}{\lamH_1(\vphi)^2}  \normtwosm{\mH(\vphi) (\vtheta_t - \vphi)}^2} + O(\alpha r^2 \eta^3) \\
    &= 1 + \tfrac{2}{\lamH_1(\vphi)^2}  \normtwosm{\mH(\vphi) (\vtheta_t - \vphi)}^2 + O(\alpha r^2 \eta^3).
  \end{align*}
  So $\vtheta_{t+1}$ can be estimated by
  \begin{align*}
    \vtheta_{t+1} = \frac{\hatvtheta_{t+1}}{\normtwosm{\hatvtheta_{t+1}}}
    &= \frac{\hatvtheta_t}{1 + \tfrac{2}{\lamH_1(\vphi)^2}  \normtwosm{\mH(\vphi) (\vtheta_t - \vphi)}^2 + O(\alpha r^2 \eta^3)} \\
    &= \left(1 - \tfrac{2}{\lamH_1(\vphi)^2}  \normtwosm{\mH(\vphi) (\vtheta_t - \vphi)}^2\right) \hatvtheta_t + O(\alpha r^2 \eta^3).
  \end{align*}
  Then $\vtheta_{t+1} - \vphi$ can be estimated by
  \begin{align*}
    \vtheta_{t+1} - \vphi
    &= \left(1 - \tfrac{2}{\lamH_1(\vphi)^2}  \normtwosm{\mH(\vphi) (\vtheta_t - \vphi)}^2\right)
    \hatvtheta_t - \vphi + O(\alpha r^2 \eta^3) \\
    &= \left(1 - \tfrac{2}{\lamH_1(\vphi)^2}  \normtwosm{\mH(\vphi) (\vtheta_t - \vphi)}^2\right) (\hatvtheta_t - \vphi) \\
    & \qquad - \tfrac{2}{\lamH_1(\vphi)^2}  \normtwosm{\mH(\vphi) (\vtheta_t - \vphi)}^2 \vphi + O(\alpha r^2 \eta^3) \\
    &= (\hatvtheta_t - \vphi) + O(r^3 \eta^3) \\
    & \qquad - \tfrac{2}{\lamH_1(\vphi)^2}  \normtwosm{\mH(\vphi) (\vtheta_t - \vphi)}^2 \vphi + O(\alpha r^2 \eta^3) \\
    &= \left(\mI - \eeta_t \mH(\vphi)\right) (\vtheta_t - \vphi) - \tfrac{1}{\lamH_1(\vphi)} \partial^3 \Loss_{\vphi}[\vtheta_t - \vphi, \vtheta_t - \vphi] \\
    & \qquad - \tfrac{2}{\lamH_1(\vphi)^2}  \normtwosm{\mH(\vphi) (\vtheta_t - \vphi)}^2 \vphi + O(\alpha r^2 \eta^3),
  \end{align*}
  where the last equality uses the second-order approximation in \Cref{lm:gd-1-step}.
\end{proof}

\begin{lemma} \label{lm:pgd-2-step}
  Under \Cref{ass:man},
  consider two steps of projectioned gradient descent on $\sphS^{D-1}$
  with effective learning rates $\eeta_t$ and $\eeta_{t+1}$:
  \begin{align*}
    \vtheta_{t+1} &= \Pi(\vtheta_t - \eeta_t \nabla \Loss(\vtheta_t)), \\
    \vtheta_{t+2} &= \Pi(\vtheta_{t+1} - \eeta_{t+1} \nabla \Loss(\vtheta_{t+1})).
  \end{align*}
  Let $\vphi \in \Zeig$ be a local minimizer.
  If for some parameters $\alpha = \eta^{-o(1)}$ and $r = O(\alpha)$,
  $\normtwosm{\vtheta_t - \vphi} = O(r\eta)$,
  $\eeta_t = \frac{2}{\lamH_1(\vphi)} + O(\alpha \eta)$,
  and $\eeta_{t+1} = \frac{2}{\lamH_1(\vphi)} + O(\alpha \eta)$,
  then after two steps,
  \[
    \vtheta_{t+2} - \vphi = (\mI - \eeta_{t+1} \mH(\vphi))(\mI - \eeta_t \mH(\vphi)) (\vtheta_t - \vphi) - \vpsi_{\vphi}(\vtheta_t - \vphi) + O(\alpha r^2 \eta^3),
  \]
  where
  \begin{align*}
    \vpsi_{\vphi}(\hatvx) &:=
    \tfrac{1}{\lamH_1(\vphi)}\left( \mU_{\vphi} \partial^3 \Loss_{\vphi}[\hatvx, \hatvx] + \partial^3 \Loss_{\vphi}[\mU_\vphi \hatvx, \mU_\vphi \hatvx] \right) \\
    & \qquad\qquad + \tfrac{2}{\lamH_1(\vphi)^2}\left( \normtwosm{\mH(\vphi) \hatvx}^2 + \normtwosm{\mH(\vphi) \mU_\vphi \hatvx}^2 \right) \vphi, \\
    \mU_{\vphi} &:= \mI - \tfrac{2}{\lamH_1(\vphi)} \mH(\vphi).
  \end{align*}
\end{lemma}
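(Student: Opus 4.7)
The plan is to follow the same strategy as the full-space version \Cref{lm:gd-2-step}, applying the single-step PGD expansion \Cref{lm:pgd-1-step} twice and combining the two expansions carefully. Writing $\hatvx_\tau := \vtheta_\tau - \vphi$, I would first invoke the zeroth- and first-order approximations in \Cref{lm:pgd-1-step} to get
\begin{align*}
\hatvx_{t+1} &= O(r\eta), \\
\hatvx_{t+1} &= \mU_\vphi \hatvx_t + O(\alpha r \eta^2),
\end{align*}
together with the refined second-order approximation
\begin{align*}
\hatvx_{t+1} = (\mI - \eeta_t \mH(\vphi)) \hatvx_t - \tfrac{1}{\lamH_1(\vphi)}\partial^3 \Loss_{\vphi}[\hatvx_t,\hatvx_t] - \tfrac{2}{\lamH_1(\vphi)^2} \normtwosm{\mH(\vphi)\hatvx_t}^2 \vphi + O(\alpha r^2 \eta^3).
\end{align*}
The bound $\hatvx_{t+1} = O(r\eta)$ lets me apply \Cref{lm:pgd-1-step} a second time (using $\eeta_{t+1} = \tfrac{2}{\lamH_1(\vphi)} + O(\alpha\eta)$) to expand $\hatvx_{t+2}$ analogously.

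The key new ingredient compared to the full-space case is tracking the extra $\vphi$-direction term arising from the spherical projection. The crucial observation is that since $\Loss$ is scale-invariant and $\vphi \in \manGa \subseteq \sphS^{D-1}$ is a minimizer, \Cref{lm:scale-invariant-grad-hess} gives $\mH(\vphi)\vphi = -\nabla \Loss(\vphi) = \vzero$, so $\vphi$ lies in the null space of $\mH(\vphi)$ and hence $(\mI - \eeta_{t+1}\mH(\vphi))\vphi = \vphi$. This means that the $\vphi$-direction term produced at step $t$ is preserved exactly (up to the tolerated error) by the linear part of the step $t+1$ update, so the two $\vphi$-direction contributions simply add. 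Substituting $\hatvx_{t+1} = \mU_\vphi \hatvx_t + O(\alpha r\eta^2)$ into the new cubic tensor term yields $\tfrac{1}{\lamH_1(\vphi)} \partial^3 \Loss_\vphi[\mU_\vphi \hatvx_t, \mU_\vphi \hatvx_t]$ up to $O(\alpha r^2 \eta^3)$, and into $\normtwosm{\mH(\vphi) \hatvx_{t+1}}^2$ yields $\normtwosm{\mH(\vphi)\mU_\vphi \hatvx_t}^2 + O(\alpha r^2 \eta^3)$ because $\mH(\vphi)\hatvx_t = O(r\eta)$.

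Finally, I would combine everything and use $\mI - \eeta_{t+1}\mH(\vphi) = \mU_\vphi + O(\alpha\eta)$ to move the $\mU_\vphi$ factor in front of the $\partial^3 \Loss_\vphi[\hatvx_t, \hatvx_t]$ term produced at step $t$; since this term is $O(r^2\eta^2)$, the $O(\alpha \eta)$ slack absorbs into the overall $O(\alpha r^2 \eta^3)$ remainder. Adding the two $\vphi$-direction contributions gives the $\tfrac{2}{\lamH_1(\vphi)^2}(\normtwosm{\mH(\vphi)\hatvx_t}^2 + \normtwosm{\mH(\vphi)\mU_\vphi\hatvx_t}^2)\vphi$ piece of $\vpsi_\vphi$, and the two cubic tensor contributions give the $\tfrac{1}{\lamH_1(\vphi)}(\mU_\vphi \partial^3 \Loss_\vphi[\hatvx_t,\hatvx_t] + \partial^3 \Loss_\vphi[\mU_\vphi \hatvx_t, \mU_\vphi \hatvx_t])$ piece, matching the stated $\vpsi_\vphi$ exactly.

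The main obstacle is essentially bookkeeping: ensuring that every remainder from substituting the first-order approximation of $\hatvx_{t+1}$ into the quadratic-in-$\hatvx_{t+1}$ terms at step $t+1$, as well as from replacing $\eeta_{t+1}$ by $\tfrac{2}{\lamH_1(\vphi)}$ in the second-order pieces, is indeed no worse than $O(\alpha r^2 \eta^3)$. The scale-invariance identity $\mH(\vphi)\vphi = \vzero$ is what makes the $\vphi$-direction pieces combine cleanly, and without it the statement of $\vpsi_\vphi$ would need a correction term; it is worth highlighting this in the formal write-up.
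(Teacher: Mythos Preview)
Your proposal is correct and follows essentially the same approach as the paper: apply \Cref{lm:pgd-1-step} twice, substitute the first-order approximation $\hatvx_{t+1} = \mU_\vphi \hatvx_t + O(\alpha r\eta^2)$ into the second-order pieces of the second step, and use the scale-invariance identity $\mH(\vphi)\vphi = \vzero$ to combine the $\vphi$-direction contributions. The paper packages the per-step second-order correction as an auxiliary function $\widehat{\vpsi}_\vphi(\hatvx) := \tfrac{1}{\lamH_1(\vphi)} \partial^3\Loss_\vphi[\hatvx,\hatvx] + \tfrac{2}{\lamH_1(\vphi)^2}\normtwosm{\mH(\vphi)\hatvx}^2\vphi$ and then verifies the algebraic identity $\mU_\vphi \widehat{\vpsi}_\vphi(\hatvx) + \widehat{\vpsi}_\vphi(\mU_\vphi\hatvx) = \vpsi_\vphi(\hatvx)$, but this is merely a notational organization of the same computation you describe.
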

\begin{proof}
Let $\hatvx_{\tau} := \vtheta_{\tau} - \vphi$ for all $\tau \in \{t, t+1, t+2\}$.
Let $\widehat{\vpsi}_{\vphi}(\vx)$ be the following function:
\begin{align*}
  \widehat{\vpsi}_{\vphi}(\hatvx) := \tfrac{1}{\lamH_1(\vphi)} \partial^3 \Loss_{\vphi}[\hatvx, \hatvx] + \tfrac{2}{\lamH_1(\vphi)^2}  \normtwosm{\mH(\vphi) \hatvx}^2 \vphi.
\end{align*}
By \Cref{lm:pgd-1-step}, we have 
$\hatvx_{t+1} = O(r \eta)$ and the following first-order and second-order approximations for $\hatvx_{t+1}$:
\begin{align*}
    \hatvx_{t+1} &= \mU_\vphi \hatvx_t + O(\alpha r \eta^2). \\
    \hatvx_{t+1} &= (\mI - \eeta_t \mH(\vphi)) \hatvx_t - \widehat{\vpsi}_{\vphi}(\hatvx_t) + O(\alpha r^2 \eta^3).
\end{align*}
Note that $\eeta_{t+1} = \eeta_t + O((1+r^2) \eta^2) = \frac{2}{\lamH_1(\vphi)} + O(\alpha \eta)$.
By \Cref{lm:pgd-1-step} again, we have the following second-order approximation for $\hatvx_{t+2}$:
\begin{align*}
    \hatvx_{t+2} &= (\mI - \eeta_{t+1}\mH(\vphi)) \hatvx_{t+1} - \widehat{\vpsi}_{\vphi}(\hatvx_{t+1}) + O(\alpha r^2\eta^3).
\end{align*}
Now we combine the two steps together.
\begin{align*}
    \hatvx_{t+2} &= (\mI - \eeta_{t+1}\mH(\vphi))\left((\mI - \eeta_t \mH(\vphi)) \hatvx_t - \widehat{\vpsi}_{\vphi}(\hatvx_{t}) + O(\alpha r^2 \eta^3)\right) \\
    & \qquad - \widehat{\vpsi}_{\vphi}( \mU_\vphi \hatvx_t + O(\alpha r \eta^2) ) + O(\alpha r^2 \eta^3) \\
    &= (\mI - \eeta_{t+1}\mH(\vphi))(\mI - \eeta_t \mH(\vphi) \hatvx_t - (\mU_\vphi + O(\alpha \eta))\widehat{\vpsi}_{\vphi}(\hatvx_{t}) + O(\alpha r^2 \eta^3) \\
    & \qquad - \left(\widehat{\vpsi}_{\vphi}( \mU_\vphi \hatvx_t) + O(\alpha r^2 \eta^3)\right) + O(\alpha r^2 \eta^3) \\
    &= (\mI - \eeta_{t+1}\mH(\vphi))(\mI - \eeta_t \mH(\vphi) \hatvx_t
    - \left(\mU_\vphi \widehat{\vpsi}_{\vphi}(\hatvx_{t}) + \widehat{\vpsi}_{\vphi}( \mU_\vphi \hatvx_t) \right)
    + O(\alpha r^2 \eta^3).
\end{align*}
where the second equality uses $\mI - \eeta_{t+1}\mH(\vphi) = \mU_\vphi + O(\alpha \eta)$.
Finally, we note that
\begin{align*}
  \mU_\vphi \widehat{\vpsi}_{\vphi}(\hatvx) + \widehat{\vpsi}_{\vphi}( \mU_\vphi \hatvx) &= 
  \tfrac{1}{\lamH_1(\vphi)} \mU_\vphi \partial^3 \Loss_{\vphi}[\hatvx, \hatvx] + \tfrac{2}{\lamH_1(\vphi)^2}  \normtwosm{\mH(\vphi) \hatvx}^2 \mU_\vphi \vphi \\
  & \qquad + \tfrac{1}{\lamH_1(\vphi)} \partial^3 \Loss_{\vphi}[\mU_\vphi \hatvx, \mU_\vphi \hatvx] + \tfrac{2}{\lamH_1(\vphi)^2}  \normtwosm{\mH(\vphi) \mU_\vphi \hatvx}^2 \vphi \\
  &= \tfrac{1}{\lamH_1(\vphi)}\left( \mU_{\vphi} \partial^3 \Loss_{\vphi}[\hatvx, \hatvx] + \partial^3 \Loss_{\vphi}[\mU_\vphi \hatvx, \mU_\vphi \hatvx] \right) \\
  & \qquad + \tfrac{2}{\lamH_1(\vphi)^2}\left( \normtwosm{\mH(\vphi) \hatvx}^2 + \normtwosm{\mH(\vphi) \mU_\vphi \hatvx}^2 \right) \vphi \\
  &= \vpsi_{\vphi}(\hatvx),
\end{align*}
where the second equality uses the fact that $\mU_{\vphi} = (\mI - \frac{2}{\lamH_1(\vphi)} \mH(\vphi)) \vphi = \vphi + \frac{2}{\lamH_1(\vphi)} \nabla \Loss(\vphi) = \vphi$ (\Cref{lm:scale-invariant-grad-hess}).
\end{proof}

\begin{lemma} \label{lm:sph-psi-formula}
  Under \Cref{ass:man,ass:topeigen},
  for $\vphi \in \manGa$,
  \begin{align*}
    \vpsi_{\vphi}(\vvH_1(\vphi)) = (2\mI - \tfrac{2}{\lamH_1(\vphi)} \mH_t) \nabla \log \lamH_1(\vphi) + 4\vphi,
  \end{align*}
  where $\vpsi_{\vphi}$ is defined as in \Cref{lm:pgd-2-step}. Moreover,
\begin{align}
  \dotpsm{\vpsi_{\vphi}(\vvH_1(\vphi))}{\vvH_1(\vphi)} &= 0, \label{eq:sph-psi-proj-v1} \\
  \mPHz(\vphi) \vpsi_{\vphi}(\vvH_1(\vphi)) &= 2 \gradGa \log \lamH_1(\vphi). \label{eq:sph-psi-proj-H0}
\end{align}
\end{lemma}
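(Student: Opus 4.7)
The plan is to mirror the argument of Lemma \ref{lm:ful-psi-formula} and then carefully account for the two extra spherical ingredients: the explicit $\vphi$-direction term that appears in the spherical $\vpsi_{\vphi}$ (coming from $\Pi$'s renormalization in Lemma \ref{lm:pgd-1-step}), and the fact that $\vphi$ itself lies in the null space of $\mH(\vphi)$ by scale-invariance. First I would observe, exactly as in the full-space case, that $\mU_{\vphi}\vvH_1(\vphi) = -\vvH_1(\vphi)$. Thus in the cubic part of $\vpsi_{\vphi}(\vvH_1(\vphi))$ the two Hessian-cubed contributions collapse to
\[
\tfrac{1}{\lamH_1(\vphi)}(\mU_{\vphi}+\mI)\,\partial^3 \Loss_{\vphi}[\vvH_1(\vphi),\vvH_1(\vphi)] = (2\mI-\tfrac{2}{\lamH_1(\vphi)}\mH(\vphi))\,\nabla \log \lamH_1(\vphi),
\]
using the standard identity $\nabla \lamH_1(\vphi) = \partial^3 \Loss_{\vphi}[\vvH_1(\vphi),\vvH_1(\vphi)]$. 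For the sphere-specific term, both $\normtwosm{\mH(\vphi)\vvH_1(\vphi)}^2$ and $\normtwosm{\mH(\vphi)\mU_{\vphi}\vvH_1(\vphi)}^2$ equal $\lamH_1(\vphi)^2$, which makes the prefactor $\tfrac{2}{\lamH_1(\vphi)^2}\cdot 2\lamH_1(\vphi)^2 = 4$, producing the $4\vphi$ summand. Combining gives the claimed closed form for $\vpsi_{\vphi}(\vvH_1(\vphi))$.

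For the orthogonality assertion \eqref{eq:sph-psi-proj-v1}, the symmetric matrix $2\mI-\tfrac{2}{\lamH_1(\vphi)}\mH(\vphi)$ annihilates $\vvH_1(\vphi)$, so the first summand contributes $0$ via $\dotpsm{\mA \vv}{\vvH_1(\vphi)}=\vv^{\top}\mA\vvH_1(\vphi)$. The $4\vphi$ contribution is also zero because $\vphi \perp \vvH_1(\vphi)$: by scale-invariance (Lemma \ref{lm:scale-invariant-grad-hess}) $\mH(\vphi)\vphi = -\nabla\Loss(\vphi)$, and $\nabla\Loss(\vphi)=\vzero$ since $\vphi\in\manGa$ is a local minimizer on $\sphS^{D-1}$ and scale-invariance forces the normal-to-sphere component of $\nabla\Loss(\vphi)$ to vanish as well. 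Hence $\vphi$ lies in the null space of $\mH(\vphi)$ and is orthogonal to the top eigenvector.

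For the null-space projection identity \eqref{eq:sph-psi-proj-H0}, applying $\mPHz(\vphi)$ to the explicit formula gives
\[
\mPHz(\vphi)\vpsi_{\vphi}(\vvH_1(\vphi)) = 2\,\mPHz(\vphi)\nabla\log\lamH_1(\vphi) + 4\,\mPHz(\vphi)\vphi,
\]
since $\mPHz(\vphi)\mH(\vphi)=\vzero$. Because $\vphi$ is in the null space, $\mPHz(\vphi)\vphi=\vphi$. The subtlety compared to the full-space lemma is that the null space of $\mH(\vphi)$ is now $\TGa{\vphi}\oplus\R\vphi$ rather than $\TGa{\vphi}$ itself, so $\mPHz(\vphi)\nabla\log\lamH_1(\vphi)$ decomposes as $\gradGa\log\lamH_1(\vphi) + \dotpsm{\nabla\log\lamH_1(\vphi)}{\vphi}\,\vphi$. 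Here is where I would use the crucial scale-invariance computation: since $\lamH_1(c\vphi) = c^{-2}\lamH_1(\vphi)$, differentiating at $c=1$ yields $\dotpsm{\nabla\log\lamH_1(\vphi)}{\vphi}=-2$. Substituting gives $2(\gradGa\log\lamH_1(\vphi)-2\vphi)+4\vphi = 2\gradGa\log\lamH_1(\vphi)$, as claimed.

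The only nontrivial obstacle is the bookkeeping in this last step: one must recognize that the sphere-specific $4\vphi$ term in $\vpsi_{\vphi}$ is precisely what is needed to cancel the $-2\vphi$ contamination coming from the non-tangential component of $\mPHz(\vphi)\nabla\log\lamH_1(\vphi)$. This is not accidental but a structural reflection of the $(-2)$-homogeneity of the Hessian and the $1$-homogeneity of $\vphi$, and it is what ultimately lets the spherical dynamics drift along the manifold in the same sharpness-reducing direction as in the full-space setting, justifying the identical ODE \eqref{eq:zeta-general}.
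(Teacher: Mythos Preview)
Your proposal is correct and follows essentially the same route as the paper's proof: both compute the explicit formula by reducing the cubic part via $\mU_{\vphi}\vvH_1(\vphi)=-\vvH_1(\vphi)$ and the identity $\partial^3\Loss_{\vphi}[\vvH_1,\vvH_1]=\nabla\lamH_1(\vphi)$, evaluate the sphere-specific term as $4\vphi$, and then use the null-space decomposition $\mathrm{null}(\mH(\vphi))=\TGa{\vphi}\oplus\R\vphi$ together with the radial derivative $\dotpsm{\nabla\log\lamH_1(\vphi)}{\vphi}=-2$ (from $(-2)$-homogeneity of the Hessian) to produce the cancellation yielding \eqref{eq:sph-psi-proj-H0}. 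Your write-up is in fact slightly more explicit than the paper's about why $\vphi$ lies in the null space and about the orthogonal decomposition of that null space.
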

\begin{proof}
  Let $\mV_{\vphi} := 2\mI - \frac{2}{\lamH_1(\vphi)} \mH(\vphi)$.
    Using a similar argument as in \Cref{lm:ful-psi-formula},
    \[
      \tfrac{1}{\lamH_1(\vphi)} \left( \mU_{\vphi} \partial^3\Loss_{\vphi}[\vvH_1(\vphi), \vvH_1(\vphi)] + \partial^3 \Loss_{\vphi}[-\vvH_1(\vphi), -\vvH_1(\vphi)] \right)
      = \mV_{\vphi} \nabla \log \lamH_1(\vphi).
    \]
    Also notice that
    \begin{align*}
    \tfrac{2}{\lamH_1(\vphi)^2} \left( \normtwosm{\mH(\vphi) \vvH_1(\vphi)}^2 + \normtwosm{\mH(\vphi) \mU_{\vphi} \vvH_1(\vphi)}^2 \right) \vphi
    = \tfrac{2}{\lamH_1(\vphi)} ( \lamH_1(\vphi)^2 + \lamH_1(\vphi)^2 ) \vphi = 4 \vphi.
    \end{align*}
    Combining these together proves
    $\vpsi_{\vphi}(\vvH_1(\vphi)) = \mV_{\vphi} \nabla \log \lamH_1(\vphi) + 4\vphi$.

    To obtain the last two equations \eqref{eq:sph-psi-proj-v1}, \eqref{eq:sph-psi-proj-H0},
    we first note that 
    $\vphi$ is a $0$-eigenvector of $\mH(\vphi)$
    since 
    $\mH(\vphi) \vphi = -\nabla \Loss(\vphi) = \vzero$
    by \Cref{lm:scale-invariant-grad-hess}.
    So we have $\dotpsm{\vphi}{\vvH_1(\vphi)} = 0$ and $\mPHz(\vphi) \vphi = \vphi$.

    To prove \eqref{eq:sph-psi-proj-v1},
    we note that 
    $\mV_{\vphi} \vvH_1(\vphi) = \vzero$.
    Then 
    \begin{align*}
      \dotpsm{\vpsi_{\vphi}(\vvH_1(\vphi))}{\vvH_1(\vphi)} = \dotpsm{\nabla \log \lamH_1(\vphi)}{\mV_{\vphi}\vvH_1(\vphi)} + 4\dotpsm{\vphi}{\vvH_1(\vphi)} = 0.
    \end{align*}

    To prove \eqref{eq:sph-psi-proj-H0},
    first we note that $\mPHz(\vphi)\mV_{\vphi} = 2\mI$ and $\mPHz(\vphi) \vphi = \vphi$, which implies
    \begin{align*}
      \mPHz(\vphi) \vpsi_{\vphi}(\vvH_1(\vphi)) = 2\mPHz(\vphi) \nabla \log \lamH_1(\vphi) + 4 \vphi.
    \end{align*}
    Then 
    by \Cref{lm:Phi-proj-sph},
    we can decompose 
    $\mPHz(\vphi)\nabla \log \lamH_1(\vphi)$
    as a component parallel to $\vphi$ and a component perpendicular to $\vphi$:
    \begin{align*}
      \mPHz(\vphi)\nabla \log \lamH_1(\vphi) &= 
      (\mI - \vphi \vphi^\top)\mPHz(\vphi)\nabla \log \lamH_1(\vphi)
      + \dotpsm{\nabla \log \lamH_1(\vphi)}{\vphi} \vphi \\
      &=
      \gradGa \log \lamH_1(\vphi)
      + \left.\frac{\partial}{\partial c} \log \lamH_1(c\vphi)\right|_{c=1}.
    \end{align*}
    For the second term, we note that 
    $\lamH_1(c \vphi) = c^{-2} \lamH_1(\vphi)$ by scale-invariance,
    and thus we have $\frac{\partial}{\partial c}\log \lamH_1(c \vphi) = -2 / c$
    and $\left.\frac{\partial}{\partial c} \log \lamH_1(c\vphi)\right|_{c=1}= -2$. 
    Combining all these together gives
    \begin{align*}
      \mPHz(\vphi) \vpsi_{\vphi}(\vvH_1(\vphi)) =
      2\left(\gradGa \log \lamH_1(\vphi)
      - 2 \vphi\right)
      + 4 \vphi = 2\gradGa \log \lamH_1(\vphi),
    \end{align*}
    which proves \eqref{eq:sph-psi-proj-H0}.
\end{proof}

\section{Lemmas for Quasi-RMSprop Schedulers} \label{sec:lr-proof}

\begin{lemma} \label{lm:lr-1-step}
    Given gradients $\{\vg_t\}_{t \ge 0}$,
    let $\eeta_0, \eeta_1, \eeta_2, \dots$ be the effective learning rates produced by
    a quasi-RMSprop scheduler with base learning rate $\eta$
    and decay rate $\beta$,
    and let $\{\tilde{v}_t\}_{t \ge 0}$ be the corresponding moment estimating sequence.
    Consider the case of $\eta = o(1)$, $\beta = 1 - \Cb \eta^2 + O(\eta^4)$ for some $\Cb = \Theta(1)$.
    For $\mu = \Theta(1)$ and some $t \ge 0$, define $\hatu_{\tau} := \frac{1}{\eta}(\mu^2\tilde{v}_{\tau} - 1)$ for $\tau \in \{t, t+1\}$.
    If $\normtwosm{\vg_t} \le \alpha \eta$ and $\abssm{\mu^2 \tilde{v}_t - 1} \le \alpha \eta$
    for some $\alpha = (\tfrac{1}{\eta})^{o(1)}$ at step $t$,
    then the following holds
    \begin{align}
        \eeta_t &= \mu \cdot \left(1 - \tfrac{1}{2}\eta \hatu_t\right) + O(\alpha^2 \eta^2). \label{eq:lr-1-step-eta} \\
        \hatu_{t+1} &= \hatu_t + \Cb \eta (  \mu^2 \bar{g}_t^2 - 1) + O(\alpha \eta^2).\label{eq:lr-1-step-u}
    \end{align}
\end{lemma}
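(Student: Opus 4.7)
\medskip

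The plan is to unwind the two defining bounds of the Quasi-RMSprop scheduler (Definition~\ref{def:qrms-sche}) and combine them with elementary Taylor expansions. Throughout, I will keep track of the fact that $1-\beta = \Cb\eta^2 + O(\eta^4)$, $\bar{g}_t \le \alpha$, $\delta(\tilv_t) = O(1)$ (since $\tilv_t = \mu^{-2} + O(\alpha\eta)$ stays in a compact region and $\delta$ is continuous), and that $\alpha = \eta^{-o(1)}$ so any polynomial $P(\bar g_t)$ is $\eta^{-o(1)}$. I expect no deep obstacle here; the lemma is really a pair of careful first-order expansions, and the only care needed is to make sure all error terms collapse into the claimed $O(\alpha^2\eta^2)$ and $O(\alpha\eta^2)$ bounds respectively.

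First I would prove \eqref{eq:lr-1-step-eta}. Writing $\mu^2 \tilv_t = 1 + \eta \hatu_t$ with $\eta\hatu_t = O(\alpha\eta) = o(1)$, the Taylor expansion $(1+x)^{-1/2} = 1 - \tfrac{1}{2}x + O(x^2)$ yields
\[
\tfrac{1}{\sqrt{\tilv_t}} = \mu\bigl(1 + \eta\hatu_t\bigr)^{-1/2} = \mu\bigl(1 - \tfrac{1}{2}\eta\hatu_t\bigr) + O(\alpha^2\eta^2).
\]
The Quasi-RMSprop bound gives $\bigl|\eeta_t - \tfrac{1}{\sqrt{\tilv_t}}\bigr| \le \delta(\tilv_t)(1-\beta)(1+\bar g_t^2) = O(\eta^2(1+\alpha^2)) = O(\alpha^2\eta^2)$, and adding the two estimates produces \eqref{eq:lr-1-step-eta}.

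Next I would prove \eqref{eq:lr-1-step-u}. The second Quasi-RMSprop bound gives
\[
\tilv_{t+1} = \beta\tilv_t + (1-\beta)\bar g_t^2 + O\bigl((1-\beta)^2 P(\bar g_t)\bigr) = \beta\tilv_t + (1-\beta)\bar g_t^2 + O(\eta^{4-o(1)}).
\]
Substituting $\beta = 1 - \Cb\eta^2 + O(\eta^4)$ and $\tilv_t = O(1)$, $\bar g_t^2 = O(\alpha^2)$ I get
\[
\tilv_{t+1} = \tilv_t + \Cb\eta^2(\bar g_t^2 - \tilv_t) + O(\alpha^2\eta^4).
\]
Multiplying by $\mu^2$, subtracting $1$, dividing by $\eta$, and using $\mu^2\tilv_t = 1 + \eta\hatu_t$ gives
\[
\hatu_{t+1} = \hatu_t + \Cb\eta(\mu^2\bar g_t^2 - 1 - \eta\hatu_t) + O(\alpha^2\eta^3) = \hatu_t + \Cb\eta(\mu^2\bar g_t^2 - 1) + O(\alpha\eta^2),
\]
which is exactly \eqref{eq:lr-1-step-u}. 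The only mild subtlety to double-check is that the extra $-\Cb\eta^2\hatu_t$ term, which comes from the $-\eta\hatu_t$ inside the parentheses, is indeed $O(\alpha\eta^2)$ rather than being merged into the leading-order expression; since $|\hatu_t|\le\alpha$, this is immediate.
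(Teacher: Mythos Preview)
Your proposal is correct and follows essentially the same approach as the paper's proof: both parts are obtained by plugging the defining bounds of the quasi-RMSprop scheduler into first-order Taylor expansions and tracking the error terms, with the same intermediate identity $\mu^2\tilv_t = 1 + \eta\hatu_t$.
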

\begin{proof}
    By definition of RMSprop-like learning rate scheduler,
    \begin{align}
        \eeta_t         &= \frac{1}{\sqrt{\tilde{v}_t}} + O(\alpha^2 \eta^2) \label{eq:lr-1-step-a} \\
        \tilde{v}_{t+1} &= \beta \tilde{v}_t + (1-\beta) \bar{g}_t^2  + O(\poly(\alpha) \eta^4). \label{eq:lr-1-step-b}
    \end{align}
    By definition of $\hatu_t$,
    we can express $\tilde{v}_t$ as
    $\tilde{v}_t = \frac{1}{\mu^2}(1 + \eta \hatu_t)$.
    Combining this with \eqref{eq:lr-1-step-a} proves \eqref{eq:lr-1-step-eta}:
    \begin{align*}
        \eeta_t
        = \frac{1}{\sqrt{\frac{1}{\mu^2}(1 + \eta \hatu_t)}}
        = \frac{\mu}{\sqrt{1 + \eta \hatu_t}}
        &= \mu \cdot \left(1 - \tfrac{1}{2}\eta \hatu_t + O(\alpha^2 \eta^2)\right) \\
        &= \mu \cdot \left(1 - \tfrac{1}{2}\eta \hatu_t\right) + O(\alpha^2 \eta^2).
    \end{align*}
    By substituting $\frac{1}{\mu^2}(1+\eta \hatu_t)$ for $\tilde{v}_t$ in \eqref{eq:lr-1-step-b} we have
    \begin{align*}
        \frac{1}{\mu^2}(1+\eta \hatu_{t+1}) = 
        \beta \cdot \frac{1}{\mu^2}(1+\eta \hatu_t) + (1-\beta) \bar{g}_t^2  + O(\poly(\alpha) \eta^4).
    \end{align*}
    Multiplying $\mu^2$ and subtracting $1$ on both sides gives
    \begin{align*}
        \eta \hatu_{t+1}
        &= -(1 - \beta) + \beta \eta \hatu_t + (1-\beta) \mu^2 \bar{g}_t^2  + O(\poly(\alpha) \eta^4) \\
        &= \eta \hatu_t + (1 - \beta)\left(  \mu^2 \bar{g}_t^2 - \eta \hatu_t - 1\right) + O(\poly(\alpha) \eta^4).
    \end{align*}
    Now we divide $\eta$ on both sides. Then we have
    \begin{align*}
        \hatu_{t+1}
        &= \hatu_t + \left(\Cb \eta + O(\eta^3)\right) \left(  \mu^2 \bar{g}_t^2 - \eta \hatu_t - 1\right) + O(\poly(\alpha) \eta^3) \\
        &= \hatu_t + \Cb \eta \left(  \mu^2 \bar{g}_t^2 - \eta \hatu_t - 1\right) + O(\poly(\alpha) \eta^3) \\
        &= \hatu_t + \Cb \eta (  \mu^2 \bar{g}_t^2 - 1) + O(\alpha \eta^2),
    \end{align*}
    which proves \eqref{eq:lr-1-step-u}.
\end{proof}

\begin{lemma} \label{lm:lr-2-step}
    In the setting of \Cref{lm:lr-1-step},
    for $\mu = \Theta(1)$ and some $t \ge 0$, define $\hatu_{\tau} := \frac{1}{\eta}(\mu^2\tilde{v}_{\tau} - 1)$ for $\tau \in \{t, t+1, t+2\}$.
    If $\normtwosm{\vg_t} \le \alpha \eta$, $\normtwosm{\vg_{t+1}} \le \alpha \eta$,
    and $\abssm{\mu^2 \tilde{v}_t - 1} \le \alpha \eta$
    for some $\alpha = (\tfrac{1}{\eta})^{o(1)}$ at step $t$,
    then the following holds
    \begin{align}
        \eeta_{t+1} &= \mu \cdot \left(1 - \tfrac{1}{2}\eta \hatu_t\right) + O(\alpha^2 \eta^2). \label{eq:lr-2-step-eta}\\
        \hatu_{t+2} &= \hatu_t + \Cb \eta (  \mu^2 \bar{g}_t^2 + \mu^2 \bar{g}_{t+1}^2 - 2) + O(\alpha \eta^2). \label{eq:lr-2-step-u}
    \end{align}
\end{lemma}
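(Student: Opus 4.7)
The plan is to apply Lemma \ref{lm:lr-1-step} twice: once at step $t$ and once at step $t+1$, then combine the results. The main work is bookkeeping the error terms and verifying that the hypotheses of Lemma \ref{lm:lr-1-step} are preserved when moving from step $t$ to step $t+1$.

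First I would apply Lemma \ref{lm:lr-1-step} at step $t$ directly, using the given hypotheses $\normtwosm{\vg_t} \le \alpha \eta$ and $\abssm{\mu^2 \tilv_t - 1} \le \alpha \eta$. This yields
\[
  \hatu_{t+1} = \hatu_t + \Cb \eta (\mu^2 \bar{g}_t^2 - 1) + O(\alpha \eta^2).
\]
Since $\bar{g}_t = \normtwosm{\vg_t}/\eta \le \alpha$ and $\mu = \Theta(1)$, the right-hand side differs from $\hatu_t$ by $O(\alpha^2 \eta)$, so $\abssm{\hatu_{t+1}} \le \abssm{\hatu_t} + O(\alpha^2 \eta) \le 2\alpha$ for $\eta$ small enough (using $\alpha = (1/\eta)^{o(1)}$, hence $\alpha^2 \eta = o(1)$). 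Translating this back gives $\abssm{\mu^2 \tilv_{t+1} - 1} = \eta \abssm{\hatu_{t+1}} \le 2\alpha \eta$, which, together with the assumed bound on $\vg_{t+1}$, verifies the hypotheses of Lemma \ref{lm:lr-1-step} at step $t+1$ with parameter $\alpha' = 2\alpha = (1/\eta)^{o(1)}$.

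Applying Lemma \ref{lm:lr-1-step} again at step $t+1$ then gives both
\[
  \eeta_{t+1} = \mu \cdot \left(1 - \tfrac{1}{2}\eta \hatu_{t+1}\right) + O(\alpha^2 \eta^2)
\]
and
\[
  \hatu_{t+2} = \hatu_{t+1} + \Cb \eta (\mu^2 \bar{g}_{t+1}^2 - 1) + O(\alpha \eta^2).
\]
Substituting $\hatu_{t+1} = \hatu_t + O(\alpha^2 \eta)$ into the first equation absorbs the perturbation into the $O(\alpha^2 \eta^2)$ error (using $\mu = \Theta(1)$), yielding the claimed formula \eqref{eq:lr-2-step-eta}. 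Substituting the expression for $\hatu_{t+1}$ from the first application into the second telescopes the two increments, producing
\[
  \hatu_{t+2} = \hatu_t + \Cb \eta (\mu^2 \bar{g}_t^2 + \mu^2 \bar{g}_{t+1}^2 - 2) + O(\alpha \eta^2),
\]
which is \eqref{eq:lr-2-step-u}.

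The only subtle point, and the single place that requires care, is confirming that the bound $\abssm{\mu^2 \tilv_{t+1} - 1} \le \alpha' \eta$ with $\alpha' = (1/\eta)^{o(1)}$ survives a one-step update so that Lemma \ref{lm:lr-1-step} is legitimately applicable at $t+1$; this hinges on the observation that the drift $\hatu_{t+1} - \hatu_t$ is of order $\alpha^2 \eta = o(1)$. Everything else is direct arithmetic combining two applications of Lemma \ref{lm:lr-1-step}, with all discarded terms safely at most $O(\alpha^2 \eta^2)$ or $O(\alpha \eta^2)$ as required.
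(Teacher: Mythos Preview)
Your proposal is correct and follows essentially the same approach as the paper: apply Lemma~\ref{lm:lr-1-step} at step $t$ to deduce $\hatu_{t+1} = O(\alpha)$, then reapply it at step $t+1$ and substitute $\hatu_{t+1} = \hatu_t + O(\alpha^2\eta)$ to replace $\hatu_{t+1}$ by $\hatu_t$ in \eqref{eq:lr-2-step-eta}. The paper's own proof is a terse two-line version of exactly this argument.
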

\begin{proof}
    Note that \eqref{eq:lr-1-step-u} implies $\hatu_{t+1} = O(\alpha)$. Applying the
    inequalities \eqref{eq:lr-1-step-eta} and \eqref{eq:lr-1-step-u}
    to step $t+1$ proves \eqref{eq:lr-2-step-eta} and \eqref{eq:lr-2-step-u}.
\end{proof}

\section{Reduction to RMS-drift Process: The Case of Full Space Optimization} \label{sec:ful-proof}
In this section, we let $\{(\vtheta_t, v_t)\}_{t \ge 0}$ be a trajectory of gradient descent
with quasi-RMSprop scheduler,
and let $\eta, \beta$ be the base learning rate and decay rate.
We follow the notations and terminologies in \Cref{sec:proof-outline-main}.

\subsection{Good Initialization} \label{sec:ful-good-init}

\begin{proof}
[Proof for \Cref{lm:ful-good-init}]
Let $\hatvx_0 := \vtheta_0 - \vzeta_0$
and $r = \normtwosm{\hatvx_0}$ for short.
By Gaussian concentration and anti-concentration,
with probability $1- \delta$, the following holds:
\begin{align*}
&\Omega(\delta \sigma_0) \le r \le O(\sigma_0 \sqrt{\log(1/\delta)}), \\
&\abssm{\dotpsm{\hatvx_0}{\vvH_1(\vzeta_0)}} \ge \Omega(\delta r).
\end{align*}
By Taylor expansion of $\Phi$ and \Cref{lm:Phi-proj-ful},
\begin{align*}
\vphi_0 &= \vzeta_0 + \partial \Phi_{\vzeta_0}[\hatvx_0] + O(r^2) \\
&= \vzeta_0 + \mPHz(\vzeta_0) \hatvx_0 + O(r^2).
\end{align*}
Thus we can approximate $\vx_0$ by
\begin{align*}
    \vx_0 = \hatvx_0 + \vzeta_0 - \vphi_0 = (\mI - \mPHz) \hatvx_0 + O(r^2).
\end{align*}
Now we give a lower bound for $\abssm{h_0} \eta$:
\begin{align*}
\abssm{h_0} \eta = \abssm{\dotpsm{\vx_0}{\vvH_1(\vphi_0)}}
&\ge 
\abssm{\dotpsm{\hatvx_0}{\vvH_1(\vzeta_0)}}
- \abssm{\dotpsm{\hatvx_0 - \vx_0}{\vvH_1(\vzeta_0)}}
- \normtwosm{\hatvx_0} \cdot \normtwosm{\vvH_1(\vzeta_0) - \vvH_1(\vphi_0)} \\
&\ge \Omega(\delta r) - O(r^2) - O(r^2) \\
&\ge \Omega(r) \cdot (\Omega(\delta) - O(r)).
\end{align*}
Since $r = O(\sigma_0 \sqrt{\log(1/\delta)})$,
we can choose $\delta := C_0 \alpha_0 \eta \sqrt{\log(1/\eta)}$
with a large enough $C_0$ such that the above inequality gives $\abssm{h_0} \eta \ge \Omega(\delta r)$.

Now we verify the conditions claimed in the lemma statement.
First,
we can see from the following that the initial state is $\Osm{\alpha_0 \sqrt{\log(1/\delta)}}$-bounded:
\begin{align*}
    \normtwosm{\vx_0} &\le \normtwosm{(\mI - \mPHz) \hatvx_0} + O(r^2) \le O(r) \le O(\alpha_0 \eta \sqrt{\log(1/\delta)}) \\
    \abssm{u_0} &= \tfrac{1}{\eta} \abssm{\mu_t^2 \tilv_0 - 1} \le O(\alpha_0).
\end{align*}
It is also $O(\alpha_0 + \sqrt{\log(1/\delta)})$-deviated
since 
\[
    \normtwosm{\vx_0} \ge \abssm{h_0} \eta \ge \Omega(\delta r) \ge \Omega(\delta^2 \exp(-\alpha_0^2)) \ge \eta \exp\left(-\Obig{\alpha_0 + \sqrt{\log(1/\delta)}}^2\right).
\]
Next, we verify that the initial state is $O(1/\delta)$-misaligned:
\[
    \normtwosm{\mPHnzt(\vphi_0) \vx_0} \le \normtwosm{\vx_0} \le O(r) \le O(1/\delta) \cdot \abssm{h_0} \eta.
\]
Finally, $\normtwosm{\vphi_0 - \vzeta_0} \le O(r) \le O(\alpha_0 \eta \sqrt{\log(1/\delta)})$.
\end{proof}

\subsection{Alignment Phase} \label{sec:ful-proof-alignment}

We define $r_t = \normtwosm{\vx_t} / \eta$ and the following notations
for this subsection.
\begin{align*}
  \hatvx_{t+1} &:= \vtheta_{t+1} - \vphi_t & \hatu_{t+1} &:= \tfrac{1}{\eta}( \mu_{t}^2 \tilv_{t+1} - 1)\\
  \hath_{t+1} &:= \tfrac{1}{\eta}\dotpsm{\vvH_1(\vphi_t)}{\hatvx_{t+1}}
\end{align*}
\begin{lemma} \label{lm:ful-gap-2gamin}
  If $\vtheta_t \in \Zei$ at step $t$,
  then $\normtwosm{\mPHnzt(\vphi_t) (\mI - \mu_t \mH_t)} \le 1 - 2 \gamin$.
\end{lemma}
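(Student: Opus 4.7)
The plan is to diagonalize $\mH_t$ and read off the operator norm coordinatewise, then translate the relative eigenvalue gap condition $\gamma(\vphi_t) \ge \gamin$ into the desired bound.

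First I would invoke $\vtheta_t \in \Zei$ together with Item~3 of the Working Zone Lemma (\Cref{lm:ful-working-zone}) to guarantee $\vphi_t = \Phi(\vtheta_t) \in \Zeog$, so that Item~5 applies and yields $\gamma(\vphi_t) \ge \gamin$. Since $\vphi_t \in \manGa$ is a local minimizer of $\Loss$ with $\rank(\mH_t) = D - \DGa$ by \Cref{ass:man-ful}, the spectrum of $\mH_t$ consists of nonnegative eigenvalues $\lamH_1(\vphi_t) \ge \lamH_2(\vphi_t) \ge \dots \ge \lamH_{D-\DGa}(\vphi_t) > 0$ followed by $\DGa$ zero eigenvalues.

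Second, I would observe that $\mPHnzt(\vphi_t)$ projects precisely onto the span of eigenvectors with eigenvalues $\lamH_2(\vphi_t), \dots, \lamH_{D-\DGa}(\vphi_t)$, and that both $\mPHnzt(\vphi_t)$ and $\mI - \mu_t \mH_t$ are polynomials in $\mH_t$ and hence commute and are simultaneously diagonalizable. Consequently the operator norm of their product equals
\[
\max_{2 \le i \le D - \DGa} \left| 1 - \frac{2\lamH_i(\vphi_t)}{\lamH_1(\vphi_t)} \right|.
\]
Splitting the absolute value, this max is bounded by $\max\bigl\{1 - \tfrac{2\lamH_{D-\DGa}(\vphi_t)}{\lamH_1(\vphi_t)},\ \tfrac{2\lamH_2(\vphi_t)}{\lamH_1(\vphi_t)} - 1\bigr\}$ (with the convention that if $\lamH_2 \le \lamH_1/2$ only the first term is relevant).

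Third, I would apply the definition $\gamma(\vphi_t) = \tfrac{1}{\lamH_1(\vphi_t)} \min\{\lamH_1(\vphi_t) - \lamH_2(\vphi_t),\ \lamH_{D-\DGa}(\vphi_t)\}$ together with $\gamma(\vphi_t) \ge \gamin$ to conclude $\tfrac{\lamH_{D-\DGa}(\vphi_t)}{\lamH_1(\vphi_t)} \ge \gamin$ and $\tfrac{\lamH_1(\vphi_t) - \lamH_2(\vphi_t)}{\lamH_1(\vphi_t)} \ge \gamin$. Substituting yields $1 - \tfrac{2\lamH_{D-\DGa}}{\lamH_1} \le 1 - 2\gamin$ and $\tfrac{2\lamH_2}{\lamH_1} - 1 \le 1 - 2\gamin$, which together give the desired bound. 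The argument is essentially a one-line spectral calculation; the only nontrivial point is the reduction $\vtheta_t \in \Zei \Rightarrow \vphi_t \in \Zeog$ needed to invoke the uniform gap $\gamin$.
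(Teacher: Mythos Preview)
Your proposal is correct and follows essentially the same spectral argument as the paper: diagonalize $\mH_t$, note that the restricted eigenvalues of $\mI - \mu_t \mH_t$ are $1 - 2\lamH_i(\vphi_t)/\lamH_1(\vphi_t)$ for $2 \le i \le D-\DGa$, and bound these in $[-1+2\gamin, 1-2\gamin]$ via the definition of $\gamma(\vphi_t) \ge \gamin$. The paper's proof is terser and omits the explicit working-zone reduction $\vtheta_t \in \Zei \Rightarrow \vphi_t \in \Zeog$ that you (correctly) spell out.
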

\begin{proof}
If $\lamH_1(\vphi_t), \dots, \lamH_D(\vphi_t)$ are the eigenvalues of
    $\mH(\vphi_t)$, then $\left\{ 1 - 
    \sfrac{2\lamH_i(\vphi_t)}{\lamH_1(\vphi_t)} \right\}_{i=1}^{D}$ are the
    eigenvalues of $\mI - \mu_t \mH_t$.
By definition of $\gamin$, we have
$1 - \sfrac{2\lamH_i(\vphi_t)}{\lamH_1(\vphi_t)} \in [-1 + 2\gamin, 1 - 2\gamin]$
as long as $\lamH_i(\vphi_t) \ne 0, \lamH_1(\vphi_t)$.
Therefore we have $\normtwosm{\mPHnzt(\vphi_t) (\mI - \mu_t \mH_t)} \le 1 - 2 \gamin$.
\end{proof}

\begin{lemma} \label{lm:ful-align-1-step-hat}
  In the setting of \Cref{lm:ful-align-1-step},
  \begin{align}
    \hatvx_{t+1} &= (\mI - \eeta_t \mH_t) \vx_t + O(r_t^2 \eta^2) \label{eq:ful-align-1-step-hatx} \\
    \hath_{t+1} &= -h_t + O(\alpha r_t \eta) \label{eq:ful-align-1-step-hath} \\
    \normtwosm{\mPHnzt(\vphi_t) \hatvx_{t+1}} &= (1-1.9\gamin) \normtwosm{\mPHnzt(\vphi_t) \vx_{t}} + O(r_t^2 \eta^2) \label{eq:ful-align-1-step-hatpnzt} \\
    \hatu_{t+1} &= u_t + O(\alpha^2 \eta) \label{eq:ful-align-1-step-hatu}
  \end{align}
\end{lemma}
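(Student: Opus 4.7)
The plan is to establish each of the four estimates by combining \Cref{lm:gd-1-step} (one-step Taylor expansion of gradient descent around the fixed minimizer $\vphi_t$) with \Cref{lm:lr-1-step} (one-step behavior of the quasi-RMSprop scheduler), using the eigenstructure of $\mH_t$ with respect to $\vvH_1(\vphi_t)$ and $\mPHnzt(\vphi_t)$. Throughout, I will use that an $\alpha$-bounded state gives $r_t \le \alpha$, $\abssm{u_t} \le \alpha$, $\normtwosm{\nabla \Loss(\vtheta_t)} = O(r_t\eta)$ (by Taylor expansion around $\vphi_t$), and $\abssm{\mu_t^2 \tilv_t - 1} = \eta\abssm{u_t} \le \alpha\eta$, so the hypotheses of both lemmas are met with $\mu = \mu_t$ and $r = r_t$.

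First, \eqref{eq:ful-align-1-step-hatx} is immediate from the first-order approximation in \Cref{lm:gd-1-step} with $\vphi = \vphi_t$. For \eqref{eq:ful-align-1-step-hath}, I take the inner product of \eqref{eq:ful-align-1-step-hatx} with $\vvH_1(\vphi_t)$ and use that $\vvH_1(\vphi_t)$ is an eigenvector of $\mH_t$ with eigenvalue $\lamH_1(\vphi_t) = 2/\mu_t$. By \Cref{lm:lr-1-step}, $\eeta_t = \mu_t(1 - \tfrac{1}{2}\eta u_t) + O(\alpha^2\eta^2)$, so
\begin{equation*}
1 - \eeta_t \lamH_1(\vphi_t) = -1 + \eta u_t + O(\alpha^2\eta^2).
\end{equation*}
Therefore $\hath_{t+1}\eta = (-1 + \eta u_t + O(\alpha^2\eta^2)) h_t\eta + O(r_t^2\eta^2)$, and dividing by $\eta$ and bounding $\abssm{h_t}\le r_t \le \alpha$ yields $\hath_{t+1} = -h_t + O(\alpha r_t\eta)$.

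For \eqref{eq:ful-align-1-step-hatpnzt}, apply $\mPHnzt(\vphi_t)$ to both sides of \eqref{eq:ful-align-1-step-hatx}. Since $\mPHnzt(\vphi_t)$ commutes with $\mH_t$, I can write
\begin{equation*}
\mPHnzt(\vphi_t)(\mI - \eeta_t\mH_t) = \mPHnzt(\vphi_t)(\mI - \mu_t\mH_t) + (\mu_t - \eeta_t)\mPHnzt(\vphi_t)\mH_t.
\end{equation*}
\Cref{lm:ful-gap-2gamin} bounds the first operator norm by $1 - 2\gamin$, and the second term has norm $O(\alpha\eta)$ by \Cref{lm:lr-1-step} and compactness of the working zone. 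For $\eta$ small enough, the total norm is at most $1 - 1.9\gamin$. Using commutativity once more, $\mPHnzt(\vphi_t)\hatvx_{t+1} = \mPHnzt(\vphi_t)(\mI-\eeta_t\mH_t)\mPHnzt(\vphi_t)\vx_t + O(r_t^2\eta^2)$, which gives the claimed inequality.

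Finally, \eqref{eq:ful-align-1-step-hatu} follows directly from \eqref{eq:lr-1-step-u} in \Cref{lm:lr-1-step} with $\mu=\mu_t$: since $\mu_t^2\barg_t^2 = \mu_t^2\normtwosm{\nabla\Loss(\vtheta_t)}^2/\eta^2 = O(r_t^2) = O(\alpha^2)$, the correction term is $\Cb\eta(\mu_t^2\barg_t^2 - 1) + O(\alpha\eta^2) = O(\alpha^2\eta)$. No real obstacle is expected — the whole argument is a careful bookkeeping of Taylor remainders together with the spectral decomposition $\R^D = \mathrm{span}(\vvH_1(\vphi_t))\oplus \mathrm{range}(\mPHnzt(\vphi_t))\oplus \ker(\mH_t)$; the only mildly delicate point is absorbing the $O(\alpha\eta)$ perturbation in $\eeta_t - \mu_t$ into the $2\gamin \to 1.9\gamin$ slack.
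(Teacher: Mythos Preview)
Your proposal is correct and follows essentially the same approach as the paper: invoke the first-order approximation of \Cref{lm:gd-1-step} to get \eqref{eq:ful-align-1-step-hatx}, project onto $\vvH_1(\vphi_t)$ and onto the range of $\mPHnzt(\vphi_t)$ using the eigenstructure of $\mH_t$ together with the estimate $\eeta_t = \mu_t + O(\alpha\eta)$ from \Cref{lm:lr-1-step}, and read off \eqref{eq:ful-align-1-step-hatu} directly from \eqref{eq:lr-1-step-u}. Your explicit splitting $\mPHnzt(\mI-\eeta_t\mH_t) = \mPHnzt(\mI-\mu_t\mH_t) + (\mu_t-\eeta_t)\mPHnzt\mH_t$ is exactly what the paper means when it says ``if $\eta$ is sufficiently small, then $\eeta_t$ is sufficiently close to $\mu_t$'' before applying \Cref{lm:ful-gap-2gamin}.
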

\begin{proof}[Proof for \Cref{lm:ful-align-1-step-hat}]
By \Cref{lm:gd-1-step}, we have the following first-order approximation for $\hatvx_{t+1}$:
\begin{align*}
\hatvx_{t+1} = \left(\mI - \eeta_t \mH(\vphi_t)\right) \vx_t + O(r_t^2 \eta^2),
\end{align*}
which proves \eqref{eq:ful-align-1-step-hatx}.
Then we can prove \eqref{eq:ful-align-1-step-hath} as follows:
\begin{align*}
  \hath_{t+1} = \dotpsm{\vvH_1(\vphi_t)}{\hatvx_{t+1}}
  &= (1 - \eeta_t \lamH_1(\vphi_t)) h_t + O(r_t^2 \eta^2) \\
  &= (1 - \mu_t \lamH_1(\vphi_t)) h_t + O(\alpha r_t \eta^2) \\
  &= -h_t + O(\alpha r_t \eta^2).
\end{align*}

To prove \eqref{eq:ful-align-1-step-hatpnzt} from \eqref{eq:ful-align-1-step-hatx},
we note that 
if $\eta$ is sufficiently small, 
then $\eeta_t$ is sufficiently close to $\mu_t$.
In this case, by \Cref{lm:ful-gap-2gamin} we have
$\normtwosm{(\mI - \eeta_t\mH_t) \vx_t} \le (1-1.9\gamin) \normtwosm{\mPHnzt(\vphi_t) \vx_{t}}$.
Combining this with \eqref{eq:ful-align-1-step-hatx} proves
\eqref{eq:ful-align-1-step-hatpnzt}.

Finally, we prove \eqref{eq:ful-align-1-step-hatu}
By Taylor expansion of $\Loss$ around $\vphi_t$, $\normtwosm{\nabla \Loss(\vtheta_t)} = O(\normtwosm{\vx_t}) = O(r_t \eta)$.
By \Cref{lm:lr-1-step}, we can approximate $\hatu_{t+1}$ by
\begin{align*}
    \hatu_{t+1}
    &= u_t + \Cb \eta (\mu_t^2 \normtwosm{\nabla \Loss(\vtheta_t) / \eta}^2 - 1) + O(\alpha \eta^2) \\
    &= u_t + \Cb \eta (\mu_t^2 \cdot O(r_t^2) - 1) + O(\alpha \eta^2) \\
    &= u_t + O((r_t^2 + 1) \eta) + O(\alpha \eta^2) \\
    &= u_t + O(\alpha^2 \eta),
\end{align*}
which proves \eqref{eq:ful-align-1-step-hatu}.
\end{proof}

\begin{proof}[Proof for \Cref{lm:ful-align-1-step}]
Combining \eqref{eq:ful-align-1-step-hatx} with a Taylor expansion of $\Phi$ gives
\begin{align*}
  \vphi_{t+1} = \Phi(\vphi_t + \hatvx_{t+1})
  &= \vphi_t + \partial \Phi_{\vphi_t}[\hatvx_{t+1}] + O(r_t^2\eta^2) \\
  &= \vphi_t + \partial \Phi_{\vphi_t}[(\mI - \eeta_t \mH_t)\vx_t] + O(r_t^2\eta^2).
\end{align*}
By 
\Cref{lm:Phi-proj-ful} and \Cref{lm:phiphix-ful},
$\partial \Phi_{\vphi_t}[(\mI - \mu_t \mH_t)\vx_t] = \partial \Phi_{\vphi_t}[\vx_t] = O(r_t^2 \eta^2)$.
So we have $\vphi_{t+1} = \vphi_t + O(r_t^2 \eta^2)$, 
which proves \eqref{eq:ful-align-1-step-phi}.
Then $\vx_{t+1} = \hatvx_{t+1} + \vphi_{t} - \vphi_{t+1} = \hatvx_{t+1} + O(r_t^2 \eta^2)$,
which proves \eqref{eq:ful-align-1-step-x}.
Finally, 
by $\contC^1$-smoothness of $\mu(\vphi), \vvH_1(\vphi), \mPHnzt(\vphi)$ on $\manGa$,
\eqref{eq:ful-align-1-step-hath}, \eqref{eq:ful-align-1-step-hatpnzt}, \eqref{eq:ful-align-1-step-hatu}
imply \eqref{eq:ful-align-1-step-h}, \eqref{eq:ful-align-1-step-pnzt}, \eqref{eq:ful-align-1-step-u} respectively.
\end{proof}

\begin{proof}[Proof for \Cref{thm:ful-alignment-main}]
Let $\delta := C_0 \alpha_0 \eta \sqrt{\log(1/\eta)}$,
where $C_0 = O(1)$ is a sufficiently large constant so that our proof can work.
Let $\alphamax := \alpha_0 \sqrt{\log(1/\delta)}$.

Note that the initial state is $O(1/\delta)$-misaligned.
So $\normtwosm{\vx_0} = O(h_0 \eta / \delta)$.
By \Cref{lm:ful-align-1-step},
for all $t \le T_0 := \left\lceil \frac{1}{\log(1 - 1.9 \gamin)} \log \frac{1}{\delta} \right\rceil$,
we can prove by induction that
the states are $O(\alphamax)$-bounded, and
\begin{align*}
  \normtwosm{\mPHnzt(\vphi_{t}) \vx_{t}} &= \Obig{(1-1.9\gamin)^t \normtwosm{\mPHnzt(\vphi_0) \vx_0}} \\
  \normtwosm{\vx_t} &= O((1 - 1.9\gamin)^t \normtwosm{\vx_0}), \\
  \vphi_t &= \vphi_0 + O(\normtwosm{\vx_0}^2) \\
  h_{t} &= (-1)^t h_0 + O(\alphamax \normtwosm{\vx_0})
\end{align*}
At $t = T_0$, $\normtwosm{\vx_t} = O(h_0 \eta)$,
$\vphi_t = \vphi_0 + O(\normtwosm{\vx_0}^2)$,
$\abssm{h_t - (-1)^t h_0} \le O(\alphamax h_0 \eta / \delta)$,
$\normtwosm{\mPHnzt(\vphi_{t}) \vx_{t}} = O(h_0 \eta)$.
When $C_0$ is sufficiently large, it also holds that
$\abssm{h_t - (-1)^t h_0} \le \abssm{h_0} / 4$.

After that, for all $T_0 \le t \le T_1 := \left\lceil \frac{1}{\log(1 - 1.9 \gamin)} \log \frac{1}{\delta h_0 \eta} \right\rceil = O(\alphamax)$,
we can prove by induction that the states are $O(\alphamax)$-bounded and
\begin{align*}
  \normtwosm{\mPHnzt(\vphi_{t}) \vx_{t}} &= \Obig{(1-1.9\gamin)^t \normtwosm{\mPHnzt(\vphi_0) \vx_0} + h_0^2 \eta^2} \\
  \normtwosm{\vx_t} &= O(h_0 \eta), \\
  \vphi_t &= \vphi_0 + O(\normtwosm{\vx_0}^2 + h_0^2 \eta^2 t) \\
  h_{t} &= (-1)^{t - T_0} h_{T_0} + O(\alphamax h_0 \eta t)
\end{align*}
At $t = T_1$,
$\normtwosm{\mPHnzt(\vphi_{t}) \vx_{t}} = O(h_0^2 \eta^2)$,
$\normtwosm{\vx_t} = O(h_0 \eta)$,
$\vphi_t = \vphi_0 + O(\alphamax^3 \eta^2) = \vphi_0 + O(\alphamax \eta)$,
$\abssm{h_t - (-1)^{t - T_1}} \le O(\alphamax^2 h_0 \eta)$.
When $\eta$ is sufficiently small, it also holds that
$\abssm{h_t - (-1)^{t-T_1} h_{T_1}} \le \abssm{h_0} / 4$.
So $\abssm{h_{T_1} - (-1)^t h_0} \le \abssm{h_0} / 2$.
Putting all these together proves the theorem.
\end{proof}

\subsection{Drifting Phase} \label{sec:ful-proof-drifting}

We define $\vpsi_{\vphi}(\vx)$ as in \Cref{lm:gd-2-step}. We abuse the notation to write $\vpsi_t(\vx) = \vpsi_{\vphi_t}(\vx)$, that is,
\[
  \vpsi_t(\vx) = \frac{\mu_t}{2} \left( \mU_t \partial^3\Loss_{\vphi_t}[\vx, \vx] + \partial^3 \Loss_{\vphi_t}[\mU_t \vx, \mU_t \vx] \right).
\]
We also define the following notations for this subsection.
\begin{align*}
  \hatvx_{t+2} &:= \vtheta_{t+2} - \vphi_t, & \hatu_{t+2} &:= \frac{1}{\eta}(\mu_t^2 \tilde{v}_{t+2} - 1), &
  \hath_{t+2} &:= \frac{1}{\eta}\dotpsm{\vx_{t+2}}{\vvH_1(\vphi_t)}.
\end{align*}

\begin{lemma} \label{lm:aligned-hteta}
  If the state $(\vtheta_t, \tilv_t)$ at step $t$
  is $\eta^{-o(1)}$-bounded and
  $O(\ahte)$-misaligned,
  then
  \[
    \vx_t = h_t \eta \vvH_1(\vphi_t) + O(h_t^2 \eta^2).
  \]
\end{lemma}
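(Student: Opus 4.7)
The plan is to decompose $\vx_t$ along three orthogonal subspaces determined by the Hessian $\mH_t = \mH(\vphi_t)$ at the projection $\vphi_t$: the one-dimensional top eigenspace spanned by $\vvH_1(\vphi_t)$, the null space with projector $\mPHz(\vphi_t)$, and the remainder with projector $\mPHnzt(\vphi_t)$. By definition of $h_t$, the component along $\vvH_1(\vphi_t)$ is exactly $h_t \eta \vvH_1(\vphi_t)$, so that
\[
  \vx_t - h_t \eta\, \vvH_1(\vphi_t) \;=\; \mPHnzt(\vphi_t)\vx_t + \mPHz(\vphi_t)\vx_t.
\]
Hence it suffices to bound each of the two summands on the right by $O(h_t^2\eta^2)$.

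First I would handle the non-top/non-null part using the misalignment hypothesis. The assumption that the state is $O(\ahte)$-misaligned says directly that $\normtwosm{\mPHnzt(\vphi_t)\vx_t} \le O(\abssm{h_t}\eta)\cdot \abssm{h_t}\eta = O(h_t^2\eta^2)$, which is the desired bound. For the null-space part, I would apply \Cref{lm:mPHz-x-small}, valid because $\vtheta_t \in \Zei$ by $\eta^{-o(1)}$-boundedness, to get $\normtwosm{\mPHz(\vphi_t)\vx_t} = O(\normtwosm{\vx_t}^2)$. The remaining task is therefore to upgrade this to $O(h_t^2\eta^2)$ by controlling $\normtwosm{\vx_t}$ itself in terms of $\abssm{h_t}\eta$.

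The main obstacle is this bootstrap: we need $\normtwosm{\vx_t} = O(\abssm{h_t}\eta)$, but a priori $\abssm{h_t}\eta \le \normtwosm{\vx_t}$ is the easy direction. I would close the loop by applying Pythagoras to the orthogonal decomposition above:
\[
  \normtwosm{\vx_t}^2 \;=\; h_t^2\eta^2 + \normtwosm{\mPHnzt(\vphi_t)\vx_t}^2 + \normtwosm{\mPHz(\vphi_t)\vx_t}^2 \;\le\; h_t^2\eta^2 + O(h_t^4\eta^4) + O(\normtwosm{\vx_t}^4).
\]
Since $\normtwosm{\vx_t} \le \alpha\eta$ with $\alpha = \eta^{-o(1)}$, the term $\normtwosm{\vx_t}^4$ is absorbed into $\eta^{2-o(1)}\cdot \normtwosm{\vx_t}^2$, which can be moved to the left-hand side for $\eta$ small enough. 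This yields $\normtwosm{\vx_t}^2 \le h_t^2\eta^2(1+o(1))$, i.e.\ $\normtwosm{\vx_t} = O(\abssm{h_t}\eta)$. Feeding this back into $\normtwosm{\mPHz(\vphi_t)\vx_t} = O(\normtwosm{\vx_t}^2)$ yields $O(h_t^2\eta^2)$, completing the bound. A small edge case to note is $h_t = 0$: then $O(\ahte)$-misalignment forces $\mPHnzt(\vphi_t)\vx_t = \vzero$, so $\vx_t = \mPHz(\vphi_t)\vx_t$ and the inequality $\normtwosm{\vx_t} \le C\normtwosm{\vx_t}^2$ forces $\vx_t = \vzero$, in agreement with the claimed bound.
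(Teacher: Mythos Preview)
Your proposal is correct and follows essentially the same approach as the paper: decompose $\vx_t$ into the top-eigenvector, null-space, and non-top/non-null components, bound the latter two via the misalignment hypothesis and \Cref{lm:mPHz-x-small}, and bootstrap using boundedness to show $\normtwosm{\vx_t}=O(\abssm{h_t}\eta)$ before feeding back. Your use of Pythagoras on squared norms and explicit treatment of the $h_t=0$ edge case are minor presentational refinements over the paper's slightly terser triangle-inequality version, but the underlying argument is the same.
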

\begin{proof}
Note that we have the decomposition $\vx_t = h_t \eta \vvH_1(\vphi_t) + \mPHz(\vphi_t) \vx_t + \mPHnzt(\vphi_t) \vx_t$.
By definition of $O(\ahte)$-misaligned state,
$\mPHnzt(\vphi_t) \vx_t = O(h_t^2 \eta^2)$.
By \Cref{lm:mPHz-x-small}, $\mPHz(\vphi_t) \vx_t = O(\normtwosm{\vx_t}^2)$.
\[
\normtwosm{\vx_t} = h_t \eta + O(h_t^2 \eta^2) + O(\normtwosm{\vx_t}^2).
\]
Solving this equation gives $\normtwosm{\vx_t} = O(\ahte)$.
Then $\mPHz(\vphi_t) \vx_t = O(h_t^2 \eta^2)$,
and therefore we have $\vx_t = h_t \eta \vvH_1(\vphi_t) + O(h_t^2 \eta^2)$.
\end{proof}

\begin{lemma} \label{lm:ful-drift-2-step-hat}
In the setting of \Cref{lm:ful-drift-2-step},
\begin{align}
\hatvx_{t+2} &= (\mI - \eeta_{t+1}\mH_t)(\mI - \eeta_t \mH_t) \vx_t - \eta^2 h_t^2 \vpsi_t(\vvH_1(\vphi_t)) + O(\alpha h_t^2 \eta^3), \label{eq:ful-drift-2-step-hatx} \\
\eeta_{t} &= \mu_t \cdot (1 - \tfrac{1}{2} \eta u_t) + O(\alpha^2 \eta^2),  \label{eq:ful-drift-2-step-eeta-t0}\\
\eeta_{t+1} &= \mu_t \cdot (1 - \tfrac{1}{2} \eta u_t) + O(\alpha^2 \eta^2). \label{eq:ful-drift-2-step-eeta-t1} \\
\hatu_{t+2}
&= u_t + 8 \Cb \eta h_t^2 - 2\Cb \eta + O(\alpha(1+h_t^2) \eta^2). \label{eq:ful-drift-2-step-hatu}
\end{align}
\end{lemma}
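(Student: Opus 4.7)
The plan is to establish the four estimates by combining the two-step GD expansion \Cref{lm:gd-2-step} with the scheduler lemmas \Cref{lm:lr-1-step}, \Cref{lm:lr-2-step}, using the aligned-state identity \Cref{lm:aligned-hteta} to simplify the results. First I would record the standing bounds. From $\alpha$-boundedness, $\normtwosm{\vx_t} \le \alpha\eta$ and $\abssm{u_t} \le \alpha$; Taylor expansion of $\nabla\Loss$ at $\vphi_t$ gives $\normtwosm{\nabla\Loss(\vtheta_t)} = O(\alpha\eta)$, and the first-order approximation of \Cref{lm:ful-align-1-step} gives the same bound $O(\alpha\eta)$ for $\normtwosm{\nabla\Loss(\vtheta_{t+1})}$. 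These are exactly the hypotheses of \Cref{lm:lr-1-step} and \Cref{lm:lr-2-step} with $\mu = \mu_t$, so \eqref{eq:ful-drift-2-step-eeta-t0} and \eqref{eq:ful-drift-2-step-eeta-t1} follow immediately from \eqref{eq:lr-1-step-eta} and \eqref{eq:lr-2-step-eta} (noting $\hatu_t = u_t$ in our notation).

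For \eqref{eq:ful-drift-2-step-hatx}, I would apply \Cref{lm:gd-2-step} with $\vphi = \vphi_t$ and $r = O(\abssm{h_t})$, which is legitimate because $O(\ahte)$-misalignment combined with \Cref{lm:aligned-hteta} yields $\normtwosm{\vx_t} = O(\abssm{h_t}\eta)$. The only nontrivial bookkeeping is to show that $\vpsi_{\vphi_t}(\vx_t) = \eta^2 h_t^2\vpsi_t(\vvH_1(\vphi_t)) + O(\alpha h_t^2\eta^3)$: since $\vpsi_{\vphi_t}$ is a bilinear form in its argument, substituting $\vx_t = h_t\eta\vvH_1(\vphi_t) + O(h_t^2\eta^2)$ and expanding gives a leading term $h_t^2\eta^2\vpsi_t(\vvH_1(\vphi_t))$, cross terms of order $\abssm{h_t}^3\eta^3 \le \alpha h_t^2\eta^3$, and quadratic remainder of the same order. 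The error $O(\alpha r^2\eta^3)$ from \Cref{lm:gd-2-step} becomes $O(\alpha h_t^2 \eta^3)$, as required.

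The delicate piece is \eqref{eq:ful-drift-2-step-hatu}, which requires estimating $\mu_t^2 \bar g_t^2$ and $\mu_t^2 \bar g_{t+1}^2$ to additive precision $O(\alpha h_t^2 \eta)$, since multiplying by the $\Cb\eta$ prefactor in \eqref{eq:lr-2-step-u} should produce an error no worse than $O(\alpha h_t^2 \eta^2)$. For the first, Taylor expanding $\nabla\Loss(\vtheta_t) = \mH_t\vx_t + O(\normtwosm{\vx_t}^2)$ and using the aligned form of $\vx_t$ gives $\normtwosm{\nabla\Loss(\vtheta_t)}^2 = h_t^2\eta^2\lamH_1(\vphi_t)^2 + O(\abssm{h_t}^3\eta^3)$, whence $\mu_t^2\bar g_t^2 = 4h_t^2 + O(\alpha h_t^2\eta)$. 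For the second, the hard step is to pin down $\hatvx_{t+1}$ to second order: using the second-order expansion of \Cref{lm:gd-1-step} together with the explicit form \eqref{eq:ful-drift-2-step-eeta-t0} of $\eeta_t$, one obtains $\hatvx_{t+1} = -h_t\eta\vvH_1(\vphi_t) + O((\alpha h_t + h_t^2)\eta^2)$, and then $\normtwosm{\nabla\Loss(\vtheta_{t+1})}^2 = h_t^2\eta^2\lamH_1(\vphi_t)^2 + O(\alpha h_t^2\eta^3)$ after squaring. Plugging the two estimates into \eqref{eq:lr-2-step-u} yields $\hatu_{t+2} = u_t + 8\Cb\eta h_t^2 - 2\Cb\eta + O(\alpha h_t^2\eta^2) + O(\alpha\eta^2)$, which is exactly $O(\alpha(1+h_t^2)\eta^2)$.

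The main obstacle is the error accounting for $\bar g_{t+1}$: one has to keep the $\eta u_t$ correction to $\eeta_t$ explicit when expanding $(\mI - \eeta_t\mH_t)\vx_t$, because dropping it naively introduces a spurious $O(\alpha^2 h_t\eta^2)$ term in $\hatvx_{t+1}$ whose square would corrupt the final bound by an extra factor of $\alpha$. Once this is tracked carefully, the remaining steps are routine bookkeeping of Taylor-expansion remainders.
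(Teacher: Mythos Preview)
Your proposal is correct and follows essentially the same route as the paper: \Cref{lm:gd-2-step} for \eqref{eq:ful-drift-2-step-hatx}, \Cref{lm:lr-1-step}/\Cref{lm:lr-2-step} for the effective learning rates and $\hatu_{t+2}$, and \Cref{lm:aligned-hteta} to replace $\vx_t$ by $h_t\eta\vvH_1(\vphi_t)+O(h_t^2\eta^2)$ throughout. One small remark: your last paragraph overcomplicates the estimate of $\bar g_{t+1}$. The paper simply uses the first-order approximation $\vtheta_{t+1}-\vphi_t = \mU_t\vx_t + O(\alpha\abssm{h_t}\eta^2)$ from \Cref{lm:gd-1-step} (taking $r=O(\abssm{h_t})$), which already gives $\bar g_{t+1} = \lamH_1(\vphi_t)\abssm{h_t} + O(\alpha\abssm{h_t}\eta)$ and hence $\mu_t^2\bar g_{t+1}^2 = 4h_t^2 + O(\alpha h_t^2\eta)$; after multiplication by $\Cb\eta$ this lands inside $O(\alpha(1+h_t^2)\eta^2)$. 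There is no need to invoke the second-order expansion or track the $\eta u_t$ correction explicitly---the feared $O(\alpha^2 h_t\eta^2)$ term does not arise, because the first-order form $(\mI-\tfrac{2}{\lamH_1}\mH_t)\vx_t$ already carries only an $O(\alpha r\eta^2)$ remainder with $r=\abssm{h_t}$.
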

\begin{proof}
Define
$\bar{g}_t = \normtwosm{\nabla \Loss(\vtheta_t) / \eta}$
and
$\bar{g}_{t+1} = \normtwosm{\nabla \Loss(\vtheta_{t+1}) / \eta}$.
Since the state at step $t$ is $O(\ahte)$-misaligned,
$\normtwosm{\vx_t} \le O(\ahte)$.
By Taylor expansion of $\nabla \Loss$ around $\vphi_t$,
\begin{align*}
  \nabla \Loss(\vtheta_t) = \nabla \Loss(\vphi_t) + \nabla^2 \Loss(\vphi_t) \vx_t + O(h_t^2 \eta^2) = \mH_t \vx_t + O(h_t^2 \eta^2).
\end{align*}
So
$\bar{g}_t = \normtwosm{\mH_t \vx_t / \eta} + O(h_t^2 \eta) = O(h_t)$.
Then \Cref{lm:lr-1-step} implies \eqref{eq:ful-drift-2-step-eeta-t0} and
the following approximation for $\hatu_{t+1}$:
\begin{align*}
  \hatu_{t+1} &= u_t + \Cb \eta (  \mu_t^2 \bar{g}_t^2 - 1) + O(\alpha \eta^2) \\
  &= u_t + \Cb \eta (  \mu_t^2 \normtwosm{\mH_t \vx_t / \eta}^2 - 1) + O((\alpha + \ahtc) \eta^2).
\end{align*}
As \eqref{eq:ful-drift-2-step-eeta-t0} verifies $\eeta_t = \mu_t + O(\alpha \eta)$,
we can use \Cref{lm:gd-1-step} to derive the zeroth-order and first-order approximations for $\vtheta_{t+1}$:
$\vtheta_{t+1} - \vphi_t = O(\ahte)$
and
$\vtheta_{t+1} - \vphi_t = \mU_t \vx_t + O(\alpha \aht \eta^2)$.
Then by Taylor expansion of $\nabla \Loss$ around $\vphi_t$ again,
\begin{align*}
  \nabla \Loss(\vtheta_{t+1}) = \nabla \Loss(\vphi_t) + \nabla^2 \Loss(\vphi_t) (\mU_t\vx_t + O(\alpha \aht \eta^2) ) + O(h_t^2\eta^2) = \mH_t \mU_t \vx_t + O(\alpha h_t \eta^2).
\end{align*}
So $\bar{g}_{t+1} = \normtwosm{\mH_t \mU_t \vx_t / \eta} + O(\alpha \aht \eta) = O(\aht)$.
Then \Cref{lm:lr-2-step} implies \eqref{eq:ful-drift-2-step-eeta-t1}.
We can further apply \Cref{lm:gd-2-step} to obtain the following:
\begin{align}
\hatvx_{t+2} = (\mI - \eeta_{t+1}\mH_t)(\mI - \eeta_t \mH_t) \vx_t - \vpsi_t(\vx_t) + O(\alpha h_t^2 \eta^3).
\label{eq:ful-drift-2-step-hatx-prelim}
\end{align}
Note that \Cref{lm:aligned-hteta} implies that
$\vx_t = h_t \eta \vvH_1(\vphi_t) + O(h_t^2 \eta^2)$.
Then by \Cref{lm:ful-psi-formula}
\[
\vpsi_t(\vx_t) = \vpsi_t(\eta h_t \vvH_1(\vphi_t)) + O(h_t^2 \eta^2 \cdot \ahte)
  = \eta^2 h_t^2 \vpsi_t(\vvH_1(\vphi_t)) + O(\ahtc \eta^3).
\]
Combining this with \eqref{eq:ful-drift-2-step-hatx-prelim} gives \eqref{eq:ful-drift-2-step-hatx}.

Finally, we derive the approximation for $\hatu_{t+2}$.
By \Cref{lm:lr-2-step},
\begin{align}
  \hatu_{t+2} &= u_t + \Cb \eta (  \mu_t^2 \bar{g}_t^2 + \mu_t^2 \bar{g}_{t+1}^2 - 2) + O(\alpha \eta^2).
  \label{eq:ful-drift-2-step-hatu-prelim}
\end{align}
Since $\vx_t = h_t \eta \vvH_1(\vphi_t) + O(h_t^2 \eta^2)$,
for $\barg_t$ we have
\begin{align*}
\barg_t
= \normtwosm{\mH_t (\eta h_t \vvH_1(\vphi_t) + O(h_t^2 \eta^2)) / \eta} + O(h_t^2 \eta)
&= \normtwosm{h_t \lamH_1(\vphi_t) \vvH_1(\vphi_t)} + O(h_t^2 \eta) \\
&= \lamH_1(\vphi_t) \abssm{h_t} + O(h_t^2 \eta).
\end{align*}
Similarly, for $\barg_{t+1}$ we have
\begin{align*}
\barg_{t+1}
= \normtwosm{\mH_t \mU_t (\eta h_t \vvH_1(\vphi_t) + O(h_t^2 \eta^2)) / \eta}
&= \normtwo{-h_t \lamH_1(\vphi_t) \vvH_1(\vphi_t)} + O(h_t^2 \eta) \\
&= \lamH_1(\vphi_t) \abssm{h_t} + O(h_t^2 \eta).
\end{align*}
So both $\barg_t^2$ and $\barg_{t+1}^2$ can be approximated by $\lamH_1(\vphi_t)^2 h_t^2 + O(\ahtc \eta)$.
Combining this with \eqref{eq:ful-drift-2-step-hatu-prelim} gives
\begin{align*}
\hatu_{t+2}
&= u_t + \Cb \eta \mu_t^2 (\lamH_1(\vphi_t)^2 h_t^2 + O(\ahtc \eta)) + \Cb \eta \mu_t^2 (\lamH_1(\vphi_t)^2 h_t^2 + O(\ahtc \eta)) \\
& \quad - 2 \Cb \eta + \Obig{\alpha(1 + h_t^2)\eta^2} \\
&= u_t + 8 \Cb \eta h_t^2 - 2\Cb \eta + O(\alpha(1+h_t^2) \eta^2),
\end{align*}
which implies \eqref{eq:ful-drift-2-step-hatu}.
\end{proof}

\begin{lemma} \label{lm:ful-drift-2-step-hat2}
In the setting of \Cref{lm:ful-drift-2-step},
\begin{align}
\hath_{t+2} &= (1 - 2 \eta u_t) h_t  + \Osm{\alpha^2 \aht \eta^2} \label{eq:ful-drift-2-step-hath} \\
\mPHz(\vphi_t) \hatvx_{t+2}
&= \mPHz(\vphi_t) \vx_t - 2 \eta^2 h_t^2 \gradGa \log \lamH_1(\vphi_t) + \Osm{\alpha h_t^2 \eta^3} \label{eq:ful-drift-2-step-hatx-tan} \\
\normtwosm{\mPHnzt(\vphi_t) \hatvx_{t+2}}
&\le (1 - 1.9\gamma)^2 \normtwosm{\mPHnzt(\vphi_t) \vx_t}
+ O(h_t^2 \eta^2) \label{eq:ful-drift-2-step-hatx-w}
\end{align}
\end{lemma}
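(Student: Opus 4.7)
The plan is to derive all three estimates by feeding the representation of $\hatvx_{t+2}$ from \Cref{lm:ful-drift-2-step-hat} through suitable spectral projections, using \Cref{lm:aligned-hteta} and \Cref{lm:ful-psi-formula} to isolate the leading-order contributions in the $\vvH_1(\vphi_t)$, tangent-space, and remaining-eigenspace directions. The starting point is
\[
\hatvx_{t+2} = (\mI - \eeta_{t+1}\mH_t)(\mI - \eeta_t\mH_t)\vx_t \;-\; \eta^2 h_t^2\,\vpsi_t(\vvH_1(\vphi_t)) \;+\; O(\alpha h_t^2 \eta^3),
\]
and the key structural observation is that, because $\mH_t$ is symmetric with $\vvH_1(\vphi_t)$ in its eigenbasis, the projections $\mPHz(\vphi_t)$ and $\mPHnzt(\vphi_t)$ commute with every factor $\mI - \eeta_i\mH_t$, decoupling the three components.

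For \eqref{eq:ful-drift-2-step-hath}, I will take the inner product with $\vvH_1(\vphi_t)$. The eigenvalue of $\mI - \eeta_t\mH_t$ along this direction equals $1 - 2\eeta_t/\mu_t$, which \eqref{eq:ful-drift-2-step-eeta-t0} rewrites as $-1 + \eta u_t + O(\alpha^2\eta^2)$; an analogous statement holds at step $t{+}1$ by \eqref{eq:ful-drift-2-step-eeta-t1}. Squaring yields $1 - 2\eta u_t + O(\alpha^2\eta^2)$ after absorbing $\eta^2 u_t^2 = O(\alpha^2\eta^2)$ using $\abssm{u_t}\le\alpha$. Multiplying by $\dotpsm{\vvH_1(\vphi_t)}{\vx_t} = h_t\eta$, using the identity $\dotpsm{\vpsi_t(\vvH_1(\vphi_t))}{\vvH_1(\vphi_t)} = 0$ from \Cref{lm:ful-psi-formula} to kill the cubic term, and dividing by $\eta$, gives the claim once we notice that the residual $O(\alpha h_t^2 \eta^2)$ is controlled by $O(\alpha^2 \aht \eta^2)$ because $\aht \le \alpha$.

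The tangent-space estimate \eqref{eq:ful-drift-2-step-hatx-tan} is straightforward from the same decomposition: any vector in $\operatorname{im}\mPHz(\vphi_t)$ is annihilated by $\mH_t$, so the two-step operator acts as the identity there and the first term becomes $\mPHz(\vphi_t)\vx_t$. The cubic correction contributes $-\eta^2 h_t^2 \mPHz(\vphi_t)\vpsi_t(\vvH_1(\vphi_t)) = -2\eta^2 h_t^2\,\gradGa \log \lamH_1(\vphi_t)$ by the second identity of \Cref{lm:ful-psi-formula}, and the $O(\alpha h_t^2 \eta^3)$ residual already has the advertised form.

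The bound \eqref{eq:ful-drift-2-step-hatx-w} reduces to a perturbed version of \Cref{lm:ful-gap-2gamin}: at $\eeta_t = \mu_t$ that lemma gives $\normtwosm{\mPHnzt(\vphi_t)(\mI - \mu_t\mH_t)} \le 1 - 2\gamin$, and since $\eeta_t - \mu_t = O(\alpha\eta)$ by \eqref{eq:ful-drift-2-step-eeta-t0}, the perturbation $(\mu_t - \eeta_t)\mH_t$ shifts this norm by only $O(\alpha\eta)$, i.e. by at most $0.1\,\gamin$ for sufficiently small $\eta$. Applying the same bound at step $t{+}1$ and multiplying gives an operator norm of at most $(1 - 1.9\gamin)^2$, and the cubic term contributes $\eta^2 h_t^2\,\normtwosm{\mPHnzt(\vphi_t)\vpsi_t(\vvH_1(\vphi_t))} = O(h_t^2\eta^2)$ via boundedness of $\vpsi_t$ on the compact manifold (a consequence of $\Loss \in \contC^4$). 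The main bookkeeping obstacle throughout will be to ensure that the quadratic-in-$h_t$ residual from \Cref{lm:aligned-hteta} interacts correctly with the power-method squaring and never produces an $\alpha^3\eta^2$ or $\alpha h_t \eta^2$ term larger than stated; this is the only place where I expect to split on the relative sizes of $\aht$ and $\alpha\eta$ before folding everything into the advertised $O(\alpha^2\aht\eta^2)$ and $O(h_t^2\eta^2)$ error bounds.
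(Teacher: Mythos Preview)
Your proposal is correct and follows essentially the same approach as the paper: start from the representation \eqref{eq:ful-drift-2-step-hatx} of $\hatvx_{t+2}$, then apply the three spectral projections $\vvH_1(\vphi_t)^\top$, $\mPHz(\vphi_t)$, $\mPHnzt(\vphi_t)$, using \Cref{lm:ful-psi-formula} for the first two and \Cref{lm:ful-gap-2gamin} plus the $\eeta_t \approx \mu_t$ perturbation for the third. The only (harmless) over-planning is your final remark about splitting on the relative sizes of $\aht$ and $\alpha\eta$: the paper never needs such a case split, since the residual $O(\alpha h_t^2\eta^2)$ from the $\hath_{t+2}$ computation is absorbed directly into $O(\alpha^2\aht\eta^2)$ via $\aht \le \alpha$, and \Cref{lm:aligned-hteta} is already baked into \eqref{eq:ful-drift-2-step-hatx} rather than invoked again here.
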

\begin{proof}
In the following, we derive the approximations from \eqref{eq:ful-drift-2-step-hatx},
\begin{align*}
\hatvx_{t+2} &= (\mI - \eeta_{t+1}\mH_t)(\mI - \eeta_t \mH_t) \vx_t - \eta^2 h_t^2 \vpsi_t(\vvH_1(\vphi_t)) + O(\alpha h_t^2 \eta^3).
\end{align*}

\myparagraph{Approximation for $\hath_{t+2}$.}
Note that $\vx_t = \eta h_t \vvH_1(\vphi_t) + O(h_t^2\eta^2)$
since the state at step $t$ is $O(\ahte)$-misaligned.
For $\hath_{t+2}$, we have
\begin{align*}
\hath_{t+2} &= \tfrac{1}{\eta}\left( \dotpsm{\vx_t}{(\mI - \eeta_{t+1} \mH_t)(\mI - \eeta_t \mH_t) \vvH_1(\vphi_t)}
- \eta^2 h_t^2 \dotpsm{\vpsi_t(\vvH_1(\vphi_t))}{\vvH_1(\vphi_t)} + O(\alpha h_t^2\eta^3) \right) \\
&= (1 - \eeta_{t+1} \lamH_1(\vphi_t)) (1 - \eeta_t \lamH_1(\vphi_t)) h_t + O(\alpha h_t^2 \eta^2),
\end{align*}
where we use 
the fact that $\dotpsm{\vpsi_t(\vvH_1(\vphi_t))}{\vvH_1(\vphi_t)} = 0$ by \Cref{lm:ful-psi-formula}.

By \eqref{eq:ful-drift-2-step-eeta-t0}, \eqref{eq:ful-drift-2-step-eeta-t1},
$\eeta_{\tau} = \mu_t \cdot (1 - \tfrac{1}{2} \eta u_t) + O(\alpha^2 \eta^2)$ for $\tau \in \{t, t+1\}$.
Note that $\mu_t \cdot \lamH_1(\vphi_t) = 2$.
Then for $\tau \in \{t, t+1\}$,
\begin{align*}
1 - \eeta_{\tau} \lamH_1(\vphi_t)
&= 1 - \mu_t \cdot (1 - \tfrac{1}{2} \eta u_t) \cdot \lamH_1(\vphi_t) + O(\alpha^2 \eta^2) \\
&= 1 - 2 \cdot (1 - \tfrac{1}{2} \eta u_t)) + O(\alpha^2 \eta^2) \\
&= -1 + \eta u_t + O(\alpha^2 \eta^2).
\end{align*}
Then $(1 - \eeta_{t+1} \lamH_1(\vphi_t)) (1 - \eeta_t \lamH_1(\vphi_t))$ can be approximated by
\[
  (1 - \eeta_{t+1} \lamH_1(\vphi_t)) (1 - \eeta_t \lamH_1(\vphi_t)) = (-1 + \eta u_t + O(\alpha^2 \eta^2))^2 = 1 - 2 \eta u_t + O(\alpha^2 \eta^2).
\]
Therefore, we have
$\hath_{t+2} = \left(1 - 2 \eta u_t + O(\alpha^2 \eta^2)\right) h_t + O(\alpha h_t^2 \eta^2)$,
which implies \eqref{eq:ful-drift-2-step-hath}.

\myparagraph{Approximation for $\mPHz(\vphi_t) \hatvx_{t+2}$.} 
For $\mPHz(\vphi_t) \hatvx_{t+2}$, we have
\begin{align*}
\mPHz(\vphi_t) \hatvx_{t+2}
&= \mPHz(\vphi_t) (\mI - \eeta_{t+1} \mH_t)(\mI - \eeta_t \mH_t) \vx_t
- \eta^2 h_t^2 \mPHz(\vphi_t) \vpsi_t(\vvH_1(\vphi_t)) + O(\alpha h_t^2 \eta^3) \\
&= \mPHz(\vphi_t) (\mI - \eeta_{t+1} \mH_t)(\mI - \eeta_t \mH_t) \vx_t
- 2\eta^2 h_t^2 \gradGa \log \lamH_1(\vphi_t) + O(\alpha h_t^2 \eta^3),
\end{align*}
where we use 
the fact that
$\mPHz(\vphi_t) \vpsi_t(\vvH_1(\vphi_t)) = 2 \gradGa \log \lamH_1(\vphi_t)$
by \Cref{lm:ful-psi-formula}.
To obtain \eqref{eq:ful-drift-2-step-hatx-tan}, we only need to note that 
$\mPHz(\vphi_t) (\mI - \eeta_{t+1} \mH_t)(\mI - \eeta_t \mH_t) \vx_t = \mPHz(\vphi_t)\vx_t$.

\myparagraph{Approximation for $\normtwosm{\mPHnzt(\vphi_t) \hatvx_{t+2}}$.}
To approximate $\normtwosm{\mPHnzt(\vphi_t) \hatvx_{t+2}}$,
we note that
if $\eta$ is sufficiently small,
then $\eeta_t, \eeta_{t+1}$ are sufficiently close to $\mu_t$.
In this case, by
\Cref{lm:ful-gap-2gamin}
we have
$\normtwosm{\mPHnzt(\vphi_t)(\mI - \eeta_{t+1}\mH_t)(\mI - \eeta_t \mH_t) \vx_t} \le (1 - 1.9 \gamin)^2 \normtwosm{\mPHnzt(\vphi_t)\vx_t}$.
Combining this with \eqref{eq:ful-drift-2-step-hatx} proves \eqref{eq:ful-drift-2-step-hatx-w}.
\end{proof}

\begin{lemma} \label{lm:ful-drift-2-step-phi}
  In the setting of \Cref{lm:ful-drift-2-step},
  the approximation \eqref{eq:ful-drift-2-step-phi} holds for $\vphi_{t+2}$ and
  \begin{align}
    \log \lamH_1(\vphi_{t+2}) &= \log \lamH_1(\vphi_t)
    - 2 \eta^2 h_t^2 \normtwosm{\gradGa \log \lamH_1(\vphi_t)}^2 + \Osm{\alpha h_t^2 \eta^3}.\label{eq:ful-drift-2-step-loglam}
  \end{align}
\end{lemma}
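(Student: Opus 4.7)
The plan is to derive \eqref{eq:ful-drift-2-step-phi} by Taylor-expanding the projection map $\Phi$ around $\vphi_t$, using the already-established estimate \eqref{eq:ful-drift-2-step-hatx-tan} for the tangential part of $\hatvx_{t+2}$, and then to derive \eqref{eq:ful-drift-2-step-loglam} by a further first-order Taylor expansion of $\log\lamH_1$ along $\vphi_{t+2}-\vphi_t$. First I would write $\vphi_{t+2}=\Phi(\vphi_t+\hatvx_{t+2})$ and expand to second order, so that by \Cref{lm:Phi-proj-ful} the linear term equals $\mPHz(\vphi_t)\hatvx_{t+2}$, which by \eqref{eq:ful-drift-2-step-hatx-tan} is $\mPHz(\vphi_t)\vx_t - 2\eta^2 h_t^2\gradGa\log\lamH_1(\vphi_t)+O(\alpha h_t^2\eta^3)$.

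The second task is to control the quadratic Taylor term $\tfrac12\partial^2\Phi_{\vphi_t}[\hatvx_{t+2},\hatvx_{t+2}]$. Since the state at step $t$ is $\alpha$-bounded and $O(\aht\eta)$-misaligned, \Cref{lm:aligned-hteta} gives $\vx_t=\eta h_t\vvH_1(\vphi_t)+O(h_t^2\eta^2)$, and the estimates in \Cref{lm:ful-drift-2-step-hat,lm:ful-drift-2-step-hat2} yield $\hatvx_{t+2}=\hath_{t+2}\eta\vvH_1(\vphi_t)+\vr$ with $\vr=O(h_t^2\eta^2)$ and $\hath_{t+2}=O(\aht)$. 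Since $\vvH_1(\vphi_t)\in\NGa{\vphi_t}$, \Cref{lm:Phi-2nd-normal-ful} kills the dominant term $\partial^2\Phi_{\vphi_t}[\vvH_1(\vphi_t),\vvH_1(\vphi_t)]$, and bilinearity leaves only cross terms of size $O(\aht\eta\cdot h_t^2\eta^2)=O(h_t^3\eta^3)$, which is $O(\alpha h_t^2\eta^3)$ as $h_t=O(\alpha)$; the cubic remainder $O(\normtwosm{\hatvx_{t+2}}^3)$ is of the same order.

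The main obstacle is showing that the seemingly too-large term $\mPHz(\vphi_t)\vx_t$ actually contributes at the target order $O(\alpha h_t^2\eta^3)$, not just $O(h_t^2\eta^2)$ as one gets from \Cref{lm:mPHz-x-small}. The trick is to exploit the identity $\Phi(\vtheta_t)=\vphi_t$ exactly: Taylor-expanding gives $\mPHz(\vphi_t)\vx_t=-\tfrac12\partial^2\Phi_{\vphi_t}[\vx_t,\vx_t]+O(\normtwosm{\vx_t}^3)$, and the same cancellation via \Cref{lm:aligned-hteta} and \Cref{lm:Phi-2nd-normal-ful} as above forces $\mPHz(\vphi_t)\vx_t=O(h_t^3\eta^3)=O(\alpha h_t^2\eta^3)$. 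Combining this with the earlier estimates yields \eqref{eq:ful-drift-2-step-phi}.

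Finally, for \eqref{eq:ful-drift-2-step-loglam} I would Taylor-expand $\log\lamH_1$ around $\vphi_t$ along $\vphi_{t+2}-\vphi_t=-2\eta^2h_t^2\gradGa\log\lamH_1(\vphi_t)+O(\alpha h_t^2\eta^3)$. The first-order term is $\dotpsm{\nabla\log\lamH_1(\vphi_t)}{\vphi_{t+2}-\vphi_t}$; since $\gradGa\log\lamH_1(\vphi_t)=\mPHz(\vphi_t)\nabla\log\lamH_1(\vphi_t)$ and $\mPHz(\vphi_t)$ is a self-adjoint projection, this equals $-2\eta^2h_t^2\normtwosm{\gradGa\log\lamH_1(\vphi_t)}^2+O(\alpha h_t^2\eta^3)$. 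The quadratic remainder is $O(\normtwosm{\vphi_{t+2}-\vphi_t}^2)=O(h_t^4\eta^4)$, which is absorbed into $O(\alpha h_t^2\eta^3)$ because $h_t^2\eta\le\alpha^2\eta\le\alpha$ for $\alpha=\eta^{-o(1)}$. This yields \eqref{eq:ful-drift-2-step-loglam} and completes the proof.
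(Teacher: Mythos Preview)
Your proposal is correct and follows essentially the same approach as the paper. The only organizational difference is that the paper writes two parallel Taylor expansions, one for $\vphi_t=\Phi(\vphi_t+\vx_t)$ and one for $\vphi_{t+2}=\Phi(\vphi_t+\hatvx_{t+2})$, and then subtracts them so that the troublesome $\mPHz(\vphi_t)\vx_t$ term cancels symmetrically against $\partial\Phi_{\vphi_t}[\vx_t]$; you instead expand only $\vphi_{t+2}$ and bound $\mPHz(\vphi_t)\vx_t=O(\alpha h_t^2\eta^3)$ separately via the same identity $\Phi(\vtheta_t)=\vphi_t$. Both routes invoke exactly the same ingredients (\Cref{lm:Phi-proj-ful}, \Cref{lm:Phi-2nd-normal-ful}, \Cref{lm:aligned-hteta}, and \eqref{eq:ful-drift-2-step-hatx-tan}) and are logically equivalent.
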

\begin{proof}
By Taylor expansion,
\begin{align}
  \vphi_t &= \Phi(\vphi_t) + \partial \Phi_{\vphi_t}[\vx_t] + \frac{1}{2} \partial^2 \Phi_{\vphi_t}[\vx_t, \vx_t] + O(\ahtc \eta^3). \label{eq:ful-drift-2-step-phi-taylor1} \\
  \vphi_{t+2} &= \Phi(\vphi_t) + \partial \Phi_{\vphi_t}[\hatvx_{t+2}] + \frac{1}{2} \partial^2 \Phi_{\vphi_t}[\hatvx_{t+2}, \hatvx_{t+2}] + O(\ahtc \eta^3). \label{eq:ful-drift-2-step-phi-taylor2}
\end{align}
By definition, $\Phi(\vphi_t) = \vphi_t$. By \eqref{eq:ful-drift-2-step-hatx-tan}
and \Cref{lm:Phi-proj-ful},
\begin{align*}
\partial\Phi_{\vphi_t}[\hatvx_{t+2}]
&= \partial \Phi_{\vphi_t}[\vx_t] - 2 \eta^2 h_t^2 \gradGa \log \lamH_1(\vphi_t) + \Osm{\alpha h_t^2 \eta^3}.
\end{align*}
Note that $\vx_t = \eta h_t \vvH_1(\vphi_t) + O(h_t^2\eta^2)$
since the state at step $t$ is $O(\ahte)$-misaligned.
Also note that the identity
$\partial^2 \Phi_{\vphi_t}[\vvH_1(\vphi_t), \vvH_1(\vphi_t)] = \vzero$
holds
by \Cref{lm:Phi-2nd-normal-ful}. Then
\begin{align*}
\partial^2 \Phi_{\vphi_t}[\vx_t, \vx_t]
&= \partial^2 \Phi_{\vphi_t}[\eta h_t \vvH_1(\vphi_t), \eta h_t \vvH_1(\vphi_t)] + O(\ahtc \eta^3) \\
&= O(\ahtc \eta^3).
\end{align*}
Similarly, we have 
$\partial^2 \Phi_{\vphi_t}[\hatvx_{t+2}, \hatvx_{t+2}] = O(\ahtc \eta^3)$
since
\eqref{eq:ful-drift-2-step-hatx-tan} implies $\normtwosm{\mPHz(\vphi_t) \hatvx_{t+2}} = O(h_t^2 \eta^2)$
and
\eqref{eq:ful-drift-2-step-hatx-w} implies $\normtwosm{\mPHnzt(\vphi_t) \hatvx_{t+2}} = O(h_t^2 \eta^2)$.

Now we can prove \eqref{eq:ful-drift-2-step-phi} by
subtracting \eqref{eq:ful-drift-2-step-phi-taylor2} with \eqref{eq:ful-drift-2-step-phi-taylor1}:
\begin{align*}
  \vphi_{t+2} - \vphi_t
  &= \left(\partial \Phi_{\vphi_t}[\hatvx_{t+2}] - \partial \Phi_{\vphi_t}[\vx_t]\right)
    + \tfrac{1}{2} \left( \partial^2 \Phi_{\vphi_t}[\hatvx_{t+2}, \hatvx_{t+2}] - \partial^2 \Phi_{\vphi_t}[\vx_t, \vx_t] \right) + O(\ahtc \eta^3) \\
  &= \left(
  - 2 \eta^2 h_t^2 \gradGa \log \lamH_1(\vphi_t) + \Osm{\alpha h_t^2 \eta^3}
  \right) + O(\ahtc \eta^3) + O(\ahtc \eta^3) \\
  &= - 2 \eta^2 h_t^2 \gradGa \log \lamH_1(\vphi_t) + \Osm{\alpha h_t^2 \eta^3}.
\end{align*}
Finally, for $\log \lamH_1(\vphi_{t+2})$ we have
\begin{align*}
  \log \lamH_1(\vphi_{t+2}) - \log \lamH_1(\vphi_t) 
  &= \dotpsm{\nabla \log \lamH_1(\vphi_t)}{\vphi_{t+2} - \vphi_t} + O((h_t^2 \eta^2)^2) \\
  &= \dotpsm{\nabla \log \lamH_1(\vphi_t)}{-2 \eta^2 h_t^2 \gradGa \log \lamH_1(\vphi_t)} + \Osm{\alpha h_t^2 \eta^3 + h_t^4 \eta^4} \\
  &= -2 \eta^2 h_t^2 \normtwosm{\gradGa \log \lamH_1(\vphi_t)}^2 + \Osm{\alpha h_t^2 \eta^3},
\end{align*}
which proves \eqref{eq:ful-drift-2-step-loglam}.
\end{proof}

\begin{proof}[Proof for \Cref{lm:ful-drift-2-step}]
We have verified \eqref{eq:ful-drift-2-step-phi} in \Cref{lm:ful-drift-2-step-phi}.
By \eqref{eq:ful-drift-2-step-phi} and definitions of $\hatvx_{t+2}$ and $\vx_{t+2}$, we have
\begin{align*}
  \vx_{t+2} - \hatvx_{t+2} = \vphi_t - \vphi_{t+2} = 2 \eta^2 h_t^2 \gradGa \log \lamH_1(\vphi_t) + \Osm{\alpha h_t^2 \eta^3}.
\end{align*}
And we can write $\vx_{t+2} - \hatvx_{t+2} = \vphi_t - \vphi_{t+2} = \Osm{h_t^2 \eta^2}$ as a loose approximation.

\myparagraph{Approximation for $h_{t+2}$.}
For $h_{t+2}$, we have
\begin{align*}
  h_{t+2} - \hath_{t+2}
  &= \dotpsm{\vx_{t+2}}{\vvH_1(\vphi_{t+2})} - \dotpsm{\hatvx_{t+2}}{\vvH_1(\vphi_t)} \\
  &= \dotpsm{\vx_{t+2}}{\vvH_1(\vphi_{t+2}) - \vvH_1(\vphi_t)} + \dotpsm{\vx_{t+2} - \hatvx_{t+2}}{\vvH_1(\vphi_t)}\\
  &= O(\ahte) \cdot O(\normtwosm{\vphi_{t+2} - \vphi_t}) + 2\eta^2 h_t^2 \dotpsm{\gradGa \log \lamH_1(\vphi_t)}{\vvH_1(\vphi_t)} + O(\alpha h_t^2 \eta^3) \\
  &= O(\ahtc \eta^3) + 0 + \Osm{\alpha h_t^2 \eta^3} \\
  &= \Osm{\alpha h_t^2 \eta^3},
\end{align*}
where the fourth equality is due to
$\dotpsm{\gradGa \log \lamH_1(\vphi_t)}{\vvH_1(\vphi_t)} = 0$ and
$\normtwosm{\vphi_{t+2} - \vphi_t} = O(h_t^2 \eta^2)$.
Combining this with \eqref{eq:ful-drift-2-step-hath} proves the claimed approximation \eqref{eq:ful-drift-2-step-h}.

\myparagraph{Approximation for $\normtwosm{\mPHnzt(\vphi_{t+2}) \vx_{t+2}}$.}
For $\mPHnzt(\vphi_{t+2}) \vx_{t+2}$, we have
\begin{align*}
  \mPHnzt(\vphi_{t+2}) \vx_{t+2} -
  \mPHnzt(\vphi_t) \hatvx_{t+2}
  &=
  (\mPHnzt(\vphi_{t+2}) - \mPHnzt(\vphi_t)) \vx_{t+2}
  + \mPHnzt(\vphi_t) (\vx_{t+2} -\hatvx_{t+2}) \\
  &=
  \Osm{\normtwosm{\vphi_{t+2} - \vphi_t}} \cdot O(\ahte)
  + O(\normtwosm{\vx_{t+2} -\hatvx_{t+2}}) \\
  &=
  O(h_t^2 \eta^2) \cdot O(\ahte)
  + O(h_t^2 \eta^2) \\
  &= O(h_t^2 \eta^2),
\end{align*}
where the third equality is due to 
$\vx_{t+2} - \hatvx_{t+2} = \vphi_t - \vphi_{t+2} = O(h_t^2 \eta^2)$.
Combining this with \eqref{eq:ful-drift-2-step-hatx-w} proves the claimed approximation \eqref{eq:ful-drift-2-step-x-w}.

\myparagraph{Approximation for $u_{t+2}$.}
Now we prove the formula for $u_{t+2}$.
Note that $\vphi_{t+2} - \vphi_t = O(h_t^2 \eta^2)$ implies that $\mu_{t+2} - \mu_t = O(h_t^2 \eta^2)$.
Then we have
\begin{align*}
  \eta\left(u_{t+2} - \hatu_{t+2}\right) &= (\mu_{t+2}^2 - \mu_t^2) \tilv_{t+2} \\
  &= (\mu_{t+2} - \mu_t) (2 \mu_t + O(h_t^2 \eta^2)) \cdot (\tfrac{1}{\mu_t^2} + O(\alpha \eta)) \\
  &= (\mu_{t+2} - \mu_t) \left( 2 \mu_t \cdot \tfrac{1}{\mu_t^2} + O(\alpha \eta) \right) \\
  &= \tfrac{2}{\mu_t}(\mu_{t+2} - \mu_t) + O(\alpha h_t^2 \eta^3).
\end{align*}
Note that $\log \mu_{t+2} - \log \mu_t = \log\left(1 + \frac{1}{\mu_t}(\mu_{t+2} - \mu_t)\right) = \tfrac{1}{\mu_t}(\mu_{t+2} - \mu_t) + O((h_t^2 \eta^2)^2)$.
By \eqref{eq:ful-drift-2-step-loglam},
\begin{align*}
  \log \mu_{t+2} - \log \mu_t = \log \lamH_1(\vphi_t) - \log \lamH_1(\vphi_{t+2})
  = 2 \eta^2 h_t^2 \normtwosm{\gradGa \log \lamH_1(\vphi_t)}^2 + O(\alpha h_t^2 \eta^3).
\end{align*}
Combining these together gives the following approximation for $\frac{1}{\mu_t}(\mu_{t+2} - \mu_t)$:
\begin{align*}
  \tfrac{1}{\mu_t}(\mu_{t+2} - \mu_t) &= \log \mu_{t+2} - \log \mu_t + O(h_t^4 \eta^4) \\
  &= 2 \eta^2 h_t^2 \normtwosm{\gradGa \log \lamH_1(\vphi_t)}^2 + \Osm{\alpha h_t^2 \eta^3}.
\end{align*}
So $\eta \left(u_{t+2} - \hatu_{t+2}\right) = 4 \eta^2 h_t^2 \normtwosm{\gradGa \log \lamH_1(\vphi_t)}^2 + O(\alpha h_t^2 \eta^3)$.
Then by \eqref{eq:ful-drift-2-step-hatu}, we have
\begin{align*}
  u_{t+2} &= \hatu_{t+2} + 4 \eta h_t^2 \normtwosm{\gradGa \log \lamH_1(\vphi_t)}^2 + \Osm{\alpha h_t^2 \eta^2} \\
  &= u_t + 8 \eta \Cb h_t^2 + 4 \eta h_t^2 \normtwosm{\gradGa \log \lamH_1(\vphi_t)}^2 - 2 \eta \Cb + \Obig{\alpha(1+h_t^2)\eta^2} \\
  &= u_t + 4 \eta h_t^2 (2 \Cb + \normtwosm{\gradGa \log \lamH_1(\vphi_t)}^2) - 2 \eta \Cb + \Osm{\alpha(1+h_t^2)\eta^2},
\end{align*}
which proves the claimed approximation \eqref{eq:ful-drift-2-step-u}.
\end{proof}

\section{Reduction to RMS-drift Process: The Case of Spherical Optimization} \label{sec:sph-proof}

In this section, we let $\{(\vtheta_t, v_t)\}_{t \ge 0}$ be a trajectory of projected gradient descent
with quasi-RMSprop scheduler,
and let $\eta, \beta$ be the base learning rate and decay rate.
We follow the notations and terminologies in \Cref{sec:proof-outline-main}.

As the analysis in the spherical case is nearly the same
as the full space case,
we only discuss the difference here.

\subsection{Good Initialization} \label{sec:sph-good-init}
\begin{proof}[Proof for \Cref{lm:sph-good-init}]
Using a similar argument as in \Cref{lm:ful-good-init},
we know that the lemma holds if there is no projection
in the random initialization.
But the projection only leads to an error of order $O(r^2)$,
so the lemma holds.
\end{proof}

\subsection{Alignment Phase} \label{sec:sph-proof-alignment}

\begin{proof}[Proofs for \Cref{lm:sph-align-1-step,thm:sph-alignment-main}]
The proof is essentially the same as 
\Cref{lm:ful-align-1-step,thm:ful-alignment-main}.
To see this,
we only need to note that
we only have used a linear approximation
of the update rule
with error $O(r_t^2)$,
and the linear approximation remains unchanged if we add a projection operator
(\Cref{lm:pgd-1-step}).
\end{proof}

\subsection{Drifting Phase}
We define $\vpsi_{\vphi}(\vx)$ as in \Cref{lm:pgd-2-step}. We abuse the notation to write $\vpsi_t(\vx) = \vpsi_{\vphi_t}(\vx)$, that is,
\begin{equation} \label{eq:sph-psi}
    \vpsi_t(\vx) :=
    \frac{\mu_t}{2}\left( \mU_t \partial^3 \Loss_{\vphi_t}[\vx, \vx] + \partial^3 \Loss_{\vphi}[\mU_t \vx, \mU_t \vx] \right) 
    + \frac{\mu_t^2}{2} \left( \normtwosm{\mH_t \vx}^2 + \normtwosm{\mH_t \mU_t \vx}^2 \right) \vphi_t.
\end{equation}

\begin{lemma}
In the setting of \Cref{lm:sph-drift-2-step},
the same statement as \Cref{lm:ful-drift-2-step-hat,lm:ful-drift-2-step-hat2} holds,
where $\vpsi_t$ is interpreted as \eqref{eq:sph-psi}.
\end{lemma}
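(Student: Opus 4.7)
The plan is to mimic the proofs of \Cref{lm:ful-drift-2-step-hat,lm:ful-drift-2-step-hat2} line by line, substituting the spherical two-step update lemma (\Cref{lm:pgd-2-step}) for its full-space counterpart (\Cref{lm:gd-2-step}), and invoking the spherical $\vpsi$-identity (\Cref{lm:sph-psi-formula}) in place of \Cref{lm:ful-psi-formula}. The remarkable fact that makes this adaptation essentially mechanical is that \Cref{lm:sph-psi-formula} yields exactly the same two projected identities as the full-space case, namely $\dotpsm{\vpsi_t(\vvH_1(\vphi_t))}{\vvH_1(\vphi_t)} = 0$ and $\mPHz(\vphi_t)\vpsi_t(\vvH_1(\vphi_t)) = 2 \gradGa \log \lamH_1(\vphi_t)$, even though the spherical $\vpsi_t$ carries an extra $4\vphi_t$ radial term.

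First I would verify that the learning-rate approximations (\ref{eq:ful-drift-2-step-eeta-t0}), (\ref{eq:ful-drift-2-step-eeta-t1}), and (\ref{eq:ful-drift-2-step-hatu}) continue to hold. These depend only on $\bar g_t$ and $\bar g_{t+1}$, which in turn are governed by $\normtwosm{\nabla \Loss(\vtheta_t)}$ and $\normtwosm{\nabla \Loss(\vtheta_{t+1})}$. By \Cref{lm:pgd-1-step}, the zeroth- and first-order approximations for $\vtheta_{t+1} - \vphi_t$ agree with the full-space versions up to an $O(r^2\eta^2)$ radial correction that does not perturb the first-order Taylor expansion of $\nabla \Loss$ around $\vphi_t$. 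Combined with \Cref{lm:aligned-hteta}, this gives $\bar g_t, \bar g_{t+1} = \lamH_1(\vphi_t)\aht + O(h_t^2 \eta)$, and then \Cref{lm:lr-1-step,lm:lr-2-step} transfer verbatim.

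Next I would apply \Cref{lm:pgd-2-step} to obtain
\[
\hatvx_{t+2} = (\mI - \eeta_{t+1}\mH_t)(\mI - \eeta_t \mH_t)\vx_t - \vpsi_t(\vx_t) + O(\alpha h_t^2 \eta^3),
\]
with $\vpsi_t$ now the spherical version from \eqref{eq:sph-psi}. Using \Cref{lm:aligned-hteta} to substitute $\vx_t = \eta h_t \vvH_1(\vphi_t) + O(h_t^2 \eta^2)$ and using bilinearity of the degree-two expression $\vpsi_t$, I get $\vpsi_t(\vx_t) = \eta^2 h_t^2 \vpsi_t(\vvH_1(\vphi_t)) + O(\ahtc \eta^3)$, which is exactly the form required by (\ref{eq:ful-drift-2-step-hatx}). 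The derivations of (\ref{eq:ful-drift-2-step-hath}) and (\ref{eq:ful-drift-2-step-hatx-tan}) then proceed identically to the full-space proof, since the only facts about $\vpsi_t(\vvH_1(\vphi_t))$ used there are the two projected identities from \Cref{lm:sph-psi-formula} quoted above.

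The only step that needs an extra sentence is (\ref{eq:ful-drift-2-step-hatx-w}), the contraction bound on $\normtwosm{\mPHnzt(\vphi_t)\hatvx_{t+2}}$. The extra $\vphi_t$ term present in the spherical $\vpsi_t$ is annihilated by $\mPHnzt(\vphi_t)$, because $\vphi_t$ is a $0$-eigenvector of $\mH_t$ by scale-invariance (\Cref{lm:scale-invariant-grad-hess}), so lies in the null space. Meanwhile \Cref{lm:ful-gap-2gamin} depends only on the eigenvalue structure of $\mH_t$ restricted to the non-zero, non-top eigenspace, so it applies identically in the spherical case and yields the same contraction factor $(1-1.9\gamin)^2$. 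The main potential pitfall, therefore, is simply tracking the radial $\vphi_t$ contribution carefully and confirming it does not contaminate any of the three projections of interest; once that bookkeeping is verified, every estimate in the full-space proof transfers unchanged.
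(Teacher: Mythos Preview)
Your proposal is correct and follows essentially the same approach as the paper, which simply says to repeat the full-space argument while replacing \Cref{lm:gd-2-step} and \Cref{lm:ful-psi-formula} by their spherical counterparts \Cref{lm:pgd-2-step} and \Cref{lm:sph-psi-formula}. Your additional bookkeeping about the radial $\vphi_t$ term being annihilated by $\mPHnzt(\vphi_t)$ is a helpful explicit check that the paper leaves implicit.
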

\begin{proof}
  We can follow the argument in the proof for \Cref{lm:ful-drift-2-step-hat,lm:ful-drift-2-step-hat2},
  but now we are using \Cref{lm:pgd-2-step,lm:sph-psi-formula} to establish the proof.
\end{proof}

\begin{lemma}
In the setting of \Cref{lm:sph-drift-2-step},
  the same statement as \Cref{lm:ful-drift-2-step-phi} holds.
\end{lemma}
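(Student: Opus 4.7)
The plan is to mimic the argument of \Cref{lm:ful-drift-2-step-phi} almost verbatim, with \Cref{lm:Phi-proj-sph,lm:Phi-2nd-normal-sph,lm:sph-psi-formula} replacing their full-space counterparts, and to track carefully the new $\vphi\vphi^\top$ terms that appear in the spherical projection formula. First I would write the two Taylor expansions
\begin{align*}
\vphi_t &= \Phi(\vphi_t) + \partial\Phi_{\vphi_t}[\vx_t] + \tfrac{1}{2}\partial^2\Phi_{\vphi_t}[\vx_t,\vx_t] + O(\ahtc\eta^3),\\
\vphi_{t+2} &= \Phi(\vphi_t) + \partial\Phi_{\vphi_t}[\hatvx_{t+2}] + \tfrac{1}{2}\partial^2\Phi_{\vphi_t}[\hatvx_{t+2},\hatvx_{t+2}] + O(\ahtc\eta^3),
\end{align*}
which are valid in $\Zeos$ by the $\contC^3$-smoothness of $\Phi$ from \Cref{lm:sph-working-zone}. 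Subtracting, the zeroth-order terms cancel, and the problem reduces to controlling the difference of the first- and second-order terms.

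For the second-order terms, \Cref{lm:aligned-hteta} (which applies without change in the spherical case by \Cref{lm:mPHz-x-small-sph}) gives $\vx_t = \eta h_t \vvH_1(\vphi_t) + O(h_t^2\eta^2)$, and the analogous spherical version of \eqref{eq:ful-drift-2-step-hatx-tan} and \eqref{eq:ful-drift-2-step-hatx-w} (already established in the previous lemma) gives $\hatvx_{t+2} = O(\ahte)$ with $\mPHz(\vphi_t)\hatvx_{t+2}$ and $\mPHnzt(\vphi_t)\hatvx_{t+2}$ both $O(h_t^2\eta^2)$. Because $\vvH_1(\vphi_t)\perp \vphi_t$ (the Hessian $\mH_t$ annihilates $\vphi_t$ by \Cref{lm:scale-invariant-grad-hess}, so the top eigenvector lies in $\vphi_t^\perp$), \Cref{lm:Phi-2nd-normal-sph} applies and $\partial^2\Phi_{\vphi_t}[\vvH_1(\vphi_t),\vvH_1(\vphi_t)] = \vzero$; the remainder is $O(\ahtc\eta^3)$ in both expansions. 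This reduces the claim to estimating $\partial\Phi_{\vphi_t}[\hatvx_{t+2} - \vx_t]$.

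For the first-order terms, \Cref{lm:Phi-proj-sph} gives $(\mI - \vphi_t\vphi_t^\top)\partial\Phi_{\vphi_t}[\vy] = (\mI - \vphi_t\vphi_t^\top)\mPHz(\vphi_t)\vy$. Applying this with $\vy = \hatvx_{t+2} - \vx_t$ and combining with the spherical analog of \eqref{eq:ful-drift-2-step-hatx-tan} yields
\[
(\mI - \vphi_t\vphi_t^\top)\partial\Phi_{\vphi_t}[\hatvx_{t+2} - \vx_t] = -2\eta^2 h_t^2\,(\mI - \vphi_t\vphi_t^\top)\mPHz(\vphi_t)\vpsi_t(\vvH_1(\vphi_t)) + O(\alpha h_t^2\eta^3).
\]
By \Cref{lm:sph-psi-formula}, $\mPHz(\vphi_t)\vpsi_t(\vvH_1(\vphi_t)) = 2\gradGa\log\lamH_1(\vphi_t)$, and since $\gradGa\log\lamH_1(\vphi_t)$ lies in the tangent space $\TGa{\vphi_t}\subseteq\vphi_t^\perp$, the $(\mI-\vphi_t\vphi_t^\top)$ factor is the identity on it. The component of $\vphi_{t+2}-\vphi_t$ along $\vphi_t$ is automatically determined by the spherical constraint $\normtwosm{\vphi_{t+2}}=\normtwosm{\vphi_t}=1$: if the tangential drift is $\vdelta = O(h_t^2\eta^2)$, then $\dotpsm{\vphi_{t+2}-\vphi_t}{\vphi_t} = -\tfrac{1}{2}\normtwosm{\vdelta}^2 + O(\normtwosm{\vdelta}^3) = O(h_t^4\eta^4)$, which is absorbed into the $O(\alpha h_t^2\eta^3)$ remainder. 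This gives the spherical version of \eqref{eq:ful-drift-2-step-phi}, and \eqref{eq:ful-drift-2-step-loglam} follows from it by first-order Taylor expansion of $\log\lamH_1$ on $\manGa$ exactly as in the full-space proof.

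The main subtlety, and the one I expect to require the most care, is confirming that the extra $\vphi_t\vphi_t^\top$ contributions are genuinely of lower order: this requires the observation above that $\gradGa\log\lamH_1(\vphi_t)\perp \vphi_t$ (from the tangent-space description in \Cref{lm:Phi-proj-sph}) together with the constraint that both $\vphi_t$ and $\vphi_{t+2}$ live on $\sphS^{D-1}$. Once these are in hand the estimates match the full-space case term by term, so no new error terms of order larger than $O(\alpha h_t^2\eta^3)$ are introduced.
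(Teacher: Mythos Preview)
The proposal is correct and follows essentially the same approach as the paper's proof, which simply states that the argument of \Cref{lm:ful-drift-2-step-phi} goes through with \Cref{lm:Phi-proj-sph,lm:Phi-2nd-normal-sph} in place of their full-space counterparts. Your write-up is more explicit than the paper's about handling the extra $\vphi_t\vphi_t^{\top}$ factor in \Cref{lm:Phi-proj-sph}: you correctly observe that $\gradGa\log\lamH_1(\vphi_t)\perp\vphi_t$ (being tangent to $\manGa\subseteq\sphS^{D-1}$) and that the radial component of $\vphi_{t+2}-\vphi_t$ is $O(h_t^4\eta^4)$ by the unit-sphere constraint, so no new error terms appear.
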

\begin{proof}
  The argument is the same as \Cref{lm:ful-drift-2-step-phi},
  but we apply \Cref{lm:Phi-proj-sph,lm:Phi-2nd-normal-sph} in doing Taylor expansion
  for $\Phi(\vtheta)$.
\end{proof}

\begin{proof}[Proof for \Cref{lm:sph-drift-2-step}]
Same as \Cref{lm:ful-drift-2-step}
but we invoke the spherical version of lemmas.
\end{proof}

\section{Analysis of RMS-drift Process} \label{sec:rmsdrift-proof}

In this section, we provide proofs for theorems in \Cref{sec:outline-rmsdrift-analysis}.
For convenience, we define 
$R(\vtheta) = \nabla \log \lamH_1(\vtheta)$,
$K_t := \sqrt{2\Cb + \normtwosm{\gradGa R(\vphi_t)}^2}$.
Then a $C_0$-RMS-drift transition
$S_t \to S_{t+2}$
can be written as
\begin{align*}
  h'_{t+2} &:= (1 - 2\eta u_t) h_t,  &
  \abssm{h_{t+2} - h'_{t+2}} &\le C_0 \alpha^2 \abssm{h_t} \eta^2, \\
  u'_{t+2} &:= u_t + 4 \eta K_t^2 h_t^2 - 2\eta \Cb, &
  \abssm{u_{t+2} - u'_{t+2}} &\le C_0 \alpha (1 + h_t^2) \eta^2,  \\
  \vphi'_{t+2} &:= \vphi_t - 2 \eta^2 h_t^2 \gradGa R(\vphi_t), &
  \normtwosm{\vphi_{t+2} - \vphi'_{t+2}} &\le C_0 \alpha h_t^2 \eta^3.
\end{align*}

\subsection{Conservation of Energy} \label{sec:rmsdrift-proof-energy}

To establish the conservation of energy,
we first compute the change in energy after one transition.
\begin{lemma} \label{lm:e-cons}
  Given two drift states $S_0 = (h_0, u_0, \vphi_0)$ and $S_2 = (h_2,
  u_2, \vphi_2)$ in the working zone,
  for learning rate $\eta$ and hyperparameter $\Cb > 0$,
  if $S_0$ is $\alpha$-bounded for some $1 \le \alpha \le \eta^{-o(1)}$,
  and
  $S_0 \to S_2$ is a $C_0$-RMSdrift transition,
  then
  \[
  	 E(S_2) - E(S_0) = \Obig{\alpha^2(1+h_0^2) \eta^2} = \begin{cases}
       O(\alpha^2 \eta^2) & \quad \abssm{h_0} \le 2, \\
       O(\alpha^2 h_0^2 \eta^2) & \quad \abssm{h_0} > 2. \\
     \end{cases}
  \]
\end{lemma}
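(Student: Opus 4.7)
The plan is to show that the discrete energy change $E(S_2)-E(S_0)$ is small because it is the discretization error of a conservation law. Recall the continuous system \eqref{eq:qrms-cont}: with $\vphi$ frozen, $\dot h = -uh$, $\dot u = 2K^2 h^2 - \Cb$, and a direct computation gives $\frac{d}{d\tau}\bigl(\tfrac12 u^2+K^2 h^2+\Cb \log(1/|h|)\bigr)=0$. So the first-order-in-$\eta$ terms of $E(S_2)-E(S_0)$ must cancel, and what remains is $O(\eta^2)$ from the quadratic remainders plus the stipulated $O(\eta^2)$ error in the RMS-drift transition.

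Concretely, write $\Delta h := h_2-h_0$, $\Delta u := u_2-u_0$, $\Delta \vphi := \vphi_2-\vphi_0$. From the definition of a $C_0$-RMS-drift transition and $\alpha$-boundedness of $S_0$, I have
\begin{align*}
\Delta h &= -2\eta u_0 h_0 + O(\alpha^2 \abssm{h_0}\eta^2),\\
\Delta u &= 4\eta K_0^2 h_0^2 - 2\eta\Cb + O(\alpha(1+h_0^2)\eta^2),\\
\normtwosm{\Delta\vphi} &= O(h_0^2\eta^2),
\end{align*}
and $K_t^2:=2\Cb+\normtwosm{\gradGa R(\vphi_t)}^2$ is bounded above and below on the compact set $\cl(\Zeo)\cap\manGa$, so $K_2^2-K_0^2=O(\normtwosm{\Delta\vphi})=O(h_0^2\eta^2)$ by $\contC^1$-smoothness. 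A crucial preliminary observation: $\Delta h/h_0 = -2\eta u_0 + O(\alpha^2 \eta^2) = O(\alpha\eta)$, \emph{independent of} $\abssm{h_0}$; this will make the expansion of the $\log$ term safe.

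Next, I decompose $E(S_2)-E(S_0)$ into the three natural pieces and Taylor-expand each:
\begin{align*}
\tfrac12(u_2^2-u_0^2) &= u_0\Delta u + \tfrac12(\Delta u)^2,\\
K_2^2 h_2^2 - K_0^2 h_0^2 &= K_0^2(2h_0\Delta h+(\Delta h)^2)+(K_2^2-K_0^2)h_2^2,\\
\Cb\bigl(\log\abssm{h_0}-\log\abssm{h_2}\bigr) &= -\Cb\,\tfrac{\Delta h}{h_0}+\tfrac{\Cb}{2}\bigl(\tfrac{\Delta h}{h_0}\bigr)^2+O\bigl(\abssm{\Delta h/h_0}^3\bigr).
\end{align*}
Collecting the $O(\eta)$ contributions, the leading terms read $u_0(4\eta K_0^2 h_0^2-2\eta\Cb)+K_0^2\cdot 2h_0\cdot(-2\eta u_0 h_0)-\Cb\cdot(-2\eta u_0)$, which is identically zero. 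This is exactly the discrete shadow of the continuous conservation law.

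It then remains to bound the leftovers. Using $\abssm{u_0},\abssm{h_0}\le\alpha$ and $K_0^2=O(1)$: the carry-over from the $\alpha(1+h_0^2)\eta^2$ error in $\Delta u$ contributes $O(\alpha^2(1+h_0^2)\eta^2)$ through $u_0\Delta u$; the term $\tfrac12(\Delta u)^2=O((1+h_0^2)^2\eta^2)$ is at most $O(\alpha^2(1+h_0^2)\eta^2)$ because $h_0^4\le\alpha^2 h_0^2$; the $K_0^2(\Delta h)^2$ piece is $O(\alpha^2 h_0^2\eta^2)$; the $(K_2^2-K_0^2)h_2^2$ piece is $O(h_0^4\eta^2)=O(\alpha^2 h_0^2\eta^2)$; and the log remainders are $O(\alpha^2\eta^2)$ and $O(\alpha^3\eta^3)$ respectively. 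Summing gives $E(S_2)-E(S_0)=O(\alpha^2(1+h_0^2)\eta^2)$, which is the claim.

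The main obstacle I anticipate is bookkeeping: making sure the ``bad'' $h_0^4\eta^2$ term that naturally appears in $(\Delta u)^2$ and $(K_2^2-K_0^2)h_2^2$ can be absorbed into the prescribed $\alpha^2(1+h_0^2)\eta^2$ bound, which is only valid because $\alpha$-boundedness gives $\abssm{h_0}\le\alpha$. A secondary subtlety is that $E$ has a $\log(1/\abssm{h})$ singularity at $h=0$, but this is harmless here because $\Delta h/h_0$ is $O(\alpha\eta)$ with constant independent of $h_0$, so the Taylor expansion of $\log\abssm{1+\Delta h/h_0}$ converges uniformly for all sufficiently small $\eta$.
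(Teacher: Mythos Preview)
Your proposal is correct and follows essentially the same approach as the paper: decompose $E(S_2)-E(S_0)$ into the $u$-, $h$-, and $\log$-pieces, Taylor-expand each using $a^2-b^2=2b(a-b)+(a-b)^2$, observe that the $O(\eta)$ terms cancel exactly (the discrete conservation law), and bound the quadratic and error leftovers by $O(\alpha^2(1+h_0^2)\eta^2)$ using $\abssm{h_0},\abssm{u_0}\le\alpha$. Your framing via the continuous Hamiltonian is a nice addition, and your explicit remark that $\Delta h/h_0=O(\alpha\eta)$ uniformly in $h_0$ (so the $\log$ expansion is safe) makes a point the paper leaves implicit.
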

\begin{proof}[Proof for \Cref{lm:e-cons}]
  $h_0$ and $h_2$ have the same sign when $\eta$ is small enough.
  We can decompose $E(S_2) - E(S_0)$ as follows:
	\begin{align*}
		E(S_2) - E(S_0) &= \underbrace{\frac{1}{2} (u_2^2 - u_0^2)}_{=: \delta_1}
    + \underbrace{K_0^2(h_2^2 - h_0^2)}_{=: \delta_2} 
	+ \underbrace{(\normtwosm{\gradGa R(\vphi_2)}^2 - \normtwosm{\gradGa R(\vphi_0)}^2) h_2^2}_{=: \delta_3} + \underbrace{\Cb \log \frac{h_2}{h_0}}_{=: \delta_4}.
	\end{align*}
	Now we bound each error term. For $\delta_1$ and $\delta_2$, we use the formula $a^2 - b^2 = 2b(a-b) + (a-b)^2$:
	\begin{align*}
		\delta_1 &= u_0 (u_2 - u_0) + \tfrac{1}{2}(u_2 - u_0)^2 \\
		&= u_0 \left(4 \eta h_0^2K_0^2 - 2 \eta \Cb + O(\alpha(1+h_0^2) \eta^2)\right) + \Obig{(h_0^2 \eta)^2}. \\
		&= 4 \eta u_0 h_0^2K_0^2 - 2 \eta u_0 \Cb + O(\alpha^2(1+h_0^2) \eta^2). \\
		\delta_2 &= K_0^2\left(2h_0 (h_2 - h_0) + (h_2 - h_0)^2\right) \\
		&= K_0^2 \left(\left(-4\eta u_0 h_0^2 + O(\alpha^2 h_0^2 \eta^2) \right) + \Obig{(\alpha h_0 \eta)^2}\right) \\
		&= -4\eta u_0 h_0^2 K_0^2 + \Osm{\alpha^2 h_0^2 \eta^2}.
	\end{align*}
	For $\delta_3$, we use the Lipschitzness of $\gradGa R(\vtheta)$:
	\begin{align*}
		\delta_3 &= O(\normtwosm{\vphi_2 - \vphi_0}) \cdot h_2^2 = O(h_0^2\eta^2) \cdot O(h_0^2) = O(h_0^4 \eta^2).
	\end{align*}
	For $\delta_4$, note that $\log(1 + z) \le z + O(z^2)$ when $z = o(1)$. Then
	\begin{align*}
		\delta_4 = -\Cb \log\left(1 + \frac{h_2 - h_0}{h_0}\right) &= -\Cb \cdot \frac{h_2 - h_0}{h_0} + O(\alpha^2 \eta^2) \\
		&= 2\eta u_0 \Cb + O(\alpha^2 \eta^2).
	\end{align*}
	Adding $\delta_1, \delta_2, \delta_3, \delta_4$ together gives
	\begin{align*}
		E(S_2) - E(S_0) = & +  4 \eta u_0 h_0^2K_0^2 & &- 2 \eta u_0 \Cb & &+ O(\alpha^2(1+h_0^2) \eta^2) \\
		& - 4\eta u_0 h_0^2 K_0^2 & & & & + O(\alpha^2 h_0^2 \eta^2) \\
		& & & & & +O(h_0^4 \eta^2) \\
		& & & +2\eta \Cb u_0 & & + O(\alpha^2 \eta^2).
	\end{align*}
  So $E(S_2) - E(S_0) = O(\alpha^2(1+h_0^2) \eta^2)$.
\end{proof}

To sum up the energy change over time, we need
the following lemma.
\begin{lemma} \label{lm:sum-h2}
  For any $M = o((\alpha \eta)^{-2})$,
  if 
  $S_0, \dots, S_{2M}$ is an $O(1)$-RMS-drift process in the working zone,
  and $S_t$ is $\alpha$-bounded for all even numbers $t \le 2M$, then
  \[
    \sum_{m = 0}^{M - 1} h_{2m}^2 = \frac{\Cb}{2 K_0^2} M + O(\alpha^2 \eta^2 M^2 + \alpha / \eta).
  \]
\end{lemma}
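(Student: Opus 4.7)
The core idea is to telescope the $u$-update from the RMS-drift transition. By hypothesis, $S_{2m} \to S_{2m+2}$ gives
\[
u_{2m+2} = u_{2m} + 4\eta K_{2m}^2 h_{2m}^2 - 2\eta\Cb + O\!\left(\alpha(1+h_{2m}^2)\eta^2\right),
\]
so summing over $m = 0, \ldots, M-1$ and writing $S := \sum_{m=0}^{M-1} h_{2m}^2$,
\[
u_{2M} - u_0 = 4\eta \sum_{m=0}^{M-1} K_{2m}^2 h_{2m}^2 \; -\; 2M\eta\Cb \;+\; O\!\left(\alpha\eta^2 (M + S)\right).
\]
Since every $S_{2m}$ is $\alpha$-bounded, we get for free that $|u_{2M} - u_0| \le 2\alpha$ and $h_{2m}^2 \le \alpha^2$, so the crude bound $S \le \alpha^2 M$ holds unconditionally and will be used later as a bootstrap input.

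The next step is to replace $K_{2m}^2$ by $K_0^2$. Iterating the $\vphi$-update in the RMS-drift transition gives $\normtwosm{\vphi_{2m} - \vphi_0} \le O\!\left(\eta^2 \sum_{j<m} h_{2j}^2\right) = O(\eta^2 m \cdot \alpha^2)$, so by Lipschitzness of $\normtwosm{\gradGa \log\lamH_1}^2$ on the compact set $\cl(\Zeo)\cap \manGa$ (working zone lemma), we obtain $|K_{2m}^2 - K_0^2| \le O(\eta^2 m)$, uniformly in $m \le M$. Hence
\[
\left|\sum_{m=0}^{M-1}(K_{2m}^2 - K_0^2)h_{2m}^2\right| \;\le\; O(\eta^2 M)\cdot S.
\]
Plugging this into the telescoped identity, rearranging, and dividing through by $4\eta K_0^2$ (which is $\Theta(\eta)$, since $K_0^2 \ge 2\Cb$),
\[
S = \frac{\Cb M}{2K_0^2} + O\!\left(\frac{\alpha}{\eta}\right) + O(\alpha\eta M) + O(\alpha\eta S) + O(\eta^2 M S).
\]

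Finally, we substitute the crude bound $S \le \alpha^2 M$ into the right-hand error terms to obtain $O(\alpha\eta M + \alpha^3 \eta M + \alpha^2\eta^2 M^2)$. Under the hypothesis $M = o((\alpha\eta)^{-2})$ we have $\alpha^2\eta^2 M = o(1)$, hence $\alpha^3\eta M = (\alpha/\eta)\cdot(\alpha^2\eta^2 M) = o(\alpha/\eta)$ and $\alpha\eta M = (\alpha/\eta)\cdot\eta^2 M = o(\alpha/\eta)$. All remaining error terms are absorbed into $O(\alpha/\eta + \alpha^2\eta^2 M^2)$, yielding the claim. The only subtle point, which I expect to be the main (mild) obstacle, is the bootstrap: we need some a~priori control on $S$ to bound the $K_{2m}^2 - K_0^2$ correction, and the $\alpha$-boundedness assumption on every $S_{2m}$ supplies exactly this without requiring an independent inductive argument.
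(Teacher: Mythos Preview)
Your approach is the paper's approach: telescope the $u$-update, control the $K_{2m}$-drift via the $\vphi$-update, and back out $\sum h_{2m}^2$. But the execution has two slips that happen to cancel.

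First, you drop an $\alpha^2$ in the Lipschitz step: from $\normtwosm{\vphi_{2m}-\vphi_0} = O(\alpha^2\eta^2 m)$ you should get $|K_{2m}^2 - K_0^2| \le O(\alpha^2\eta^2 m)$, not $O(\eta^2 m)$. With this correction your $O(\eta^2 M S)$ term becomes $O(\alpha^2\eta^2 M S)$. Second, substituting the crude bound $S \le \alpha^2 M$ into this corrected term yields $O(\alpha^4\eta^2 M^2)$, strictly weaker than the claimed $O(\alpha^2\eta^2 M^2)$. So the argument as written reaches the right answer only because the dropped $\alpha^2$ compensates for the lossy substitution.

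The paper avoids the loss by keeping the $K$-drift inside the coefficient of $Q$: it writes the telescoped identity as $4\eta\bigl(K_0^2 + O(\alpha^2\eta^2 M)\bigr)Q = 2\eta\Cb M + O(\alpha)$ and then \emph{divides} by $4K_0^2 + O(\alpha^2\eta^2 M) = 4K_0^2(1+o(1))$. This is equivalent to using the bound $Q = O(M)$ that the equation itself furnishes, rather than the cruder $Q \le \alpha^2 M$. In your notation: after correcting the $\alpha^2$, move the $S$-dependent errors to the left --- their total coefficient is $O(\alpha\eta) + O(\alpha^2\eta^2 M) = o(1)$ --- and solve, giving $S = \bigl(\frac{\Cb M}{2K_0^2} + O(\alpha/\eta) + O(\alpha\eta M)\bigr)(1+o(1))$, from which the stated bound follows.
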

\begin{proof}[Proof for \Cref{lm:sum-h2}]
  By the update rule of $u_t$, we have
  \begin{align*}
    u_{2M} - u_0 = \sum_{m=0}^{M-1} u_{2m+2} - u_{2m} &= \sum_{m=0}^{M-1} \left(4\eta K_{2m}^2 h_{2m}^2 - 2 \eta \Cb + O(\alpha^3 \eta^2)\right).
  \end{align*}
  Since $S_{2m}$ is $\alpha$-bounded for all $m \le M$, $\vphi_{2m} - \vphi_0 = O(\alpha^2 \eta^2 M)$.
  By smoothness of $R$, $\normtwosm{\gradGa R(\vphi_{2m})}^2 = \normtwosm{\gradGa R(\vphi_0)}^2 + O(\alpha^2 \eta^2 M)$,
  then $K_{2m} = K_0 + O(\alpha^2 \eta^2 M)$.
  Since $S_0$ and $S_{2M}$ are $\alpha$-bounded, $u_T - u_0 = O(\alpha)$.
  Combining all these together,
  \begin{align*}
    O(\alpha) &= \sum_{m=0}^{M-1} \left(4\eta \left(K_0^2 + O(\alpha^2 \eta^2 M)\right) h_{2m}^2 - 2 \eta \Cb +O(\alpha^3 \eta^2) \right)
  \end{align*}
  Let $Q := \sum_{m=0}^{M-1} h_{2m}^2$.
  Then we have
  \begin{align*}
    O(\alpha) = 4\eta \left(K_0^2 + O(\alpha^2 \eta^2 M)\right) Q - 2\eta \Cb M + O(\alpha^3 \eta^2 M).
  \end{align*}
  Rearranging the terms while noting that $\alpha^3 \eta^2 M = o(\alpha)$, we have
  \begin{align*}
    4\eta \left(K_0^2 + O(\alpha^2 \eta^2 M)\right) Q = 2\eta \Cb M + O(\alpha).
  \end{align*}
  So we can estimate $Q$ by
  \begin{align*}
    Q &= \frac{2\Cb M + O(\alpha / \eta)}{4K_0^2 + O(\alpha^2 \eta^2 M)} = \frac{\Cb}{2 K_0^2} M + O(\alpha^2 \eta^2 M^2 + \alpha / \eta),
  \end{align*}
  which completes the proof.
\end{proof}

\begin{lemma} \label{lm:rmsdrift-ec-15}
  For an $O(1)$-RMSdrift process $S_0, \dots, S_{2M}$ in the working zone,
  if $E(S_0) \le \alpha^2$ for some parameter $1 \le \alpha \le \eta^{-o(1)}$
  and $M = O(1/\eta^{1.5})$,
  then $E(S_t) = E(S_0) + O(\alpha^2 \eta^{0.5})$ for all even numbers $0 \le t \le 2M$.
\end{lemma}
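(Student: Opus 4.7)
The plan is to telescope the per-step energy change supplied by Lemma \ref{lm:e-cons}, and to control the contribution of the $h_t^2$-dependent part using the sum bound in Lemma \ref{lm:sum-h2}. The argument proceeds by induction on the step index, with the induction hypothesis that $E(S_{2m}) \le C_* \alpha^2$ for all even indices $2m \le t$, where $C_*$ is a sufficiently large absolute constant. Since $E(S) \ge \frac{1}{2} u^2 + K^2 h^2$, this hypothesis implies that every state $S_{2m}$ with $2m \le t$ is $O(\alpha)$-bounded, which is exactly the regime in which Lemmas \ref{lm:e-cons} and \ref{lm:sum-h2} apply. (The logarithmic term $\Cb \log(1/|h|)$ in $E$ does not obstruct this: we only need an upper bound on $|h|$ and $|u|$, both of which are controlled by the quadratic parts of $E$.)

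Given the induction hypothesis up to step $t = 2N$ for some $N \le M$, I would write
\[
E(S_{2N}) - E(S_0) = \sum_{m=0}^{N-1} \bigl( E(S_{2m+2}) - E(S_{2m}) \bigr),
\]
and bound each summand by Lemma \ref{lm:e-cons} as $O(\alpha^2(1 + h_{2m}^2)\eta^2)$. Splitting the constant part from the $h_{2m}^2$ part,
\[
|E(S_{2N}) - E(S_0)| \le O(\alpha^2 \eta^2) \cdot N + O(\alpha^2 \eta^2) \cdot \sum_{m=0}^{N-1} h_{2m}^2.
\]
For the first term, $N \le M = O(\eta^{-1.5})$ gives $O(\alpha^2 \eta^{0.5})$. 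For the second, Lemma \ref{lm:sum-h2} (applied with the $O(\alpha)$-boundedness just obtained from the induction hypothesis) yields
\[
\sum_{m=0}^{N-1} h_{2m}^2 = \frac{\Cb}{2 K_0^2} N + O(\alpha^2 \eta^2 N^2 + \alpha/\eta) = O(N) + O(\alpha/\eta),
\]
where the error term $\alpha^2 \eta^2 N^2 = O(\alpha^2 \eta^{-1})$ is absorbed into $O(N)$ since $N \ge 1$ and $\alpha^2 \eta^{0.5} \ll 1$. Multiplying by $O(\alpha^2 \eta^2)$ gives a contribution of $O(\alpha^2 \eta^{0.5}) + O(\alpha^3 \eta)$, and since $\alpha \le \eta^{-o(1)}$ we have $\alpha^3 \eta \le \alpha^2 \eta^{0.5}$.

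Combining the two contributions, $|E(S_{2N}) - E(S_0)| = O(\alpha^2 \eta^{0.5})$, which both closes the induction (for small enough $\eta$ relative to $C_*$) and establishes the claim. The only subtle point is the circularity in applying Lemma \ref{lm:sum-h2}: it requires $\alpha$-boundedness for all even indices up to $2N$, which must be furnished by the induction hypothesis rather than by the conclusion we are trying to prove; once the induction is set up correctly this is routine, and I expect it to be the only nontrivial bookkeeping step.
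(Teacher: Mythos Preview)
Your approach is correct and uses the same ingredients as the paper: telescope the per-step energy change from Lemma~\ref{lm:e-cons} and control $\sum_m h_{2m}^2$ via Lemma~\ref{lm:sum-h2}. The paper organizes the argument as a two-pass bootstrap rather than a single induction: first, using only the crude bound $h_{2m}^2 = O(\alpha^2)$ (available once $E(S_{2m}) = O(\alpha^2)$), it shows by a straightforward induction that $E(S_t) \le E(S_0) + O(\alpha^4 \eta^2 M) = E(S_0) + O(\alpha^4 \eta^{0.5}) \le 2\alpha^2$, which establishes $O(\alpha)$-boundedness of \emph{all} states at once; then, with boundedness already secured for every index, it applies Lemma~\ref{lm:sum-h2} a single time and re-telescopes to obtain the sharp $O(\alpha^2 \eta^{0.5})$ bound. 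This sidesteps the circularity you flag---Lemma~\ref{lm:sum-h2} with endpoint $2N$ requires $S_{2N}$ itself to be $\alpha$-bounded, which in a one-pass induction is precisely what you are proving---without any index-shifting bookkeeping. Both routes land in the same place; the bootstrap is just the cleaner packaging.
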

\begin{proof}[Proof for \Cref{lm:rmsdrift-ec-15}]
    We do a bootstrap.
    As $E(S_0) \le O(\alpha^2)$, we can leverage \Cref{lm:e-cons} to prove by induction that $E(S_t) \le E(S_0) + O(\alpha^4 \eta^2 M)$
    for all $0 \le t \le 2M$.
    Then we apply \Cref{lm:e-cons} again. For all $N \le M$,
    \begin{align*}
        E(S_{2N}) - E(S_0) &= \sum_{m = 0}^{N-1} O(\alpha^2 (1 + h_{2m}^2) \eta^2) \\
        &= O(\alpha^2\eta^2) \left( N + \sum_{m = 0}^{N-1} h_{2m}^2\right) \\
        &\le O(\alpha^2\eta^2) \left( N + O(N + \alpha^2 \eta^2 N^2 + \alpha / \eta)\right) \\
        &\le O(\alpha^2\eta^{0.5}).
    \end{align*}
    where the third line uses \Cref{lm:sum-h2}.
\end{proof}

\begin{proof}[Proof for \Cref{thm:rmsdrift-ec}]
  We group the $M$ transitions into $O(1/\eta^{0.5})$
  segments of length $O(1/\eta^{1.5})$.
  We can do an induction with
  \Cref{lm:rmsdrift-ec-15} applied
  on each segment to show that for all $t$ in the $k$-th segment,
  $E(S_t) \le (1 + O(\eta^{0.5}))^k E(S_0)$.
  Noting that $k = O(1/\eta^{0.5})$ and $(1 + O(\eta^{0.5}))^{O(1/\eta^{0.5})} = O(1)$ finishes the proof.
\end{proof}

\subsection{Flow Approximation} \label{sec:rmsdrift-flow}

\begin{lemma} \label{lm:rmsdrift-flow-15}
  In the setting of \Cref{thm:rmsdrift-flow} but with $M = \Theta(1/\eta^{1.5})$,
  if $S_t$ is $\alpha$-bounded for all even numbers $0 \le t \le 2M$, then
  \[
    \normtwosm{\vphi_t - \vzeta(t\eta^2)} \le O(\alpha^2 \eta).
  \]
\end{lemma}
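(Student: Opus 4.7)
The plan is to write the displacement of $\vphi$ as a telescoping sum over the $N$ transitions (for $0 \le 2N \le 2M$), replace each summand by its RMS-drift approximation, freeze the base point using smoothness of $\gradGa R$, and then recognize the resulting sum as a Riemann-type approximation of the continuous flow via \Cref{lm:sum-h2}. Concretely, by the definition of an $O(1)$-RMS-drift transition and the assumption of $\alpha$-boundedness,
\begin{equation*}
  \vphi_{2N} - \vphi_0 \;=\; -2\eta^2 \sum_{m=0}^{N-1} h_{2m}^2\, \gradGa R(\vphi_{2m}) \;+\; \sum_{m=0}^{N-1} O\bigl(\alpha\, h_{2m}^2\, \eta^3\bigr),
\end{equation*}
where $R := \log \lamH_1$. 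The telescoping error sum is bounded by $\alpha\eta^3 \sum h_{2m}^2 = O(\alpha \eta^{1.5})$ from \Cref{lm:sum-h2}, which is absorbed into $O(\alpha^2 \eta)$ comfortably.

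Next, since $\vphi$ does not move far over the horizon $M\eta^2 = O(\eta^{0.5})$, I would show $\|\vphi_{2m} - \vphi_0\| = O(\alpha^2 \eta^{0.5})$ for all $m \le N$ (a direct consequence of the previous identity applied up to $m$, plus $\sum h_{2m}^2 = O(M)$). By $\contC^2$-smoothness of $R$ on the compact set $\cl(\Zeog)$, replacing $\gradGa R(\vphi_{2m})$ by $\gradGa R(\vphi_0)$ in the main sum costs at most $2\eta^2 \sum h_{2m}^2 \cdot O(\alpha^2 \eta^{0.5}) = O(\alpha^2 \eta)$. Applying \Cref{lm:sum-h2} to the frozen sum yields
\begin{equation*}
  \vphi_{2N} - \vphi_0 \;=\; -\frac{\Cb\, N\, \eta^2}{K_0^2}\, \gradGa R(\vphi_0) \;+\; O(\alpha^2 \eta),
\end{equation*}
where $K_0^2 = 2\Cb + \|\gradGa R(\vphi_0)\|^2$, and where the remainder from \Cref{lm:sum-h2} contributes $2\eta^2 \cdot (\alpha^2 \eta^2 N^2 + \alpha/\eta) = O(\alpha^2 \eta) + O(\alpha \eta)$.

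For the continuous side, observe that the ODE \eqref{eq:zeta-general} has velocity $-\gradGa R(\vzeta)/(4 + \tfrac{2}{\Cb}\|\gradGa R(\vzeta)\|^2) = -\tfrac{\Cb}{2K(\vzeta)^2}\, \gradGa R(\vzeta)$, so with $\tau := 2N\eta^2 = O(\eta^{0.5})$ and $\vzeta(0) \approx \vphi_0$, a first-order Taylor expansion gives
\begin{equation*}
  \vzeta(\tau) - \vzeta(0) \;=\; -\tau\, \frac{\Cb}{2K_0^2}\, \gradGa R(\vphi_0) \;+\; O(\tau^2) \;=\; -\frac{\Cb\, N\, \eta^2}{K_0^2}\, \gradGa R(\vphi_0) \;+\; O(\eta),
\end{equation*}
where the $O(\tau^2) = O(\eta)$ term absorbs the variation of $\gradGa R$ and $K$ along the short flow segment. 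Matching the two expressions and adding back the initial discrepancy $\|\vphi_0 - \vzeta(0)\| \le O(\alpha^2 \eta^{0.5})$ (carried into the segment from the outer induction, with slack compensated by tightening within the segment) gives the claimed bound $\|\vphi_{2N} - \vzeta(2N\eta^2)\| \le O(\alpha^2 \eta)$.

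The main obstacle is bookkeeping: the $O(\alpha^2 \eta^2 M^2)$ term from \Cref{lm:sum-h2} is only tolerable because $M = \Theta(\eta^{-1.5})$ makes $\eta^2 \cdot \alpha^2 \eta^2 M^2 = O(\alpha^2 \eta)$, and the Lipschitz error from freezing $\vphi_{2m}$ at $\vphi_0$ is only tolerable because the cumulative drift of $\vphi$ over the segment is $O(\eta^{0.5})$ rather than $O(1)$. Both estimates would fail for $M = \Theta(\eta^{-2})$, which is exactly why \Cref{thm:rmsdrift-flow} is established by chaining $O(\eta^{-0.5})$ invocations of this lemma rather than a direct one-shot argument.
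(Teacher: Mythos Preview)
Your argument is essentially identical to the paper's: telescope $\vphi_{2N}-\vphi_0$, bound the per-step error by the RMS-drift definition, freeze $\gradGa R(\vphi_{2m})$ at $\vphi_0$ using the $O(\alpha^2\eta^{0.5})$ cumulative drift bound, invoke \Cref{lm:sum-h2} to evaluate $\sum h_{2m}^2$, and compare to the first-order Taylor expansion of the flow over time $\tau=2N\eta^2=O(\eta^{0.5})$. The error accounting you give for each of these steps matches the paper's line by line.

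The only muddled sentence is your last one about ``adding back the initial discrepancy $\|\vphi_0-\vzeta(0)\|\le O(\alpha^2\eta^{0.5})$'' and still claiming an $O(\alpha^2\eta)$ bound. That arithmetic does not work as stated: an $O(\alpha^2\eta^{0.5})$ term cannot be absorbed into $O(\alpha^2\eta)$. What your computation (and the paper's) actually establishes is
\[
\bigl(\vphi_{2N}-\vzeta(2N\eta^2)\bigr)-\bigl(\vphi_0-\vzeta(0)\bigr)=O(\alpha^2\eta),
\]
i.e., the discrepancy grows by at most $O(\alpha^2\eta)$ over the segment. The paper's own lemma statement and final line are equally imprecise on this point; the intended reading (consistent with how \Cref{thm:rmsdrift-flow} chains the segments) is either that $\vzeta(0)=\vphi_0$ within the segment, or that the bound is on the increment. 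With that reading, your proof is correct and matches the paper.
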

\begin{proof}[Proof for \Cref{lm:rmsdrift-flow-15}]
  When $S_t$ is $\alpha$-bounded, $\normtwosm{\vphi_{t+2} - \vphi_t} =
  O(\alpha^2 \eta^2)$, so $\normtwosm{\vphi_t - \vphi_0} = O(\alpha^2
  \eta^2 M) = O(\alpha^2 \eta^{0.5})$.
  For every $N \le M$
  we have
  \begin{align*}
    \vphi_{2N} - \vphi_0 = \sum_{m=0}^{N-1} \vphi_{2m+2} - \vphi_{2m}
    &= \sum_{m=0}^{N-1} \left(-2\eta^2h_{2m}^2 \gradGa R(\vphi_{2m}) + O(\alpha^3 \eta^3) \right) \\
    &= \sum_{m=0}^{N-1} \left(-2\eta^2h_{2m}^2 (\gradGa R(\vphi_0) + O(\alpha^2 \eta^{0.5})) + O(\alpha^3 \eta^3) \right) \\
    &= \sum_{m=0}^{N-1} \left(-2\eta^2h_{2m}^2 \gradGa R(\vphi_0) + O(\alpha^2 \eta^{2.5}) \right) \\
    &= -\underbrace{2 \eta^2 \left(\sum_{m = 0}^{N-1} h_{2m}^2\right)}_{=: \delta} \gradGa R(\vphi_0) + O(\alpha^2 \eta).
  \end{align*}
  By \Cref{lm:sum-h2}, we have
  \begin{align*}
    \delta = 2\eta^2 \cdot \left(\frac{\Cb}{2K_0^2} N + O(\alpha^2 \eta^2 N^2 + \alpha / \eta)\right)
    &= \frac{\Cb}{K_0^2} \eta^2 N + 2\eta^2 \cdot O(\alpha^2 / \eta) \\
    &= \frac{\Cb}{K_0^2} \eta^2 N + O(\alpha^2 \eta).
  \end{align*}
  Note that $\frac{\Cb^2}{2 K_0^2} \gradGa R(\vphi_0) = \frac{\dd}{\dd t} \vzeta(0) + O(\alpha^2 \eta^{1/2})$. Then we have
  \begin{align*}
    \vphi_{2N} &= \vphi_0 -\delta \gradGa R(\vphi_0) + O(\alpha^2 \eta) \\
    &= \vphi_0 - (2\eta^2 N) \left(\frac{\dd}{\dd t}\vzeta(\vphi_0) + O(\alpha^2 \eta^{1/2})\right) + O(\alpha^2 \eta) \\
    &= \vzeta(2N\eta^2) + \Obig{(\eta^2 N)^2} + O(\alpha^2 \eta) \\
    &= \vzeta(2N\eta^2) + O(\alpha^2 \eta),
  \end{align*}
  where the third equality uses the smoothness of $\vzeta(t)$.
\end{proof}

\begin{proof}[Proof for \Cref{thm:rmsdrift-flow}]
  We group the $M$ transitions into $O(1/\eta^{1/2})$
  segments of length $O(1/\eta^{1.5})$.
  Then we leverage
  \Cref{lm:rmsdrift-flow-15}
  and do an induction to show that $\normtwosm{\vphi_t - \vzeta(t\eta^2)} \le O(\alpha^2 \eta^{1/2})$ for all even numbers $0 \le t \le 2M$.
\end{proof}

\newpage

\section{Proofs for Linear Regression with Batch Normalization} \label{sec:app-lin}

\begin{lemma} \label{lm:lin-bn-H}
    Assume that the regression targets $y_i$ are generated by a linear model.
    For linear regression with BN, the global minimizer manifold of $\Loss(\vw) := \frac{1}{n} \sum_{i=1}^{n}(\Phi(\vw_i; \vw, \sigmay, \muy) - y_i)^2$ on the unit sphere is
    \begin{align*}
        \manGa := \left\{ \vw \in \sphS^{d-1} : \dotp{\frac{\vw}{\normsm{\vw}_{\mSigmax}}}{\vx_i - \vmux} = \frac{y_i - \muy}{\sigmay} \right\}.
    \end{align*}
    For any global minimizer $\vw \in \manGa$,
    the Hessian matrix $\mH(\vw)$ of the loss is given by
    \begin{align*}
        \mH(\vw) = 2\normtwosm{\tilvw}^2
        \left( \mSigmax - \vz \vz^\top \right),
    \end{align*}
    where $\tilvw := \frac{\sigmay \vw}{\normsm{\vw}_{\mSigmax}}$ as defined in \eqref{eq:lin-bn-Phi}, and $\vz := \frac{1}{n} \sum_{i=1}^{n} \frac{y_i - \muy}{\sigmay} (\vx_i - \vmux)$.
\end{lemma}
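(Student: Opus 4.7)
The plan is to first characterize the minimizer manifold from the sum-of-squares structure of the loss, then reduce the Hessian at a minimizer to a Gauss--Newton form, and finally collapse that expression using a key identity that is forced by the interpolation condition. Write $f_i(\vw) := \tilvw^\top (\vx_i - \vmux) - (y_i - \muy)$ so that $\Loss(\vw) = \frac{1}{n}\sum_i f_i(\vw)^2$. The assumption that the targets are generated by a linear model implies there exists some $\tilvw^*$ achieving $f_i \equiv 0$, so the global minimum of $\Loss$ is $0$; the global minimizers are exactly the $\vw$ for which every $f_i$ vanishes. Translating through $\tilvw = \sigmay \vw / \norm{\vw}_{\mSigmax}$ and intersecting with $\sphS^{d-1}$ yields the stated manifold $\manGa$.

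For the Hessian, I would differentiate $\Loss = \frac{1}{n}\sum_i f_i^2$ twice to get $\mH(\vw) = \frac{2}{n}\sum_i\bigl(\nabla f_i \nabla f_i^\top + f_i \nabla^2 f_i\bigr)$; at $\vw \in \manGa$ the second group vanishes identically. Writing $g(\vw) := \norm{\vw}_{\mSigmax}$, the quotient rule gives
\[
\nabla f_i(\vw) = \frac{\sigmay}{g(\vw)}(\vx_i - \vmux) - \frac{\sigmay\, \vw^\top (\vx_i-\vmux)}{g(\vw)^3}\,\mSigmax \vw,
\]
and on $\manGa$ the scalar $\sigmay \vw^\top(\vx_i-\vmux)/g(\vw)$ equals $y_i - \muy$, simplifying the gradient to $\nabla f_i(\vw) = \frac{\sigmay}{g(\vw)}(\vx_i-\vmux) - \frac{y_i-\muy}{g(\vw)^2}\mSigmax\vw$.

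Expanding $\nabla f_i \nabla f_i^\top$ and averaging, the three empirical identities $\frac{1}{n}\sum_i(\vx_i-\vmux)(\vx_i-\vmux)^\top = \mSigmax$, $\frac{1}{n}\sum_i (y_i-\muy)(\vx_i-\vmux) = \sigmay \vz$, and $\frac{1}{n}\sum_i (y_i-\muy)^2 = \sigmay^2$ give
\[
\mH(\vw) = \frac{2\sigmay^2}{g(\vw)^2}\left[\mSigmax - \tfrac{1}{g(\vw)}\bigl(\vz \vw^\top\mSigmax + \mSigmax \vw \vz^\top\bigr) + \tfrac{1}{g(\vw)^2}\mSigmax\vw\vw^\top\mSigmax\right].
\]
The crucial simplification is the identity $\mSigmax \vw = g(\vw)\,\vz$, which I obtain by multiplying the minimizer condition $\tilvw^\top (\vx_i-\vmux) = y_i-\muy$ by $(\vx_i-\vmux)/n$ and summing to get $\mSigmax \tilvw = \sigmay \vz$, then unrolling $\tilvw = \sigmay\vw/g(\vw)$. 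Substituting $\mSigmax \vw = g(\vw)\vz$ into the bracket collapses the $-2\vz\vz^\top$ cross terms against the $+\vz\vz^\top$ rank-one term, leaving exactly $\mSigmax - \vz\vz^\top$. Combined with $\norm{\tilvw}^2 = \sigmay^2/g(\vw)^2$ (which uses $\norm{\vw}=1$ from $\vw \in \sphS^{d-1}$), this gives the claimed formula $\mH(\vw) = 2\norm{\tilvw}^2(\mSigmax - \vz\vz^\top)$.

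The main obstacle is purely algebraic: the expanded Gauss--Newton expression has three apparently independent correction terms, but the minimizer condition forces $\mSigmax\vw$ to be collinear with $\vz$, whereupon the three terms collapse into a single rank-one correction $-\vz\vz^\top$. Everything else is careful bookkeeping with the chain rule and the three empirical moment identities.
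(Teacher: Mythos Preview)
Your proposal is correct and follows essentially the same approach as the paper: both use the Gauss--Newton reduction $\mH(\vw)=\tfrac{2}{n}\sum_i \nabla f_i\nabla f_i^\top$ at a minimizer and the key identity $\mSigmax\tilvw=\vz$ (equivalently $\mSigmax\vw=g(\vw)\vz$). The only cosmetic difference is that the paper substitutes this identity into each $\nabla f_i$ \emph{before} expanding the outer product, yielding the compact form $\nabla f_i=\tfrac{\sigmay}{g}(\tilvx_i-q_i\vz)$, whereas you expand first and collapse afterward; the algebra and the result are identical.
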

\begin{proof}
The model output $\Phi(\vx; \vw, \muy, \sigmay)$
can be written as
\[
    \Phi(\vx; \vw, \muy, \sigmay) = \sigmay \dotp{\frac{\vw}{\normsm{\vw}_{\mSigmax}}}{\vx_i - \vmux} + \muy.
\]
Then it is easy to verify that the global minimizer manifold is $\manGa$.

Now we compute the Hessian.
    Let $\tilvx_i := \vx_i - \vmux$ and $q_i := \frac{y_i - \muy}{\sigmay}$.
Because of the use of squared loss,
on $\manGa$ the Hessian can be written as
the sum of outer products of gradients:
\begin{align*}
    \mH(\vw)
    &= \frac{2}{n} \sum_{i=1}^{n} \nabla_{\vw} \Phi(\vx_i; \vw, \sigmay, \muy) \nabla_{\vw} \Phi(\vx_i; \vw, \sigmay, \muy)^\top.
\end{align*}
For each gradient we have
\begin{align*}
    \nabla_{\vw} \Phi(\vx_i; \vw, \sigmay, \muy)
    = \frac{\sigmay}{\normsm{\vw}_{\mSigmax}} \left( \mI - \frac{\mSigmax \vw\vw^\top}{\normsm{\vw}_{\mSigmax}^2} \right) \tilvx_i
    = \frac{\sigmay}{\normsm{\vw}_{\mSigmax}} \left( \mI - \mSigmax \tilvw \tilvw^\top \right) \tilvx_i
\end{align*}
Note that
    $\mSigmax \tilvw = \frac{1}{n} \sum_{i=1}^{n} \tilvx_i \tilvx_i^\top \tilvw
    = \frac{1}{n} \sum_{i=1}^{n} q_i \tilvx_i =: \vz$.
Then we can simplify the gradient by
\begin{align*}
    \nabla_{\vw} \Phi(\vx_i; \vw, \sigmay, \muy) &=
    \frac{\sigmay}{\normsm{\vw}_{\mSigmax}} \left( \tilvx_i - q_i \vz \right)
\end{align*}
Now we simplify the Hessian.
\begin{align*}
    \mH(\vw)
    &= \frac{2}{n} \sum_{i=1}^{n} \nabla_{\vw} \Phi(\vx_i; \vw, \sigmay, \muy) \nabla_{\vw} \Phi(\vx_i; \vw, \sigmay, \muy)^\top \\
    &= \frac{2\sigmay^2}{n \normsm{\vw}_{\mSigmax}^2}
    \sum_{i=1}^{n}
    \left( \tilvx_i - q_i \vz \right)\left( \tilvx_i - q_i \vz \right)^\top \\
    &= \frac{2\sigmay^2}{n \normsm{\vw}_{\mSigmax}^2}
    \left(
    \sum_{i=1}^{n} \tilvx_i \tilvx_i^\top
    - \sum_{i=1}^{n} q_i \tilvx_i \vz^\top 
    - \sum_{i=1}^{n} q_i \vz \tilvx_i^\top  
    + \sum_{i=1}^{n} q_i^2 \vz\vz^\top
    \right) \\
    &= \frac{2\sigmay^2}{\normsm{\vw}_{\mSigmax}^2}
    \left(
        \mSigmax
    - \vz \vz^\top 
    - \vz \vz^\top  
    + \vz\vz^\top
    \right) \\
    &= \frac{2\sigmay^2}{\normsm{\vw}_{\mSigmax}^2}
    \left(
        \mSigmax
    - \vz \vz^\top 
    \right).
\end{align*}
We complete the proof by noting that $\frac{\sigmay^2}{\normsm{\vw}_{\mSigmax}^2} = \normtwosm{\tilvw}^2$.
\end{proof}

\begin{proof}[Proof for \Cref{thm:bn-lin-main}]
By \Cref{lm:lin-bn-H},
for $\vw \in \sphS^{D-1}$,
\begin{align*}
    \gradGa \log \lamH_1(\vtheta) = \gradGa \log \normtwosm{\tilvw}^2 = \tfrac{1}{\normtwosm{\tilvw}^2} \gradGa \normtwosm{\tilvw}^2.
\end{align*}
By simple calculation,
it can be verified that
the only point that has $\gradGa \normtwosm{\tilvw}^2 = \vzero$
is the unique point $\vwopt$ on $\manGa$
that is a linear combination of $\vx_i - \vmux$.
As the shperical sharpness is bounded from below and there is only one stationary point,
the sharpness-reduction flow must converge on $\manGa$
and the convergence point must be $\vwopt$.

Since $\vwopt$ is a linear combination of $\vx_i - \vmux$,
the associated $\tilvw^*$
should be
the least square solution (without bias) of this ``shifted'' dataset: $\{ (\vx_i - \vmux, y_i - \muy)\}$.
In other words, $\tilvw^*$ is the optimal solution of the following constrained optimization problem:
\[
    \min \quad \normtwosm{\vw}^2 \quad \text{s.t.} \quad \vw^\top (\vx_i - \vmux) = y_i - \muy, \quad \forall i \in [n].
\]
When $\vw$ is given in the above optimization problem,
there is only a unique $b$ such that $\vw^\top \vx_i + b = y_i$.
So we can introduce a bias to this problem without changing the optimal solution:
\[
    \min \quad \normtwosm{\vw}^2 \quad \text{s.t.} \quad \vw^\top \vx_i + b = y_i, \quad \forall i \in [n].
\]
One can also easily see that this $b$ must match with $\tilb$.
Therefore, we can conclude that the sharpness-reduction flow \eqref{eq:main-zeta}
finds to the optimal solution of \eqref{eq:M} at convergence.
\end{proof}

\newpage

\section{Experiments} \label{sec:add-exp}
\begin{figure}[!b] 
  \includegraphics[width=\textwidth]{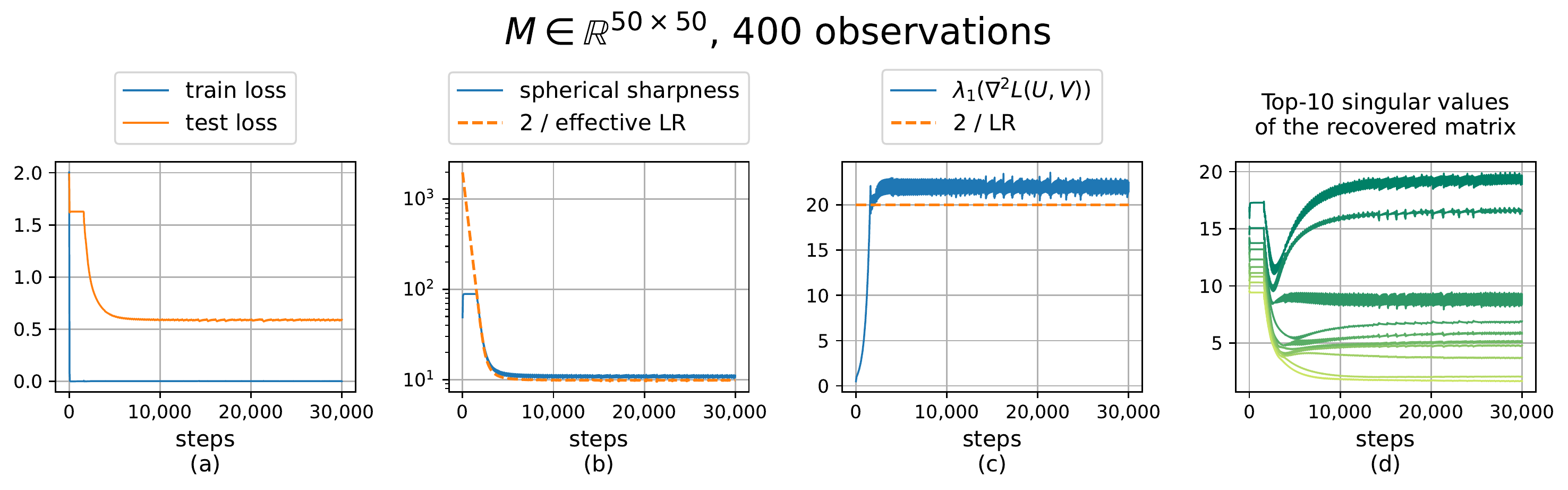}
  \includegraphics[width=\textwidth]{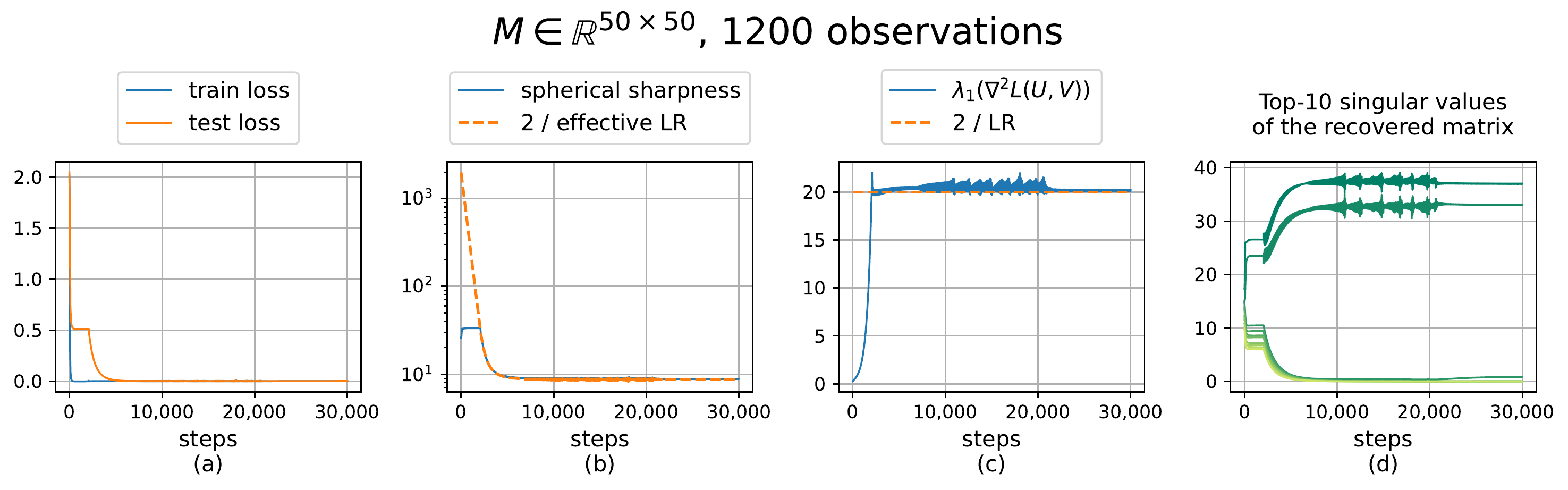}
  \includegraphics[width=\textwidth]{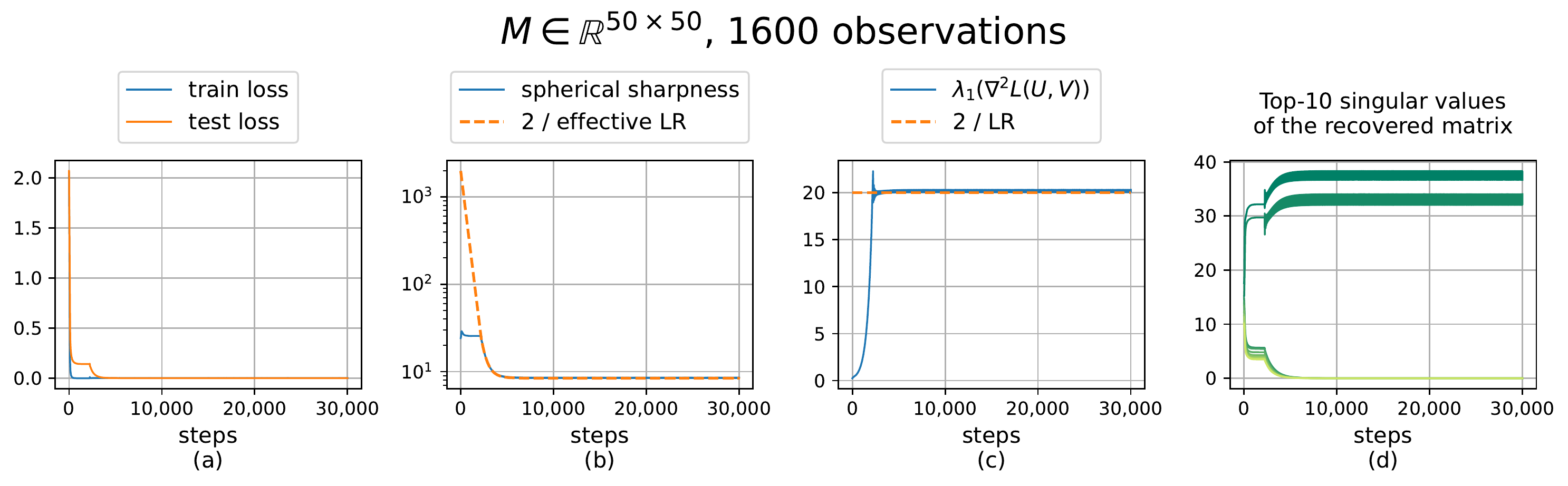}
  \caption{
    Overparameterized matrix completion with BN,
    where
    the ground-truth matrix
    $\mM \in \R^{50 \times 50}$ is of rank 2,
    and the number of observations varies from $400$ to $1600$.
    See \Cref{fig:matcom-intro} for the case of $800$ observations.
    The test loss starts to decrease significantly
    as soon as the spherical sharpness starts to decrease.
  }
  \label{fig:matcom-d50}
\end{figure}

In this section,
we provide experiments on matrix completion and CIFAR-10
to validate the main claim in our theory:
\GDWD on scale-invariant loss persistently reduces spherical sharpness in the EoS regime
(the regime where $2/\eeta_t$ roughly equals to the spherical sharpness).
In addition, we validate that the generalization performance 
continues to improve as the spherical sharpness decreases.
See \Cref{sec:exp-matlin,sec:exp-cifar10}.
Then in \Cref{sec:exp-pb}, we validate a key proof insight: the magnitude of oscillation and effective LR evolve periodically.

We also provide a series of ablation studies.
In \Cref{sec:exp-ab-norm,sec:exp-ab-wd}, we demonstrate that
the two key components in our theoretical setup, normalization and WD,
are crucial.
In \Cref{sec:exp-ab-init-eff-lr}, we show that
the initial effective LR does not affect the final performance when the intrinsic LR is fixed.

\subsection{Validation of Sharpness Reduction on Matrix Completion} \label{sec:exp-matlin}

\begin{figure}[tp] 
  \includegraphics[width=\textwidth]{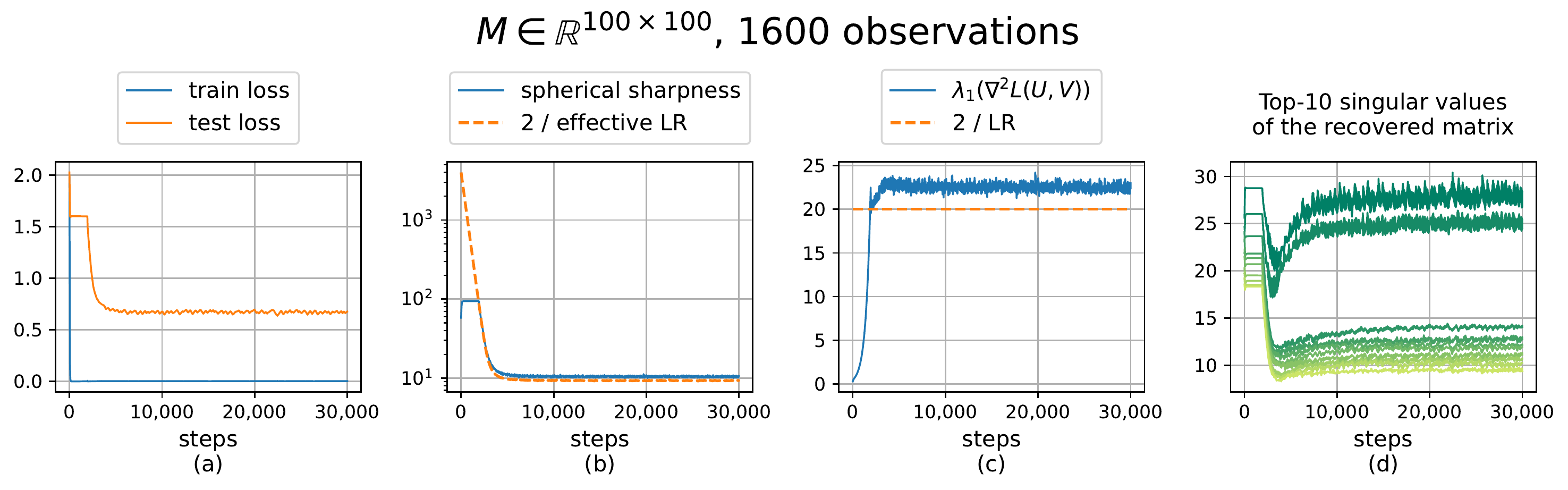}
  \includegraphics[width=\textwidth]{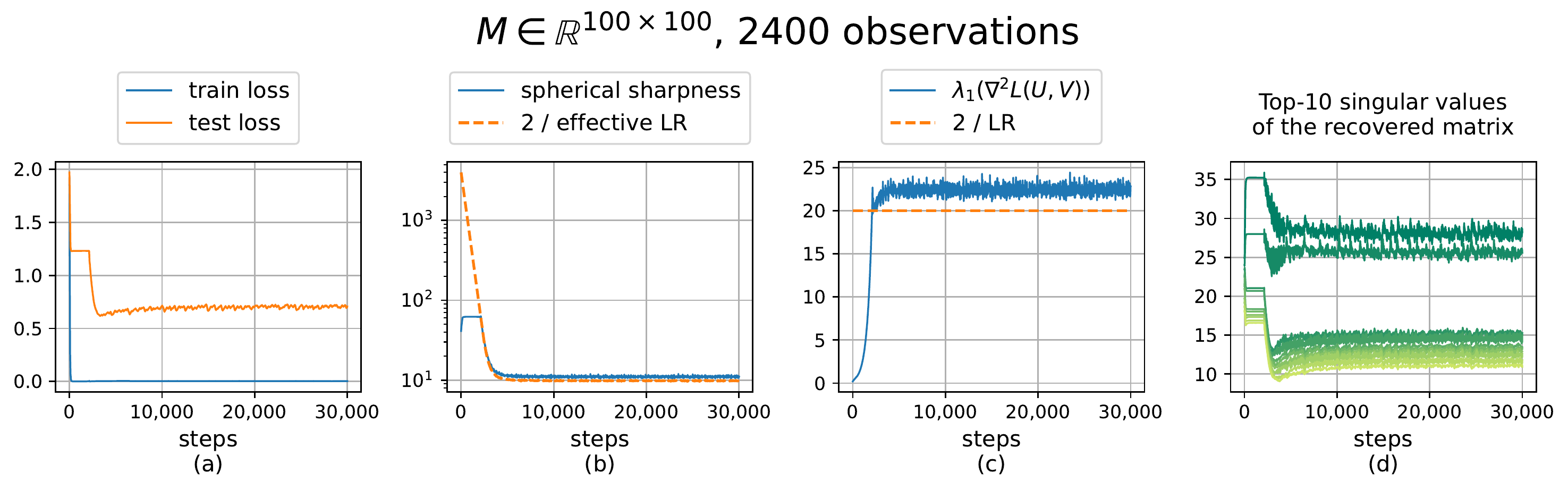}
  \includegraphics[width=\textwidth]{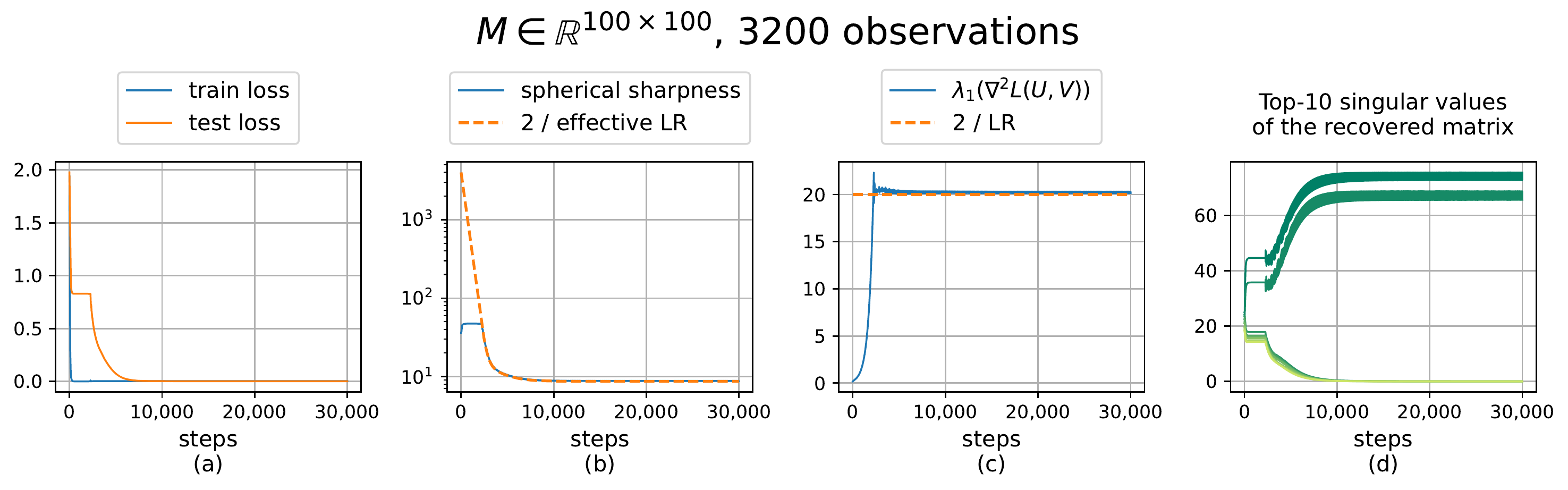}
  \caption{
    Overparameterized matrix completion with BN,
    where
    the ground-truth matrix
    $\mM \in \R^{50 \times 50}$ is of rank 2,
    and the number of observations vary from $1600$ to $3200$.
    The test loss starts to decrease significantly
    as soon as the spherical sharpness starts to decrease.
  }
  \label{fig:matcom-d100}
\end{figure}

First, we conduct experiments on matrix completion: there is an unknown low-rank ground-truth matrix $\mM \in \R^{d \times d}$,
and only $N$ entries of $\mM$ are known. The goal is to recover $\mM$ given the known entries.
Matrix completion has been studied via deep learning techniques~(see, e.g., \citep{chi2019nonconvex} for a survey).
Matrix completion can be connected to supervised learning as follows.
Each entry of the matrix can be seen as a data point,
where the $N$ observed entries
constitute the training set.
As in supervised learning,
given a subset of data points,
the goal of matrix completion is to build a model to predict
the rest of the data points.

Here we empirically study solving matrix completion using an overparameterized scale-invariant model. 
We can observe that spherical sharpness 
is indeed decreasing as soon as the process enters the EoS regime,
and the reduction of spherical sharpness encourages low-rank.

In our experiments,
we generate the ground-truth matrix
$\mM \in \R^{d \times d}$
as follows.
First, we set $\widetilde{\mM} \gets \mU_* \mV_*^\top$ for
two random matrices $\mU_*, \mV_* \in \R^{d \times 2}$,
where every entry is sampled uniformly
from $[-1, 1]$.
Then we obtain $\mM$ by normalizing $\widetilde{\mM}$
so that 
the second moment of
the entries is $1$,
i.e.,
$\mM \gets (d \cdot \normFsm{\widetilde{\mM}}^{-1})\widetilde{\mM}$.
We uniformly sample $N$
entries of $\mM$
to serve as the observations,
and use $\Omega \subseteq [d] \times [d]$
to denote the index set of observed entries.

Matrix completion
has been studied 
by a line of works~\citep{gunasekar2017implicit,li2018algorithmic,arora2019implicit,razin2020implicit,li2021towards,stoger2021small}
as a test-bed for the implicit regularization of gradient descent.
More specifically,
they parameterize the target matrix as $\mW = \mU \mV^\top$
where $\mU, \mV \in \R^{d \times d}$ are two trainable matrices,
and run GD to minimize the squared loss $\Loss(\mU, \mV) := \frac{1}{N} \sum_{(i, j) \in \Omega} (W_{i,j} - M_{i,j})^2$.
Although there is no explicit constraint on rank,
GD with small random initialization can still exhibit an implicit bias
towards low-rank solutions.

Inspired by this line of works,
we conduct matrix completion experiments to test
if \GDWD exhibits
the same low-rank bias
in training overparameterized models with BN.
More specifically,
we parameterize the target matrix as $\mW = \BN(\mU \mV^\top)$,
where $\mU, \mV \in \R^{d \times d}$ are two trainable matrices.
Given the observed positions $\Omega$, the output of the model
for a single position $(i, j) \in \Omega$
is
$\frac{\gamma}{\sigma}[\mU \mV^\top]_{i,j}$,
where $\sigma^2 := \frac{1}{N}\sum_{(i,j) \in \Omega} [\mU \mV^\top]_{i,j}^2$
is the second moment of $[\mU \mV^\top]_{i,j}$
over all observed positions,
and $\gamma$ is a rescaling factor.
Multiplying the factor $\frac{\gamma}{\sigma}$
can be seen as doing BN
over observed entries
because
it rescales the output in the same manner as BN.
But the difference is that,
for the sake of simplicity,
we do not subtract the mean.
To ensure the loss to be scale-invariant,
we also fix $\gamma$ to match the second moment of observed entries,
i.e., $\gamma^2 := \frac{1}{N} \sum_{(i, j) \in \Omega} M_{i,j}^2$.

To train this model, we use the standard squared loss,
and we run gradient descent with LR $\heta = 0.1$ and WD $\hlam = 0.01$ to optimize the loss.
It is obvious that this loss is scale-invariant due to BN.
\begin{align*}
  \Loss(\mU, \mV) := \frac{1}{N} \sum_{(i, j) \in \Omega} \left(\frac{\gamma}{\sigma} [\mU \mV^\top]_{i,j} - M_{i,j}\right)^2.
\end{align*}
For an unobserved entry $(i, j)$,
the model
uses the same batch statistics
as for observed entries,
and predicts
$\frac{\gamma}{\sigma}[\mU \mV^\top]_{i,j}$,
where $\gamma, \sigma$ are the same as above.
So we measure the test loss as
\begin{align*}
  \bar{\Loss}(\mU, \mV) := \frac{1}{d^2} \sum_{i, j \in [d]} \left(\frac{\gamma}{\sigma} [\mU \mV^\top]_{i,j} - M_{i,j}\right)^2.
\end{align*}
We note that the loss function has no explicit constraint on rank.
But surprisingly,
in our experiments,
\GDWD
tends to prefer low-rank solution
as soon as the sharpness-reduction bias starts to
occur.
See \Cref{fig:matcom-intro,fig:matcom-d50}
for experiments on reconstructing
a rank-2 matrix of size $50 \times 50$,
and \Cref{fig:matcom-d100}
for experiments on reconstructing a rank-2 matrix of size $100 \times 100$.
In all these experiments,
we can observe that the train loss first decreases to near zero in a short time,
but the test loss
remains at a high level.
Then after effective LR increases for some more steps,
the dynamics enters the EoS regime,
and as predicted by our theory,
the spherical sharpness starts to decrease.
Meanwhile,
the test loss also starts to decrease significantly.
We notice that the gap between the second and the third largest singular values
of the recovered matrix
is enlarged at the same time,
suggesting that this reduction of spherical sharpness encourages the recovered matrix to be low-rank.

\subsection{Validation of Sharpness Reduction on CIFAR-10} \label{sec:exp-cifar10}

Now we present experiments on CIFAR-10
with crossentropy loss
to validate the sharpness-reduction bias in a more realistic setting.
We run full-batch GD with accompanying WD
on three different architectures:
a scale-invariant variant of VGG-11,
a scale-invariant variant of pre-activation ResNet-20,
and the standard pre-activation ResNet-20.
We fix LR $\heta = 0.1$ and WD $\hlam = 5 \times 10^{-4}$.
See \Cref{sec:exp-details-cifar10} for more details on the architectures and training procedures.

The experiment for the scale-invariant variants of VGG-11 and ResNet-20 are presented in \Cref{fig:full-cifar,fig:full-cifar-fsi-resnet}.
Here
our ResNet-20 is non-smooth due to the use of ReLU,
but our VGG-11 is smooth because we choose to use Swish activation and mean pooling in this network.
For both the smooth VGG-11 and non-smooth ResNet-20,
it can be seen from the plots that in both cases the spherical sharpness has an overall tendency to decrease over time,
and the test accuracy is increasing accordingly.

\begin{figure}[p] 
  \includegraphics[width=\textwidth]{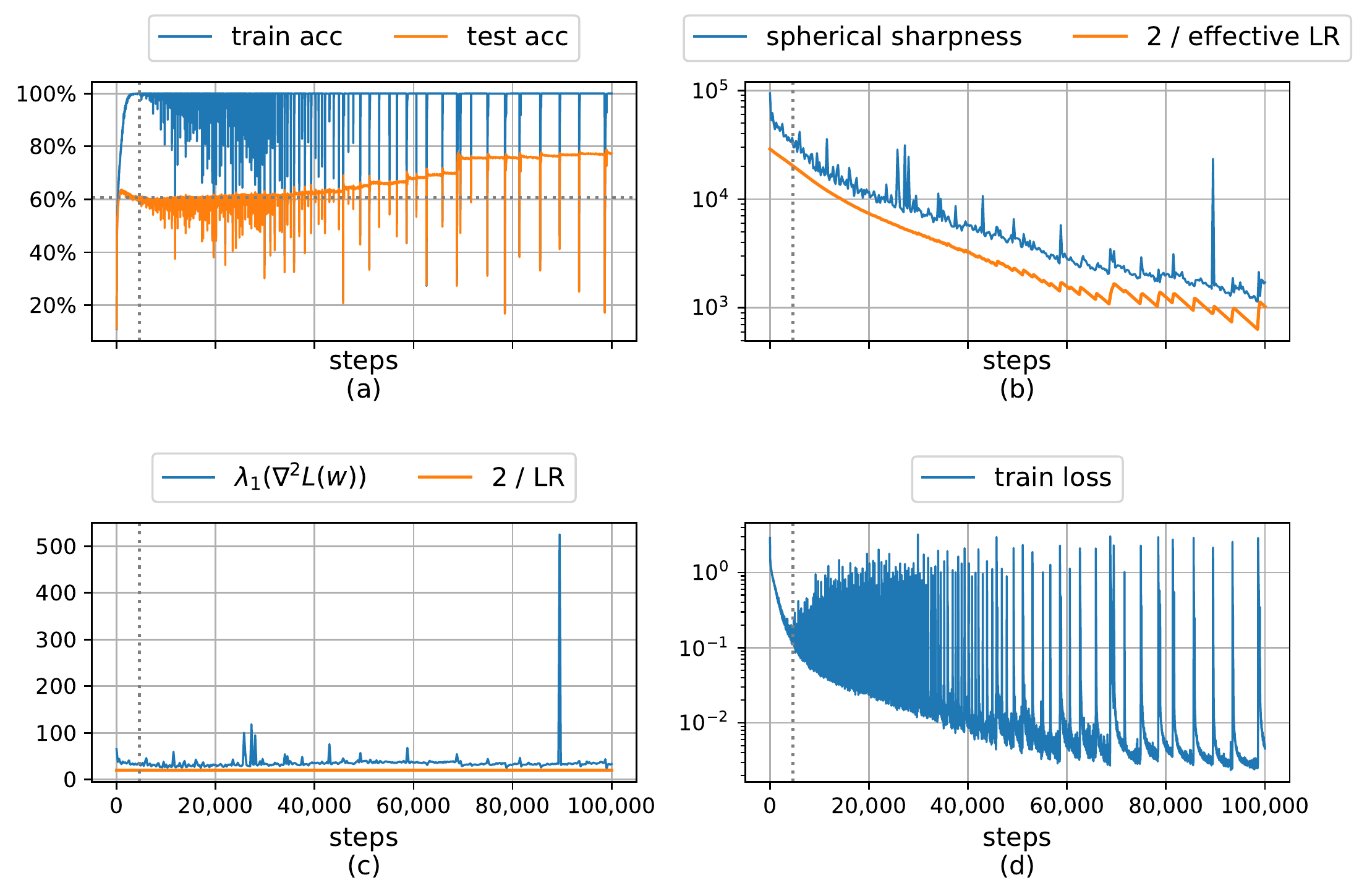}
  \vspace{-0.25in}
  \caption{
    In training a scale-invariant ResNet-20 on CIFAR-10 with (full-batch) \GDWD,
    the spherical sharpness decreases over time.
    $100\%$ training accuracy is achieved after $\sim 4700$ steps (dotted line),
    but as the training continues, 
    the test accuracy increases from $60.8\%$ to $77.4\%$.
  }
  \label{fig:full-cifar-fsi-resnet}
\end{figure}

\begin{figure}[p]
  \includegraphics[width=\textwidth]{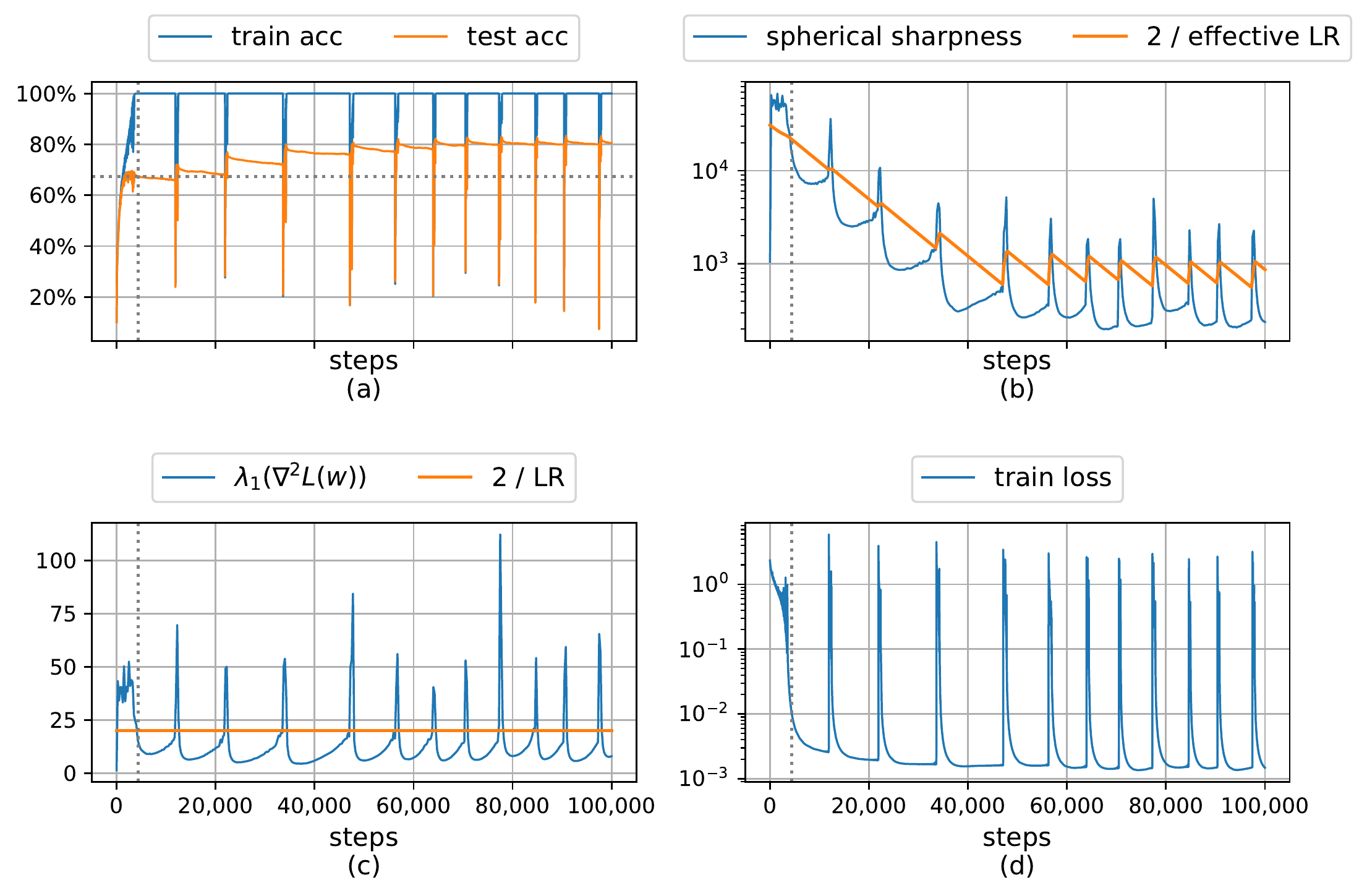}
  \vspace{-0.25in}
  \caption{
    In training the standard pre-activation ResNet-20 on CIFAR-10 with (full-batch) \GDWD,
    the spherical sharpness is decreasing in the long term run.
    $100\%$ training accuracy is achieved after $\sim 4400$ steps (dotted line),
    but as the training continues, 
    the test accuracy increases from $67.5\%$ to $80.4\%$.
    Spherical sharpness is only evaluated for a scale-invariant part of the trainable parameters.
  }
  \label{fig:full-cifar-bp-resnet}
\end{figure}

\begin{figure}[t] 
  \includegraphics[width=\textwidth]{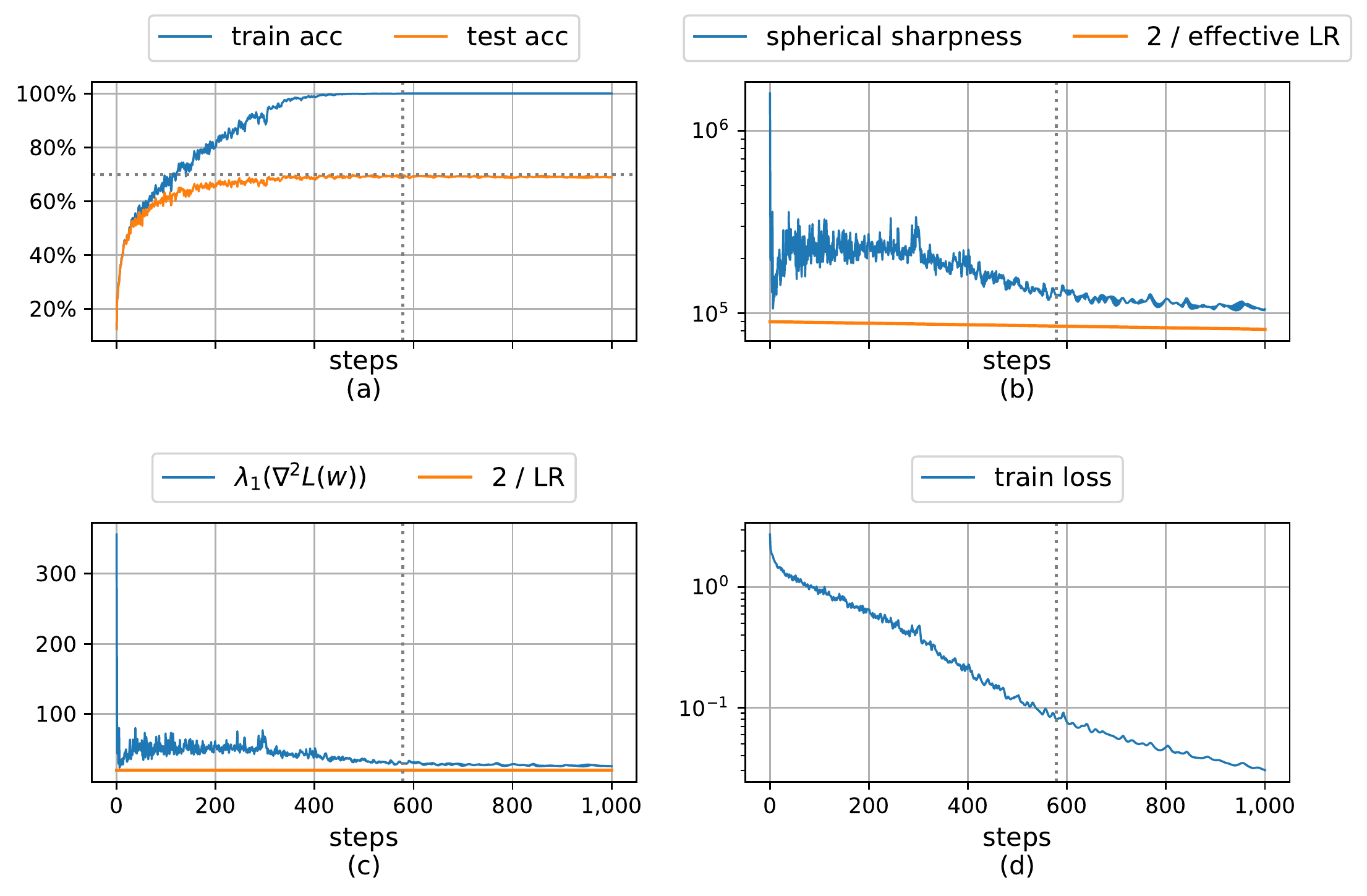}
  \vspace{-0.25in}
  \caption{
    The first 1500 steps
    in training a smooth and scale-invariant VGG-11 on CIFAR-10 with \GDWD (see also \Cref{fig:full-cifar}).
    The spherical sharpness is much larger than $2/\eeta_0$ initially
    but is then reduced to the same level as $2/\eeta_t$ after a few steps.
  }
  \label{fig:full-cifar-vgg-beginning}
\end{figure}

We note that
this sharpness-reduction bias
actually goes beyond the setting that our theory can directly apply:
the dynamic enters the EoS regime in the very beginning of training,
but our theory 
only analyzes the dynamic near a local minimizer manifold.
This is because
that the LR here is not small enough so that $\vtheta_t$ can approach to a local minimizer
before the dynamic enters the EoS regime.
In fact,
\Cref{fig:full-cifar-vgg-beginning} shows that
in the scale-invariant VGG-11 experiment,
the initial spherical sharpness is much larger than $2 / \eeta_0$,
then after a few steps, the spherical sharpness decreases to a level that is close to $2/\eeta_t$
and the dynamic enters the EoS regime.

Besides the scale-invariant models,
we also validate the sharpness-reduction bias on the standard pre-activation ResNet-20,
which is not (fully) scale-invariant but only scale-invariant to a part of its parameters.
For evaluating spherical sharpness,
we compute the partial Hessian only with respect to that part of parameters.
See \Cref{fig:full-cifar-bp-resnet} for the plot and \Cref{sec:exp-details-cifar10} for more experimental details.
Although our theory can only cover scale-invariant models,
we can still observe the sharpness-reduction bias in experiments,
and the test accuracy increases as spherical sharpness decreases.

\subsection{Periodic Behaviors} \label{sec:exp-pb}

A key insight in our proof
is that
the magnitude of oscillation and effective LR evolve periodically in the EoS regime.
Following \Cref{sec:proof-idea},
we can divide the EoS regime into two sub-regimes that occur alternatively in training:
(1)~the sub-regime where $2/\eeta_t$ is smaller than $\lamH_1(\vphi_t)$
and the magnitude of oscillation $\abssm{h_t}$ keeps increasing;
and (2)~the sub-regime where $2/\eeta_t$ is bigger than $\lamH_1(\vphi_t)$
and the magnitude of oscillation $\abssm{h_t}$ keeps decreasing.
Here $\vphi_t$ is a projection of $\vtheta_t$ onto the local minimizer manifold,
$\lamH_1(\vphi_t)$ is the spherical sharpness computed at $\vphi_t$,
and $h_t$ is the inner product between $\vtheta_t - \vphi_t$
and the top eigenvector of $\mH(\vphi_t)$.

\Cref{fig:lin-idea} provides a nice validation of this periodic behavior in linear regression with BN.
We further validate the periodic behavior
in matrix completion by visualizing
$\eeta_t$ in \Cref{fig:eos-matcom-intro} around the moment that the dynamic enters the EoS.
We do not plot $\abssm{h_t}$ because computing $\vphi_t$ is inefficient,
but we can observe that the training loss evolves periodically, which signals that $\abssm{h_t}$
evolves periodically as well.

In CIFAR-10 experiments with realistic LR and WD,
the periodic behavior still exists, but it may deviate from the regime that our theory can capture.
In most of our CIFAR-10 experiments,
only the sub-regime where $2/\eeta_t < \lamH_1(\vphi_t)$
can be observed,
and the phenomenon that $2/\eeta_t$ switches back and forth between being smaller and larger than the spherical sharpness
does not occur anymore.
However, the change of $\eeta_t$ and spherical sharpness
still cause to the loss and gradient norm to fluctuate periodically.
See \Cref{fig:full-cifar-vgg-periodic} for details.
This difference is because that
the loss function of a neural net on CIFAR-10 is much less smooth than that of linear regression and matrix completion.
To make the dynamic happen in the regime that our theory describes,
the LR and WD need to be very small
so that $\vtheta_t$ is sufficiently close to the minimizer manifold $\manGa$,
but they cannot be that small in practice due to computational inefficiency.

The periodic behavior can also happen in a macroscopic scale on CIFAR-10.
E.g.,
in \Cref{fig:full-cifar},
the training loss spikes to a high value around step 50,000,
but then recovers to a near-zero value after a few hundred steps.
Such a macroscopic periodic behavior may not always lead to sharpness reduction,
in contrast to 
the microscopic periodic behavior of our interest.
But as noted in the extensive empirical study by \citet{lobacheva2021on},
this macroscopic periodic behavior
sometimes leads to better generalization.
They also provide a theoretical analysis for the cause assuming the gradient norm is both lower and upper bounded.
However, their explanation is not completely satisfactory
because the gradient norm does not admit a lower bound when the parameter is close to the minimizer manifold $\manGa$.
We leave it a future work to look further into the cause of macroscopic periodic behavior and how it helps generalization.

\begin{figure}[t]
  \centering
  \includegraphics[width=\textwidth]{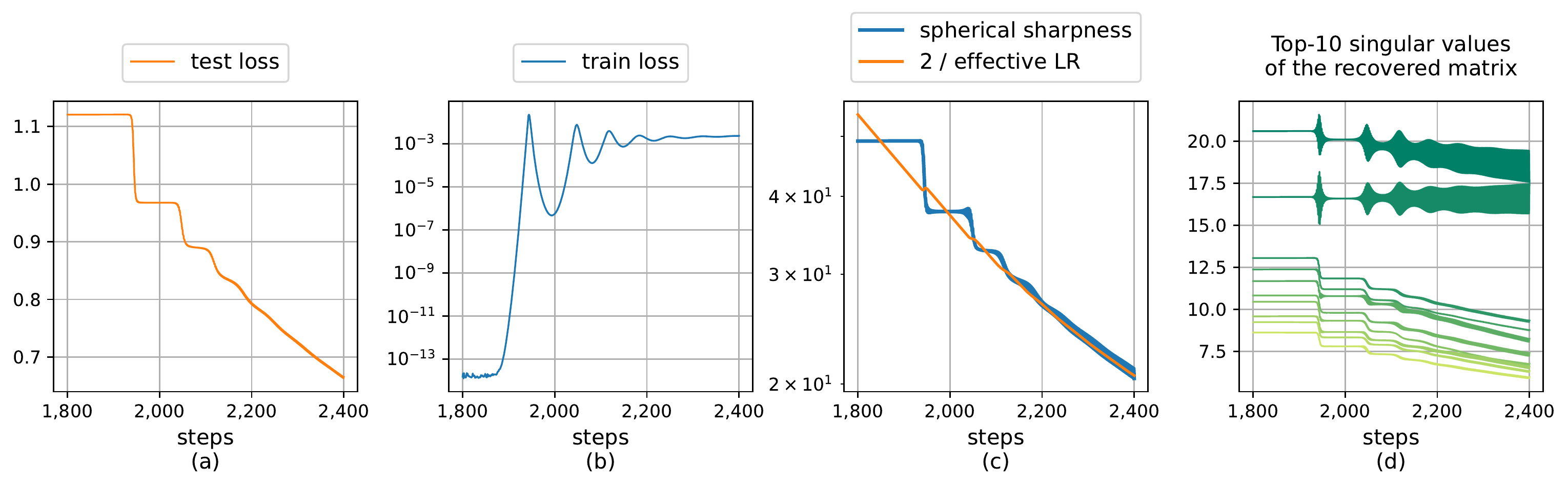}
  \caption{
    A closer look at the dynamic in 
    the matrix completion experiment (\Cref{fig:matcom-intro})
    around the moment that it enters the EoS regime.
    $2 / \eeta_t$
    switches back and forth between being smaller and larger than the spherical sharpness
    (computed as $\lamH_1(\vtheta_t)$ for efficiency),
    which causes the training loss to oscillate periodically.
  }
  \label{fig:eos-matcom-intro}
\end{figure}

\begin{figure}[t]
  \centering
  \includegraphics[width=0.85\textwidth]{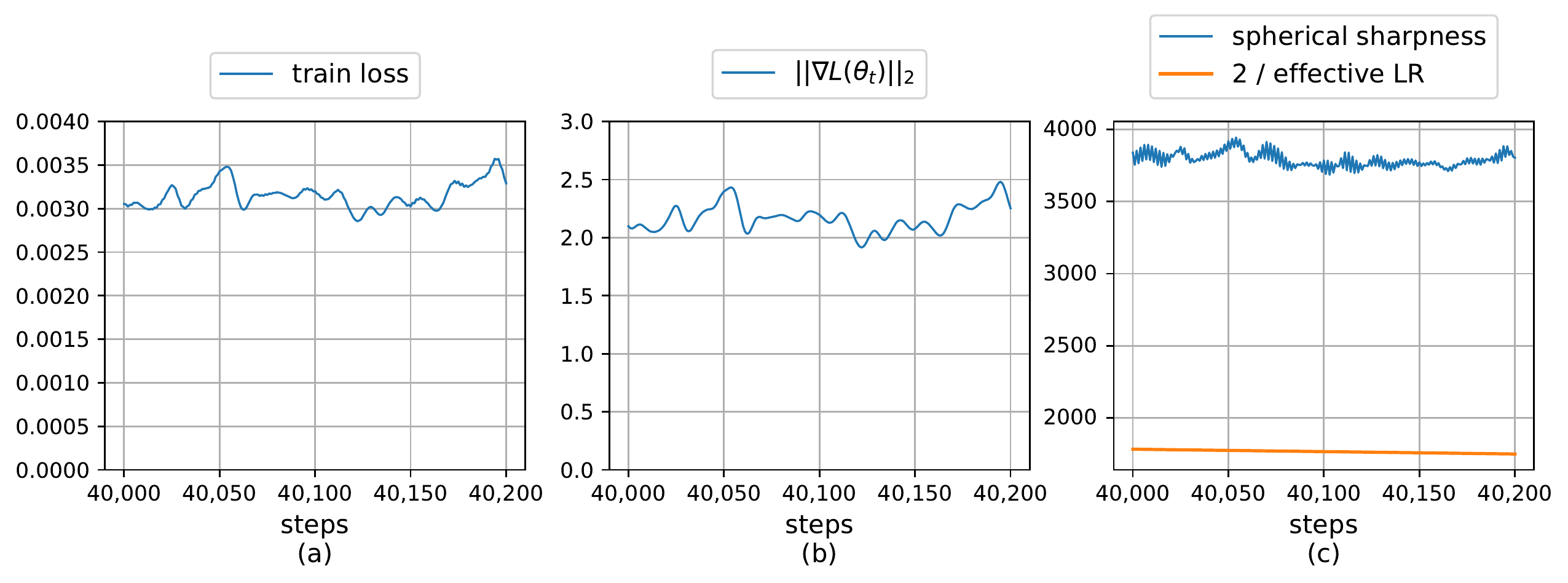}
  \caption{
    A closer look at a sample of 200 steps in the middle of the VGG-11 training on CIFAR-10 (\Cref{fig:full-cifar}).
    $2/\eeta_t$ is always smaller than the spherical sharpness,
    but the training loss and gradient norm still fluctuate periodically.
  }
  \label{fig:full-cifar-vgg-periodic}
\end{figure}
\subsection{Ablation Study: Normalization} \label{sec:exp-ab-norm}

\Cref{fig:full-cifar} and many other experiments in \Cref{sec:exp-matlin,sec:exp-cifar10}
have shown that \GDWD on normalized nets
can continue to improve test accuracy even after reaching 100\% test accuracy.
In our theoretical analysis, we connect this phenomenon with the reduction of spherical sharpness during training.

Now we empirically validate that 
this phenomenon is indeed linked the presence of normalization layers in neural nets.
We train a VGG-11 in the same setting as \Cref{fig:full-cifar}, but now
we remove all the normalization layers.
See \Cref{fig:full-cifar-vgg-ab-norm}.
The test accuracy does not increase anymore after the training accuracy reaches $100\%$.

Note that the spherical sharpness is meaningful only for normalized nets.
For unnormalized nets, even if GD is implicitly reducing a similar sharpness measure,
such a measure cannot be strongly related to generalization
because the test accuracy is not increasing accordingly.

\begin{figure}[tp]
  \centering
  \includegraphics[width=0.85\textwidth]{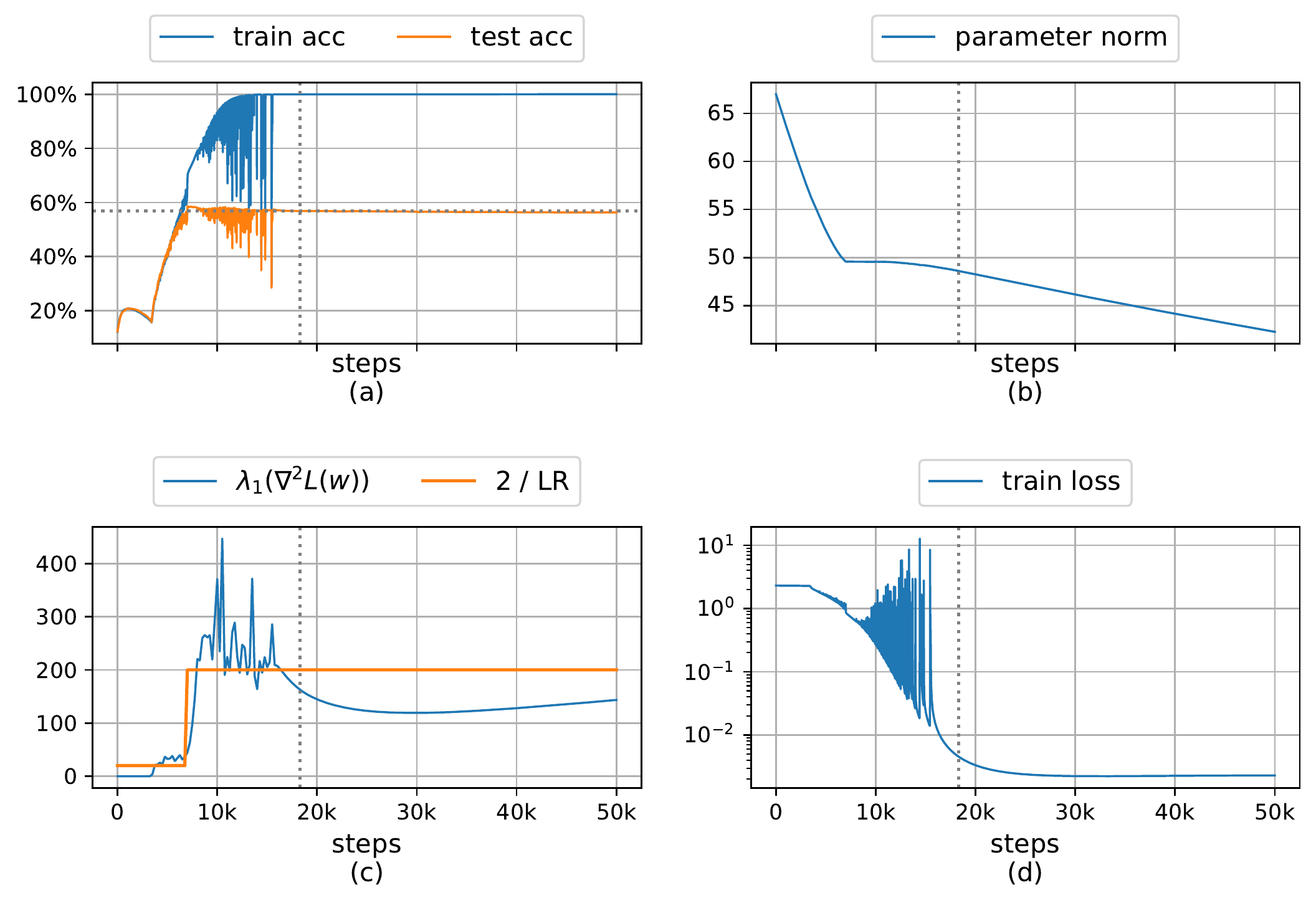}
  \vspace{-0.05in}
  \caption{
    A VGG-11 is trained on CIFAR-10 in the same setting
    as \Cref{fig:full-cifar} but now the normalization layers are all removed.
    The test accuracy slowly decreases from $56.8\%$ to $56.3\%$ after the training accuracy reaches $100\%$.
    The LR is set to $0.1$ initially and is decayed to $0.01$ at step $7k$ to avoid instability.
  }
  \label{fig:full-cifar-vgg-ab-norm}
\end{figure}
\begin{figure}[tp]
  \centering
  \includegraphics[width=0.85\textwidth]{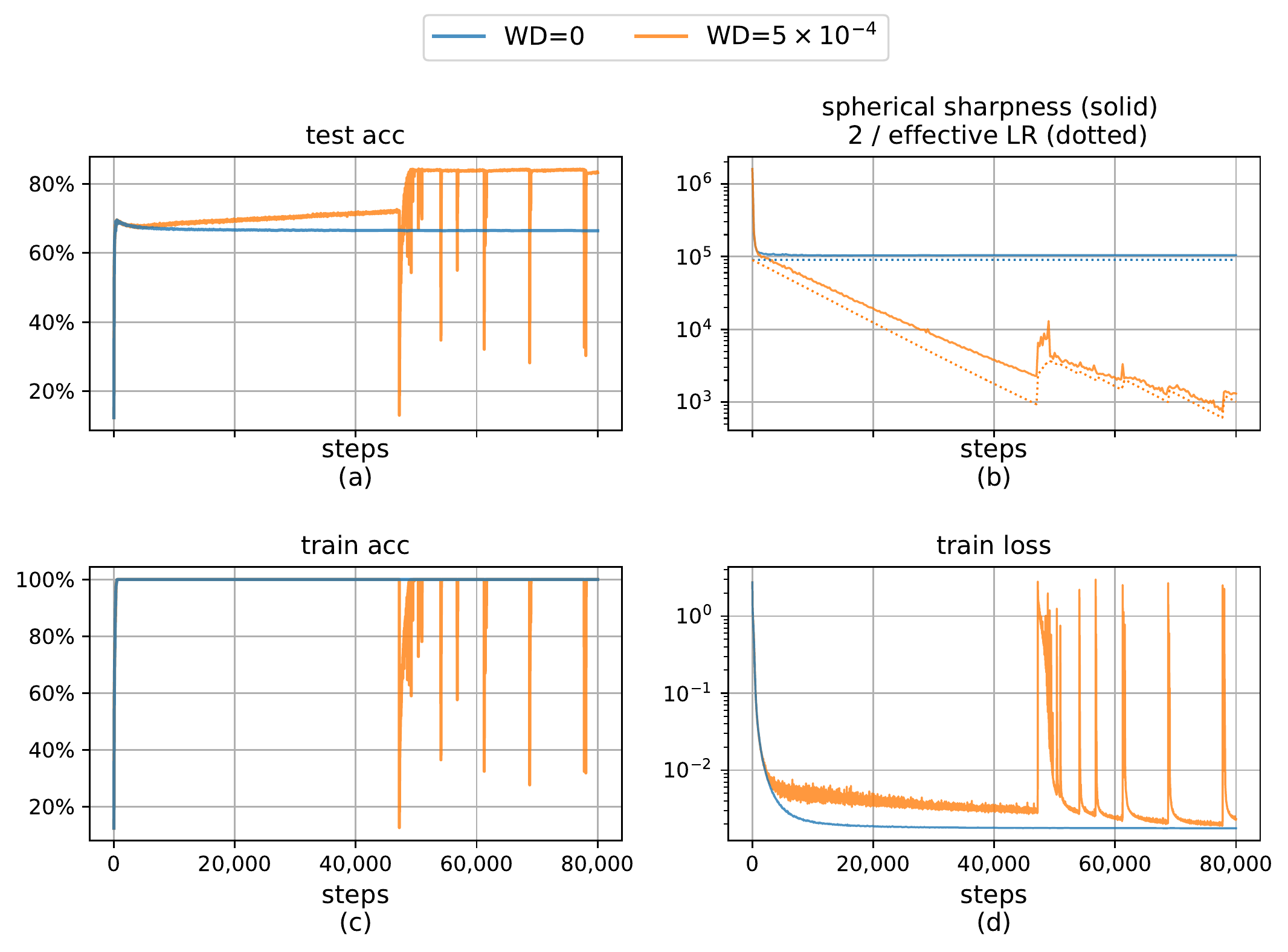}
  \vspace{-0.05in}
  \caption{
    In training scale-invariant VGG-11 on CIFAR-10 with full-batch GD (LR $\eeta = 0.1$),
    weight decay plays an important role in sharpness reduction.
    When weight decay $\hlam = 5 \times 10^{-4}$,
    the spherical sharpness persistently decreases,
    and the test accuracy increases from $69.1\%$ to $84.3\%$ (see also \Cref{fig:full-cifar}).
    But if there is no weight decay,
    the spherical sharpness does not change much after a few initial steps,
    and the test accuracy is stuck at $66.4\%$.
  }
  \label{fig:full-cifar-vgg-wd-vs-nowd}
\end{figure}

\subsection{Ablation Study: Weight Decay} \label{sec:exp-ab-wd}

Now we conduct experiments to study the effects of weight decay (WD)
on sharpness reduction.
It is crucial to have non-zero WD in our theoretical analysis.
Otherwise, it is implied by \Cref{lm:connection-prelim} that the effective LR is monotone decreasing,
but our theory only applies
to the effective LRs that can be viewed as quasi-RMSprop (see \Cref{def:qrms-sche,thm:gdwd-is-quasi-rmsprop}),
in which the effective LR can either increase or decrease as gradient norm changes.
A previous work by \citet{arora2018theoretical}
provides a detailed theoretical analysis in this case
showing that the dynamic always stays in the stable regime
after a few warm-up steps,
and the parameter eventually converges to a stationary point on the unit sphere under standard assumptions in optimization.

We can verify through experiments that
the requirement of WD to be non-zero
is not a technical artifact,
but indeed a necessity in practice to exhibit the sharpness-reduction bias.
We train a scale-invariant VGG-11 following exactly the same hyperparameters and initialization
as \Cref{fig:full-cifar}, except that we set WD $\hlam$ to zero.
The result is presented in \Cref{fig:full-cifar-vgg-wd-vs-nowd}.
The spherical sharpness is no longer decreasing with time
when WD is zero,
and the final test accuracy is much lower than the VGG-11 with WD $5 \times 10^{-4}$.

We further conduct experiments with smaller (but non-zero) WD than our default value $5 \times 10^{-4}$.
For the sake of computational efficiency,
our experiments are conducted on a subset of CIFAR-10 images consisting of 2K images,
which we call CIFAR-10-2k (see \Cref{sec:exp-details-cifar10}).
The result is presented in \Cref{fig:full-cifar2k-vgg-ab-wd},
from which we can observe that
GD exhibits a sharpness-reduction bias
as long as WD is non-zero.
But note that smaller WD leads to
a slower speed of sharpness reduction.
This is an expected phenomenon
because
the speed of tracking the sharpness-reduction flow \eqref{eq:main-zeta}
is controlled by 
the intrinsic LR $\ieta := \heta \hlam$.
The smaller the WD, the smaller the intrinsic LR (when LR $\heta$ is fixed).

\begin{figure}[tp] 
  \vspace{-0.1in}
  \centering
  \includegraphics[width=0.85\textwidth]{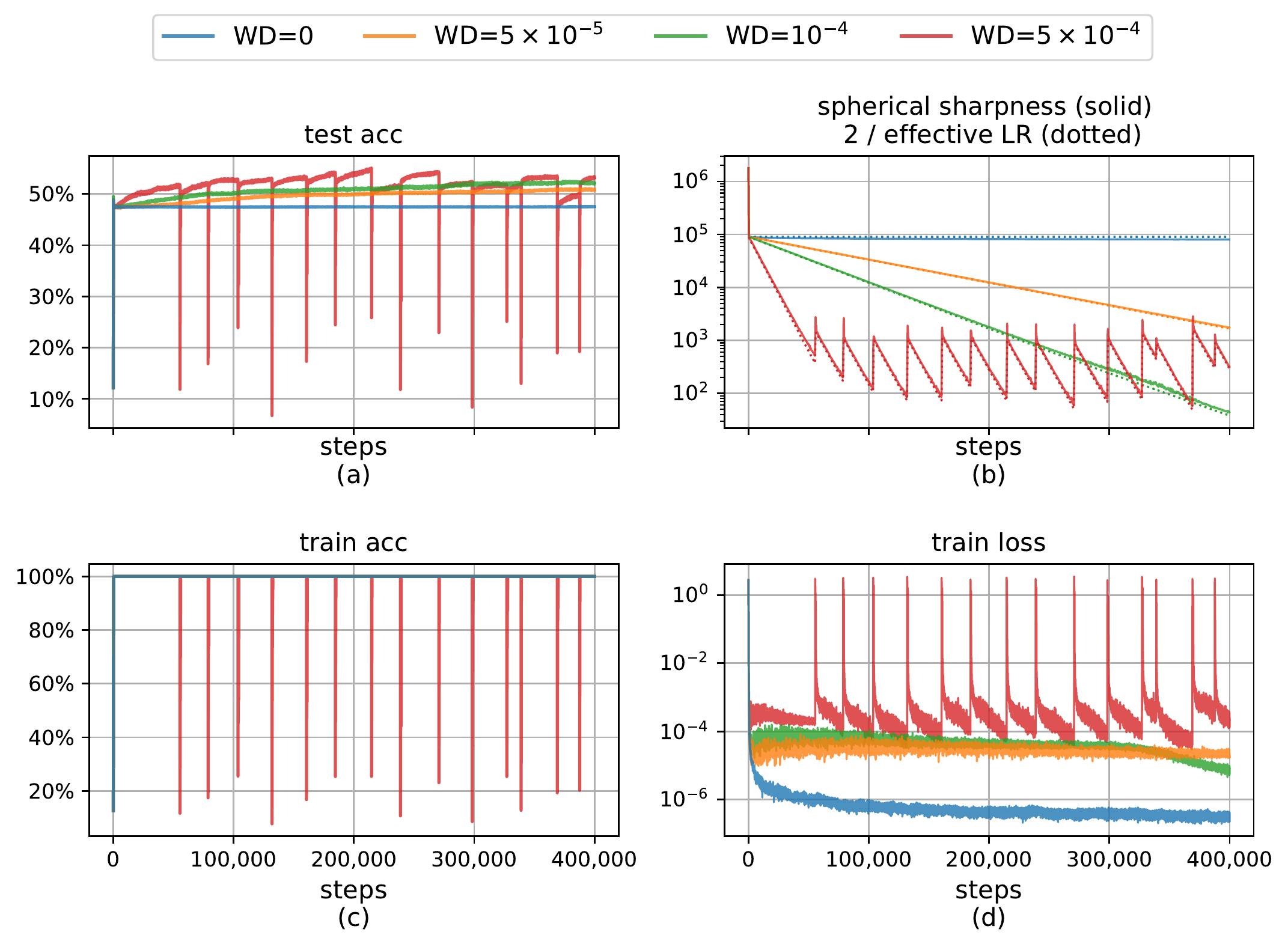}
  \vspace{-0.15in}
  \caption{
    In training scale-invariant VGG-11 on CIFAR-10-2k with full-batch GD (LR $\eeta = 0.1$),
    the sharpness-reduction bias occurs as long as the weight decay is non-zero,
    but smaller WD leads to longer training time to reduce the spherical sharpness to the same level of that with larger WD.
    The best test accuracy achieved within
    400,000 steps
    is
    $48.9\%$ when $\hlam = 0$,
    $51.1\%$ when $\hlam = 5 \times 10^{-5}$,
    $52.5\%$ when $\hlam = 10^{-4}$,
    $55.1\%$ when $\hlam = 5 \times 10^{-4}$.
  }
  \label{fig:full-cifar2k-vgg-ab-wd}
\end{figure}

\begin{figure}[tp] 
  \centering
  \includegraphics[width=\textwidth]{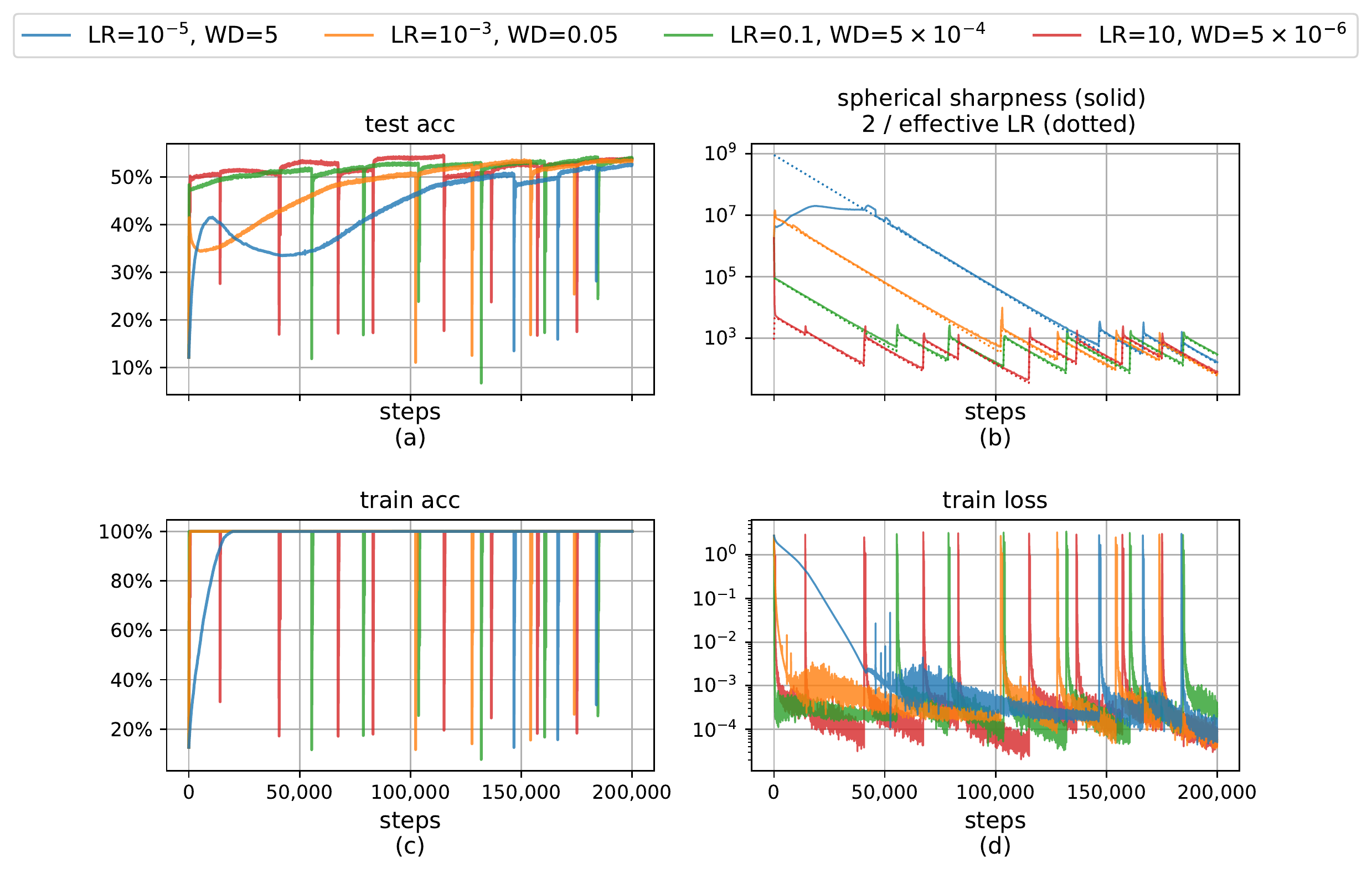}
  \vspace{-0.15in}
  \caption{
    In training scale-invariant VGG-11 on CIFAR-10-2k with full-batch GD
    and various different initial effective learning rates $\eeta_0$.
    We scale learning rate $\heta$ and weight decay $\hlam$
    inverse proportionally
    while fixing the intrinsic LR $\ieta = 5 \times 10^{-5}$.
    The best test accuracies achieved within
    200,000 steps
    are similar in all runs,
    with $52.7\%$ for $\heta = 10^{-5}$,
    $53.7\%$ for $\heta = 10^{-3}$,
    $54.3\%$ for $\heta = 0.1$,
    $54.6\%$ for $\heta = 10$.
  }
  \label{fig:full-cifar2k-vgg-ab-lrwd}
\end{figure}

\subsection{Ablation Study: Initial Effective Learning Rate} \label{sec:exp-ab-init-eff-lr}

As we have seen in \Cref{fig:full-cifar-vgg-beginning},
in training the scale-invariant VGG-11 on CIFAR-10,
if $\heta = 0.1$ and $\hlam = 5 \times 10^{-4}$, then
initially the spherical sharpness is much larger than $2/\eeta_0$,
and after a few steps, it decreases to the same level as $2/\eeta_t$,
thus bringing the dynamic into the EoS regime.

Now we conduct an ablation study 
to see how the dynamics change with different initial effective LR.
For computational efficiency, we run experiments on CIFAR-10-2k (see \Cref{sec:exp-details-cifar10})
to train our scale-invariant variant of VGG-11.
We first set LR $\heta = 0.1$ and WD $\hlam = 5 \times 10^{-4}$,
then we change the LR $\heta$ to $10, 10^{-3}, 10^{-5}$ while rescaling the WD $\hlam$ inverse proportionally.
In this way, the intrinsic LR remains unchanged.
See \Cref{fig:full-cifar2k-vgg-ab-lrwd} for the results.
For large initial effective LR, the dynamic quickly enters the EoS regime;
but when the initial effective LR is as small as $10^{-5}$,
the dynamic is in the stable regime initially,
and $2/\eeta_t$ falls so slowly
that the dynamic enters the EoS regime
only after reaching $100\%$ training accuracy.
In all experiments, the spherical sharpness persistently decreases in the EoS regime,
and the test accuracy increases accordingly.
In the end, all the experiment runs reach nearly the same spherical sharpness and test accuracy,
regardless the initial effective LR.

\section{Experiment Details} \label{sec:exp-details}

All our experiments were conducted on NVIDIA RTX A5000 GPUs.
The longest experiments are the full-batch training experiments on CIFAR-10,
each of which took 7 days to run.

\subsection{Additional Details of CIFAR-10 Experiments} \label{sec:exp-details-cifar10}

Our implementation of full-batch GD is based on the code of \citet{cohen2021gradient}\footnote{\url{https://github.com/locuslab/edge-of-stability}}.
Our implementations of VGGNets~\citep{simonyan2014vgg} and ResNets~\citep{he2016deepres,he2016identityres} on CIFAR-10 are based on a high-starred GitHub repository of 
Wei Yang (\href{https://github.com/bearpaw}{bearpaw})\footnote{\url{https://github.com/bearpaw/pytorch-classification}}.
We do not use any data augmentation.
As it is not feasible to do BN over the full dataset
under our GPU memory constraints,
we use ghost batch normalization~\citep{hoffer2017trainlonger} instead,
where we split
the dataset into $50$ ghost batches of size $1000$.

Some of our experiments are conducted on only a subset of CIFAR-10 consisting of $2000$ images,
which we call CIFAR-10-2k.
We construct the dataset by scanning the full CIFAR-10 dataset and taking the first $200$ images
for each of the $10$ classes.
We use the standard batch normalization with full batch for all CIFAR-10-2k experiments.

Three different neural network architectures
are tested in our experiments,
and we refer to them as scale-invariant VGG-11, standard ResNet-20, and scale-invariant ResNet-20 respectively.

\myparagraph{Scale-Invariant VGG-11.} Our scale-invariant VGG-11 architecture is similar to the configuration A of the original VGGNet~\citep{simonyan2014vgg},
but we make the following changes.
We add a BN layer (without affine parameters) between every convolution and activation to introduce scale-invariance,
We use mean pooling instead of max pooling, Swish instead of ReLU to make the training loss smooth.
We replace the final $3$ fully-connected layers with only one fully-connected layer
as in Yang's implementation.
All convolutional and fully-connected layers have no bias terms.
We add a BN layer
after the last fully-connect layer,
in which the affine parameters are fixed to $\gamma = \frac{3}{10} \ln 91 \approx 1.353$, $\beta = 0$ (see \Cref{sec:affine} for discussion).
A visualization of the full architecture is given in \Cref{fig:arch-vgg}, whose output function can be shown to be scale-invariant with respect to the trainable parameters.

\begin{figure}[htbp] 
  \centering
  \includegraphics[scale=0.13]{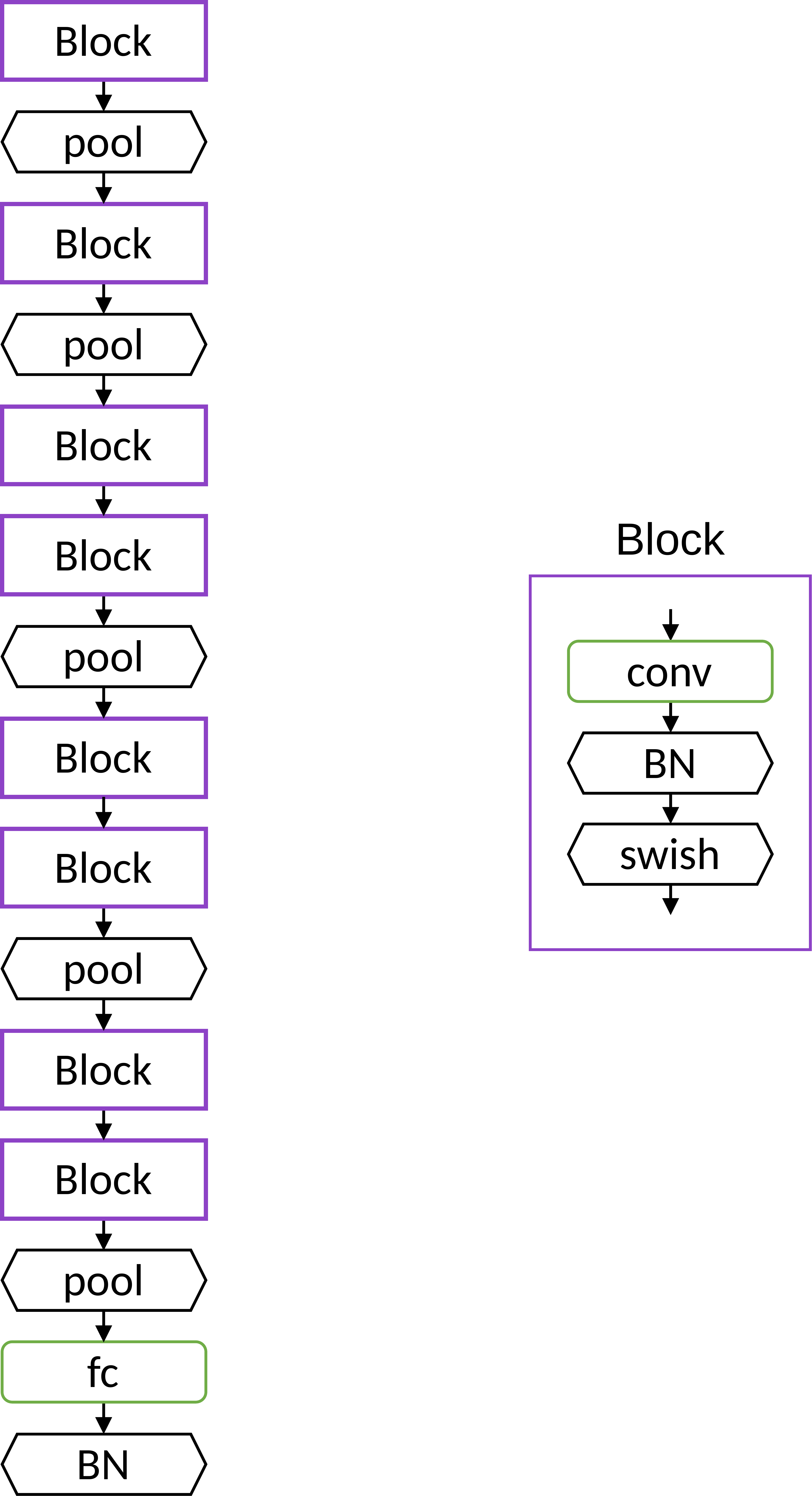}
  \caption{
    The architecture of
    a scale-invariant variant of VGG-11.
    Each rounded rectangle stands for a layer with trainable parameters,
    including convolutional and fully-connected layers.
    Each hexagon stands for a layer with no trainable parameters,
    including mean pooling, swish, BN without affine parameters, and BN with affine parameters fixed.
  }
  \label{fig:arch-vgg}
\end{figure}

\begin{figure}[htbp] 
  \centering
  \begin{subfigure}{\textwidth}
    \centering
    \includegraphics[scale=0.13]{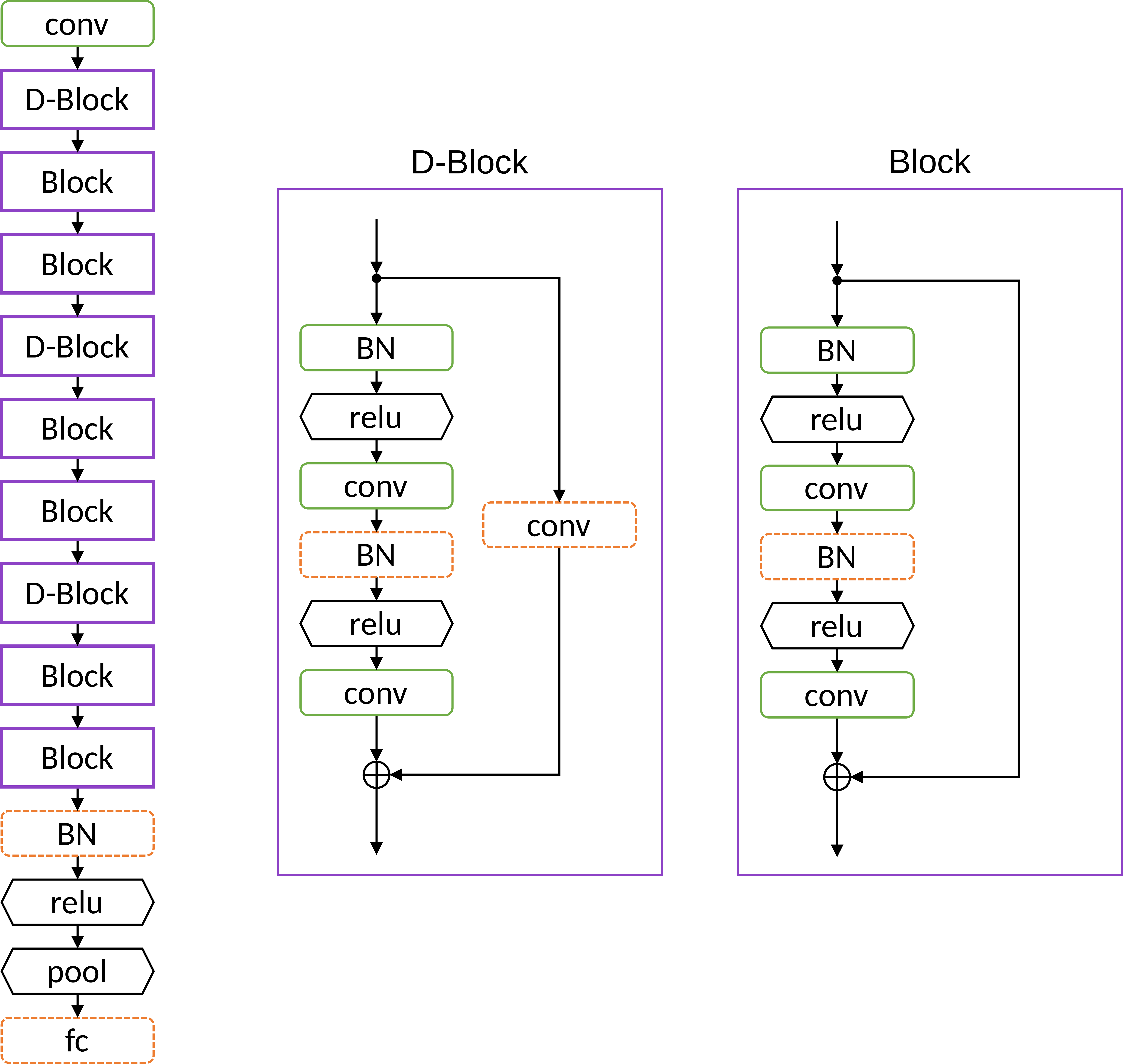}
    \caption{The architecture of ResNet-20, following Yang's implementation.}
    \label{fig:arch-std-resnet}
  \end{subfigure}
  \par\bigskip
  \begin{subfigure}{\textwidth}
    \centering
    \includegraphics[scale=0.13]{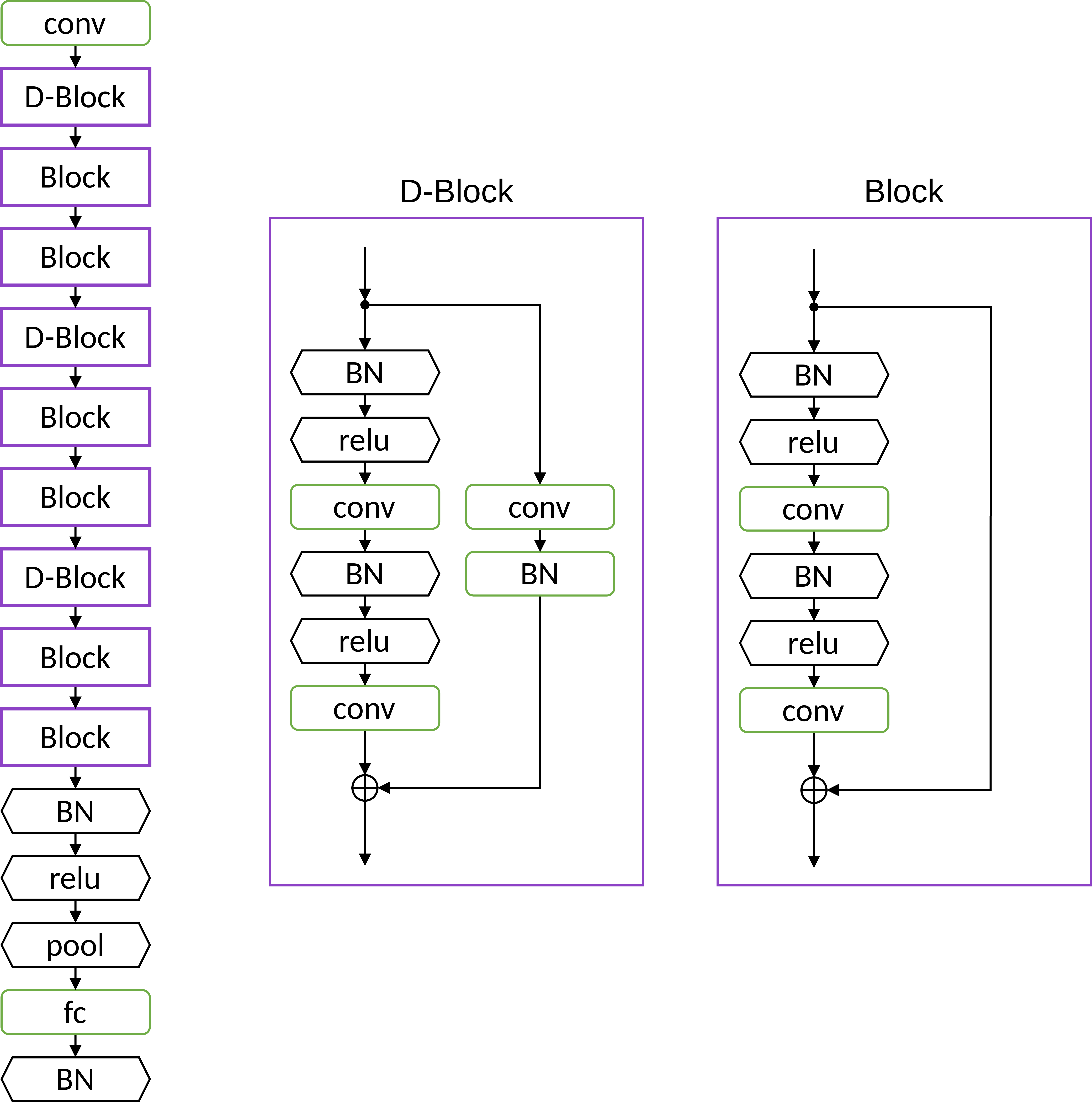}
    \caption{The architecture of a scale-invariant variant of ResNet-20.}
    \label{fig:arch-fsi-resnet}
  \end{subfigure}
  \caption{
    The ResNet architectures used in our experiments.
    Each rounded rectangle stands for a layer with trainable parameters,
    including convolutional and fully-connected layers, and BN with trainable affine parameters.
    Each hexagon stands for a layer with no trainable parameters,
    including mean pooling, ReLU, BN without affine parameters, and BN with affine parameters fixed.
    In Yang's implementation of ResNet-20,
    the output function is scale-invariant to the parameters in layers marked as rounded green rectangles;
    in the scale-invariant variant,
    the output function is scale-invariant to all trainable parameters.
  }
  \label{fig:arch-resnet}
\end{figure}

\myparagraph{Standard ResNet-20.} 
We also verify the sharpness-reduction bias on a more realistic setting with ResNet-20 and ReLU.
In some of our experiments, we use Yang's implementation for the pre-activation variant of ResNet-20.
All convolutional layers have no bias, but the final fully-connected layer does have bias terms.
We note that the output function of this architecture
is not scale-invariant to all its trainable parameters.
However, it can be shown that
the output function is indeed scale-invariant to a large part of its parameters.
More specifically,
the trainable parameters can be split into two parts, $\vw$ and $\vz$,
and the output function remains unchanged if we replace $\vw$ with $c\vw$ for all $c > 0$.
In our experiments, we measure the spherical sharpness by $\nabla^2_{\vtheta} \Loss(\vtheta, \vz)$
while letting $\vtheta := \frac{\vw}{\normtwosm{\vw}}$.
See \Cref{fig:arch-std-resnet} for a visualization of the architecture.

\myparagraph{Scale-Invariant ResNet-20.}
The standard ResNet-20
can be scale-invariant after making minor changes.
First, we remove the affine parameters in all existing BNs.
Then, we also add an extra BN (with trainable affine parameters)
to each shortcut for blocks with downsampling,
following \citet{li2020exp}.
Finally, we add a BN layer
after the last fully-connect layer,
in which the affine parameters are fixed to $\gamma = \frac{3}{10} \ln 91 \approx 1.353$, $\beta = 0$ (see \Cref{sec:affine} for discussion).
A visualization of the full architecture is given in \Cref{fig:arch-fsi-resnet}, whose output function can be shown to be scale-invariant with respect to the trainable parameters.

\subsection{Additional Details of Linear Regression Experiments}

Our experiments on linear regression (\Cref{fig:lin-idea})
follows strictly as \Cref{sec:lin}.
For generating data,
we first sample the ground-truth weight $\vwgt \in \sphS^{d-1}$ uniformly,
where $d = 40$,
and we sample the ground-truth bias as $\bgt \sim \Normal(0, 0.01)$,
Then we sample $n = 20$ points from Gaussian distribution
$\Normal(\vzero, \mSigma)$,
where $\mSigma = \diag(1/d, 2/d, 3/d, \dots, 1)$.

To generate \Cref{fig:lin-idea},
we run gradient descent with LR $\heta = 0.5$
and WD $\hlam = 2 \times 10^{-4}$.
This particular choice of hyperparameters is for better visualization of the periodic behavior
described in \Cref{sec:proof-idea}.
If we enlarge LR or WD, the period will be shortened accordingly.

\subsection{Computing Spherical Sharpness}

To compute the spherical sharpness at a point $\vw \in \R^D \setminus \{\vzero\}$ (\Cref{def:sph-sha}),
we utilize the formula
$\lambda_1(\nabla^2 \Loss(\frac{\vw}{\normtwosm{\vw}})) = \normtwosm{\vw}^2 \cdot \lambda_1(\nabla^2 \Loss(\vw))$,
which can be proved for all scale-invariant loss by simple calculus (see also \Cref{lm:scale-invariant-grad-hess}).
For computing the eigenvalues of $\nabla^2 \Loss(\vw)$,
we invoke the Lanczos algorithm from SciPy library (\texttt{scipy.sparse.linalg.eigsh}),
where a Hessian-vector product oracle $\vx \mapsto \nabla^2 \Loss(\vw) \vx$ is implemented with PyTorch
and passed to the Lanczos algorithm as a linear operator to find its eigenvalues.

In full-batch training of the full CIFAR-10 dataset,
it is time-consuming to compute even a single Hessian-vector product.
Following~\citet{cohen2021gradient},
we compute Hessian based on
only the first $5000$ training data points.

Note that in our theory
we focus on the spherical sharpness
of minimizers.
However, in practice, it is usually time-consuming
to compute the minimizers exactly,
so in matrix completion and CIFAR-10 experiments,
we compute the spherical sharpness directly 
at the current parameter $\vw_t$.
But for linear regression experiments (\Cref{fig:lin-idea}),
the spherical sharpness is indeed computed at minimizers
in each step because
our computational power is sufficient in this setting.
Specifically,
we compute the minimizer by
doing projected gradient descent on $\sphS^{D-1}$
with fixed learning rate $0.005$
until the loss decreases to $10^{-8}$.

\section{Discussion on the Affine Parameters of the Final BN} \label{sec:affine}

In our experiments for scale-invariant models,
we add a BN to the final linear layer and fix the affine parameters to be constants.
Now we analyze its effects on 
regression tasks with squared loss and
classification tasks with crossentropy loss,
and discuss how to set the affine parameters reasonably.

We consider a scale-invariant neural net in general while assuming that a BN is put as the last layer.
For input $\vx$ and parameter $\vw$,
let $(F_1(\vx; \vw), \dots, F_C(\vx; \vw)) \in \R^C$ be the output before the final BN.
After the final BN, the output is given by the following function $\Phi_k(\vw; \vw)$:
\begin{align*}
    \bar{F}_k(\vx; \vw) &:= \frac{F_k(\vx; \vw) - \mu_k}{\sigma_k}, \\
    \Phi_k(\vx; \vw) &:= \gamma_k \bar{F}_k(\vx; \vw) + \beta_k,
\end{align*}
where $\mu_k$ and $\sigma_k^2$ are mean and variance of $\{ F_k(\vx_i; \vw) \}_{i=1}^{n}$
over the training set.

\subsection{Squared Loss}
For linear regression experiments,
we use squared loss as the loss function,
and we have $C = 1$ because the output is a scalar.
Then as discussed in \Cref{sec:lin}, the mean and variance of $\Phi_1(\vx; \vw)$
are always $\beta_1$ and $\gamma_1^2$ regardless of $\vw$.
Therefore, to ensure that $\Phi$ has sufficient representation power to express the regression targets $\{y_i\}_{i=1}^{n}$,
we should fix the affine parameters in a way
that $\beta_1$ and $\gamma_1^2$ match with the mean and variance of $\{y_i\}_{i=1}^{n}$.

In the general case when $C \ge 1$, the $k$-th output unit has mean and variance $\beta_k$ and $\gamma_k^2$.
We can argue similarly that 
we should fix the affine parameters in a way
that $\beta_k$ and $\gamma_k^2$ match with the mean and variance of $\{y_{i,k}\}_{i=1}^{n}$,
where $y_{i,k}$ stands for the $k$-th coordinate of the regression target for the $i$-th data point.
In other words,
\begin{align*}
  \beta_k \gets \muy^{(k)} := \frac{1}{n} \sum_{i=1}^{n} y_{i,k},  & & \gamma_k \gets \sigmay^{(k)} := \sqrt{\frac{1}{n} \sum_{i=1}^{n} (y_{i,k} - \muy^{(k)})^2}.
\end{align*}

\subsection{Crossentropy Loss}
In our CIFAR-10 experiments, we fix all $\gamma_k$ to a constant value $\gamma$,
and fix $\beta_k$ to zero.
We note that fixing these affine parameters in this way for an overparameterized model
is equivalent to adding label smoothing to the training loss in some sense.

The basic idea is to assume the model has sufficient express any function $F$,
then $\Phi_k$ can be any function of mean zero and covariance $\gamma^2$ on the training set.
If the dataset
has $C$ classes and each class has equal numbers of samples,
with some efforts one can show that
the minimum loss is 
attained
when
the pre-softmax logit for an input
is $\gamma (C-1)^{1/2}$ for the correct class
and $-\gamma (C-1)^{-1/2}$ for every wrong class.
In this case,
the output probability $P_y(\vx_i)$ for a class $y$ is
\begin{align} \label{eq:pxy-opt}
  P_y(\vx_i) = \begin{cases}
    \frac{1}{1 + (C-1)\exp(-\gamma((C-1)^{1/2} + (C-1)^{-1/2}))} & \quad \text{if } y = y_i; \\
    \frac{1}{(C-1) + \exp(\gamma((C-1)^{1/2} + (C-1)^{-1/2}))} & \quad \text{otherwise.} \\
  \end{cases}
\end{align}
Label smoothing is a regularization technique proposed by \citet{szegedy2016rethinking}
to improve the generalization of neural nets
by replacing the one-hot hard labels
with soft labels that assign a probability of $\epsilon/C$ to each wrong class.
To see the connection to label smoothing,
we can let $\epsilon := 
    \frac{C}{(C-1) + \exp(\gamma((C-1)^{1/2} + (C-1)^{-1/2}))}$,
then $P_y(\vx_i) = \epsilon / C$ if $y \ne y_i$
and $P_y(\vx_i) = 1 - (1-1/C)\epsilon$ if $y = y_i$.
In our CIFAR-10 experiments,
$\gamma = \frac{3}{10} \ln 91 \approx 1.353$.
Then by simple calculation $\epsilon = 0.1$,
and therefore the neural net trained with $\gamma_k$ and $\beta_k$ fixed to $\gamma$ and $0$
is encouraged to produce 
the output probabilities to match with the soft labels.

The minimum loss is non-zero when the affine parameters are fixed.
We can see from \eqref{eq:pxy-opt}
that
the loss attained by any minimizer is 
\begin{align}
  \Loss_{\min} = \ln(1+ (C-1)\exp(-\gamma((C-1)^{1/2} + (C-1)^{-1/2}))).
\end{align}
In making plots for our experiments,
we always subtract the original loss with its theoretical minimum value $\Loss_{\min}$.
When $\gamma = \frac{3}{10} \ln 91$, this minimum value is $\Loss_{\min} = \ln \frac{100}{91} \approx 0.0943$.

Choosing $\gamma$ to be $\frac{3}{10} \ln 91$ is mainly because of its connection to
label smoothing with $\epsilon = 0.1$,
which is the value chosen in \citep{szegedy2016rethinking}.
But other choices of $\gamma$ can also lead to the same sharpness-reduction bias in practice;
see \Cref{fig:cifar2k-vgg-ab-final-gamma}.

\begin{figure}[htbp] 
  \centering
  \includegraphics[width=\textwidth]{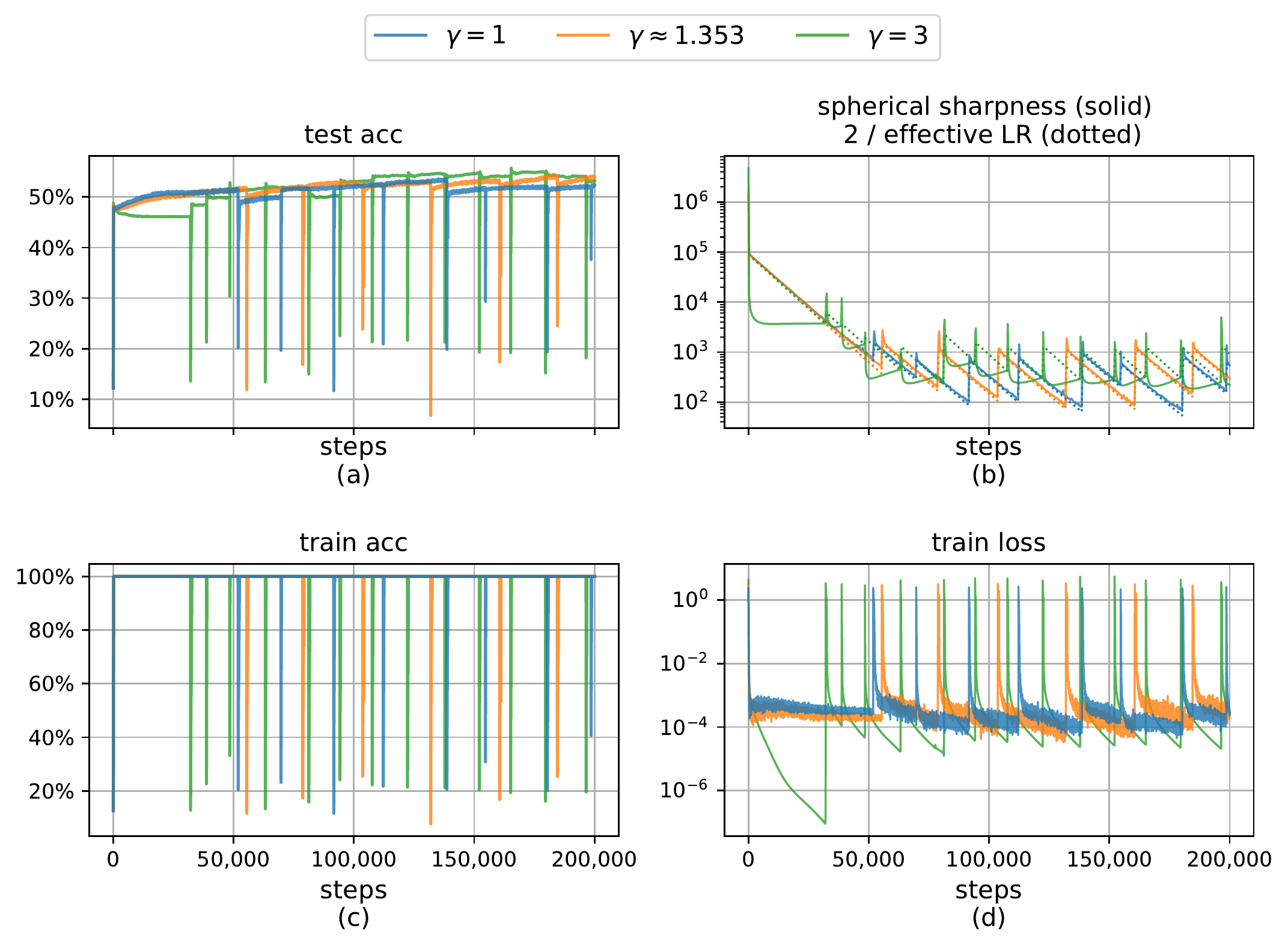}
  \vspace{-0.15in}
  \caption{
    Training scale-invariant VGG-11 on CIFAR-10-2k with full-batch GD
    (LR $\heta = 0.1$, WD $\hlam = 5 \times 10^{-4}$)
    while fixing the rescaling parameter $\gamma$ in the final BN to different values.
    The sharpness-reduction bias can be observed in all cases,
    but for larger $\gamma$ the periodic behavior is more likely to happen at a macroscopic scale.
  }
  \label{fig:cifar2k-vgg-ab-final-gamma}
\end{figure}

\end{document}